\let\OLDthebibliography\thebibliography
\renewcommand\thebibliography[1]{
  \OLDthebibliography{#1}
  \setlength{\parskip}{1pt}
  \setlength{\itemsep}{4pt plus 0.3ex}
}
\newcommand{\cmark}{\ding{51}}%
\theoremstyle{plain}
\tikzstyle{centralserver} = [rectangle, rounded corners, minimum width=3cm, minimum height=1cm,text centered, draw=black, fill=red!30]
\tikzstyle{nodeserver} = [rectangle, rounded corners, minimum width=3cm, minimum height=1cm, text centered, draw=black, fill=orange!30]
\definecolor{brickred}{rgb}{0.8, 0.25, 0.33}
\newcommandx{\aymeric}[1]{\textcolor{brickred}{\textbf{#1}}}
\newcommand{\gs}{\vspace{-.5em}}
\title{Bidirectional compression in heterogeneous settings for distributed or federated learning with partial participation: tight convergence guarantees}
\author{
  Constantin Philippenko \qquad Aymeric Dieuleveut \\
  CMAP, École Polytechnique, Institut Polytechnique de Paris \\
  \texttt{[fistname].[lastname]@polytechnique.edu} \\
}
\crefname{lemma}{Lemma}{Lemmas}
\crefname{fact}{Fact}{Facts}
\crefname{theorem}{Theorem}{Theorems}
\crefname{corollary}{Corollary}{Corollaries}
\crefname{claim}{Claim}{Claims}
\crefname{example}{Example}{Examples}
\crefname{problem}{Problem}{Problems}
\crefname{definition}{Definition}{Definitions}
\crefname{assumption}{Assumption}{Assumptions}
\crefname{subsection}{Subsection}{Subsections}
\crefname{section}{Section}{Sections}
\crefname{algorithm}{Algorithm}{Algoalgorithms}
\crefname{algocf}{alg.}{algs.}
\Crefname{algocf}{Algorithm}{Algorithms}
\crefname{proposition}{Proposition}{Propositions}
\crefname{remark}{Remark}{Remarks}
\newtheorem{theorem}{Theorem}
\newtheorem{definition}{Definition}
\newtheorem{lemma}{Lemma}
\newtheorem{assumption}{Assumption}
\newtheorem{proposition}{Proposition}
\newtheorem{remark}{Remark}
\DeclareMathOperator{\e}{e}
\DeclarePairedDelimiter\floor{\lfloor}{\rfloor}
\newcommand{\Artemis}{\texttt{Artemis}}
\newcommand{\MCM}{\texttt{MCM}}
\newcommand{\Dore}{\texttt{Dore}}
\newcommand{\Diana}{\texttt{Diana}}
\newcommand{\ffrac}[2]{\ensuremath{\frac{\displaystyle #1}{\displaystyle #2}}}
\newcommand{\qqquad}{\qquad\qquad}
\newcommand{\FF}{\mathcal{F}}
\newcommand{\GG}{\mathcal{G}}
\newcommand{\HH}{\mathcal{H}}
\newcommand{\E}{\mathbb{E}}
\newcommand{\N}{\mathbb{N}}
\newcommand{\R}{\mathbb{R}}
\newcommand{\WW}{\R^d}
\newcommand{\V}{\mathbb{V}}
\newcommand{\lnrm}{\left \|} 
\newcommand{\rnrm}{\right \|} 
\newcommand{\Expec}[2]{\E \left[#1~\middle|~#2\right]} 
\newcommand{\Var}[2]{\V \left[#1~\middle|~#2\right]} 
\newcommand{\PdtScl}[2]{ \left \langle #1~\middle|~#2\right \rangle}  
\newcommand{\SqrdNrm}[1]{ \lnrm #1\rnrm^2}  
\newcommand{\bigpar}[1]{\left( #1 \right)} 
\newcommand{\omgC}{\omega_{\mathcal{C}}}
\newcommand{\g}{\textsl{g}} 
\newcommand{\gw}{g} 
\newcommand{\gwk}{\gw_{k+1}}
\newcommand{\gwks}{\gw_{k+1, S_k}}
\newcommand{\gwkhat}{\widehat{\gw}_{k+1}}
\newcommand{\gwkhats}{\widehat{\gw}_{k+1, S_k}}
\newcommand{\gwkstar}{\gw_{k+1,*}}
\newcommand{\sigmstar}{\sigma_{*}}
\newcommand{\iN}{{i=1}^N}
\newcommand{\iS}{{i \in S_k}}
\newcommand{\qbern}{q}
\newcommand{\rset}{\mathbb{R}}
\newcommand{\W}{\mathcal W^2}
\newcommand{\C}{\mathcal C}
\newcommand{\pigv}{\Pi_{\gamma,v}}
\newcommand{\rmd}{\mathrm{d}}
\newcommand{\Rgv}{R_{\gamma,v}}
\newcommand{\up}{\mathrm{up}}
\newcommand{\dwn}{\mathrm{dwn}}
\newcommand{\tr}{\mathrm{Tr}}
\newcommand{\cov}{\mathrm{Cov}}
\newcommand{\cst}{C}
\newcommand{\Fsto}{\FF}
\newcommand{\Fupcomp}{\GG}
\newcommand{\Fdwncomp}{\HH}
\newcommand{\Fsamp}{\mathcal{B}}
\newcommand{\Fartif}{\mathcal{J}}
\newcommand{\Flast}{\mathcal{I}}
\newcommand{\sizefig}{0.35}
\newcommand{\Ftotal}{\Flast_k}
\newcommand{\Fdwncompnotsamp}{\sigma(\Flast_{k} \cup \Fsamp_k)}
\newcommand{\Fupcompnotsamp}{\sigma(\Fupcomp_{k+1} \cup \Fsamp_k)}
\begin{document}

\maketitle
\addtocontents{toc}{\protect\setcounter{tocdepth}{0}}

\begin{abstract}
    We introduce a framework -- \Artemis~-- to tackle the problem of learning in a distributed or federated setting with communication constraints and device partial participation.
    Several workers (randomly sampled) perform the optimization process using a central server to aggregate their computations. To alleviate the communication cost, \Artemis~allows to compress the information sent in \emph{both directions} (from the workers to the server and conversely) combined with a memory mechanism. 
    It improves on existing algorithms that only consider unidirectional compression (to the server), or use very strong assumptions on the compression operator, and often do not take into account devices partial participation.  We provide fast rates of convergence (linear up to a threshold) under weak assumptions on the stochastic gradients (noise's variance bounded only \textit{at optimal point}) in non-i.i.d.~setting, highlight the impact of memory for unidirectional and bidirectional compression, analyze Polyak-Ruppert averaging. We use convergence in distribution to obtain a \textit{lower bound} of the asymptotic variance that highlights practical limits of compression. We propose two approaches to tackle the challenging case of devices partial participation and provide experimental results to demonstrate the validity of our analysis. 
\end{abstract}

\gs\gs\gs
\section{Introduction}\gs\gs
\label{sect:intro}

In modern large scale machine learning applications, optimization has to be processed in a distributed fashion, using a potentially large number $N$ of workers. In the data-parallel framework, each worker only accesses a fraction of the data: new challenges have arisen, especially when communication constraints between the workers are present.

In this paper, we focus on first-order methods, especially Stochastic Gradient Descent \citep{bottou_-line_1999,robbins_stochastic_1951} in a centralized framework: a central machine aggregates the computation of the $N$ workers in a synchronized way. This applies to both the \textit{distributed} \citep[e.g.][]{li_scaling_2014} and the \textit{federated learning} \citep[introduced in][]{konecny_federated_2016,mcmahan_communication-efficient_2017} settings.

Formally, we consider a number of features $d\in \mathbb{N^*}$, and a convex cost  function $F: \mathbb{R}^d \rightarrow \mathbb{R}$. We want to solve the following convex optimization problem:
\gs\gs
\begin{equation} \label{eq:pbsetting}
\min_{w \in \mathbb{R}^d} F(w) \text{ with } F(w) = \frac{1}{N} \sum_{i=1}^N  F_i(w) \,,\gs
\end{equation}
where $(F_i)_{i=1}^N$ is a \emph{local} risk function for the  model $w$ on the worker $i$. Especially, in the classical supervised machine learning framework, we fix a loss $\ell$ and access, on a worker $i$,  $n_i$ observations $(z^i_k)_{1\le k \le n_i}$ following a distribution $D_i$. In this framework, $F_i$ can be either the (weighted) local empirical risk, $ w\mapsto(n_i^{-1}) \sum_{k=1}^{n_i} \ell(w, z^i_k)$ or the  expected risk $w\mapsto\E_{z\sim D_i}[\ell(w,z)]$. At each iteration of the algorithm, each worker can get an \emph{unbiased oracle} on the gradient of the function $F_i$ (typically either by choosing uniformly an observation in its dataset or in  a \emph{streaming fashion}, getting a new observation at each step). 

Our goal is to reduce the amount of information exchanged between workers, to accelerate the learning process, limit the bandwidth usage, and reduce energy consumption. Indeed, the communication cost has been identified as an important bottleneck in the distributed settings \citep[e.g.][]{strom_scalable_2015}. In their overview of the federated learning framework, \citet{kairouz_advances_2019} also underline in Section 3.5 two  possible directions to  reduce this cost: 1) compressing communication from workers to the central server (uplink) 2) compressing the downlink communication.

Most of the papers considering the problem of reducing the communication cost \citep{alistarh_qsgd_2017, agarwal_cpsgd_2018,wu_error_2018,karimireddy_error_2019,mishchenko_distributed_2019,horvath_stochastic_2019,li_acceleration_2020,horvath_better_2020} only focus on compressing the message sent from the workers to the central node. This  direction has the highest potential to reduce the total runtime given that (i) the bandwidth for upload is generally more limited than for download, and that (ii) for some regimes with a large number of workers,  the downlink communication, that corresponds to a ``one-to-$N$'' communication, may  not be the bottleneck compared to  the ``$N$-to-one'' uplink.

Nevertheless, there are several reasons to also consider downlink compression. First, the difference between upload and download speeds is not significant enough at all to ignore the impact of the downlink direction (see \Cref{app:sec:speedtest} for an analysis of bandwidth). If we consider for instance a small number $N$ of workers training a very heavy model -- the  size of  Deep  Learning models generally exceeds hundreds of MB \citep{dean_large_2012,huang_gpipe_2019} --, the training speed will be limited by the exchange time of the updates, thus using downlink compression is key to accelerating the process. 
Secondly, in a different framework in which  a network of smartphones collaborate to train a large scale model in a federated framework, participants to the training would not be eager to download a hundreds of MB for each update on their phone. Here again, downlink compression appears to be necessary. To encompass all situations, our framework implements compression in either or both directions with possibly different compression levels.

Bidirectional compression (i.e. compressing both uplink and downlink) raises  new challenges. In the  downlink step, if we compress the \emph{model}, the quantity compressed does \emph{not} tend to zero. Consequently the compression error significantly hinders convergence. To circumvent this problem we compress the \emph{gradient} that may asymptotically approach zero. 
Double compression has been recently considered by  \citet{tang_doublesqueeze_2019,zheng_communication-efficient_2019,liu_double_2020,yu_double_2019,philippenko_preserved_2021}. 
One of the most recent work, \texttt{Dore}, defined by \citet{liu_double_2020}, analyzed a double compression approach, combining error compensation, a memory mechanism and model compression, with a uniform bound on the gradient variance. 
In this work, we provide new results on \texttt{Dore}-like algorithms, considering a framework \emph{without error-feedback} using tighter assumptions, and quantifying precisely the impact of data heterogeneity on the convergence. 

Moreover, we focus on a \emph{heterogeneous} setting:  the data  distribution depends on  each worker (thus non i.i.d.). We \emph{explicitly control the differences between distributions}. In such a setting, the local gradient at the optimal point $\nabla F_i(w_*)$ may not vanish: to get a vanishing compression error, we introduce a ``memory'' process \citep{mishchenko_distributed_2019}. 
A very recent work by \citet{philippenko_preserved_2021} builds upon our demonstrations precisely to handle non-i.i.d. settings; however they introduce an additional mechanism (``preserved update'') that is orthogonal to this work.

Finally, we encompass in \Artemis~\emph{Partial Participation} (PP) settings, in which only some workers contribute to each step. Several challenges arise with PP, because of both  heterogeneity  and downlink compression. We propose a new algorithm in that setting, which improves on  the approaches proposed by \citet{sattler_robust_2019,tang_doublesqueeze_2019} for bidirectional compression.

Assumptions made on the gradient oracle directly influence the convergence rate of the algorithm: in this paper, we neither assume that the gradients are uniformly bounded \cite[as in][]{zheng_communication-efficient_2019} nor that their variance is uniformly bounded  \citep[as in][]{alistarh_qsgd_2017, mishchenko_distributed_2019, liu_double_2020, tang_doublesqueeze_2019, horvath_stochastic_2019}: instead we only assume that the variance is bounded by a constant~$\sigmstar^2$ \emph{at the optimal point} $w_*$, and provide linear convergence rates up to a threshold \emph{proportional to} $\sigmstar^2$ (as in \citep{dieuleveut_bridging_2018,gower_sgd_2019} for non distributed optimization).
This is a fundamental difference as the variance bound at the optimal point can be orders of magnitude smaller than the uniform bound used in previous work: this is striking when all loss functions have the same critical point, and thus the noise at the optimal point is null!
This happens for example in the  \textit{interpolation regime}, which has recently gained importance in the machine learning community \citep{belkin2019reconciling}. As the empirical risk at the optimal point is null or very close to zero,  so are all the loss functions with respect to one example. This is often the case in deep learning \citep[e.g.,][]{Zhang-rethinkgen-2017} or in large dimension regression \citep{mei_generalization_2019}. 

Overall, we make the following contributions:
\begin{enumerate}[topsep=0pt,itemsep=1pt,leftmargin=*,noitemsep]
    \item We describe a framework -- \Artemis~-- \textbf{that encompasses $6$ algorithms} (with or without up/down compression, with or without memory) which is adapted to PP. We provide and analyze in \Cref{thm:cvgce_artemis} a fast rate of convergence -- exponential convergence up to a threshold proportional to $\sigmstar^2$, the noise at the optimal point --, \textbf{obtaining tighter bounds} than in \citep{alistarh_qsgd_2017,mishchenko_distributed_2019}. 
 
    \item We explicitly tackle heterogeneity using \Cref{asu:bounded_noises_across_devices}, proving that the limit variance of \Artemis~with memory is independent from the difference between distributions (as for SGD). \textbf{This is the first theoretical guarantee for double compression that explicitly quantifies the impact of non i.i.d.~data}. 
    
    \item We propose two approaches to \textbf{tackle the case of partial device participation} when using memories. This setting is challenging due to the difficulty to synchronize memories. The second (and recommended) approach leverages the full potential of the memory to improve convergence.
    
    \item In the non strongly-convex case, we prove the convergence using  Polyak-Ruppert averaging in \Cref{thm:main_PRave}.
    
    \item  We prove \emph{convergence in distribution} of the iterates, and subsequently \textbf{provide \textit{a lower bounds} on the asymptotic variance.} This sheds light on the limits of (double) compression, which results in an increase of the algorithm's variance, and can thus only accelerate the learning process for \emph{early iterations} and up to a \emph{``moderate'' accuracy}. Interestingly, this \emph{``moderate'' accuracy} has to be understood with respect to the \emph{reduced} noise~$\sigmstar^2$.
\end{enumerate}

Furthermore, we support our analysis with various experiments illustrating the behavior of our new algorithm and we provide the code to reproduce our experiments.  See \href{https://anonymous.4open.science/r/d5793937-eb58-4a35-b043-17b83d21225a}{this anonymized repository}.
In \Cref{table:comparison_algo_bi}, we highlight the main features and assumptions of \Artemis~compared to recent algorithms using compression. 

\begin{table*}
\caption{\label{table:comparison_algo_bi}Comparison of frameworks for main algorithms handling (bidirectional) compression. By ``non i.i.d.'',~we mean that the theoretical framework encompasses \textit{and} explicitly quantifies the impact of data heterogeneity on convergence (\Cref{asu:bounded_noises_across_devices}), e.g., \Dore~does not assume i.i.d.~workers but does not quantify differences between distributions.  References: see \cite{alistarh_qsgd_2017} for \texttt{QSGD}, \cite{mishchenko_distributed_2019} for \Diana,  \cite{horvath_better_2020} for [HR20], \cite{liu_double_2020} for \Dore, \cite{philippenko_preserved_2021} for \MCM~and \cite{tang_doublesqueeze_2019} for \texttt{DoubleSquezze}}.
\centering
\begin{tabular}{lcccccccc}
 \hline
  & 
  \texttt{QSGD} 
  & 
  \texttt{Diana} 
  & [HR20]
  & 
  \Dore 
  & \thead{\texttt{Double} \\ \texttt{Squeeze}} 
  & \thead{\texttt{Dist} \\ \texttt{EF-SGD} }
  & \MCM
  & \thead{\textbf{\Artemis} \\ \textbf{(new)}} \\
  \hline
  Data & i.i.d. &\textcolor{ForestGreen}{non i.i.d.} &\textcolor{ForestGreen}{non i.i.d.}&i.i.d.& i.i.d. &i.i.d. & \textcolor{ForestGreen}{non i.i.d.} & \textcolor{ForestGreen}{non i.i.d.} \\
  Bounded variance&\thead{\textcolor{red}{Uniformly}}&\thead{\textcolor{red}{Uniformly}}&\thead{\textcolor{red}{Uniformly}}&\thead{\textcolor{red}{Uniformly}}&\thead{\textcolor{red}{Uniformly}}&\thead{\textcolor{red}{Uniformly}}&\thead{\textcolor{red}{Uniformly}}&\thead{\textcolor{ForestGreen}{At optimal} \\ 
  \textcolor{ForestGreen}{point}} \\
  Compression&\textcolor{red}{One-way}&\textcolor{red}{One-way}&\textcolor{red}{One-way}&\textcolor{ForestGreen}{Two-way}&\textcolor{ForestGreen}{Two-way}&\textcolor{ForestGreen}{Two-way}&\textcolor{ForestGreen}{Two-way}&\textcolor{ForestGreen}{Two-way} \\
  Error-feedback &&&\cmark&\cmark&\cmark&\cmark&& \\
  Memory &&\cmark&&\cmark&&&\cmark&\cmark \\
 
  Device sampling &&&\cmark&&&&\cmark&\cmark \\
  \hline
\end{tabular}
\gs\gs
\end{table*}

The rest of the paper is organized as follows: in \Cref{sect:pb_statment} we introduce the framework of \Artemis. In \Cref{sect:assumptions} we describe the assumptions, and we review related work in \Cref{sect:relatedwork}. We then give the  theoretical results in \Cref{sect:theory}, we extend the result to device sampling in \Cref{sec:extension_partial_participation}, we present experiments in \Cref{sec:expemain}, and finally, we conclude in \Cref{sect:conclusion}.

\gs\gs
\section{Problem statement}\gs\gs
\label{sect:pb_statment}
We consider the problem described in \Cref{eq:pbsetting}. In the convex case, we assume that there exist at least one optimal point which we denote $w_*$, we also denote $h_*^i = \nabla F_i(w_*)$, for $i$ in $\llbracket 1 , N \rrbracket$.
We use $\lnrm \cdot \rnrm$ to denote the  Euclidean norm. 
To solve this problem, we rely on a stochastic gradient descent (SGD) algorithm.

A stochastic gradient $\g_{k+1}^i$  is provided at iteration $k$ in $\N$ to the  device $i$ in $\llbracket 1, N \rrbracket$. This function is then evaluated at point $w_k$: to  alleviate notation, we will use $\gwk^i = \g_{k+1}^i(w_k)$ and $\gwkstar^i = \g_{k+1}^i(w_*)$ to denote the stochastic gradient vectors at points $w_{k}$ and $w_*$ on device $i$.  In the classical centralized framework (without compression), with partial  participation of devices,  SGD corresponds to:
\gs
\begin{align}
   \label{eq:sgd_statement}
    w_{k+1} = w_k - \gamma \frac{1}{N}\sum_{i=1}^N \gwk^i \gs
\gs\gs
\end{align}
where $\gamma$ is the learning rate. Here, we first consider the full participation case.

However, computing such a sequence would require the nodes to send either the gradient $\gwk^i$ or the updated local model to the central server (\emph{uplink} communication), and the central server to broadcast back either the averaged gradient $\gwk$ or the updated global model (\emph{downlink} communication). Here, in order to reduce communication cost, we perform a \textit{bidirectional} compression. More precisely, we combine two main tools:
1) an \emph{unbiased compression operator} $\mathcal{C}: \WW \rightarrow \WW$ that reduces the number of bits exchanged, and 2) a \emph{memory} process that reduces the size of the signal to compress, and consequently the error \citep{mishchenko_distributed_2019,li_acceleration_2020}. That is, instead of directly compressing the gradient, we first approximate it by the memory term and, afterwards, we compress the difference. As a consequence, the compressed term tends in expectation to zero, and the error of compression is reduced.  
Following \citet{tang_doublesqueeze_2019}, we always broadcast gradients and never models. To distinguish the two compression operations we denote $\mathcal{C}_{\up}$ and $\mathcal{C}_{\dwn}$ the compression operator for downlink and uplink. At each iteration, we thus have the following steps:
\begin{enumerate}[topsep=0pt,itemsep=1pt,leftmargin=*]
	\item First, each active local node sends to the central server a compression of gradient differences: $\widehat{\Delta}_k^i = \mathcal{C}_{\up}(\gwk^i - h_k^i)$, and updates the \emph{memory term} $h_{k+1}^i = h^i_k + \alpha \widehat{\Delta}_k^i$ with $\alpha \in \R^*$. The server recovers the approximated gradients' values by adding the received term to the memories kept on its side.
	\item Then, the central server sends back the compression of the sum of compressed gradients: $\Omega_{k+1} = \mathcal{C}_{\dwn} \left(\ffrac{1}{N}\sum_{i =1}^N \widehat{\Delta}_k^i + h_k^i \right)$.  No memory mechanism needs to be used, as the sum of gradients tends to zero in the absence of regularization.
\end{enumerate}
The update is thus given by:
\gs
\begin{align}
\label{eq:update_schema}
\left\{
    \begin{array}{l}
        \forall i \in \llbracket1, N\rrbracket\,, \quad \widehat{\Delta}_k^i = \mathcal{C}_{\up} \left(\gwk^i - h_k^i \right) \\
    	\Omega_{k+1} = \mathcal{C}_{\dwn} \left(\ffrac{1}{ N}\sum_{i=1}^N (\widehat{\Delta}_k^i + h_k^i) \right) \\
       w_{k+1} = w_k - \gamma \Omega_{k+1}
    \end{array}
\right. \hspace{-0.3cm}.
\end{align} 
Constants $\gamma, \alpha \in \R^*\times \R_ +$ are learning rates for respectively the iterate sequence and the memory sequence. The adaptation of the framework in the case of device sampling is developed in \Cref{sec:extension_partial_participation}.  This is illustrated on \Cref{algo,tikz_algo} in \Cref{app:complementary}.

As a summary, the \Artemis~framework encompasses in particular these four algorithms:  the variant with unidirectional compression ($\omgC^\dwn = 0$)  w.o.~or with memory ($\alpha = 0$ or $\alpha \neq 0$) recovers \texttt{QSGD} defined by \citet{alistarh_qsgd_2017} and \texttt{DIANA} proposed by \cite{mishchenko_distributed_2019}. 
The variant using bidirectional compression ($\omgC^\dwn \neq 0$) w.o memory ($\alpha = 0$) is called \texttt{Bi-QSGD}. 
The last and most effective variant combines bidirectional compression \emph{with} memory and is the one we refer to as \Artemis~if no precision is given. It corresponds to a simplified version of \Dore~without error-feedback, but this additional mechanism did not lead to any theoretical improvement \citep[Remark 2 in Sec. 4.1.,][]{liu_double_2020}.
\begin{remark}[Local steps]
An obvious independent direction to reduce communication is to increase the number of steps performed before communication. This is the spirit of Local-SGD~\citep{stich_local_2019}. It is an interesting extension to incorporate this into our framework, that we do not consider in order to focus on the compression insights.\gs
\end{remark}
In the  following section, we present and discuss assumptions over the function $F$, the data distribution and the compression operator. 

\gs
\subsection{Assumptions}\gs\gs
\label{sect:assumptions}

We  make classical assumptions on $F:  \mathbb{R}^d \rightarrow \mathbb{R}$.

\begin{assumption}[Strong convexity]
\label{asu:strongcvx}
$F$ is $\mu$-strongly convex, that is for all vectors $w, v$ in $\WW$:
$ 
F(v) \geq F(w) + (v -w)^T \nabla F(w) + \frac{\mu}{2} \| v - w \|^2_2\,.
$
\gs
\end{assumption}
Note that we do not need each $F_i$ to be strongly convex, but only $F$. Also remark that we only use this inequality for $v=w_*$ in the proof of \Cref{thm:cvgce_artemis,thm:main_PRave}.

Below, we introduce cocoercivity \cite[see][for more details about this hypothesis]{zhu_co-coercivity_1996}. This assumption implies that all $(F_i)_{i \in \llbracket 1, N \rrbracket}$ are $L$-smooth. 

\begin{assumption}[Cocoercivity of stochastic gradients (in quadratic mean)]
\label{asu:cocoercivity}
  We suppose that for all $k$ in $\N$,  stochastic gradients functions $(\g_k^i)_{i \in \llbracket1, N \rrbracket}$ are  L-cocoercive in quadratic mean. That is, for  $k$ in $\N$,   $i$ in $\llbracket1, N \rrbracket$ and for all vectors $w_1, w_2$ in $\WW$, we have:$
\E [ \| \g^i_{k}(w_1) - \g_{k}^i(w_2) \|^2 ] \leq 
  L  \PdtScl{\nabla F_i (w_1) - \nabla F_i(w_2)}{w_1 - w_2} \,.$
\end{assumption}
\gs\gs
E.g., this is  true under the much stronger assumption that  stochastic gradients functions $(\g_k^i)_{i \in \llbracket1, N \rrbracket}$ are \textit{almost surely} $L$-cocoercive, i.e.: 
  $  \| \g^i_{k}(w_1) - \g_{k}^i(w_2) \|^2 \leq L \PdtScl{\g^i_{k}(w_1) - \g_{k}^i(w_2)}{w_1 - w_2} \,$.
Next, we present the assumption on the stochastic gradient's noise. Again, we highlight that the noise is only controlled at the optimal point.  
To carefully control the noises process (gradient oracle, uplink and downlink compression), we introduce three filtrations $(\Fdwncomp_k,\Fupcomp_k,\Fsto_k)_{k\geq 0}$,  such that $w_k$ is $\Fdwncomp_k$-measurable for any $k\in \N$. Detailed definitions are given in \Cref{sect:filration}.

\begin{assumption}[Noise over stochastic gradients computation]
\label{asu:noise_sto_grad}
The noise over stochastic gradients at the global optimal  point, for a mini-batch of size $b$, is bounded: there exists a constant $\sigmstar \in \mathbb{R}$, s.~t.~for all $k$ in $\N$, for all $i$ in $\llbracket 1, N \rrbracket\,$, we have a.s.:
 $\quad\E [ \|\gwkstar^i - \nabla F_i(w_*)\|^2 | \Fdwncomp_{k} ] \leq \frac{\sigmstar^2}{b}.$
\end{assumption}
\gs
The constant $\sigmstar^2$ is null, e.g.~if we use deterministic (batch) gradients, \textit{or} in the interpolation regime for i.i.d.~observations, as discussed in the Introduction. As we have also incorporated here a mini-batch parameter, this reduces the variance by a factor $b$. 

Unlike \texttt{Diana} \citep{mishchenko_distributed_2019,li_acceleration_2020}, \Dore~\citep{liu_double_2020}, \texttt{Dist-EF-SGD} \citep{zheng_communication-efficient_2019}, \MCM~\citep{philippenko_preserved_2021} or \texttt{Double-Squeeze} \citep{tang_doublesqueeze_2019}, we assume that the variance of the noise is bounded \textit{only at optimal point} $w_*$ and not \textit{at any point} $w$ in $ \WW$. It results that if variance is null ($\sigmstar^2 = 0$) at optimal point, we obtain a linear convergence while previous results obtain this rate solely if the variance is null \textit{at any point} (i.e.~only for deterministic GD). Also remark that \Cref{asu:cocoercivity,asu:noise_sto_grad} both stand for the simplest Least-Square Regression (LSR) setting, while the uniform bound on the gradient's variance \emph{does not}. Next, we give the assumption that links the distributions on the different machines.

\begin{assumption}[Bounded gradient at $w_*$]
\label{asu:bounded_noises_across_devices}
There exists a constant $B \in \mathbb{R_+}$, s.t.: 
\gs
$
\frac{1}{N} \sum_{i=0}^N \| \nabla F_i(w_*)\|^2 = B^2\,.
$ 
\end{assumption}
\gs
This assumption is used to quantify how different the distributions are on the different machines. In the streaming \emph{i.i.d.~setting} -- $D_1= \dots = D_N$ and $F_1= \dots = F_N$ --  the assumption is satisfied with $B=0$. 
Combining \Cref{asu:noise_sto_grad,asu:bounded_noises_across_devices} results in an upper bound on the averaged squared norm of stochastic gradients at $w_*$: for all $k$ in $\N$, a.s.,
$\frac{1}{N} \sum_{i=1}^N\E [ \|\gwkstar^i\|^2 | \Fdwncomp_{k} ] \leq \ffrac{\sigmstar^2}{b} + B^2$.

Finally, compression operators can be classified in two main categories: quantization \citep[as in][]{alistarh_qsgd_2017,seide_1-bit_2014,zhou_dorefa-net_2018,wen_terngrad_2017,reisizadeh_fedpaq_2020, horvath_stochastic_2019} and sparsification \citep[as in][]{stich_sparsified_2018,aji_sparse_2017,alistarh_convergence_2018,khirirat_communication_2020}. Theoretical guarantees provided in this paper do not rely on a particular kind of compression, as we only consider the following assumption on the compression operators~$\mathcal{C}_\up$~and~$\mathcal{C}_\dwn$:
\begin{assumption}
\label{asu:expec_quantization_operator}
There exist constants $\omgC^\up\,,\omgC^\dwn \in \R^*_+$, such that the compression operators $\mathcal{C}_{\up}$ and $\mathcal{C}_{\dwn}$ verify the two following properties for all $\Delta$ in $\R^d$:
\[
\left\{
    \begin{array}{ll}
    	\E [\mathcal{C}_{\up\slash\dwn}(\Delta)] = \Delta \,, \\ 
    	\E [ \SqrdNrm{\mathcal{C}_{\up\slash\dwn}(\Delta) - \Delta}] \leq \omgC^{\up\slash\dwn} \SqrdNrm{\Delta} \,.
    \end{array}
\right.
\]
\end{assumption}
In other words, the compression operators are unbiased and their variances are bounded. Note that \citet{horvath_better_2020} have shown that using an unbiased operator leads to better performances. Unlike us, \citet{tang_doublesqueeze_2019} assume uniformly bounded compression error, which is a much more restrictive assumption.  We now provide additional details on related papers dealing with compression. Also note that  $\omgC^{\up\slash\dwn}$ can be considered as \emph{parameters} of the algorithm, as the compression levels can be chosen.

\gs
\subsection{Related work on compression}\gs\gs
\label{sect:relatedwork}

Quantization is a common method  for compression and is used in various algorithms. 
For instance, \citet{seide_1-bit_2014} are one of the first to propose to quantize each gradient component by either $-1$ or $1$. 
This approach has been extended in \citet{karimireddy_error_2019}. \citet{alistarh_qsgd_2017} define a new algorithm -- \texttt{QSGD} -- which instead of sending gradients, broadcasts their quantized version, getting robust results with this approach. 
On top of gradient compression, \citet{wu_error_2018} add an error compensation mechanism which accumulates quantization errors and corrects the gradient computation at each iteration. \texttt{Diana} \citep[introduced in][]{mishchenko_distributed_2019} introduces a ``memory'' term in the place of accumulating errors. 
\citet{li_acceleration_2020} extend this algorithm and improve its convergence by using an accelerated gradient descent. 
\citet{reisizadeh_fedpaq_2020} combine unidirectional quantization with device sampling, leading to a framework closer to Federated Learning settings where devices can easily be switched off.
In the same perspective, \citet{horvath_better_2020} detail results that also consider PP.
\citet{tang_doublesqueeze_2019} are the the first to suggest a bidirectional compression scheme for a decentralized network. For both uplink and downlink, the method consists in sending a compression of gradients combined with an error compensation. 
Later, \citet{yu_double_2019} choose to compress models instead of compressing gradients. 
This approach is enhanced by \citet{liu_double_2020} who combine model compression with a memory mechanism and an error compensation drawing from \cite{mishchenko_distributed_2019}. Both \citet{tang_doublesqueeze_2019} and \citet{zheng_communication-efficient_2019} compress gradients without using a memory mechanism. However, as proved in the following section, memory is key to reducing the asymptotic variance in the heterogeneous~case.
A recent work written by \citet{philippenko_preserved_2021} build upon our work and design an algorithm that is doing bidirectional compression but achieves rates of convergence identical to unidirectional compression. In their work, they take advantage of the uplink memory to handle the heterogeneous settings by reusing our demonstration's paradigm.

We now provide theoretical results about the convergence of bidirectional compression. 
\gs\gs
\section{Theoretical results}\gs\gs
\label{sect:theory}
In this section, we present our main theoretical results on the  convergence of \Artemis~and its variants. For the sake of clarity, the most complete and tightest versions of theorems are given in Appendices, and simplified versions are provided here. The main linear convergence rates are given in \Cref{thm:cvgce_artemis}. In \Cref{thm:main_PRave} we show that  \Artemis~combined with Polyak-Ruppert averaging reaches a sub-linear convergence rate. In this section, we denote $\delta_0^2 = \SqrdNrm{w_0 - w_*}$.

\begin{theorem}[Convergence of \Artemis]
\label{thm:cvgce_artemis} 
Under \Cref{asu:strongcvx,asu:cocoercivity,asu:noise_sto_grad,asu:bounded_noises_across_devices,asu:expec_quantization_operator}, for a step size $\gamma$ satisfying the conditions in \Cref{table:summary_convergence}, for a learning rate $\alpha $ and for any $k$ in $\N$,
the mean squared distance to $w_*$ decreases at a linear rate up to a constant of the order of $E$:
\begin{align*}
    \E \left[\lnrm w_k - w_* \rnrm^2\right] &\leq (1- \gamma \mu)^{k} \left(\delta_0^2 + 2 \cst \gamma^2 B^2 \right)  + \frac{2  \gamma E}{\mu N} \label{eq:main_in_theorem} \,,
\end{align*}
for constants $\cst$ and $E$ depending on the variant (independent of $k$) given in \Cref{tab:p_and_E} or in the appendix. Variants with $\alpha\neq0$ require $\alpha  \in [ 1/2 (\omgC^\up+1), \alpha_{\max}]$, the upper bound $\alpha_{\max}$ is given in \Cref{app:thm:with_mem}.
\end{theorem}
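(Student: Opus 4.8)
The natural route is a one-step contraction on a suitable Lyapunov potential, followed by unrolling a geometric recursion; since the partial-participation refinement is treated separately in \Cref{sec:extension_partial_participation}, I would carry this out in the full-participation setting of \Cref{eq:update_schema}. Recalling $h_*^i = \nabla F_i(w_*)$, define
\[
V_k \coloneqq \SqrdNrm{w_k - w_*} + \Gamma\, \frac{1}{N}\sum_{i=1}^N \SqrdNrm{h_k^i - h_*^i},
\]
where $\Gamma$ is a coefficient of order $\gamma^2$ fixed only at the end (it equals $2\cst\gamma^2$ in the notation of the statement), and the memory block is simply dropped for the $\alpha = 0$ variants since then $h_k^i \equiv 0$. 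The target is the one-step estimate $\E[V_{k+1}\mid\Fdwncomp_k] \le (1-\gamma\mu)V_k + \tfrac{2\gamma^2 E}{N}$. Once this is established, taking total expectations, iterating, summing the geometric series ($\sum_{j\ge0}(1-\gamma\mu)^j = 1/(\gamma\mu)$), using $V_0 = \delta_0^2 + \Gamma B^2$ (because $h_0^i = 0$ and \Cref{asu:bounded_noises_across_devices}), and discarding the nonnegative memory term in $V_k$ gives exactly the claimed bound with $\cst = \Gamma/(2\gamma^2)$.

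\emph{Step 1 (iterate recursion).} Expand $\SqrdNrm{w_{k+1}-w_*} = \SqrdNrm{w_k-w_*} - 2\gamma\PdtScl{w_k-w_*}{\Omega_{k+1}} + \gamma^2\SqrdNrm{\Omega_{k+1}}$ and take $\E[\,\cdot\mid\Fdwncomp_k]$ by peeling the randomness from the inside out (downlink compression conditionally on $\Fupcomp_{k+1}$, then uplink conditionally on $\Fsto_{k+1}$, then the gradient oracle conditionally on $\Fdwncomp_k$). Unbiasedness of $\mathcal{C}_\dwn$ and $\mathcal{C}_\up$ (\Cref{asu:expec_quantization_operator}) together with $\E[\g_{k+1}^i(w_k)\mid\Fdwncomp_k]=\nabla F_i(w_k)$ yields $\E[\Omega_{k+1}\mid\Fdwncomp_k] = \nabla F(w_k)$, so the cross term is $-2\gamma\PdtScl{w_k-w_*}{\nabla F(w_k)}$, which we carry symbolically for now (it will be lower-bounded by $\mu\SqrdNrm{w_k-w_*}$ via \Cref{asu:strongcvx} at $v=w_*$, but the second moment reintroduces positive multiples of the same inner product). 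For $\SqrdNrm{\Omega_{k+1}}$, use $\E[\SqrdNrm{\mathcal{C}_\dwn(X)}\mid X]\le(1+\omgC^\dwn)\SqrdNrm{X}$ to pass to $\widehat g_{k+1} = \tfrac1N\sum_i(\widehat\Delta_k^i+h_k^i)$, split it into its uplink-mean $\tfrac1N\sum_i \gwk^i$ plus a zero-mean part of variance $\le \tfrac{\omgC^\up}{N^2}\sum_i\SqrdNrm{\gwk^i - h_k^i}$ (independence of the $\mathcal{C}_\up$ across devices), and finally integrate the gradient oracle: \Cref{asu:cocoercivity} turns $\tfrac1N\sum_i\E\SqrdNrm{\gwk^i - \gwkstar^i}$ into a multiple of $\PdtScl{\nabla F(w_k)}{w_k-w_*}$, \Cref{asu:noise_sto_grad,asu:bounded_noises_across_devices} bound $\tfrac1N\sum_i\E\SqrdNrm{\gwkstar^i}$ by $\sigmstar^2/b + B^2$, and $\SqrdNrm{\nabla F_i(w_k)-h_k^i}\le 2\SqrdNrm{\nabla F_i(w_k)-\nabla F_i(w_*)} + 2\SqrdNrm{h_k^i-h_*^i}$ (cocoercivity of $\nabla F_i$, a consequence of \Cref{asu:cocoercivity} by Jensen) feeds the first piece back into the $\PdtScl{\nabla F(w_k)}{w_k-w_*}$ budget and the second into the memory potential.

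\emph{Step 2 (memory recursion).} From $h_{k+1}^i = h_k^i + \alpha\widehat\Delta_k^i$ and $\E[\widehat\Delta_k^i\mid\Fsto_{k+1}] = \gwk^i - h_k^i$, expanding $\SqrdNrm{h_{k+1}^i - h_*^i}$, applying a Young inequality to the cross term $2\alpha\PdtScl{h_k^i-h_*^i}{\nabla F_i(w_k)-\nabla F_i(w_*)}$, bounding $\E\SqrdNrm{\widehat\Delta_k^i}\le(1+\omgC^\up)\E\SqrdNrm{\gwk^i-h_k^i}$ and reusing the cocoercivity split above gives a per-device contraction of the shape $\E[\tfrac1N\sum_i\SqrdNrm{h_{k+1}^i-h_*^i}\mid\Fdwncomp_k]\le(1-\Theta(\alpha))\tfrac1N\sum_i\SqrdNrm{h_k^i-h_*^i} + \Theta(\alpha)\PdtScl{\nabla F(w_k)}{w_k-w_*} + \Theta\!\big(\alpha^2(1+\omgC^\up)\big)(\sigmstar^2/b + B^2)$.

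\emph{Step 3 (combine and calibrate constants).} Add $\Gamma$ times Step 2 to Step 1 and collect the coefficients of the three ``free'' quantities: $\SqrdNrm{w_k-w_*}$ must come out with factor $1-\gamma\mu$; $\tfrac1N\sum_i\SqrdNrm{h_k^i-h_*^i}$ with factor $\le(1-\gamma\mu)\Gamma$; and $\PdtScl{\nabla F(w_k)}{w_k-w_*}$ with a nonpositive factor, the leftover negativity being spent to downgrade the cross-term coefficient from $-2\gamma$ to $-\gamma\mu$ via \Cref{asu:strongcvx}. The third constraint needs $\gamma$ small enough that $\gamma$ times the accumulated $O\big(L(1+\omgC^\up)(1+\omgC^\dwn)/N\big)$-type coefficients is $\le 1$ — precisely the condition recorded in \Cref{table:summary_convergence}. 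The second constraint needs $\Gamma$ large enough to dominate the $\Theta\big(\gamma^2\omgC^\up(1+\omgC^\dwn)/N\big)$ contribution from the $\SqrdNrm{\gwk^i-h_k^i}$ term of Step 1 while the $\Theta(\alpha)$ contraction of the memory beats both it and the $\Gamma\,\Theta(\alpha^2(1+\omgC^\up))$ self-term, which pins $\alpha$ between a positive lower bound ($\alpha\ge 1/(2(\omgC^\up+1))$) and the upper bound $\alpha_{\max}$ of \Cref{app:thm:with_mem}. The residue is the additive constant $\tfrac{2\gamma^2 E}{N}$, with $E$ proportional to $(1+\omgC^\dwn)(1+\omgC^\up)\sigmstar^2/b$ and, only when $\alpha = 0$, an extra additive $B^2$ term (the heterogeneity leaks into the floor exactly when the memory is switched off) — which is the dichotomy tabulated in \Cref{tab:p_and_E}. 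The main obstacle is this Step 3 bookkeeping: choosing $\gamma$, $\alpha$ and $\Gamma$ jointly so that every cross-term either cancels, is absorbed into $\PdtScl{\nabla F(w_k)}{w_k-w_*}$, or collapses into the two quadratic potentials with the \emph{same} contraction factor $1-\gamma\mu$. A secondary subtlety is the strict order of conditioning across $(\Fdwncomp_k,\Fupcomp_k,\Fsto_k)$ in Step 1, in particular ensuring that the downlink compression — undamped by any memory — enters the asymptotic floor only through the $\sigmstar^2$ (and, for $\alpha=0$, $B^2$) terms already present, and not through an uncontracted model-scale quantity.
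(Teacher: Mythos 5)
Your proposal is correct and follows essentially the same route as the paper's own proof: \Cref{thm:cvgce_artemis} is obtained there from \Cref{app:thm:without_mem,app:thm:with_mem}, which use exactly your Lyapunov potential $V_k = \SqrdNrm{w_k-w_*} + 2\cst\gamma^2\frac{1}{N}\sum_i\SqrdNrm{h_k^i-h_*^i}$ (your $\Gamma = 2\cst\gamma^2$), the same peeling of the three noise sources through the filtrations, the same memory recursion and cocoercivity/strong-convexity absorption of cross terms, the same joint calibration of $\gamma$, $\alpha$ and $\cst$, and the same geometric unrolling with $V_0 \le \delta_0^2 + 2\cst\gamma^2 B^2$. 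The only cosmetic caveat is that the quantity to control is $\E\SqrdNrm{\gwk^i - h_k^i}$ rather than $\SqrdNrm{\nabla F_i(w_k)-h_k^i}$, which must be split through $h_*^i$ and the noise \emph{at the optimum} (as you indeed do elsewhere in Step 1), since \Cref{asu:noise_sto_grad} gives no variance bound at $w_k$.
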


\begin{table}
    \centering
    \caption{Details on constants $\cst$ and $E$ defined in \Cref{thm:cvgce_artemis}. $\cst = 0 $ for $\alpha=0$, see Th. \ref{app:thm:with_mem} for $\alpha\neq 0$.\gs}
   \resizebox{\linewidth}{!}{
   \begin{tabular}{ll}
    $\alpha$ & $E$ \\
    \hline
     $ 0$ &  $(\omgC^\dwn + 1) \left((\omgC^\up + 1) \ffrac{\sigmstar^2}{b} + \omgC^\up B^2 \right)$\\
     $\neq 0$ & $\ffrac{\sigmstar^2}{b} \left( (2\omgC^\up + 1)(\omgC^\dwn+1) + 4 \alpha^2 \cst (\omgC^\up + 1) - 2\alpha \cst \right)$ \\
     \hline
    \end{tabular}}
    \gs\gs
    \label{tab:p_and_E}
\end{table}

This theorem is derived from \Cref{app:thm:without_mem,app:thm:with_mem} which are respectively proved in \Cref{app:proof_singlecompressnomemory,app:proof_doublecompressnomem}.
We can make the following remarks:
\begin{enumerate}[topsep=0pt,itemsep=1pt,leftmargin=*]
     \item \textbf{Linear convergence.} The convergence rate given in \Cref{thm:cvgce_artemis} can be decomposed into two terms: a bias term, forgotten at linear speed $(1-\gamma \mu)^k$, and a variance residual term which corresponds to the \emph{saturation level} of the algorithm.
    The rate of convergence $(1-\gamma \mu)$ does not depend on the variant of the algorithm. However, the variance and initial bias do vary.  
    \item \textbf{Bias term.} The initial bias always depends on $\SqrdNrm{w_0 - w_*}$, and when using memory (i.e.~$\alpha\neq 0$) it also depends on the difference between distributions (constant $B^2$). 
    \item \textbf{Variance term and memory.} On the other hand, the variance depends a) on both  $\sigmstar^2/b$, and the distributions' difference $B^2$ without memory b) only on the gradients' variance \emph{at the optimum} $\sigmstar^2/b$ with memory. Similar theorems in related literature  \cite{liu_double_2020,alistarh_qsgd_2017,mishchenko_distributed_2019,yu_double_2019,tang_doublesqueeze_2019,zheng_communication-efficient_2019} systematically had a worse bound for the variance term depending on a \emph{uniform bound of the noise variance} or under much stronger conditions on the compression operator. This paper and \citep{liu_double_2020} are also the first to give a linear convergence up to a threshold for bidirectional compression.
    \item \textbf{Impact of memory.} To the best of  our knowledge, this is the first work on double compression that explicitly tackles the non i.i.d.~case (\citet{philippenko_preserved_2021} also handle this setting but have mentioned that they get inspired from our work). We prove that memory makes the saturation threshold independent of $B^2$ for \Artemis. 
    \item \textbf{Variance term.} The variance term increases with a factor proportional to $\omgC^\up$ for the unidirectional compression, and proportional to $\omgC^\up \times \omgC^\dwn$ for bidirectional. This is the counterpart of compression, each compression resulting in a multiplicative factor on the noise. A similar increase in the variance appears in \citep{mishchenko_distributed_2019} and \citep{liu_double_2020}.  The noise level is attenuated by the number of devices $N$, to which it is inversely proportional.
    \item \textbf{Link with classical SGD.} For variant of \Artemis~with $\alpha = 0$, if $\omgC^{\up/\dwn} = 0$ (i.e.~no compression) we recover SGD results: convergence does not depend on $B^2$, but only on the noise's variance.
\end{enumerate}
\textbf{Conclusion:} Overall, it appears that \Artemis~is able to efficiently accelerate the learning  during first iterations, enjoying the same linear rate as SGD with lower communication complexity, but it saturates at a higher level,  proportional to $\sigmstar^2$ and independent of $B^2$.

The range of acceptable learning rates is an important feature for first order algorithms. 
In \Cref{table:summary_convergence}, we summarize the upper bound $\gamma_{\max}$ on $\gamma$, to  guarantee a $(1 - \gamma \mu)$ convergence of \Artemis. These bounds are derived from \Cref{app:thm:without_mem,app:thm:with_mem}, in three main asymptotic regimes: $N \gg \omgC^{\up}$, $N \approx \omgC^{\up}$ and $\omgC^{\up} \gg N$.  Using bidirectional compression impacts $\gamma_{\max}$ by a factor $\omgC^\dwn + 1$ in comparison to unidirectional compression. For unidirectional compression, if the number of machines is at least of the order of $\omgC^\up$, then $\gamma_{\max}$ nearly corresponds to $\gamma_{\max}$ for  vanilla (serial) SGD.

\begin{table}
\centering
\caption{Upper bound on $\gamma_{\max}$ to guarantee convergence. For unidirectional compression (resp. no compr.), $\omgC^\dwn = 0$ (resp. $\omgC^{\up/\dwn} = 0$, recovering classical rates for SGD). \vspace{-0.3em}}
\centering
\label{table:summary_convergence}
\begin{tabular}{lccc}
  Memory & $\alpha = 0$ &  $\alpha \neq 0$ \\
  \hline
  \vspace{.2em}
  \multirow{2}{*}{$N \gg \omgC^\up$} & $\ffrac{1}{(\omgC^\dwn + 1)L}$ & $\ffrac{1}{2(\omgC^\dwn + 1)L}$ \\
  \vspace{.2em}
  \multirow{2}{*}{$N \approx \omgC^\up$} & $\ffrac{1}{3(\omgC^\dwn + 1)L}$ & $\ffrac{1}{5(\omgC^\dwn + 1) L }$ \\ 
  \vspace{.2em}
  \multirow{2}{*}{$\omgC^\up \gg N$} & $\ffrac{N}{2\omgC^\up (\omgC^\dwn +1) L}$ & $\ffrac{N}{4 \omgC^\up (\omgC^\dwn + 1) L}$ \vspace{.2em}\\
  \hline\vspace{-2.5em}
\end{tabular}
\end{table}

We now provide a convergence guarantee for the averaged iterate without strong convexity.
\begin{theorem}[Convergence of \Artemis~with Polyak-Ruppert averaging]
\label{thm:main_PRave}
Under \Cref{asu:cocoercivity,asu:noise_sto_grad,asu:bounded_noises_across_devices,asu:expec_quantization_operator,asu:partial_participation} 
(convex case) with constants $\cst$ and $E$ as in \Cref{thm:cvgce_artemis} (see \cref{tab:p_and_E} for precision), after running $K$ in $\N$ iterations, for a learning rate
$\gamma = \min \left( \sqrt{\frac{N \delta_0^2}{2E K}}; \gamma_{\max }\right)$, 
with $\gamma_{\max} $  as in \Cref{table:summary_convergence}, we have a sublinear convergence rate for the Polyak-Ruppert averaged iterate $\bar w_K = \frac{1}{K}\sum_{k=0}^K w_k$, with $\varepsilon_F(\bar w_K) = F\left( \bar w_K  \right) - F(w_*)$:
\begin{align*}
    \varepsilon_F(w_K) &\leq 2 \max \left(\sqrt{\frac{2 \delta_0^2  E}{N K}}; \frac{\delta_0^2}{\gamma_{\max} K} \right) +  \frac{ 2 \gamma_{\max} \cst B^2}{K}  .
\end{align*}
\end{theorem}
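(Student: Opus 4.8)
The plan is to recycle the one-step Lyapunov inequality established in the proofs of \Cref{app:thm:without_mem,app:thm:with_mem}, but to stop short of invoking strong convexity. In those proofs one controls a Lyapunov functional $V_k$ which is $\SqrdNrm{w_k - w_*}$ augmented, when $\alpha\neq 0$, by a weighted sum of the memory discrepancies $\SqrdNrm{h_k^i - \nabla F_i(w_*)}$ (and, for two-way compression, a term coming from the downlink operator). The key observation is that the descent step produces, before any use of \Cref{asu:strongcvx}, an inequality of the shape
\begin{align*}
\E[V_{k+1}] \le \E[V_k] - 2\gamma\,\E[\varepsilon_F(w_k)] + 2\gamma^2\frac{E}{N}\,,
\end{align*}
valid for every $\gamma \le \gamma_{\max}$ (with $\gamma_{\max}$ from \Cref{table:summary_convergence}) and $\alpha$ in the admissible range, where $E$ and $\cst$ are exactly the constants of \Cref{thm:cvgce_artemis}. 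The point is that $\langle \nabla F(w_k), w_k - w_*\rangle$ is lower bounded by $\varepsilon_F(w_k)$ by plain convexity, so no $\mu$ is needed; strong convexity was only used to turn this term into the contraction factor $(1-\gamma\mu)$.

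Next I would telescope this recursion over $k = 0,\dots,K$. Since $V_{K+1}\ge 0$, this gives $2\gamma\sum_{k=0}^{K}\E[\varepsilon_F(w_k)] \le V_0 + 2(K+1)\gamma^2 E/N$. The initialization $h_0^i = 0$ yields $V_0 \le \delta_0^2 + 2\cst\gamma^2 B^2$ — this is precisely where the heterogeneity constant $B^2$ enters, and crucially it enters only through $V_0$, hence with a $1/K$ decay rather than polluting the $\gamma^2 E/N$ residual. Dividing by $2\gamma K$ and applying Jensen's inequality to the convex function $F$ (so that $\varepsilon_F(\bar w_K) \le \frac1K\sum_{k=0}^{K}\varepsilon_F(w_k)$) produces
\begin{align*}
\E[\varepsilon_F(\bar w_K)] \le \frac{\delta_0^2}{2\gamma K} + \frac{\cst\gamma B^2}{K} + \frac{(K+1)\gamma E}{NK}\,.
\end{align*}

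The final step is the step-size optimization. Balancing the bias term $\delta_0^2/(2\gamma K)$ against the variance term $\sim \gamma E/N$ suggests $\gamma^\star = \sqrt{N\delta_0^2/(2EK)}$; since the one-step inequality only holds for $\gamma\le\gamma_{\max}$, one sets $\gamma = \min(\gamma^\star,\gamma_{\max})$. When $\gamma^\star \le \gamma_{\max}$ both terms collapse to a multiple of $\sqrt{\delta_0^2 E/(NK)}$; when $\gamma_{\max} < \gamma^\star$ the bias term dominates and equals a multiple of $\delta_0^2/(\gamma_{\max}K)$. Taking the $\max$ of the two regimes (and bounding $\cst\gamma B^2/K \le \cst\gamma_{\max}B^2/K$ for the heterogeneity term) gives exactly the stated estimate, up to the explicit numerical constants.

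The main obstacle is the very first step: extracting the convex-case one-step inequality with the correct Lyapunov functional. One must track how uplink compression variance, downlink compression variance, and the memory recursion interact, verify that the coefficient multiplying $\varepsilon_F(w_k)$ remains $2\gamma>0$ throughout the admissible $(\gamma,\alpha)$ range \emph{without} the stabilizing $\mu$-term, and confirm that all $B^2$-dependent contributions can be folded into $V_0$ (with the memory coupling absorbing the non-vanishing gradients $\nabla F_i(w_*)$). In the partial-participation setting this same template applies, with the device-sampling variance handled exactly as in \Cref{sec:extension_partial_participation}; it only changes the constants $\cst$ and $E$, not the structure of the argument.
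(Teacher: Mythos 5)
Your proposal follows essentially the same route as the paper's proof (Theorem S3 in \Cref{app:doublecompress_avg}): reuse the pre-strong-convexity one-step Lyapunov inequality, replace strong convexity by plain convexity so that $\PdtScl{\nabla F(w_k)}{w_k-w_*}\ge \varepsilon_F(w_k)$, telescope, bound $V_0\le \delta_0^2+2\cst\gamma^2B^2$ using $h_0^i=0$, apply Jensen, and then optimize $\gamma=\min\bigl(\sqrt{N\delta_0^2/(2EK)},\gamma_{\max}\bigr)$ while bounding the heterogeneity term by $2\gamma_{\max}\cst B^2/K$. The only minor deviation is quantitative: under the admissible step sizes the surviving coefficient of $\varepsilon_F(w_k)$ is $\gamma$ rather than $2\gamma$ (the condition $\gamma L\bigl(\tfrac{\omgC^\dwn+1}{2}+\tfrac{A_\cst}{N}\bigr)\le\tfrac12$ and the requirement $D_\cst\le 1$ on $\cst$ are exactly what make the recursion close), which only affects the numerical constants you already flagged.
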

This theorem is proved in \Cref{app:doublecompress_avg}. Several comments can be made on this theorem:
\begin{enumerate}[topsep=0pt,itemsep=1pt,leftmargin=*]
     \item \textbf{Importance of averaging} This is the first theorem given for averaging for double compression. In the context of convex optimization, averaging has been shown to be optimal \citep{rakhlin2012making}.
    
    \item \textbf{Speed of convergence, if $\sigmstar = 0$, $B\neq 0$, $K\to \infty$.} For $\alpha \neq 0$, $E=0$,  while for $\alpha =0$, $E\varpropto B^2$. Memory thus accelerates the convergence  from a rate $O(K^{-1/2})$ to $O(K^{-1})$. 
    
    \item \textbf{Speed of convergence, general case.}
    More generally, we always get a $K^{-1/2}$ sublinear speed of convergence, and a faster rate  $K^{-1}$,  when using memory,  and if $E\le \delta_0^2 N /(2K \gamma_{\max}^2)$ -- i.e.~in the context of a low noise $\sigmstar^2$, as $E\varpropto \sigmstar^2$. 
    Again, it appears that bi-compression is mostly useful in low-$\sigmstar^2$ regimes or during the first iterations: intuitively, for a fixed communication budget, while bi-compression allows to perform $\min\{\omgC^\up, \omgC^\dwn\}$-times more iterations,  this is no longer beneficial if the convergence rate is dominated by $\sqrt{2 \delta_0^2  E/ N K}$, as $E$ increases by a factor $\omgC^\up\times \omgC^\dwn$. \label{item:avg}
     \item \textbf{Memoryless case, impact of minibatch.} In the variant of \Artemis~\emph{without memory}, the asymptotic convergence rate is $\sqrt{2 \delta_0^2  E/ N K}$ with the constant $E\varpropto \sigmstar^2/b + B^2$: interestingly, it appears that in the case of non i.i.d.~data ($B^2>0$), the \emph{convergence rate saturates when the size of the mini-batch increases}: large mini-batches \emph{do not help}. On the contrary, with memory, the variance is, as classically, reduced by a factor proportional to the size of the batch, without saturation.
\end{enumerate}

\subsection{Convergence in distribution and lower bound}\label{app:cvdist}
The increase in the variance (in \cref{item:avg}) is not an artifact of the proof: we prove the existence of a limit distribution for the iterates of \Artemis, and analyze its variance. More precisely, we show a linear rate of convergence for the distribution $\Theta_k$ of $w_k$ (launched from $w_0$), w.r.t.~the Wasserstein distance $\mathcal{W}_2$~\citep{Vil_2009}: this gives us a lower bound on the asymptotic variance. Here, we further assume that the compression operator is \emph{Stochastic sparsification}~\citep{wen_terngrad_2017}.
\gs
\begin{theorem}[Convergence in distribution and lower bound on the variance]\label{thm:cvdist}
Under \Cref{asu:strongcvx,asu:cocoercivity,asu:noise_sto_grad,asu:bounded_noises_across_devices,asu:expec_quantization_operator} (full participation setting), for $\gamma,\alpha,E$ given in \Cref{thm:cvgce_artemis} and \Cref{table:summary_convergence}:
\begin{enumerate}[topsep=0pt,itemsep=1pt,leftmargin=*,noitemsep]
    \item There exists a limit distribution $\pi_{\gamma, v}$ depending on the {variant $v$} of the algorithm, s.t.~ for any $k\geq 1$, $\mathcal W_2(\Theta_k, \pi_{\gamma, v}) \le (1-\gamma \mu)^k C_0 $, with $C_0$ a constant.
    \item When $k$ goes to  $\infty$, the second order moment $\E[\SqrdNrm{w_k-w_*}]$ converges to $\E_{w \sim \pi_{\gamma,v}}[\SqrdNrm{w-w_*}]$,  which is \emph{lower bounded} by  $\Omega(\gamma E/ \mu N)$ as in \Cref{thm:cvgce_artemis} as $\gamma \to 0$, with $E$ depending on the variant.
\end{enumerate}
\end{theorem}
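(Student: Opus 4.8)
The plan is to realize the iteration $w_{k+1} = w_k - \gamma \Omega_{k+1, S_k}$ as a Markov chain on $\WW$ (or rather on the augmented state space $(w_k, (h_k^i)_{i=1}^N)$, since the memory terms carry state), and to show this chain is a contraction for the Wasserstein-2 metric once we condition on the randomness of the gradient oracle and the compression operators. Concretely, I would fix two initial conditions, run the chain with the \emph{same} noise realizations (a synchronous coupling: identical stochastic gradient draws, identical compression randomness, identical device sampling), and track the squared distance between the two coupled augmented states. The core one-step estimate is essentially the same Lyapunov computation already carried out in the proof of \Cref{thm:cvgce_artemis} (Appendix theorems \ref{app:thm:without_mem}, \ref{app:thm:with_mem}) — a contraction of the form $\E[V_{k+1}] \le (1-\gamma\mu)\E[V_k]$ for the Lyapunov function $V_k = \SqrdNrm{w_k - w_*} + \text{(const)}\cdot\frac{1}{N}\sum_i \SqrdNrm{h_k^i - h_*^i}$ — except that now it is applied to the \emph{difference} of two coupled trajectories rather than to the distance to $w_*$; the affine noise terms cancel in the difference because cocoercivity (\Cref{asu:cocoercivity}) and the compression variance bound (\Cref{asu:expec_quantization_operator}) are stated in a way that is linear/quadratic in the relevant increments. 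Iterating gives $\mathcal W_2^2(\Theta_k, \Theta_k') \le (1-\gamma\mu)^k C$, and Cauchy-completeness of $(\mathcal P_2(\WW), \mathcal W_2)$ together with this uniform contraction yields a unique fixed point $\pi_{\gamma,v}$; taking one of the two chains started already in $\pi_{\gamma,v}$ gives part (1), with $C_0$ reading off the initial gap $\SqrdNrm{w_0-w_*}$ plus the memory initialization.

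For part (2): once $\mathcal W_2(\Theta_k, \pi_{\gamma,v}) \to 0$, convergence of the second moment $\E[\SqrdNrm{w_k - w_*}] \to \E_{w\sim\pi_{\gamma,v}}[\SqrdNrm{w-w_*}]$ follows from the fact that $\mathcal W_2$-convergence is equivalent to weak convergence plus convergence of second moments — but here we have the stronger quantitative statement, so it suffices to note $\big|\,\|\E[\SqrdNrm{w_k-w_*}]^{1/2}] - \E_\pi[\cdots]^{1/2}\,\big| \le \mathcal W_2(\Theta_k,\pi_{\gamma,v})$ by the triangle inequality for $\mathcal W_2$ against the Dirac at $w_*$. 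The lower bound $\Omega(\gamma E/\mu N)$ is the genuinely new content: I would specialize to the stochastic sparsification operator, for which $\mathcal C(\Delta)$ has an explicit coordinatewise law, and compute a \emph{lower} bound on $\V[w_{k+1} - w_* \mid \mathcal F_k]$ using the law of total variance — the conditional variance injected at each step by the compression of the gradient at $w_*$ (and by the gradient oracle noise $\sigmstar^2$) is bounded below by a quantity proportional to $\gamma^2(\text{something}\cdot E/N)$, and summing the geometric series $\sum_k \gamma^2(1-\gamma\mu)^{2k} \cdot(\cdots)$ as $\gamma\to 0$ gives $\Theta(\gamma E/\mu N)$, matching the upper bound from \Cref{thm:cvgce_artemis} up to constants.

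The main obstacle I anticipate is twofold. First, the augmented-state contraction is not entirely automatic: the memory recursion $h_{k+1}^i = h_k^i + \alpha \widehat\Delta_k^i$ couples to the gradient at $w_k$, so the Lyapunov function must weight the memory term with exactly the right constant (the same tuning that forces $\alpha \in [1/2(\omgC^\up+1), \alpha_{\max}]$ in \Cref{thm:cvgce_artemis}), and one must check that the synchronous coupling keeps the cross-terms controllable — this is where I'd expect to re-use, almost verbatim, the algebra behind \Cref{app:thm:with_mem}. Second, and more delicate, is making the variance \emph{lower} bound clean: upper bounds on variance only need the operator's second-moment bound, but a lower bound requires that the injected noise does not get cancelled by negative correlations across coordinates or across the uplink/downlink compressions — this is precisely why the theorem restricts to stochastic sparsification, where $\mathcal C_\up$ and $\mathcal C_\dwn$ act with independent randomness and one can write down the conditional covariance exactly and bound its trace from below by isolating the diagonal contribution of the $\sigmstar^2$ term at $w_*$.
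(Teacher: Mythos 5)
Your part (1) follows essentially the paper's route (synchronous coupling of two chains on the augmented state $(w_k,(h_k^i)_i)$, re-use of the Lyapunov algebra of \Cref{app:thm:with_mem} applied to the \emph{difference} of the two trajectories, then a Cauchy/Picard fixed-point argument in $(\mathcal P_2,\W_2)$), but the justification you give for why the noise terms cancel is not sufficient, and it misplaces where the restriction to stochastic sparsification is actually needed: it is needed already for the contraction in part (1), not only for the lower bound in part (2). \Cref{asu:expec_quantization_operator} controls $\E[\SqrdNrm{\mathcal{C}_{\up}(\Delta)-\Delta}]$ for a \emph{single} input $\Delta$; under a synchronous coupling you must control $\E[\SqrdNrm{\mathcal{C}_{\up}(x)-\mathcal{C}_{\up}(y)}]$ for two distinct inputs, and nothing ``linear/quadratic in the increments'' in the assumptions gives this for a general unbiased operator (for quantization, sharing the randomness does not turn the difference of compressions into the compression of the difference). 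The paper's proof relies on the almost-sure identity $\C_\qbern(x)-\C_\qbern(y)=\C_\qbern(x-y)$, valid for sparsification with a shared Bernoulli mask, which is precisely what allows the variance bounds (\Cref{prop:variance_uplink}-type) to be applied to $x-y$ and yields the $(1-\gamma\mu)$ contraction of $\W_2$; the paper explicitly notes the step fails for quantization. You should state and use this structural property rather than appeal to the form of the assumptions.

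For part (2), your moment-convergence step is fine (using $\W_2(\cdot,\delta_{w_*})$ and the triangle inequality is if anything cleaner than the paper's Cauchy--Schwarz computation), but your lower bound takes a different route than the paper and has two concrete gaps. The paper works under the stationary distribution: one step from stationarity plus a Taylor expansion of $\nabla F$ around $w_*$ gives $\E_{\pi_{\gamma,v}}[\SqrdNrm{w-w_*}]=\gamma\,\tr\bigl(A\,\cov(\Xi(w_*,h_*))\bigr)+O(\gamma^2)$ with $A=(F''(w_*)\otimes I+I\otimes F''(w_*))^{-1}$, which requires extra regularity and ``tightness'' assumptions (\Cref{asu:cocoercivityREG,asu:regnoise,asu:noise_sto_gradLB,asu:bounded_noises_across_devicesLB,asu:expec_quantization_operatorLB}), and then computes $\tr\cov(\Xi(w_*,h_*))$ exactly for sparsification to get $\Omega(E/N)$. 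Your plan instead accumulates per-step injected variance along the trajectory, and there: (a) the only variance you can lower-bound from the assumptions is the one \emph{at} $w_*$, whereas the chain injects noise at $w_k\neq w_*$ and at $h_k\neq h_*$, so you need a continuity-of-covariance argument or the stationary expansion (this is exactly the role of \Cref{asu:regnoise} in the paper); and (b) the geometric series $\sum_k\gamma^2(1-\gamma\mu)^{2k}$ is not a valid \emph{lower} bound on the accumulation, because the drift can contract the deviation as fast as $(1-\gamma L)$ per step; carried out correctly, your route gives $\Omega(\gamma E/(LN))$ rather than $\gamma E/(\mu N)$ — which, to be fair, is also what the paper effectively proves (it bounds $A\succeq \tfrac{1}{2L}I$ and itself remarks the stated $\mu$-dependence may not be tight), but as written your series argument is not justified.
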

\gs
 \textbf{Interpretation.} The second point (2.) means that the upper bound on the saturation level provided in~\cref{thm:cvgce_artemis} \emph{is tight} w.r.t.~$\sigmstar^2, \omgC^\up, \omgC^\dwn,B^2, N$ and $\gamma$. Especially, it proves that there is indeed a quadratic increase in the variance w.r.t.~$\omgC^\up$ and $\omgC^\dwn$ when using bidirectional compression (which is itself rather intuitive). Altogether, these three theorems prove that bidirectional compression can become strictly worse than usual stochastic gradient descent in high precision regimes, a fact of major importance in practice and barely (if ever) even mentioned in previous literature.
To the best of our knowledge, only \cite{mayekar_ratq_2020} are giving a lower bound on the asymptotic variance for algorithms using compression. 
There result is more general i.e., valid for any algorithm using unidirectional compression, but weaker (worst case on the oracle does not highlight the importance of noise at the optimal point and is incompatible with linear rates).

\textbf{Proof and assumptions.} This theorem also naturally requires, for the second point, \Cref{asu:noise_sto_grad,asu:bounded_noises_across_devices,asu:expec_quantization_operator} to be ``tight'': that is, e.g.~$\text{Var}(\gwkstar^i) \geq \Omega(\sigmstar^2/{b})$; more details and the proof
are given in \Cref{app:distrib_convergence}.
Extension to other types of compression reveals to be surprisingly non-simple, and is thus out of the scope of this paper and a promising direction.

\gs\gs
\section{Partial Participation}\gs\gs
\label{sec:extension_partial_participation}

In this section we extend our work to the Partial Participation (PP) setting, considering \Cref{asu:partial_participation}. 
\begin{assumption}
\label{asu:partial_participation}
At each round $k$ in $\N$, each device has a probability $p$ of participating, independently from other workers, i.e.,  there exists a sequence $(B^i_k)_{k,i}$ of i.i.d.~Bernoulli random variables $\mathcal{B}(p)$, such that for any $k$ and $i$, $B^i_k$ marks if device $i$ is active at step $k$. We denote $S_k = \{ i \in \llbracket 1, N \rrbracket \mid B_k^i = 1 \}$ the set of active devices at round $k$ and $N_k = \mathrm{card} (S_k)$ the number of active workers. 
\end{assumption}

There are two approaches to extend the update rule.
The first one (we will refer to it as \textbf{PP1}) is the most intuitive. It consists in keeping all memories $(h_k^i)_{1\le i\le N}$ on the central server. 
This way, the central server can reconstruct at each iteration $k$ in $\N$ and for each device $i$ in $\{1, \cdots, N\}$ the stochastic gradient $\gwkhat^i$ defined as $\widehat{\Delta}_k^i +  h_k^i$. 
The update equation is $w_{k+1} = w_k - \gamma \mathcal{C}_{\dwn} \bigpar{ \frac{1}{pN} \sum_{i \in S_k} \widehat{\Delta}_k^i + h_k^i }$, and memories are updated as usually. If both compression levels are set to 0, this approach recovers classical \texttt{SGD} with PP: $w_{k+1} = w_k -   \frac{ \gamma}{pN} \sum_{i \in S_k} \g^i_{k}(w_{k}) $. It also  corresponds to the proposition of both \citet{sattler_robust_2019} and \citet[][v2 on arxiv for the distributed case]{tang_doublesqueeze_2019}.
However, it has two important drawbacks. First, the central server has to store $N$ additional memories, which may have a huge cost. Secondly, this method saturates, even with deterministic gradients $\sigma^2_{\text{unif}}=0$ and no compression  $\omgC^{\up/\dwn} = 0$ (see \Cref{fig:real_dataset_PP1}). Indeed, PP induces an additive noise term, i.e. the variance of the noise at the optimum point is not null because we have: $\text{Var}(\frac{1}{Np}\sum_{i\in S_k} \nabla F_i(w_*)) = \frac{1-p}{N^2 p}\sum_{i=1}^N \SqrdNrm{\nabla F_i(w_*)} = \frac{(1-p)B^2}{Np}$. This happens for all compression regimes in \Artemis, even for SGD.

We consider a novel approach (denoted \textbf{PP2})  that leverages the full potential of the memory, solving simultaneously the convergence issue and the need for additional memory resources. At each iteration $k$, this approach keeps a single memory $h_k$ (instead of $N$ memories) on the central server. This memory is updated at each step: $h_{k+1} = h_k +  \frac{\alpha}{N} \sum_{i \in S_k} \widehat{\Delta}_k^i$, and the update equation becomes: $w_{k+1} = w_k - \gamma \mathcal{C}_{\dwn} \bigpar{ h_k + \frac{1}{pN} \sum_{i \in S_k} \widehat{\Delta}_k^i }$. This difference is far from being insignificant. Indeed in order to reconstruct the broadcast signal, we use the memory built on all devices during previous iterations, \textit{even if} the device $i$ in $\{1, \cdots, N\}$ was not active! \textbf{The impact of this approach is major} as it follows that algorithms, even using bidirectional compression, can be \textit{faster than classical} \texttt{SGD}. In this setting, \texttt{SGD} with memory (i.e \Artemis-\textbf{PP2}~with $\omgC^{\up/\dwn} = 0$) will be the benchmark: see \Cref{fig:real_dataset_PP2}.

\begin{theorem}[Artemis with partial participation]
\label{thm:partial_participation}
Under the same assumptions and constraints on $\gamma$  and $\alpha$, when considering \Cref{asu:partial_participation} (partial participation), \Cref{thm:cvgce_artemis} is still valid for \textbf{PP2} with memory. We have: $E = (\omgC^\dwn + 1) \bigpar{\frac{2(\omgC^\up + 1)}{p} -1} \frac{\sigmstar^2}{b} + 2p C \bigpar{2\alpha^2 (\omgC^\up + 1) - \alpha } \frac{\sigmstar^2}{b}$.
\end{theorem}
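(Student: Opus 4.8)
The plan is to reproduce the Lyapunov-function argument used to prove \Cref{thm:cvgce_artemis} in the full-participation case with memory (\Cref{app:thm:with_mem}), and to track carefully the extra randomness introduced by the Bernoulli sampling of devices. I would use the same Lyapunov function $V_k = \SqrdNrm{w_k - w_*} + c\gamma^2 \sum_{i=1}^N \SqrdNrm{h_k^i - h_*^i}$ (for a suitable constant $c$), and establish a one-step contraction $\E[V_{k+1} \mid \Flast_k] \le (1-\gamma\mu) V_k + (\text{const}) \cdot \gamma^2 E / N$, from which the stated bound follows by the same recursion unrolling already done in the appendix.

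The key steps, in order: (i) Condition successively on the filtrations $\Fsto_{k+1}$, $\Fupcomp_{k+1}$, $\Fdwncomp_{k+1}$ and, crucially, on the sampling $\sigma$-algebra $\Fsamp_k$ generated by $S_k$; since $(B_k^i)_i$ are i.i.d.~$\mathcal{B}(p)$, independent of the gradient and compression noise, we get $\E[\mathbf{1}_{i\in S_k}] = p$ and $\E[\mathbf{1}_{i\in S_k}\mathbf{1}_{j\in S_k}] = p^2$ for $i\neq j$, $=p$ for $i=j$. (ii) Check that the PP2 reconstruction $h_k + \frac{1}{pN}\sum_{i\in S_k}\widehat{\Delta}_k^i$ is an \emph{unbiased} estimator of $\frac1N\sum_i \gwk^i$ given $\Fdwncompnotsamp$ — this is where the factor $1/p$ matters and is the reason PP2 behaves well. (iii) Expand the second moment of this estimator: the variance decomposes into a term coming from the gradient oracle noise (bounded via \Cref{asu:noise_sto_grad} by $\sigmstar^2/b$, picking up a $1/p$ factor and a $-1$ from the $\mathrm{Var}(\text{Binomial-like})$ bookkeeping, exactly the $\frac{2(\omgC^\up+1)}{p}-1$ shape), a term from the uplink compression operator (factor $\omgC^\up$, \Cref{asu:expec_quantization_operator}), and the memory terms. (iv) Apply the downlink compression bound, contributing the overall $(\omgC^\dwn+1)$ factor. (v) Control the memory drift $\E[\SqrdNrm{h_{k+1}^i - h_*^i}\mid\cdot]$ using the update $h_{k+1}^i = h_k^i + \alpha\mathbf{1}_{i\in S_k}\widehat{\Delta}_k^i$, where the key novelty is that the memory is updated with the $1/N$ (not $1/pN$) normalization so that averaging over the Bernoulli sampling reproduces the full-participation recursion up to a $p$-rescaling of $\alpha$; this yields the $2pC(2\alpha^2(\omgC^\up+1)-\alpha)\frac{\sigmstar^2}{b}$ contribution. (vi) Combine the pieces, choose $c$ as in the full-participation proof, use cocoercivity (\Cref{asu:cocoercivity}) and strong convexity (\Cref{asu:strongcvx}) exactly as before to absorb the $\PdtScl{\nabla F(w_k)}{w_k-w_*}$ cross term, obtaining the contraction under the same $\gamma_{\max}$ (the sampling only rescales noise, not the deterministic descent), and read off $E$.

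The main obstacle I expect is step (v): getting the memory-drift recursion to close with the \emph{same} constants $C$ and the \emph{same} admissible range for $\alpha$ as in full participation, despite the memory being updated only on the sampled coordinates with the ``wrong'' $1/N$ weight. One must show that the cross terms between the sampling indicator and the compression noise, after conditioning, collapse to the full-participation expressions with $\alpha$ effectively replaced by $p\alpha$ in the right places — and simultaneously that the $1/p$ amplification in the gradient estimator does not leak into the bias term (so that $C$ and the $\gamma$-conditions are untouched). A secondary subtlety is bounding the variance at the optimum: here $\frac1{pN}\sum_{i\in S_k}h_*^i$ is \emph{not} equal to $\frac1N\sum_i h_*^i$, so one must carefully verify that the residual $\mathrm{Var}(\frac1{pN}\sum_{i\in S_k}\nabla F_i(w_*))=\frac{(1-p)B^2}{Np}$ term — which wrecks PP1 — is exactly the term that the memory absorbs in PP2, leaving $E$ independent of $B^2$. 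Once these bookkeeping points are settled, the unrolling of the recursion is identical to the appendix proof and needs no repetition.
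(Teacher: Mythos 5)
Your plan follows essentially the same route as the paper: in fact the paper has no separate argument for \Cref{thm:partial_participation}, because \Cref{app:thm:with_mem} is stated and proved directly under \Cref{asu:partial_participation}, with precisely the ingredients you list — a bias--variance decomposition of $\gwkhats$ with respect to the device sampling, in which the sampling variance acts only on the compressed increments $\widehat{\Delta}_k^i$ (this is exactly why no $B^2$ survives, cf.~\Cref{lem:bound_sum_of_compressed_gradients}), a memory recursion written with the Bernoulli indicators $B_k^i$ (\Cref{lem:recursive_inequalities_over_memory}), and then cocoercivity and strong convexity applied to the same Lyapunov function. Your identification of the two contributions to $E$, namely $(\omgC^\dwn+1)\left(\frac{2(\omgC^\up+1)}{p}-1\right)\frac{\sigmstar^2}{b}$ and $2p\cst\left(2\alpha^2(\omgC^\up+1)-\alpha\right)\frac{\sigmstar^2}{b}$, is correct.

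One claim in your plan is, however, false as stated and would block a literal execution: that the sampling ``only rescales noise, not the deterministic descent'', so that $\gamma_{\max}$, the admissible interval for $\cst$ and the range for $\alpha$ are ``untouched''. The $1/p$-amplified term $\left(\frac{2(\omgC^\up+1)}{p}-1\right)\frac{1}{N^2}\sum_{i}\E\SqrdNrm{\gwk^i-\gwkstar^i}$ is converted by cocoercivity (\Cref{lem:applying_coco}) into a multiple of $\PdtScl{\nabla F(w_k)}{w_k-w_*}$, so it does enter the descent coefficient; consequently the paper's conditions are $p$-dependent, e.g.~$\gamma \le \frac{N}{(\omgC^\dwn+1)\left(N+\frac{4(\omgC^\up+1)}{p}-2\right)L}$, the endpoints of the interval for $\cst$ carry $1/p$ and $p$ factors, and the upper bound on $\alpha$ involves $p$ as well. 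The phrase ``same constraints on $\gamma$ and $\alpha$'' in the theorem must be read as the ($p$-dependent) constraints of \Cref{app:thm:with_mem}, not the $p=1$ entries of \Cref{table:summary_convergence}; with the full-participation step sizes the contraction argument breaks down for small $p$. A smaller imprecision: the effect of sampling on the memory recursion is a multiplication of all drift terms by $p$ (contraction factor $1+p(2\alpha^2(\omgC^\up+1)-3\alpha)$), not a substitution $\alpha\mapsto p\alpha$, which would wrongly put $p^2$ on the $\alpha^2$ terms — though the final expression you wrote for this contribution is the right one.
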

The most  important observation is that we recover  again a linear convergence rate if $\sigmstar=0$. By contrast, even with memory and without compression, for \textbf{PP1}, there is an extra $(B^2(1-p)/(Np)$ term.
 
\begin{remark}[Impact of downlink compression]
With both of these approaches, we still need to synchronize the model in order to compute the stochastic gradient on the same up-to-date model for each worker. Thus, a newly active worker must catch-up the sequence of missed updates $(\Omega_k)_k$.  Of course, if a device has been inactive for too many iterations, we send the full model instead of the sequence of missed updates. 
Thus, this need for synchronization does not lead to additional computational resources. The  mechanism is described in \Cref{algo} and does not interfere with the theoretical analysis. \gs
\end{remark}

\gs\gs\gs
\section{Experiments}\label{sec:expemain}\gs\gs

In this section, we illustrate our theoretical guarantees on both synthetic and real datasets.
The goal of this section is to confirm the theoretical findings in \Cref{thm:partial_participation,thm:cvdist,thm:cvgce_artemis,thm:main_PRave}, and to underline the impact of the memory. Therefore, we  focus on five of the algorithms covered by our framework: \Artemis~with bidirectional compression (simply denoted \Artemis), \texttt{QSGD}, \texttt{Diana}, \texttt{Bi-QSGD}, and usual \texttt{SGD} without any compression.
In the Appendix (see \Cref{fig:artemis_vs_existing}), we compare \Artemis~with other existing benchmarks : \texttt{Double-Squeeze}, \texttt{Dore}, \texttt{FedSGD} and \texttt{FedPAQ} \citep[see][]{reisizadeh_fedpaq_2020}. We also perform experiments with \emph{optimized} learning rates (see \Cref{fig:app:gamma_opt_for_each_algo}).

In all experiments, we display the logarithm excess error $\log_{10}(F(w_k)- F(w_*))$ w.r.t.~the number of iterations $k$ or the number of communicated bits. We use a quantization scheme (defined in \Cref{app:quantization_schema}) with  $s=2^0$ in full participation settings, and with $s=2^1$ in PP settings. Curves are averaged over $5$ runs, and we plot error bars on all figures. These errors bars correspond to $\pm$ the standard deviation of the logarithm excess loss over the five runs.

We first consider two simple synthetic datasets: one for least-squares regression (with the same distribution over each machine), and one for logistic regression (with varying distributions across machines). More details are given in \Cref{app:experiments} on the way data is generated. 
We use $N=20$ devices, each holding $200$ points of dimension $d=20$, and run algorithms over $100$ epochs. 

To illustrate theorems on real data and higher dimension, we then consider two real-world dataset: \textit{superconduct} \citep[see][with 21 263 points and 81 features]{hamidieh_data-driven_2018} and \textit{quantum} \citep[see][with 50 000 points and 65 features]{caruana_kdd-cup_2004} with $N=20$ workers.
To simulate non-i.i.d.~\emph{and} unbalanced workers, we split the dataset in heterogeneous groups, using a Gaussian mixture clustering on the TSNE representations (defined by \citet{maaten_visualizing_2008}). Thus, the distribution  and number of points hold by each worker largely differs between devices, see \Cref{app:fig:quantum_and_superconduct_tsne}.

\begin{figure}
    \centering
    \begin{subfigure}{0.22\textwidth}
        \centering
        \includegraphics[width=1\textwidth]{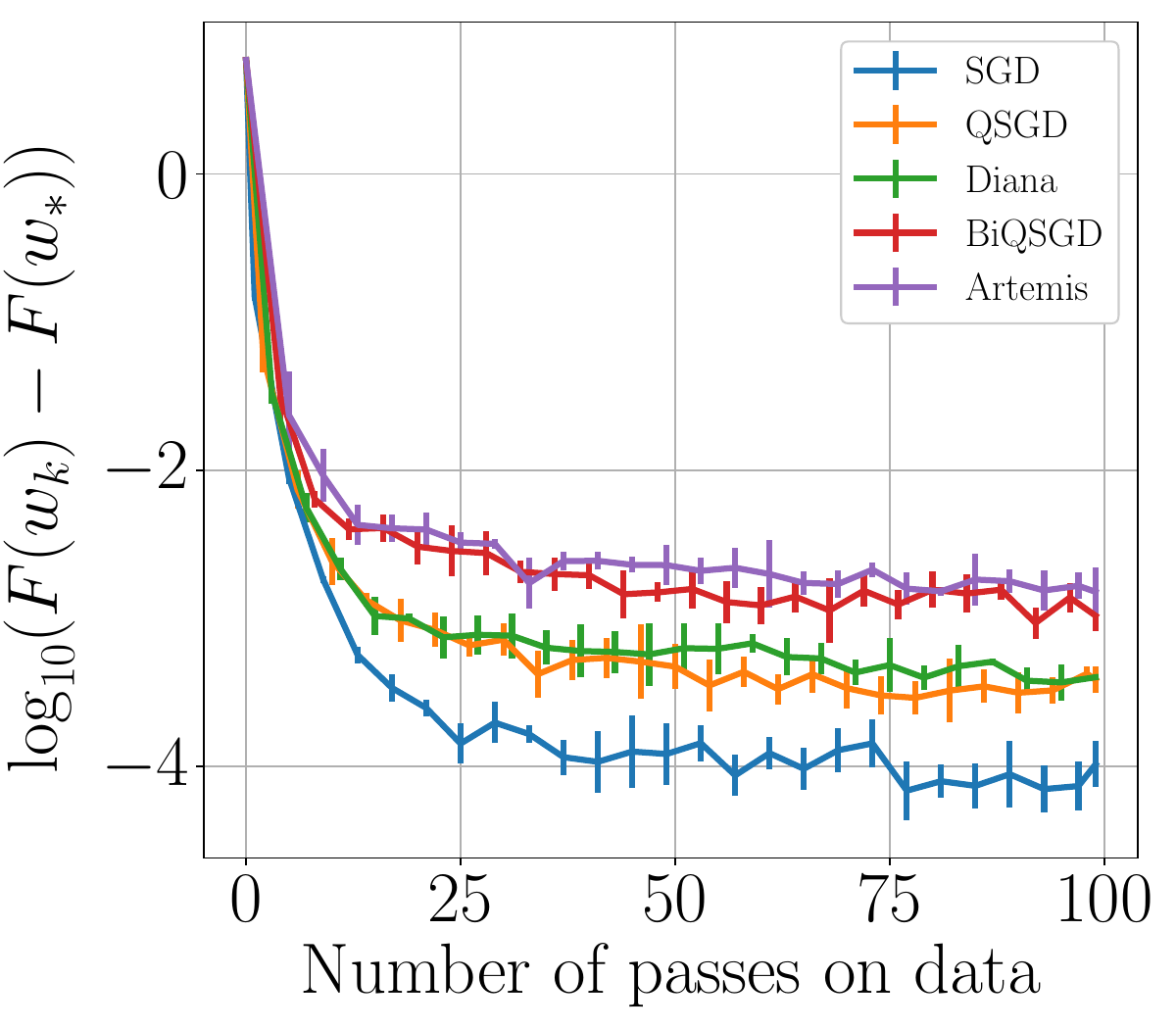}
        \caption{LSR (i.i.d.): $\sigmstar^2 \neq 0$ \gs}
        \label{fig:LSR_noised}
    \end{subfigure}
    \begin{subfigure}{0.22\textwidth}
        \centering
        \includegraphics[width=1\textwidth]{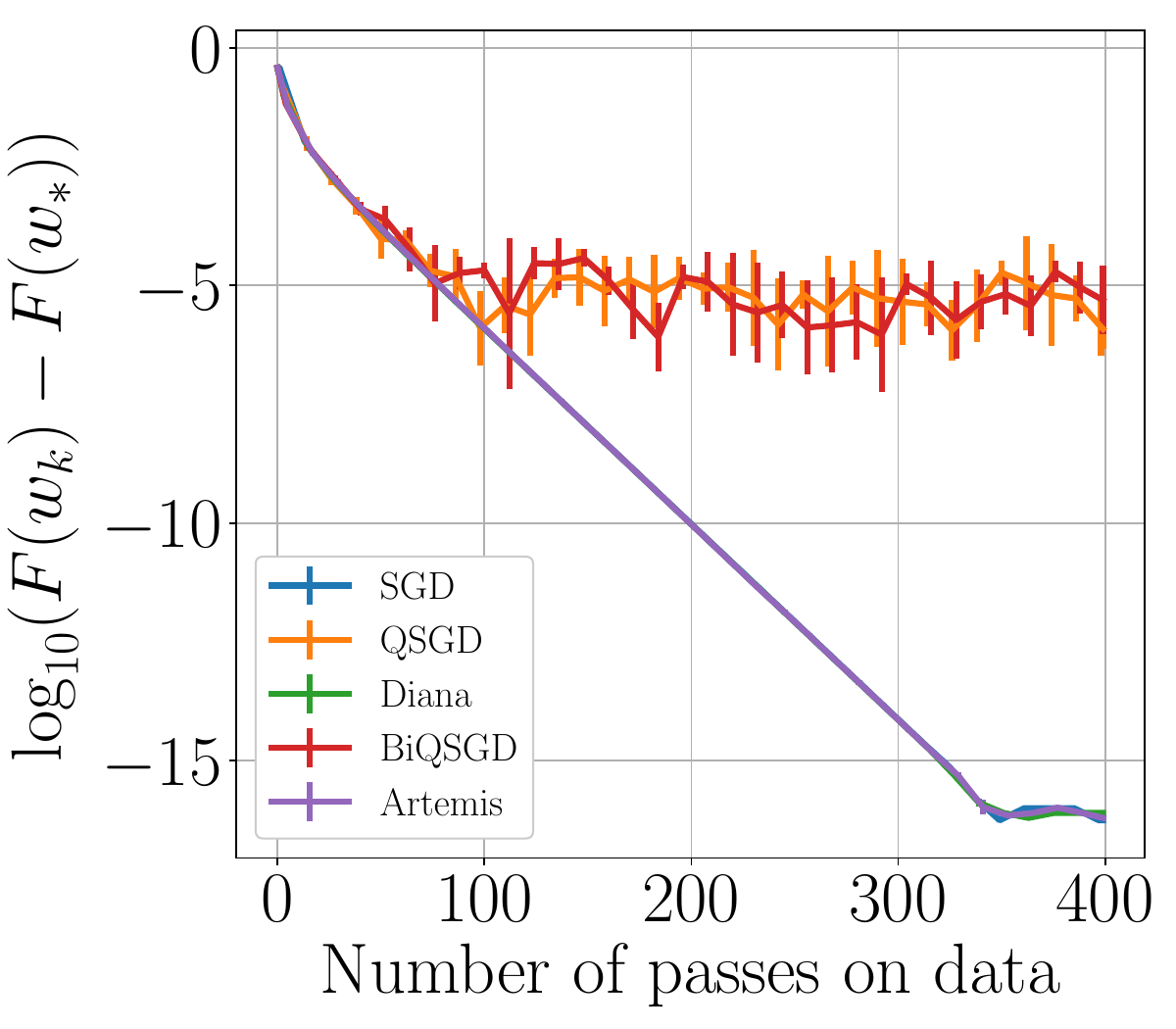}
        \caption{LR (non-i.i.d.): $\sigmstar=0$. \gs}
        \label{fig:deterministic_noniid}
    \end{subfigure}
    \caption[Figures of the main]{Left: illustration of the saturation when $\sigmstar\neq 0$ and data is i.i.d., right: illustration of the memory benefits when $\sigmstar=0$ but with non-i.i.d. data.
    \vspace{-1.1em}}
    \label{fig:main_figures_articles1}
\end{figure}

\textbf{Convergence.} \Cref{fig:LSR_noised} presents the convergence of each  algorithm w.r.t.~the number of iterations $k$. During first iterations all algorithms make fast progress. However, because $\sigmstar^2\neq0$, all algorithms saturate; and the saturation level is higher for double compression (\Artemis, \texttt{Bi-QSGD}), than for simple compression (\texttt{Diana}, \texttt{QSGD}), or than for SGD. This corroborates findings in \Cref{thm:cvgce_artemis} and \Cref{thm:cvdist}.  

\textbf{Complexity.} On \Cref{fig:real_dataset_PP2,fig:real_dataset_PP1,fig:real_dataset}, the loss is plotted w.r.t.~the theoretical number of bits exchanged after $k$ iterations for the \textit{quantum} and \textit{superconduct} dataset. This confirms that double compression should be the method of choice to achieve a reasonable precision (\emph{w.r.t.~$\sigmstar$}), whereas for high precision, a simple method like SGD results \emph{in a lower complexity.} 

\textbf{Linear convergence under \emph{null variance at the optimum}.} To highlight the significance of our new condition on the noise, we compare $\sigmstar^2\neq 0$ and $\sigmstar^2= 0$ on \Cref{fig:main_figures_articles1}. Saturation is observed in \Cref{fig:LSR_noised}, but if we consider a situation in which $\sigmstar^2= 0$, and where the uniform bound on the gradient's variance \emph{is not null} (as opposed to experiments in \citet{liu_double_2020} who consider batch gradient descent), \emph{a linear convergence rate is observed.} This illustrates that our new condition is sufficient to reach a linear convergence. Comparing \Cref{fig:LSR_noised} with \Cref{fig:app:LSR_nonoise} sheds light on the fact that the saturation level (before which double compression is indeed beneficial) is truly proportional to the noise variance \textit{at optimal point} i.e.~$\sigmstar^2$. And when $\sigmstar^2 = 0$, bidirectional compression is much more effective than the other methods (see \Cref{fig:app:without_noise} in \Cref{app:exp:leastsquare}).

\begin{figure}
    \centering
    \begin{subfigure}{0.22 \textwidth}
        \centering
        \includegraphics[width=1\textwidth]{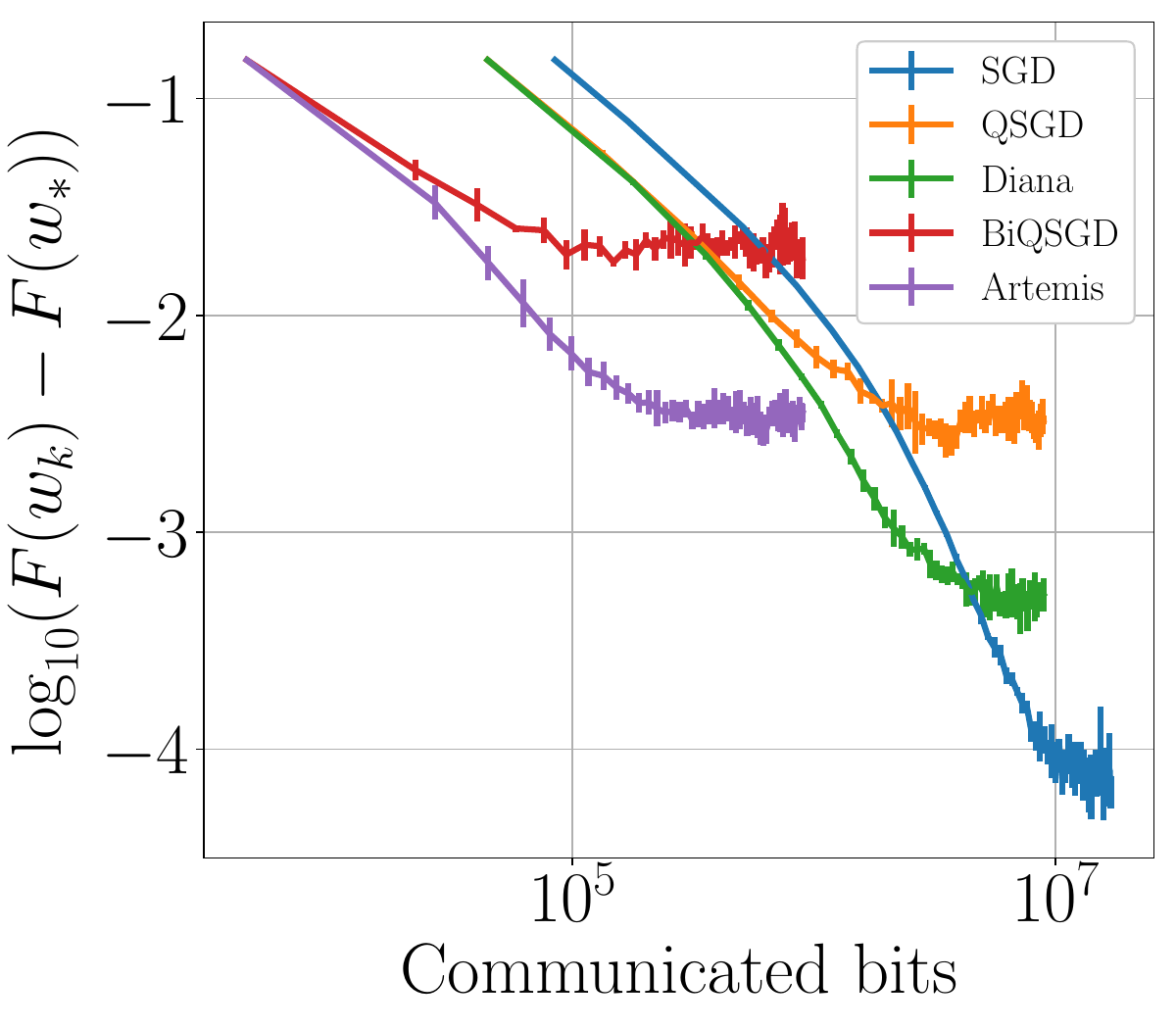}
        \caption{Quantum \gs}
        \label{fig:superconduct}
    \end{subfigure}
    \begin{subfigure}{0.22 \textwidth}
        \centering
        \includegraphics[width=1\textwidth]{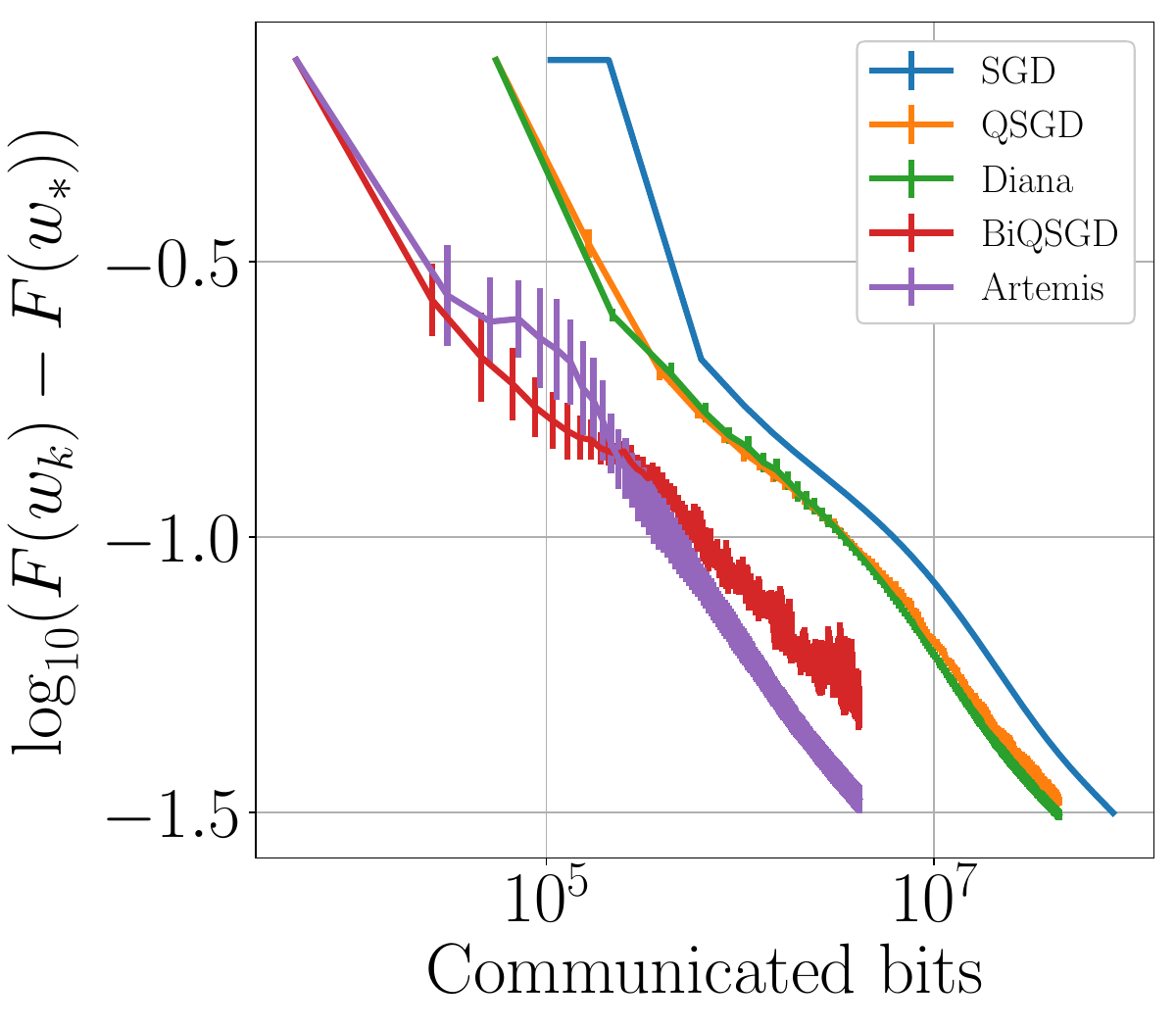}
        \caption{Superconduct \gs}
        \label{fig:quantum}
    \end{subfigure}
    \caption[Real dataset]{\textbf{Real dataset} (non-i.i.d.): $\sigmstar \neq 0$, $N=20$ workers, $p=1$, $b >1$ ($150$ iter.). X-axis in \# bits.
    \vspace{-1.1em}}
    \label{fig:real_dataset}
\end{figure}

\begin{figure}
    \centering
    \begin{subfigure}{0.22 \textwidth}
        \centering
        \includegraphics[width=1\textwidth]{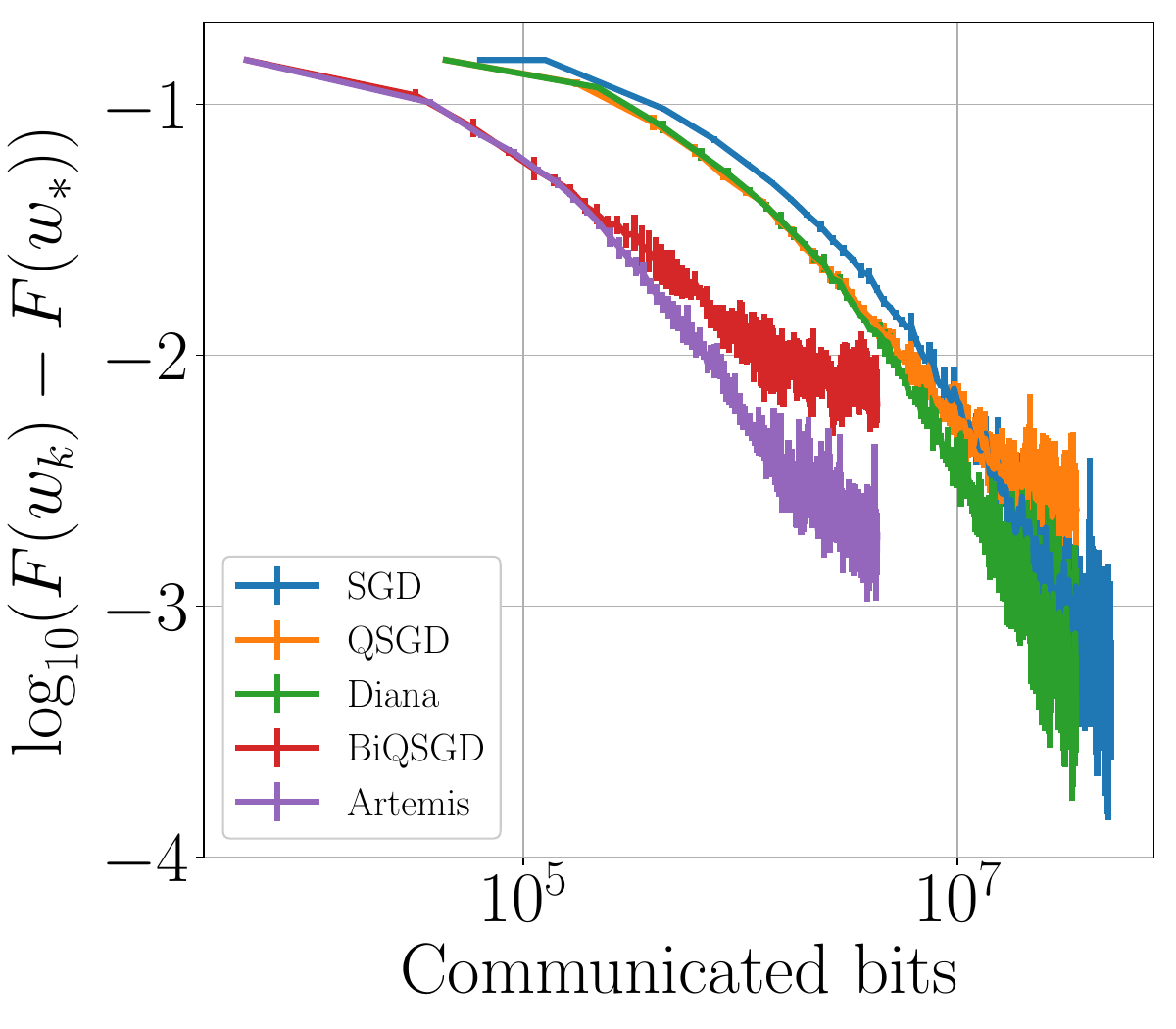}
        \caption{Quantum \gs}
    \end{subfigure}
    \begin{subfigure}{0.22 \textwidth}
        \centering
        \includegraphics[width=1\textwidth]{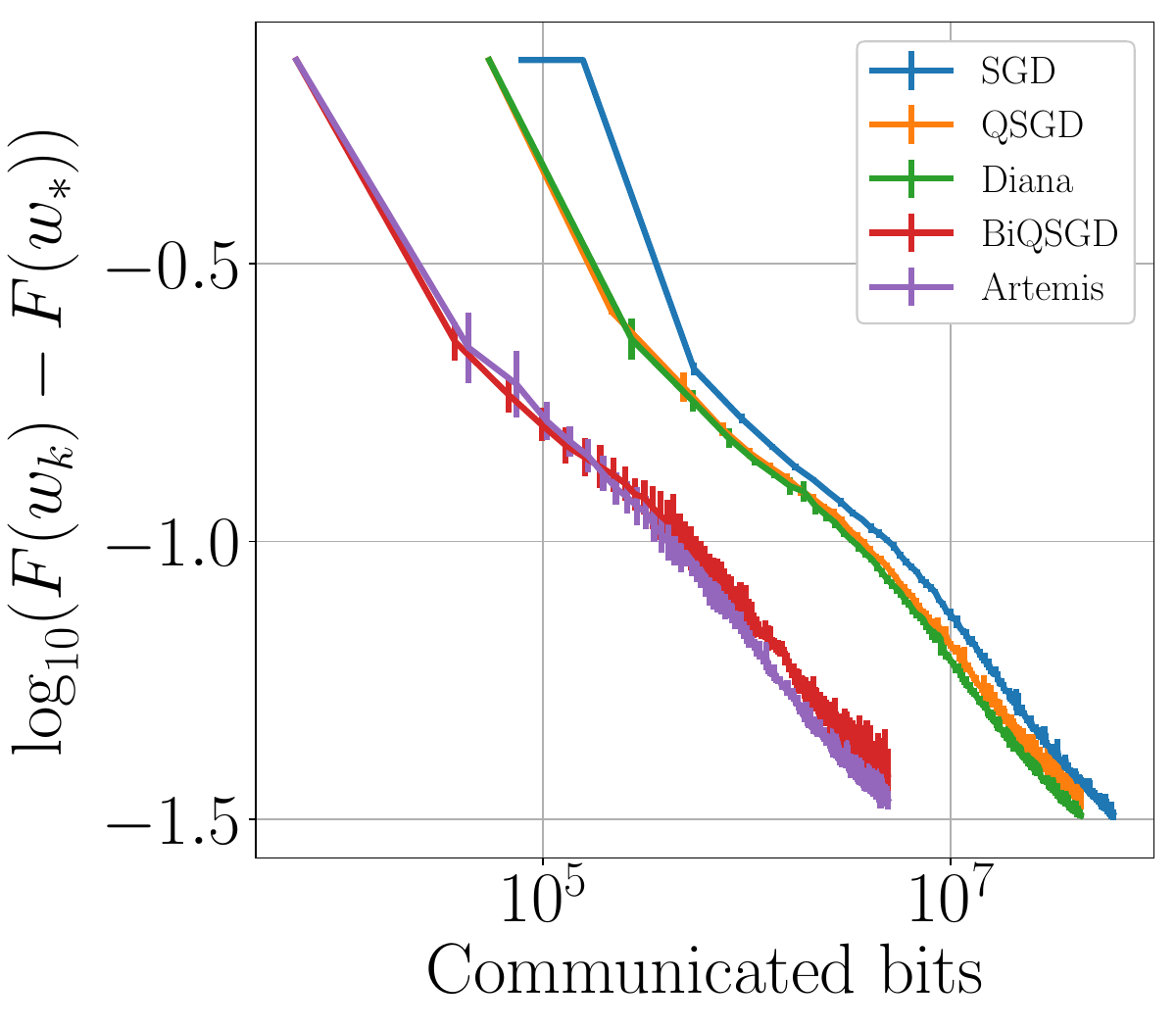}
        \caption{Superconduct \gs}
    \end{subfigure}
    \caption[Real dataset]{\textbf{Partial participation - PP1} (non-i.i.d.): $\sigmstar = 0$ (same experiments in stochastic regime: \Cref{app:fig:real_dataset_PP1:sto}), $N=20$ workers, $p=0.5$,  $b >1$ ($400$ iter.). X-axis in \# bits.
    \vspace{-1.1em}}
    \label{fig:real_dataset_PP1}
\end{figure}

\begin{figure}
    \centering
    \begin{subfigure}{0.22\textwidth}
        \centering
        \includegraphics[width=1\textwidth]{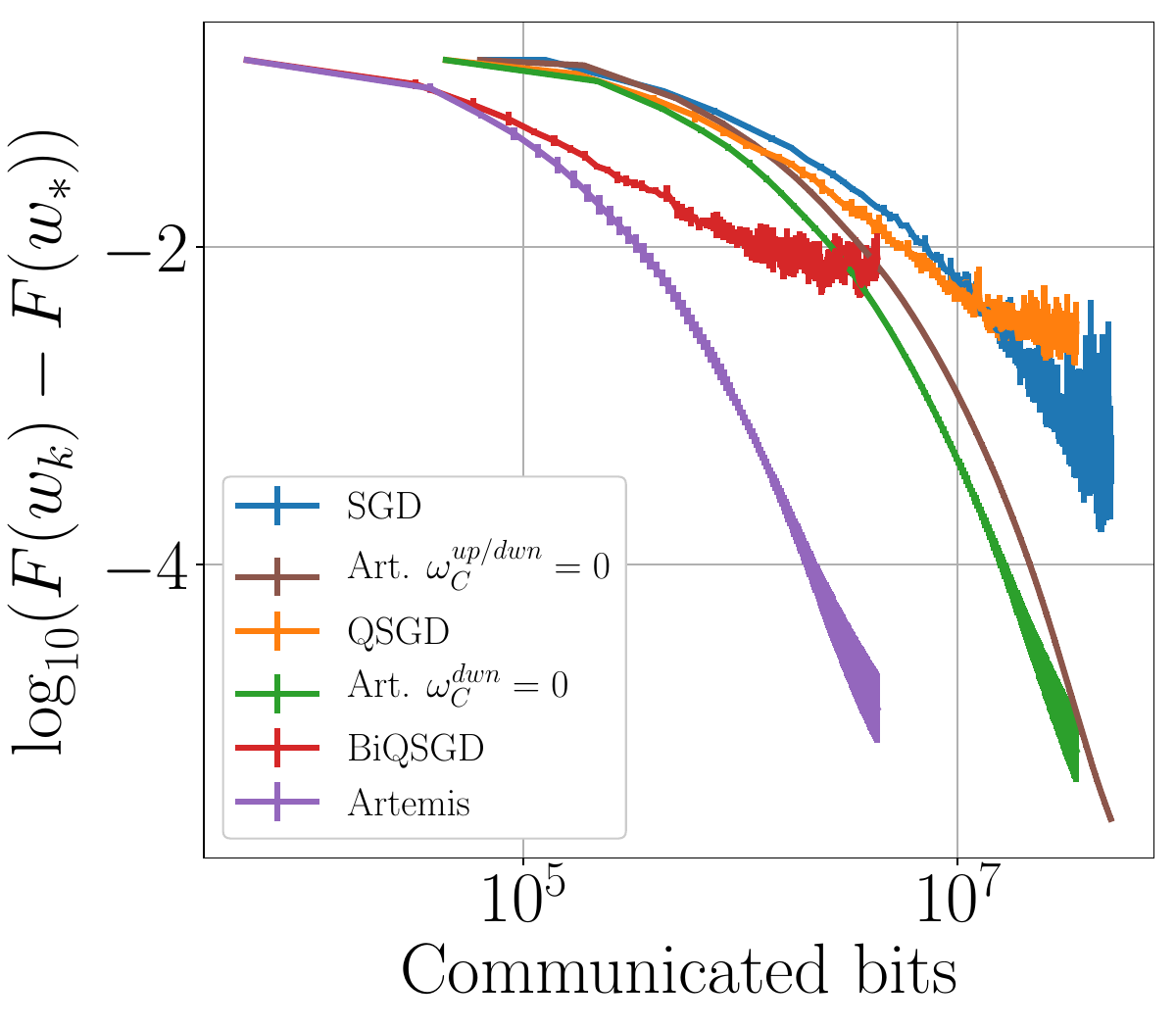}
        \caption{Quantum}
    \end{subfigure}
    \begin{subfigure}{0.22\textwidth}
        \centering
        \includegraphics[width=1\textwidth]{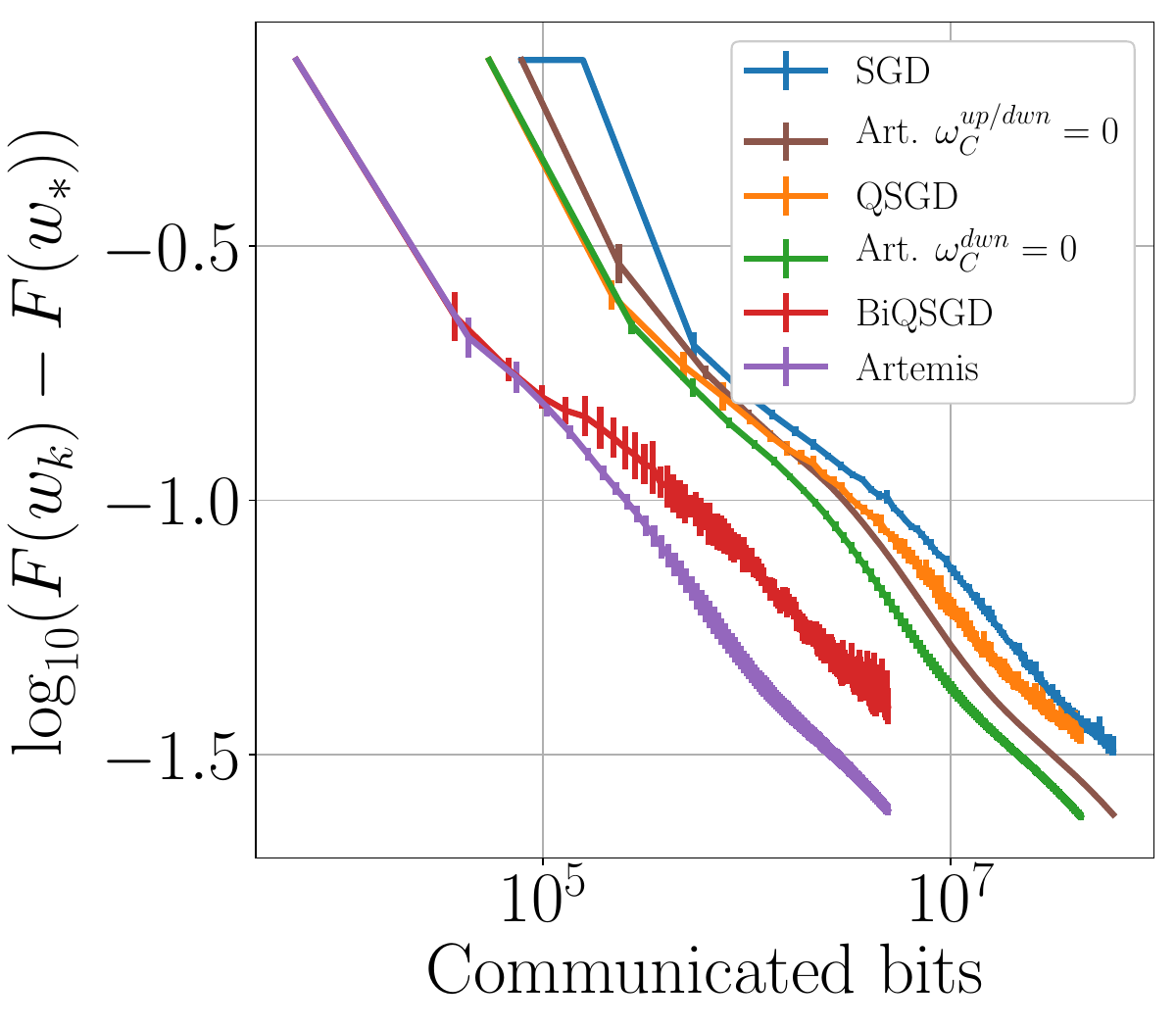}
        \caption{Superconduct}
    \end{subfigure}
    \caption[Real dataset]{\textbf{Partial participation - PP2} (non-i.i.d.): $\sigmstar = 0$ (same experiments in stochastic regime: \Cref{app:fig:real_dataset_PP2:sto}), $N=20$ workers, $p=0.5$,  $b >1$ ($400$ iter.). X-axis in \# bits. 
    \vspace{-1.1em}}
    \label{fig:real_dataset_PP2}
\end{figure}

\textbf{Heterogeneity and real datasets.} 
While in \Cref{fig:LSR_noised}, data is i.i.d.~on machines, and  \Artemis~is thus not expected to outperform \texttt{Bi-QSGD} (the difference between the two being the memory), in \Cref{fig:real_dataset_PP2,fig:real_dataset_PP1,fig:real_dataset,fig:deterministic_noniid} we use \textbf{non-i.i.d.~data}. None of the previous papers on compression directly illustrated the impact of heterogeneity on simple examples, neither compared it with i.i.d.~situations.

\textbf{Partial participation.} On \Cref{fig:real_dataset_PP2,fig:real_dataset_PP1} we run experiments with \textit{only half of the devices active} (randomly sampled) at each iteration with the two approaches (\textbf{PP1} and \textbf{PP2}) described in \Cref{sec:extension_partial_participation} in a full gradient regime (same experiments in stochastic regime are given in \Cref{app:subsubsec:PP}). This two figures \textit{emphasize the failure of the first approach} \textbf{PP1} compared to \textbf{PP2}. We observe that in this last case only, \Artemis~has a linear convergence. It also stresses the key role of the memory in this setting. On \Cref{fig:real_dataset_PP2}, all algorithms with a (single) memory (with or without up/down compression) are better than \texttt{SGD} without memory. Note that \Diana~is defined with multiple memories, thus it cannot be compared to \Artemis~with \textbf{PP2}.

\gs
\gs
\section{Conclusion}\gs\gs
\label{sect:conclusion}

We propose \Artemis, a framework using bidirectional compression to reduce the number of bits needed to perform distributed or federated learning. On top of compression, \Artemis~includes a memory mechanism which improves convergence over non-i.i.d.~data. 
As PP is a classical setting, we designed an approach (\textbf{PP2}) to tackle it while leveraging the full impact of memory, outperforming existing solutions. 
We provide three tight theorems giving guarantees of a fast convergence (linear up to a threshold), highlighting the impact of memory, analyzing Polyak-Ruppert averaging and obtaining lowers bound by studying convergence in distribution of our algorithm. 
Altogether, this improves the understanding of compression combined with a memory mechanism and sheds light on challenges ahead.

\section*{Acknowledgments}
We would like to thank Richard Vidal, Laeticia Kameni from Accenture Labs (Sophia Antipolis, France) and Eric Moulines from École Polytechnique for interesting discussions. This research was supported by the \emph{SCAI: Statistics and Computation for AI} ANR Chair of research and teaching in artificial intelligence and by \emph{Accenture Labs} (Sophia Antipolis, France).

\bibliography{main}

\begin{thebibliography}{49}
\providecommand{\natexlab}[1]{#1}
\providecommand{\url}[1]{\texttt{#1}}
\expandafter\ifx\csname urlstyle\endcsname\relax
  \providecommand{\doi}[1]{doi: #1}\else
  \providecommand{\doi}{doi: \begingroup \urlstyle{rm}\Url}\fi

\bibitem[noa()]{noauthor_speedtest_nodate}
Speedtest {Global} {Index} – {Monthly} comparisons of internet speeds from
  around the world.

\bibitem[Agarwal et~al.(2018)Agarwal, Suresh, Yu, Kumar, and
  McMahan]{agarwal_cpsgd_2018}
Agarwal, N., Suresh, A.~T., Yu, F. X.~X., Kumar, S., and McMahan, B.
\newblock {cpSGD}: {Communication}-efficient and differentially-private
  distributed {SGD}.
\newblock In Bengio, S., Wallach, H., Larochelle, H., Grauman, K.,
  Cesa-Bianchi, N., and Garnett, R. (eds.), \emph{Advances in {Neural}
  {Information} {Processing} {Systems} 31}, pp.\  7564--7575. Curran
  Associates, Inc., 2018.

\bibitem[Aji \& Heafield(2017)Aji and Heafield]{aji_sparse_2017}
Aji, A.~F. and Heafield, K.
\newblock Sparse {Communication} for {Distributed} {Gradient} {Descent}.
\newblock \emph{Proceedings of the 2017 Conference on Empirical Methods in
  Natural Language Processing}, pp.\  440--445, 2017.
\newblock \doi{10.18653/v1/D17-1045}.
\newblock arXiv: 1704.05021.

\bibitem[Alistarh et~al.(2017)Alistarh, Grubic, Li, Tomioka, and
  Vojnovic]{alistarh_qsgd_2017}
Alistarh, D., Grubic, D., Li, J., Tomioka, R., and Vojnovic, M.
\newblock {QSGD}: {Communication}-{Efficient} {SGD} via {Gradient}
  {Quantization} and {Encoding}.
\newblock \emph{Advances in Neural Information Processing Systems},
  30:\penalty0 1709--1720, 2017.

\bibitem[Alistarh et~al.(2018)Alistarh, Hoefler, Johansson, Konstantinov,
  Khirirat, and Renggli]{alistarh_convergence_2018}
Alistarh, D., Hoefler, T., Johansson, M., Konstantinov, N., Khirirat, S., and
  Renggli, C.
\newblock The {Convergence} of {Sparsified} {Gradient} {Methods}.
\newblock \emph{Advances in Neural Information Processing Systems},
  31:\penalty0 5973--5983, 2018.

\bibitem[Belkin et~al.(2019)Belkin, Hsu, Ma, and Mandal]{belkin2019reconciling}
Belkin, M., Hsu, D., Ma, S., and Mandal, S.
\newblock Reconciling modern machine-learning practice and the classical
  bias--variance trade-off.
\newblock \emph{Proceedings of the National Academy of Sciences}, 116\penalty0
  (32):\penalty0 15849--15854, 2019.

\bibitem[Bottou(1999)]{bottou_-line_1999}
Bottou, L.
\newblock On-line learning and stochastic approximations.
\newblock 1999.
\newblock \doi{10.1017/CBO9780511569920.003}.

\bibitem[Caruana et~al.(2004)Caruana, Joachims, and
  Backstrom]{caruana_kdd-cup_2004}
Caruana, R., Joachims, T., and Backstrom, L.
\newblock {KDD}-{Cup} 2004: results and analysis.
\newblock \emph{ACM SIGKDD Explorations Newsletter}, 6\penalty0 (2):\penalty0
  95--108, December 2004.
\newblock ISSN 1931-0145.
\newblock \doi{10.1145/1046456.1046470}.

\bibitem[Dean et~al.(2012)Dean, Corrado, Monga, Chen, Devin, Mao, Ranzato,
  Senior, Tucker, Yang, Le, and Ng]{dean_large_2012}
Dean, J., Corrado, G., Monga, R., Chen, K., Devin, M., Mao, M., Ranzato, M.,
  Senior, A., Tucker, P., Yang, K., Le, Q., and Ng, A.
\newblock Large {Scale} {Distributed} {Deep} {Networks}.
\newblock \emph{Advances in Neural Information Processing Systems}, 25, 2012.

\bibitem[Dieuleveut et~al.(2018)Dieuleveut, Durmus, and
  Bach]{dieuleveut_bridging_2018}
Dieuleveut, A., Durmus, A., and Bach, F.
\newblock Bridging the {Gap} between {Constant} {Step} {Size} {Stochastic}
  {Gradient} {Descent} and {Markov} {Chains}.
\newblock \emph{arXiv:1707.06386 [math, stat]}, April 2018.
\newblock arXiv: 1707.06386.

\bibitem[Elias(1975)]{elias_universal_1975}
Elias, P.
\newblock Universal codeword sets and representations of the integers,
  September 1975.

\bibitem[Gower et~al.(2019)Gower, Loizou, Qian, Sailanbayev, Shulgin, and
  Richtárik]{gower_sgd_2019}
Gower, R.~M., Loizou, N., Qian, X., Sailanbayev, A., Shulgin, E., and
  Richtárik, P.
\newblock {SGD}: {General} {Analysis} and {Improved} {Rates}.
\newblock In \emph{International {Conference} on {Machine} {Learning}}, pp.\
  5200--5209. PMLR, May 2019.
\newblock ISSN: 2640-3498.

\bibitem[Hamidieh(2018)]{hamidieh_data-driven_2018}
Hamidieh, K.
\newblock A data-driven statistical model for predicting the critical
  temperature of a superconductor.
\newblock \emph{Computational Materials Science}, 154:\penalty0 346--354,
  November 2018.
\newblock ISSN 0927-0256.
\newblock \doi{10.1016/j.commatsci.2018.07.052}.

\bibitem[Horváth \& Richtárik(2020)Horváth and
  Richtárik]{horvath_better_2020}
Horváth, S. and Richtárik, P.
\newblock A {Better} {Alternative} to {Error} {Feedback} for
  {Communication}-{Efficient} {Distributed} {Learning}.
\newblock \emph{arXiv:2006.11077 [cs, stat]}, June 2020.
\newblock arXiv: 2006.11077.

\bibitem[Horváth et~al.(2019)Horváth, Kovalev, Mishchenko, Stich, and
  Richtárik]{horvath_stochastic_2019}
Horváth, S., Kovalev, D., Mishchenko, K., Stich, S., and Richtárik, P.
\newblock Stochastic {Distributed} {Learning} with {Gradient} {Quantization}
  and {Variance} {Reduction}.
\newblock \emph{arXiv:1904.05115 [math]}, April 2019.
\newblock arXiv: 1904.05115.

\bibitem[Huang et~al.(2019)Huang, Cheng, Bapna, Firat, Chen, Chen, Lee, Ngiam,
  Le, Wu, and Chen]{huang_gpipe_2019}
Huang, Y., Cheng, Y., Bapna, A., Firat, O., Chen, D., Chen, M., Lee, H., Ngiam,
  J., Le, Q.~V., Wu, Y., and Chen, z.
\newblock {GPipe}: {Efficient} {Training} of {Giant} {Neural} {Networks} using
  {Pipeline} {Parallelism}.
\newblock In Wallach, H., Larochelle, H., Beygelzimer, A., Alché-Buc, F.~d.,
  Fox, E., and Garnett, R. (eds.), \emph{Advances in {Neural} {Information}
  {Processing} {Systems}}, volume~32. Curran Associates, Inc., 2019.

\bibitem[Kairouz et~al.(2019)Kairouz, McMahan, Avent, Bellet, Bennis, Bhagoji,
  Bonawitz, Charles, Cormode, Cummings, D'Oliveira, Rouayheb, Evans, Gardner,
  Garrett, Gascón, Ghazi, Gibbons, Gruteser, Harchaoui, He, He, Huo,
  Hutchinson, Hsu, Jaggi, Javidi, Joshi, Khodak, Konečný, Korolova,
  Koushanfar, Koyejo, Lepoint, Liu, Mittal, Mohri, Nock, Özgür, Pagh,
  Raykova, Qi, Ramage, Raskar, Song, Song, Stich, Sun, Suresh, Tramèr,
  Vepakomma, Wang, Xiong, Xu, Yang, Yu, Yu, and Zhao]{kairouz_advances_2019}
Kairouz, P., McMahan, H.~B., Avent, B., Bellet, A., Bennis, M., Bhagoji, A.~N.,
  Bonawitz, K., Charles, Z., Cormode, G., Cummings, R., D'Oliveira, R. G.~L.,
  Rouayheb, S.~E., Evans, D., Gardner, J., Garrett, Z., Gascón, A., Ghazi, B.,
  Gibbons, P.~B., Gruteser, M., Harchaoui, Z., He, C., He, L., Huo, Z.,
  Hutchinson, B., Hsu, J., Jaggi, M., Javidi, T., Joshi, G., Khodak, M.,
  Konečný, J., Korolova, A., Koushanfar, F., Koyejo, S., Lepoint, T., Liu,
  Y., Mittal, P., Mohri, M., Nock, R., Özgür, A., Pagh, R., Raykova, M., Qi,
  H., Ramage, D., Raskar, R., Song, D., Song, W., Stich, S.~U., Sun, Z.,
  Suresh, A.~T., Tramèr, F., Vepakomma, P., Wang, J., Xiong, L., Xu, Z., Yang,
  Q., Yu, F.~X., Yu, H., and Zhao, S.
\newblock Advances and {Open} {Problems} in {Federated} {Learning}.
\newblock \emph{arXiv:1912.04977 [cs, stat]}, December 2019.
\newblock arXiv: 1912.04977.

\bibitem[Karimireddy et~al.(2019)Karimireddy, Rebjock, Stich, and
  Jaggi]{karimireddy_error_2019}
Karimireddy, S.~P., Rebjock, Q., Stich, S., and Jaggi, M.
\newblock Error {Feedback} {Fixes} {SignSGD} and other {Gradient} {Compression}
  {Schemes}.
\newblock In \emph{International {Conference} on {Machine} {Learning}}, pp.\
  3252--3261. PMLR, May 2019.
\newblock ISSN: 2640-3498.

\bibitem[Khirirat et~al.(2020)Khirirat, Magnússon, Aytekin, and
  Johansson]{khirirat_communication_2020}
Khirirat, S., Magnússon, S., Aytekin, A., and Johansson, M.
\newblock Communication {Efficient} {Sparsification} for {Large} {Scale}
  {Machine} {Learning}.
\newblock \emph{arXiv:2003.06377 [math, stat]}, March 2020.
\newblock arXiv: 2003.06377.

\bibitem[Konečný et~al.(2016)Konečný, McMahan, Ramage, and
  Richtárik]{konecny_federated_2016}
Konečný, J., McMahan, H.~B., Ramage, D., and Richtárik, P.
\newblock Federated {Optimization}: {Distributed} {Machine} {Learning} for
  {On}-{Device} {Intelligence}.
\newblock \emph{arXiv:1610.02527 [cs]}, October 2016.
\newblock arXiv: 1610.02527.

\bibitem[Lannelongue et~al.(2020)Lannelongue, Grealey, and
  Inouye]{lannelongue_green_2020}
Lannelongue, L., Grealey, J., and Inouye, M.
\newblock Green {Algorithms}: {Quantifying} the carbon emissions of
  computation.
\newblock \emph{arXiv:2007.07610 [cs]}, October 2020.
\newblock arXiv: 2007.07610.

\bibitem[Li et~al.(2014)Li, Andersen, Park, Smola, Ahmed, Josifovski, Long,
  Shekita, and Su]{li_scaling_2014}
Li, M., Andersen, D.~G., Park, J.~W., Smola, A.~J., Ahmed, A., Josifovski, V.,
  Long, J., Shekita, E.~J., and Su, B.-Y.
\newblock Scaling distributed machine learning with the parameter server.
\newblock In \emph{Proceedings of the 11th {USENIX} conference on {Operating}
  {Systems} {Design} and {Implementation}}, {OSDI}'14, pp.\  583--598, USA,
  October 2014. USENIX Association.
\newblock ISBN 978-1-931971-16-4.

\bibitem[Li et~al.(2020)Li, Kovalev, Qian, and Richtarik]{li_acceleration_2020}
Li, Z., Kovalev, D., Qian, X., and Richtarik, P.
\newblock Acceleration for {Compressed} {Gradient} {Descent} in {Distributed}
  and {Federated} {Optimization}.
\newblock In \emph{International {Conference} on {Machine} {Learning}}, pp.\
  5895--5904. PMLR, November 2020.
\newblock ISSN: 2640-3498.

\bibitem[Liu et~al.(2020)Liu, Li, Tang, and Yan]{liu_double_2020}
Liu, X., Li, Y., Tang, J., and Yan, M.
\newblock A {Double} {Residual} {Compression} {Algorithm} for {Efficient}
  {Distributed} {Learning}.
\newblock In \emph{International {Conference} on {Artificial} {Intelligence}
  and {Statistics}}, pp.\  133--143, June 2020.
\newblock ISSN: 1938-7228 Section: Machine Learning.

\bibitem[Maaten \& Hinton(2008)Maaten and Hinton]{maaten_visualizing_2008}
Maaten, L. v.~d. and Hinton, G.
\newblock Visualizing {Data} using t-{SNE}.
\newblock \emph{Journal of Machine Learning Research}, 9\penalty0
  (Nov):\penalty0 2579--2605, 2008.
\newblock ISSN ISSN 1533-7928.

\bibitem[Mayekar \& Tyagi(2020)Mayekar and Tyagi]{mayekar_ratq_2020}
Mayekar, P. and Tyagi, H.
\newblock {RATQ}: {A} {Universal} {Fixed}-{Length} {Quantizer} for {Stochastic}
  {Optimization}.
\newblock In \emph{International {Conference} on {Artificial} {Intelligence}
  and {Statistics}}, pp.\  1399--1409. PMLR, June 2020.
\newblock ISSN: 2640-3498.

\bibitem[McMahan et~al.(2017)McMahan, Moore, Ramage, Hampson, and
  Arcas]{mcmahan_communication-efficient_2017}
McMahan, B., Moore, E., Ramage, D., Hampson, S., and Arcas, B. A.~y.
\newblock Communication-{Efficient} {Learning} of {Deep} {Networks} from
  {Decentralized} {Data}.
\newblock In \emph{Artificial {Intelligence} and {Statistics}}, pp.\
  1273--1282. PMLR, April 2017.
\newblock ISSN: 2640-3498.

\bibitem[Mei \& Montanari(2019)Mei and Montanari]{mei_generalization_2019}
Mei, S. and Montanari, A.
\newblock The generalization error of random features regression: {Precise}
  asymptotics and double descent curve.
\newblock \emph{arXiv:1908.05355 [math, stat]}, October 2019.
\newblock arXiv: 1908.05355.

\bibitem[Meyn \& Tweedie(2009)Meyn and Tweedie]{meyn:tweedie:2009}
Meyn, S. and Tweedie, R.
\newblock \emph{{M}arkov Chains and Stochastic Stability}.
\newblock Cambridge University Press, New York, NY, USA, 2nd edition, 2009.
\newblock ISBN 0521731828, 9780521731829.

\bibitem[Mishchenko et~al.(2019)Mishchenko, Gorbunov, Takáč, and
  Richtárik]{mishchenko_distributed_2019}
Mishchenko, K., Gorbunov, E., Takáč, M., and Richtárik, P.
\newblock Distributed {Learning} with {Compressed} {Gradient} {Differences}.
\newblock \emph{arXiv:1901.09269 [cs, math, stat]}, June 2019.
\newblock arXiv: 1901.09269.

\bibitem[Nesterov(2004)]{nesterov_introductory_2004}
Nesterov, Y.
\newblock \emph{Introductory {Lectures} on {Convex} {Optimization}: {A} {Basic}
  {Course}}.
\newblock Applied {Optimization}. Springer US, 2004.
\newblock ISBN 978-1-4020-7553-7.
\newblock \doi{10.1007/978-1-4419-8853-9}.

\bibitem[Philippenko \& Dieuleveut(2021)Philippenko and
  Dieuleveut]{philippenko_preserved_2021}
Philippenko, C. and Dieuleveut, A.
\newblock Preserved central model for faster bidirectional compression in
  distributed settings.
\newblock \emph{Advances in Neural Information Processing Systems}, 34, 2021.

\bibitem[Rakhlin et~al.(2012)Rakhlin, Shamir, and Sridharan]{rakhlin2012making}
Rakhlin, A., Shamir, O., and Sridharan, K.
\newblock Making gradient descent optimal for strongly convex stochastic
  optimization.
\newblock \emph{ICML}, 2012.

\bibitem[Reisizadeh et~al.(2020)Reisizadeh, Mokhtari, Hassani, Jadbabaie, and
  Pedarsani]{reisizadeh_fedpaq_2020}
Reisizadeh, A., Mokhtari, A., Hassani, H., Jadbabaie, A., and Pedarsani, R.
\newblock {FedPAQ}: {A} {Communication}-{Efficient} {Federated} {Learning}
  {Method} with {Periodic} {Averaging} and {Quantization}.
\newblock In \emph{International {Conference} on {Artificial} {Intelligence}
  and {Statistics}}, pp.\  2021--2031. PMLR, June 2020.
\newblock ISSN: 2640-3498.

\bibitem[Robbins \& Monro(1951)Robbins and Monro]{robbins_stochastic_1951}
Robbins, H. and Monro, S.
\newblock A {Stochastic} {Approximation} {Method}.
\newblock \emph{Annals of Mathematical Statistics}, 22\penalty0 (3):\penalty0
  400--407, September 1951.
\newblock ISSN 0003-4851, 2168-8990.
\newblock \doi{10.1214/aoms/1177729586}.
\newblock Number: 3 Publisher: Institute of Mathematical Statistics.

\bibitem[Sattler et~al.(2019)Sattler, Wiedemann, Müller, and
  Samek]{sattler_robust_2019}
Sattler, F., Wiedemann, S., Müller, K.-R., and Samek, W.
\newblock Robust and {Communication}-{Efficient} {Federated} {Learning} {From}
  {Non}-i.i.d. {Data}.
\newblock \emph{IEEE Transactions on Neural Networks and Learning Systems},
  pp.\  1--14, 2019.
\newblock ISSN 2162-2388.
\newblock \doi{10.1109/TNNLS.2019.2944481}.
\newblock Conference Name: IEEE Transactions on Neural Networks and Learning
  Systems.

\bibitem[Seide et~al.(2014)Seide, Fu, Droppo, Li, and Yu]{seide_1-bit_2014}
Seide, F., Fu, H., Droppo, J., Li, G., and Yu, D.
\newblock 1-bit stochastic gradient descent and its application to
  data-parallel distributed training of speech dnns.
\newblock In \emph{Fifteenth {Annual} {Conference} of the {International}
  {Speech} {Communication} {Association}}. Citeseer, 2014.

\bibitem[Stich(2019)]{stich_local_2019}
Stich, S.~U.
\newblock Local {SGD} {Converges} {Fast} and {Communicates} {Little}.
\newblock \emph{arXiv:1805.09767 [cs, math]}, May 2019.
\newblock arXiv: 1805.09767.

\bibitem[Stich et~al.(2018)Stich, Cordonnier, and Jaggi]{stich_sparsified_2018}
Stich, S.~U., Cordonnier, J.-B., and Jaggi, M.
\newblock Sparsified {SGD} with {Memory}.
\newblock In Bengio, S., Wallach, H., Larochelle, H., Grauman, K.,
  Cesa-Bianchi, N., and Garnett, R. (eds.), \emph{Advances in {Neural}
  {Information} {Processing} {Systems} 31}, pp.\  4447--4458. Curran
  Associates, Inc., 2018.

\bibitem[Strom(2015)]{strom_scalable_2015}
Strom, N.
\newblock Scalable distributed {DNN} training using commodity {GPU} cloud
  computing.
\newblock In \emph{Sixteenth {Annual} {Conference} of the {International}
  {Speech} {Communication} {Association}}, 2015.

\bibitem[Tang et~al.(2019)Tang, Yu, Lian, Zhang, and
  Liu]{tang_doublesqueeze_2019}
Tang, H., Yu, C., Lian, X., Zhang, T., and Liu, J.
\newblock {DoubleSqueeze}: {Parallel} {Stochastic} {Gradient} {Descent} with
  {Double}-pass {Error}-{Compensated} {Compression}.
\newblock In \emph{International {Conference} on {Machine} {Learning}}, pp.\
  6155--6165. PMLR, May 2019.
\newblock ISSN: 2640-3498.

\bibitem[Villani(2009)]{Vil_2009}
Villani, C.
\newblock \emph{{Optimal transport : old and new}}.
\newblock {Grundlehren der mathematischen Wissenschaften}. Springer, Berlin,
  2009.
\newblock ISBN 978-3-540-71049-3.
\newblock URL \url{http://opac.inria.fr/record=b1129524}.

\bibitem[Wen et~al.(2017)Wen, Xu, Yan, Wu, Wang, Chen, and
  Li]{wen_terngrad_2017}
Wen, W., Xu, C., Yan, F., Wu, C., Wang, Y., Chen, Y., and Li, H.
\newblock {TernGrad}: {Ternary} {Gradients} to {Reduce} {Communication} in
  {Distributed} {Deep} {Learning}.
\newblock In Guyon, I., Luxburg, U.~V., Bengio, S., Wallach, H., Fergus, R.,
  Vishwanathan, S., and Garnett, R. (eds.), \emph{Advances in {Neural}
  {Information} {Processing} {Systems} 30}, pp.\  1509--1519. Curran
  Associates, Inc., 2017.

\bibitem[Wu et~al.(2018)Wu, Huang, Huang, and Zhang]{wu_error_2018}
Wu, J., Huang, W., Huang, J., and Zhang, T.
\newblock Error {Compensated} {Quantized} {SGD} and its {Applications} to
  {Large}-scale {Distributed} {Optimization}.
\newblock In \emph{International {Conference} on {Machine} {Learning}}, pp.\
  5325--5333. PMLR, July 2018.
\newblock ISSN: 2640-3498.

\bibitem[Yu et~al.(2019)Yu, Wu, and Huang]{yu_double_2019}
Yu, Y., Wu, J., and Huang, L.
\newblock Double {Quantization} for {Communication}-{Efficient} {Distributed}
  {Optimization}.
\newblock In Wallach, H., Larochelle, H., Beygelzimer, A., Alché-Buc, F.~d.,
  Fox, E., and Garnett, R. (eds.), \emph{Advances in {Neural} {Information}
  {Processing} {Systems} 32}, pp.\  4438--4449. Curran Associates, Inc., 2019.

\bibitem[Zhang et~al.(2017)Zhang, Bengio, Hardt, Recht, and
  Vinyals]{Zhang-rethinkgen-2017}
Zhang, C., Bengio, S., Hardt, M., Recht, B., and Vinyals, O.
\newblock Understanding deep learning requires rethinking generalization.
\newblock In \emph{5th International Conference on Learning Representations,
  {ICLR} 2017, Toulon, France, April 24-26, 2017, Conference Track
  Proceedings}. OpenReview.net, 2017.

\bibitem[Zheng et~al.(2019)Zheng, Huang, and
  Kwok]{zheng_communication-efficient_2019}
Zheng, S., Huang, Z., and Kwok, J.
\newblock Communication-{Efficient} {Distributed} {Blockwise} {Momentum} {SGD}
  with {Error}-{Feedback}.
\newblock In \emph{Advances in {Neural} {Information} {Processing} {Systems}},
  volume~32. Curran Associates, Inc., 2019.

\bibitem[Zhou et~al.(2018)Zhou, Wu, Ni, Zhou, Wen, and
  Zou]{zhou_dorefa-net_2018}
Zhou, S., Wu, Y., Ni, Z., Zhou, X., Wen, H., and Zou, Y.
\newblock {DoReFa}-{Net}: {Training} {Low} {Bitwidth} {Convolutional} {Neural}
  {Networks} with {Low} {Bitwidth} {Gradients}.
\newblock \emph{arXiv:1606.06160 [cs]}, February 2018.
\newblock arXiv: 1606.06160.

\bibitem[Zhu \& Marcotte(1996)Zhu and Marcotte]{zhu_co-coercivity_1996}
Zhu, D.~L. and Marcotte, P.
\newblock Co-{Coercivity} and {Its} {Role} {In} the {Convergence} of
  {Iterative} {Schemes} {For} {Solving} {Variational} {Inequalities}, March
  1996.

\end{thebibliography}

\newpage	
\onecolumn
\appendix
	
\begin{center}
	{\Large{\bf Bidirectional compression in heterogeneous settings for distributed or federated learning with partial participation: tight convergence guarantees. \ \\\ \\ {Supplementary material}}}
	\end{center}
	
	\setcounter{equation}{0}
	\setcounter{figure}{0}
	\setcounter{table}{0}
	\renewcommand{\theequation}{S\arabic{equation}}
	\renewcommand{\thefigure}{S\arabic{figure}}
	\renewcommand{\thetheorem}{S\arabic{theorem}}
	\renewcommand{\thelemma}{S\arabic{lemma}}
	\renewcommand{\theproposition}{S\arabic{proposition}}
	\renewcommand{\thecorollary}{S\arabic{proposition}}
	\renewcommand{\thetable}{S\arabic{table}}
	
	In this appendix, we provide additional details to our work. 
	In \Cref{app:complementary}, we give more details on \Artemis, we describe the $s$-quantization scheme used in our experiments and we define the filtrations used in following demonstrations. Secondly, in \Cref{app:sec:speedtest}, we analyze at a finer level the bandwidth speeds across the world to get a better intuition of the state of the worldwide internet usage. Thirdly, in \Cref{app:experiments}, we present the detailed framework of our experiments and  give  further illustrations to our theorems. In \Cref{app:technical}, we gather a few  technical results and introduce the lemmas required in the proofs of the main results. Those proofs are finally given in \Cref{app:all_proofs}. 
	More precisely,  \Cref{thm:cvgce_artemis} follows from  \Cref{app:thm:with_mem,app:thm:without_mem}, which are proved in \Cref{app:proof_singlecompressnomemory,app:proof_doublecompressnomem}, while \Cref{thm:main_PRave,thm:cvdist} are respectively proved in \Cref{app:doublecompress_avg,app:distrib_convergence}.
	
	\hypersetup{linkcolor = black}
	\setlength\cftparskip{2pt}
	\setlength\cftbeforesecskip{2pt}
	\setlength\cftaftertoctitleskip{3pt}
	\addtocontents{toc}{\protect\setcounter{tocdepth}{2}}
	\setcounter{tocdepth}{1}
	\tableofcontents
	
	\addtocontents{toc}{
    \protect\thispagestyle{empty}} 
    \thispagestyle{empty} 
    
    \hypersetup{linkcolor=blue}

\section{Additional details about the \Artemis~framework}
\label{app:complementary}
The aim of this section is threefold. First, we give the pseudo-code of \Artemis. Secondly, we provide supplementary details about the quantization scheme used in our work. We also explain (based on \cite{alistarh_qsgd_2017}) how quantization combined with \texttt{Elias} code \citep[see][]{elias_universal_1975} reduces the required number of bits to send information. Thirdly, we define the filtrations used in our proofs and give their resulting properties. 

\subsection{Artemis pseudo-code}

We provide the pseudo-code of \Artemis~in \Cref{algo} and for the understanding of \Artemis~implementation, we give a visual illustration of the algorithm in \Cref{tikz_algo}.

\begin{remark}
Remark that we have used in \Cref{algo} the \emph{true value} of $p$ in the update (to get an unbiased estimator of the gradient), but it is obviously possible to use an estimated value $\hat p$ in \Cref{eq:update_schema}: indeed, it is exactly equivalent to multiplying the step size $\gamma$ by a factor $p/\hat{p}$, thus neither changes the practical implementation nor the theoretical analysis.
\end{remark}

\begin{figure}
\begin{minipage}{1\textwidth}
\begin{algorithm}[H]
\caption{\texttt{Artemis} - set $\alpha > 0$ to use memory.}
\label{algo}
\LinesNumberedHidden
\DontPrintSemicolon
  \KwInput{Mini-batch size $b$, learning rates $\alpha, \gamma > 0$, initial model  $w_0 \in \WW$,  operators $\mathcal{C}_{\up}$ and $\mathcal{C}_{\dwn}$, $M_1$ and $M_2$ the sizes of the full/compressed gradients.}
  \KwInitialization{Local memory: $\forall i \in \llbracket 1, N \rrbracket$ $h_0^i = 0$ (kept on both central server and device $i$).
  Index of last participation: $k_i = 0$.}
  \KwOutput{Model  $w_K$}
  \For{$k = 0, 1, 2, \dots, K $}
    {
    Randomly sample a set of device $S_k$ \;
        \For{each device $i \in S_k$}
            {
                \textbf{Catching up.} \;
                If $k - k_i$ > $\floor{M_1 / M_2}$, send the model $w_k$ \;
                Else send $(\widehat{\Omega}_j)_{j=k_i + 1}^k$ and update local model: $\forall j \in \llbracket k_i + 1, k \rrbracket, \quad w_{j} = w_{j-1} - \gamma \Omega_{j, S_{j-1}}$ \;
                Update index of its last participation: $k_i = k$ \;
                \textbf{Local training.} \;
                Compute stochastic gradient $\gwk^i = \g_{k+1}(w_k)$ (with mini-batch)\;
                Set $\Delta_k^i = \gwk^i - h_k^i$, compress it  $\widehat{\Delta}_k^i = \mathcal{C}_{\up}(\Delta_k^i)$ \;
                Update memory term: $h_{k+1}^i = h_k^i + \alpha \widehat{\Delta}_k^i$ \;
                Send $\widehat{\Delta}_k^i$ to central server \;
            }
            Compute $\gwkhats = h_k + \ffrac{1}{pN} \sum_{i \in S_k} \widehat{\Delta}_k^i$ \;
            Update central memory: $h_{k+1} = h_k + \alpha \ffrac{1}{N} \sum_{i \in S_k} \widehat{\Delta}_k^i$ \; 
            Back compression: $\Omega_{k+1, S_k} = \mathcal{C}_{\dwn}(\gwkhats)$\;
            Broadcast $\Omega_{k+1}$ to all workers.\;
            Update model on central server: $w_{k+1} = w_k - \gamma \Omega_{k+1, S_k}$\;
    }
\end{algorithm}
      
\end{minipage}
\end{figure}

\begin{figure}[H]
\centering
\includegraphics[width=\textwidth]{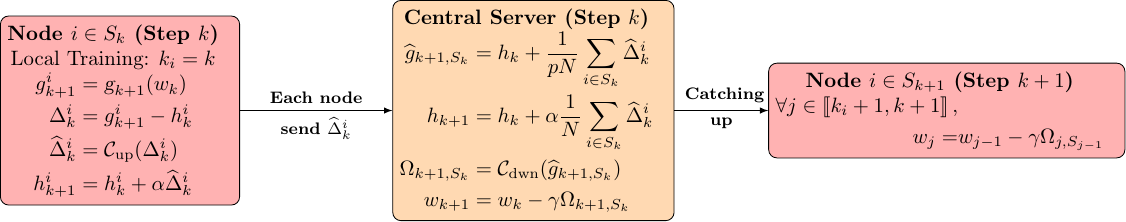}
\caption{Visual illustration of \Artemis~bidirectional compression at iteration $k \in \N$. $w_k$ is the model's parameter, $\mathcal{C}_{\up}$ and $\mathcal{C}_{\dwn}$ are the compression operators, $\gamma$ is the step size, $\alpha$ is the learning rate which simulates a memory mechanism of past iterates and allows the compressed values to tend to zero.}
\label{tikz_algo}
\end{figure}

\subsection{Quantization scheme}
\label{app:quantization_schema}

In the following, we define the $s$-quantization operator $\mathcal{C}_s$ which we use in our experiments. After giving its definition, we explain \cite[based on][]{alistarh_qsgd_2017} how it helps to reduce the number of bits to broadcast. 

\begin{definition}[$s$-quantization operator]
\label{def:s-quantization}
Given $\Delta \in \WW$, the $s$-quantization operator $\mathcal{C}_s$ is defined by:
\[ \mathcal{C}_s(\Delta):= sign(\Delta) \times \| \Delta \|_2 \times \frac{\psi}{s}\,.
\]

$\psi \in \WW$ is a random vector with $j$-th element defined as:

\[ 
\psi_j:=
\left\{
    \begin{array}{ll}
        l + 1 & \mbox{with probability } s \frac{|\Delta_j|}{\|\Delta\|_2}  - l \\
        l & \mbox{otherwise}\,.
    \end{array}
\right.
\]

where the level $l$ is such that $\ffrac{\Delta_i}{\lnrm \Delta \rnrm_2} \in \left[\ffrac{l}{s}, \ffrac{l+1}{s} \right]$.
\end{definition}

The $s$-quantization scheme verifies \Cref{asu:expec_quantization_operator} with $\omgC= \min(d/s^2, \sqrt{d}/s)$. Proof can be found in \cite[][see Appendix A.1]{alistarh_qsgd_2017}.

Now, for any vector $v \in \mathbb{R}^d$, we are in possession of the tuple $(\|v\|^2,\phi , \psi)$, where $\phi$ is the vector of signs of $(v_i)_{i=1}^d$, and $\psi$ is the vector of integer values $(\psi_j)_{j=1}$. To broadcast the quantized value, we use the \texttt{Elias} encoding \cite{elias_universal_1975}. Using this encoding scheme, it can be shown (Theorem 3.2 of \cite{alistarh_qsgd_2017}) that:

\begin{proposition}
For any vector $v$, the number of bits needed to communicate $\mathcal{C}_s(v)$ is upper bounded by:
\[
\left( 3 + \left( \frac{3}{2} + o(1) \right)\log \left( \frac{2(s^2 + d)}{s(s + \sqrt{d})} \right) \right) s(s + \sqrt{d})  + 32 \,.
\]
\end{proposition}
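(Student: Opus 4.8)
The plan is to bound the number of bits needed to encode the tuple $(\|v\|^2, \phi, \psi)$ produced by the $s$-quantization operator, where $\phi \in \{-1,0,+1\}^d$ records the signs and $\psi \in \N^d$ records the integer levels. First I would account for the scalar $\|v\|^2$: encoding a single floating-point number costs a constant number of bits, which will contribute the additive $32$ term in the bound. Second, I would treat the sign vector $\phi$: for each coordinate where $\psi_j \neq 0$ we need one bit of sign, and zeros can be signalled implicitly through the encoding of $\psi$, so this contributes at most one bit per nonzero coordinate of $\psi$.

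The core of the argument is encoding $\psi$ with the \texttt{Elias} integer code \citep{elias_universal_1975}, whose length for encoding an integer $m \geq 1$ is $\log m + O(\log \log m) \le (1+o(1))\log m$, and for encoding runs of zeros is similarly logarithmic in the run length. The key step is then to bound $\sum_j (\text{cost of } \psi_j)$ using the structure of $\psi$: by definition $\E[\psi_j] = s|\Delta_j|/\|\Delta\|_2$, and more importantly the realized values satisfy $\sum_j \psi_j \le s\|\Delta\|_1/\|\Delta\|_2 + (\text{number of nonzeros}) \le s\sqrt{d} + \|\psi\|_0$, while simultaneously $\|\psi\|_0 \le \|\psi\|_2^2 \le$ a quantity controlled by $s^2$ (since $\sum_j \psi_j^2$ concentrates around $s^2 \sum_j \Delta_j^2/\|\Delta\|_2^2 + \ldots = O(s^2)$ plus the number of nonzeros, which is at most $\min(d, O(s\sqrt d + s^2))$). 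Combining, the number of nonzero coordinates is at most $O(s(s+\sqrt d))$. Then I would apply concavity of the logarithm (Jensen's inequality): the total \texttt{Elias} cost $\sum_{j: \psi_j \neq 0} \bigl((3/2 + o(1))\log \psi_j + O(1)\bigr)$ is maximized, for a fixed number $K$ of nonzeros and fixed sum $\sum \psi_j$, when all the mass is spread evenly, giving at most $K\bigl(3 + (3/2+o(1))\log(\text{average value})\bigr)$. Substituting $K = O(s(s+\sqrt d))$ and average value $= (\sum_j \psi_j + \sum_j \psi_j^2)/K$-type bound $\lesssim (s^2+d)/(s(s+\sqrt d))$ yields exactly the stated expression $\bigl(3 + (3/2+o(1))\log(2(s^2+d)/(s(s+\sqrt d)))\bigr)s(s+\sqrt d) + 32$.

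The main obstacle I expect is making the counting of nonzero coordinates and the bound on $\sum_j \psi_j^2$ precise and uniform over all input vectors $v$: the quantity $\|\psi\|_0$ is random and the worst case over the direction of $\Delta$ needs care, since one must simultaneously control $\|\Delta\|_1/\|\Delta\|_2 \le \sqrt d$ (Cauchy--Schwarz, tight when $\Delta$ is flat) and the fact that small-magnitude coordinates contribute a nonzero $\psi_j$ only with small probability. The clean way around this is to invoke the bound directly from \citet[Theorem 3.2 and its proof, Appendix A.2]{alistarh_qsgd_2017}, whose sparsity budget argument gives $\E[\|\psi\|_0] \le s^2 + \sqrt d$ and whose concavity argument produces the displayed bound verbatim; since \Cref{asu:expec_quantization_operator} is already known to hold for $\mathcal{C}_s$ with $\omgC = \min(d/s^2,\sqrt d/s)$ by the same reference, I would simply cite this result rather than reproving the concentration estimates. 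In short, the proof is an application of the entropy/communication-cost analysis of \texttt{QSGD}, and the statement follows from \citet[Theorem 3.2]{alistarh_qsgd_2017} applied to the vector $v$.
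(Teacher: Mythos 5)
Your proposal is correct and ends up exactly where the paper does: the paper gives no independent proof of this proposition but simply cites Theorem 3.2 of \citet{alistarh_qsgd_2017} (Elias coding of the tuple $(\|v\|^2,\phi,\psi)$ plus the sparsity bound on $\psi$), which is precisely the citation you fall back on after your sketch. Your preliminary outline of the sparsity/concavity argument is a fair summary of the QSGD analysis, but since you ultimately invoke the same external result, your approach is essentially the same as the paper's.
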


The final goal of using memory for compression is to quantize vectors with $s=1$. It means that we will employ $O (\sqrt{d} \log d)$ bits per iteration instead of $32d$, which reduces by a factor $\frac{\sqrt{d}}{\log d}$ the number of bits used by iteration. Now, in a FL settings, at each iteration we have a double communication (device to the main server, main server to the device) for each of the $N$ devices. It means that at each iteration, we need to communicate $2 \times N \times 32 d$ bits if compression is not used. With a single compression process like in \cite{mishchenko_distributed_2019,li_acceleration_2020,wu_error_2018,agarwal_cpsgd_2018,alistarh_qsgd_2017}, we need to broadcast 
\begin{align*}
    O \left( 32Nd + N \sqrt{d} \log d \right) &= O \left(N d \left(1 + \frac{\log d}{\sqrt{d}}\right) \right) \\
    &= O \left( Nd \right)\,.
\end{align*}
But with  a bidirectional compression, we only need to broadcast $O \left(2 N \sqrt{d} \log d \right)$. 

\paragraph{Time complexity analysis of simple vs double compression for the $1$-quantization schema.} 
Using quantization with $s=1$, and then the \texttt{Elias} code \citep[defined in][]{elias_universal_1975} to communicate between servers, leads to reduce from $O(Nd)$ to $O(N\sqrt{d} \log(d))$ the number of bits to send, for each direction. 
Getting an estimation of the total time complexity is difficult and inevitably dependant of the considered application. 
Indeed, as highlighted by \Cref{fig:app:speedtest}, download and upload speed are always different. The biggest measured difference between upload and download is found in Europa for mobile broadband ; their ratio is around $3.5$.

Denoting $v_d$ and $v_u$ the speed of download and upload (in bits per second), we typically have $v_d = \rho v_u$,  $3.5 > \rho > 1$.

Then for unidirectional compression, each iteration takes 
$
O \left(\frac{Nd}{v_d} + \frac{N \sqrt{d} \log(d)}{v_u} \right) \approx O \left(\frac{Nd}{\rho v_u} \right)
$
seconds,  while for a bidirectional one it takes only 
$
O \left( \frac{N \sqrt{d} \log(d)}{v_d} + \frac{ N \sqrt{d} \log(d)}{ v_u} \right) \approx O \left( \frac{N \sqrt{d} \log(d)}{v_u} \right)$ seconds.  

In other words, unless $\rho$ is really large (which is not the case in practice as stressed by \Cref{fig:app:speedtest}, double compression reduces by several orders of magnitude the global time complexity, and  bidirectional compression is by far superior to unidirectional.

\subsection{Filtrations}
\label{sect:filration}

In this section we provide some explanations about filtrations - especially a rigorous definition - and how it is used in the proofs of \Cref{thm:main_PRave,thm:cvdist,app:thm:without_mem,app:thm:with_mem}. We recall that we denoted by $\omgC^\up$ and $\omgC^\dwn$ the variance factors for respectively uplink and downlink compression. 

Let a probability space $(\Omega, \mathcal{A}, \mathbb{P})$ with $\Omega$ a sample space, $\mathcal{A}$ an event space, and $\mathbb{P}$ a probability function. We recall that the $\sigma$-algebra generated by a random variable $X: \Omega \rightarrow \mathbb{R}^m$ is 
\[
    \sigma(X) = \{ X^{-1}(A): A \in \mathcal{B}(\mathbb{R}^m) \} \,,
\]
where $\mathcal{B}(\mathbb{R}^m)$ is the Borel set of $\mathbb{R}^m$.

Furthermore, we recall that a filtration of $(\Omega, \Fsto, P)$ is defined as an increasing sequence $(\Fsto_n)_{n \in \N}$ of $\sigma$-algebras:
\[
    \Fsto_0 \subset \Fsto_1 \subset \Fsto_2 \subset \dots \subset \Fsto \,.
\]

Concerning randomness in our algorithm, it comes from four sources:

\begin{enumerate}
    \item Stochastic gradients. It corresponds to the noise associated with the stochastic gradients computation on device $i$ at epoch $k$. We have:
    \begin{equation*}
        \forall k \in \N\,,~\forall i \in \llbracket 0, ..., N \rrbracket, \quad\gwk^i = \nabla F_i (w_k) + \xi_{k+1}^i(w_k)\,, \text{ with $\V(\xi_{k+1}^i)$ bounded.}
    \end{equation*}
    
    \item Uplink compression: this noise corresponds to the uplink compression when local gradients are compressed.
    Let $k \in \N$ and $i \in \llbracket 0, ..., N \rrbracket$, suppose, we want to compress $\Delta_k^i \in \WW$, then the associated noise is $\epsilon_{k+1}^i$ with $\V(\epsilon_{k+1}^i(\Delta_k^i)) \leq \omgC^\up \SqrdNrm{\Delta_k^i}$, where $\omgC^\up \in \R^*$ is defined by the uplink compression schema (see \Cref{asu:expec_quantization_operator}).
    And it follows that:
    \begin{equation*}
        \forall k \in \N, ~\forall i \in \llbracket 0, ..., N \rrbracket, \quad\widehat{\Delta}_{k}^i = \Delta_{k}^i + \epsilon_{k}^i(\Delta_k^i) \Longleftrightarrow \gwkhat^i = \gwk^i + \epsilon_{k+1}^i(\Delta_k^i)\,.
    \end{equation*}
    
    \item Downlink compression. This noise corresponds to the downlink compression, when the global model parameter is compressed. Let $k \in \N$, suppose we want to compress $\gwkhats \in \WW$, then the associated noise is $\epsilon_{k+1}(\gwkhats)$ with $\V(\epsilon_{k+1}) \leq \omgC^\dwn \SqrdNrm{\gwkhats}$. There is:
    \begin{equation*}
            \forall k \in \N,\quad \Omega_{k+1, S_k} = \mathcal{C}_s(\gwkhats) = \gwkhats + \epsilon_{k+1}(\gwkhats)\,.
    \end{equation*}
    
    \item Random sampling. This randomness corresponds to the partial participation of each device. We recall that according to \Cref{asu:partial_participation}, each device has a probability $p$ of being active. For $k$ in $\N$, for $i$ in $\llbracket 1, N \rrbracket$, we note $B_k^i \sim \mathcal{B}(p)$ the Bernoulli random variable that marks if a device is active or not at step $k$.
    
\end{enumerate}

\begin{figure*}
    \centering
    $\mathlarger{
    w_k \xrightarrow{\quad \mathlarger{\xi_{k+1}^i} \quad}
    \gwk^i  \xrightarrow{\quad \mathlarger{\epsilon_{k+1}^i} \quad}
    \gwkhat^i\xrightarrow{\quad \mathlarger{B_k^i} \quad} \gwkhats = \sum_{i \in S_k} \gwkhat^i \xrightarrow{\quad \mathlarger{\epsilon_{k+1}} \quad} \Omega_{k+1, S_k} = \mathcal{C} \left( \gwkhats \right)}
    $
    \caption{The sequence of successive additive noises in the algorithm.}
    \label{fig:sequence_of_noise}
\end{figure*}

This ``succession of noises'' in the algorithm is illustrated in \Cref{fig:sequence_of_noise}. In order to handle these four sources of randomness, we define five sequences of nested $\sigma$-algebras. 

\begin{definition}
We note $(\Fsto_k)_{k \in \N}$ the filtration associated to the stochastic gradient computation noise, $(\Fupcomp_k)_{k \in \N}$ the filtration associated to the uplink compression noise and $(\Fdwncomp_k)_{k \in \N}$ the filtration associated to the downlink compression noise, $(\Fsamp_k)_{k \in \N}$ the filtration associated to the random device participation randomness. For $k \in \N^*$, we define:
\begin{align*}
    \Fsto_k &= \sigma \left(\Gamma_{k-1}, (\xi_{k}^i)_{i=1}^N \right) \\
    \Fupcomp_k &= \sigma \left(\Gamma_{k-1}, (\xi_{k}^i)_{i=1}^N, (\epsilon_{k}^i)_{i=1}^N \right) \\
    \Fdwncomp_k &= \sigma \left(\Gamma_{k-1}, (\xi_{k}^i)_{i=1}^N, (\epsilon_{k}^i)_{i=1}^N, \epsilon_{k} \right) \\
    \Fsamp_{k-1} &= \sigma \left( (B_{k-1}^i)_{i=1}^N \right) \\
    \Flast_k &= \sigma \left(\Fdwncomp_k \cup \Fsamp_{k-1} \right)\, , 
\end{align*}
with
\[
\Gamma_k = \{(\xi_t^i)_{i \in \llbracket 1, N \rrbracket}, (\epsilon_t^i)_{i \in \llbracket 1, N \rrbracket}, \epsilon_t, (B_{k-1}^i)_{i=1}^N\}_{t \in \llbracket 1, k \rrbracket} \text{\quad and \quad} \Gamma_{0} = \{ \varnothing \}\,.
\]

\end{definition}

We can make the following observations for all $k \geq 1$:
\begin{itemize}
    \item From these three definitions, it follows that our sequences are nested.
\[
\Fsto_1 \subset \Fupcomp_1 \subset \Fdwncomp_1 \subset \Fsto_2 \subset \dots \subset \Fdwncomp_K\,.
\]
However, $(\Fsamp_k)_{k \in \N}$ is independent of the other filtrations. 
\item $\Flast_{k} = \sigma \left(\Fdwncomp_k \cup \Fsamp_{k-1} \right) = \sigma(\Gamma_{k})$, and the aim is to express the expectation w.r.t.~all randomness i.e $\Ftotal$.
\item $w_k$ is $\Flast_{k}$-measurable.
\item $\gwk(w_k)$ is $\Fsto_{k+1}$-measurable. 
\item $\gwkhat(w_k)$ is $\Fupcomp_{k+1}$-measurable.
\item $B_{k-1}^i$ is $\Fsamp_{k-1}$-measurable.
\item $\gwks\,, \gwkhats$ and $\Omega_{k+1, S_k}$ are respectively $\sigma(\Fsto_{k+1} \cup \Fsamp_k)$-measurable, $\sigma(\Fupcomp_{k+1} \cup \Fsamp_k)$-measurable and $\sigma(\Fdwncomp_{k+1} \cup \Fsamp_k)$-measurable. Note that $\Fsto_{k+1}$ contains $\Gamma_k$, and thus all $(B_{k-1}^i)^N_{i=1}$, but does not contain all the $(B_{k}^i)^N_{i=1}$.
\end{itemize}

As a consequence, we have \Cref{prop:stochastic_expectation,prop:uplink_expectation,prop:variance_uplink,prop:downlink_expectation,prop:variance_downlink,prop:exp_samp,prop:variance_samp}. Please, take notice that for sake of clarity \Cref{prop:stochastic_expectation,prop:uplink_expectation,prop:variance_uplink,prop:downlink_expectation,prop:variance_downlink} are stated without taking into account the random participation $S_k$. All this proposition remains identical when adding partial participation, the results only have to be expressed w.r.t.~active nodes $S_k$. 

Below \Cref{prop:stochastic_expectation} gives the expectation over stochastic gradients conditionally to $\sigma$-algebras $\Fdwncomp_k$ and $\Fsto_{k+1}$.

\begin{proposition}[Stochastic Expectation]
\label{prop:stochastic_expectation}
Let $k \in \N$ and $i \in \llbracket 1, N \rrbracket$. Then on each local device $i \in \llbracket 1, N \rrbracket$ we have almost surely (a.s.):
\[
\left\{
    \begin{array}{ll}
    	\Expec{\gwk^i}{\Fsto_{k+1}} &= \gwk^i \\ 
    	\Expec{\gwk^i}{\Fdwncomp_{k} } &= \nabla F_i(w_k) \,,
    \end{array}
\right.
\]
which leads to:
\[\left\{
    \begin{array}{ll}
    	\Expec{\gwk}{\Fsto_{k+1}} &= \gwk \\
    	\Expec{\gwk}{\Fdwncomp_{k}} &= \nabla F(w_k) \,.
    \end{array}
\right.
\]
\end{proposition}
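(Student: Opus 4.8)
The plan is to unwind the definitions of the nested $\sigma$-algebras introduced in \Cref{sect:filration} and to combine them with the unbiasedness of the gradient oracle. For the first identity, I would simply argue that $\gwk^i = \g_{k+1}^i(w_k)$ is $\Fsto_{k+1}$-measurable: the iterate $w_k$ is $\Flast_k$-measurable, hence $\Fdwncomp_k$-measurable, hence $\Fsto_{k+1}$-measurable since $\Fdwncomp_k \subset \Fsto_{k+1}$, and $\Fsto_{k+1} = \sigma(\Gamma_k, (\xi_{k+1}^i)_{i=1}^N)$ additionally contains the noise $\xi_{k+1}^i$ that enters $\g_{k+1}^i(w_k) = \nabla F_i(w_k) + \xi_{k+1}^i(w_k)$. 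A random variable that is measurable with respect to the conditioning $\sigma$-algebra equals its own conditional expectation, which yields $\Expec{\gwk^i}{\Fsto_{k+1}} = \gwk^i$.

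For the second identity, I would decompose $\gwk^i = \nabla F_i(w_k) + \xi_{k+1}^i(w_k)$. Since $w_k$, and therefore $\nabla F_i(w_k)$, is $\Fdwncomp_k$-measurable, it can be pulled out of the conditional expectation, and it remains to check that $\Expec{\xi_{k+1}^i(w_k)}{\Fdwncomp_k} = 0$. This is exactly the unbiasedness of the stochastic gradient oracle: the noise $\xi_{k+1}^i$, carrying index $k+1$, is not recorded in $\Fdwncomp_k = \sigma(\Gamma_{k-1}, (\xi_k^i)_i, (\epsilon_k^i)_i, \epsilon_k)$ (which only involves noise terms up to index $k$), and conditionally on the current iterate its mean vanishes. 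Hence $\Expec{\gwk^i}{\Fdwncomp_k} = \nabla F_i(w_k)$.

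The two displayed consequences for the averaged quantities then follow immediately by linearity of conditional expectation applied to $\gwk = \frac1N\sum_{i=1}^N \gwk^i$ together with the definition $F = \frac1N\sum_{i=1}^N F_i$. I do not anticipate any genuine difficulty here; the only point requiring care — and the one I would spell out explicitly — is the bookkeeping of the filtration indices, i.e.\ being precise about the fact that the oracle queried at iteration $k$ carries the label $k+1$, so that it is already ``frozen'' by $\Fsto_{k+1}$ but still independent (with zero conditional mean) of $\Fdwncomp_k$. The same remark will govern the statements for $S_k$ in the partial participation setting, where the propositions are identical up to restricting sums to the active set.
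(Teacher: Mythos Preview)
Your proposal is correct and matches the paper's approach: the paper does not spell out a proof of this proposition, treating it as an immediate consequence of the filtration definitions in \Cref{sect:filration}, and your argument is exactly the natural unfolding of those definitions. One small caution on bookkeeping: the implication ``$\Flast_k$-measurable, hence $\Fdwncomp_k$-measurable'' goes the wrong way in general since $\Fdwncomp_k \subset \Flast_k$, but the paper explicitly notes that \Cref{prop:stochastic_expectation} is stated \emph{without} partial participation, in which setting $\Fsamp_{k-1}$ is trivial and $\Flast_k = \Fdwncomp_k$, so your claim holds.
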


\Cref{prop:uplink_expectation} gives expectation of uplink compression (information sent from remote devices to central server) conditionally to $\sigma$-algebras $\Fsto_{k+1}$ and $\Fupcomp_{k+1}$.

\begin{proposition}[Uplink Compression Expectation]
\label{prop:uplink_expectation}
Let $k \in \N$ and $i \in \llbracket 1, N \rrbracket$. Recall that $\widehat{g}_k^i = g_k^i + \epsilon_k^i$, then on each local device $i \in \llbracket 1, N \rrbracket$, we have a.s:

\[\left\{
    \begin{array}{ll}
    	\Expec{ \gwkhat^i}{\Fupcomp_{k+1}} = \gwkhat^i \\
    	\Expec{ \gwkhat^i}{\Fsto_{k+1}} = \gwk^i  \,,
    \end{array}
\right.
\]

which leads to 
\[\left\{
    \begin{array}{l}
    	\Expec{\gwkhat}{\Fupcomp_{k+1}} = \gwkhat \\
    	\Expec{\gwkhat}{\Fsto_{k+1}} = \Expec{\frac{1}{N} \sum_{i=1}^N\gwkhat^i}{\Fsto_{k+1}} = \gwk  \,. \\
    \end{array}
\right.
\]
\end{proposition}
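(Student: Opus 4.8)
The goal is \Cref{prop:uplink_expectation}, which asserts two conditional-expectation identities for the compressed stochastic gradient $\gwkhat^i = \gwk^i + \epsilon_{k+1}^i(\Delta_k^i)$, where $\Delta_k^i = \gwk^i - h_k^i$. The first identity, $\Expec{\gwkhat^i}{\Fupcomp_{k+1}} = \gwkhat^i$, is immediate: by the definition of the filtrations in \Cref{sect:filration}, $\gwkhat^i$ is $\Fupcomp_{k+1}$-measurable (it is built from $w_k$, $\xi_{k+1}^i$ and $\epsilon_{k+1}^i$, all of which are $\Fupcomp_{k+1}$-measurable), so conditioning on $\Fupcomp_{k+1}$ leaves it unchanged. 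The second identity is the substantive one and follows from the tower property together with unbiasedness of the compression operator.

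\textbf{Key steps.} First I would write $\Expec{\gwkhat^i}{\Fsto_{k+1}} = \Expec{\gwk^i + \epsilon_{k+1}^i(\Delta_k^i)}{\Fsto_{k+1}} = \gwk^i + \Expec{\epsilon_{k+1}^i(\Delta_k^i)}{\Fsto_{k+1}}$, using that $\gwk^i$ is $\Fsto_{k+1}$-measurable. Next I would observe that $\Delta_k^i = \gwk^i - h_k^i$ is also $\Fsto_{k+1}$-measurable: indeed $h_k^i$ depends only on $w_0,\dots,w_{k-1}$ and the earlier compression noises, hence is $\Flast_k \subset \Fsto_{k+1}$-measurable, and $\gwk^i$ is $\Fsto_{k+1}$-measurable by \Cref{prop:stochastic_expectation}. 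Then, since the uplink compression randomness is drawn fresh and independently of $\Fsto_{k+1}$, \Cref{asu:expec_quantization_operator} (unbiasedness, $\E[\mathcal{C}_{\up}(\Delta)] = \Delta$) gives $\Expec{\mathcal{C}_{\up}(\Delta_k^i)}{\Fsto_{k+1}} = \Delta_k^i$, equivalently $\Expec{\epsilon_{k+1}^i(\Delta_k^i)}{\Fsto_{k+1}} = 0$. Combining, $\Expec{\gwkhat^i}{\Fsto_{k+1}} = \gwk^i$. Finally, for the averaged quantities I would simply average over $i \in \llbracket 1, N\rrbracket$ and use linearity of conditional expectation, together with \Cref{prop:stochastic_expectation} applied to $\gwk = \frac1N\sum_i \gwk^i$; the $\Fupcomp_{k+1}$-measurability statement for $\gwkhat$ likewise follows from that of each $\gwkhat^i$.

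\textbf{Main obstacle.} There is no real analytic difficulty here — the proposition is essentially a bookkeeping exercise. The one point requiring care is making the measurability claims precise: one must check that $\Delta_k^i$ (and hence the argument fed to the compression operator) is measurable with respect to the conditioning $\sigma$-algebra, so that the ``input'' is frozen and only the fresh compression noise $\epsilon_{k+1}^i$ is integrated out. This hinges on the nesting $\Flast_k \subset \Fsto_{k+1} \subset \Fupcomp_{k+1}$ established in \Cref{sect:filration} and on the memory recursion $h_{k+1}^i = h_k^i + \alpha\widehat\Delta_k^i$ being adapted to $(\Fupcomp_k)_k$. Once the filtration structure is invoked correctly, both identities drop out of the definition of an unbiased compression operator and the tower property.
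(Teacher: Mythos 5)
Your argument is correct and is exactly the reasoning the paper relies on: $\gwkhat^i$ is $\Fupcomp_{k+1}$-measurable by construction, while conditioning on $\Fsto_{k+1}$ freezes $\Delta_k^i = \gwk^i - h_k^i$ (using the nesting $\Fupcomp_k\subset\Fdwncomp_k\subset\Fsto_{k+1}$) so that the unbiasedness of $\mathcal{C}_\up$ in \Cref{asu:expec_quantization_operator} yields $\Expec{\epsilon_{k+1}^i}{\Fsto_{k+1}}=0$, and the averaged identities follow by linearity. This matches the paper, which states the proposition as a direct consequence of the filtration definitions and \Cref{asu:expec_quantization_operator} without a separate written proof.
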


From \Cref{asu:expec_quantization_operator}, if follows that variance over uplink compression can be bounded as expressed in \Cref{prop:variance_uplink}. 
\begin{proposition}[Uplink Compression Variance]
\label{prop:variance_uplink}
Let $k \in \N$ and $i \in \llbracket 1, N \rrbracket$. Recall that $\Delta_k^i = \gwk^i + h_k^i$, using \Cref{asu:expec_quantization_operator} following hold a.s:
\begin{align}
    &\Expec{ \|\widehat{\Delta}_{k+1}^i - \Delta_{k+1}^i \|^2}{\Fsto_{k+1}} \leq \omgC^\up \| \Delta_{k+1}^i \|^2 \\
(\Longleftrightarrow\qquad &\Expec{ \|\gwkhat^i - \gwk^i \|^2}{\Fsto_{k+1}} \leq \omgC^\up \| \gwk^i \|^2  \text{ when no memory } )\,.
\end{align}
\end{proposition}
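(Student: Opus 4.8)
The statement to prove is Proposition \ref{prop:variance_uplink}: under \Cref{asu:expec_quantization_operator}, conditionally on $\Fsto_{k+1}$,
\[
\Expec{\|\widehat\Delta_{k+1}^i - \Delta_{k+1}^i\|^2}{\Fsto_{k+1}} \le \omgC^\up \|\Delta_{k+1}^i\|^2,
\]
and the equivalent memoryless form follows by setting $\Delta = \gwk^i$.

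The plan is essentially to unwind the definitions and apply the assumption on the compression operator. First I would note that $\Delta_{k+1}^i = \gw_{k+1}^i - h_{k+1}^i$ is, by the measurability facts established just above, $\Fsto_{k+1}$-measurable (it depends on $w_{k+1}$, the memory term $h_{k+1}^i$, both of which are $\Flast_{k+1} \subset \Fsto_{k+2}$ — here one has to be slightly careful with indices; more precisely $\Delta_k^i$ depends on $w_k$ and $h_k^i$, both $\Flast_k$-measurable, hence $\Fsto_{k+1}$-measurable, and the stochastic gradient $\gwk^i$ is $\Fsto_{k+1}$-measurable by \Cref{prop:stochastic_expectation}). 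So conditionally on $\Fsto_{k+1}$ the vector $\Delta_{k+1}^i$ (or $\Delta_k^i$, with the index convention of the proposition) is a \emph{fixed} vector, and the only remaining randomness in $\widehat\Delta = \mathcal{C}_\up(\Delta)$ comes from the independent randomness of the compression operator.

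Second, I would invoke \Cref{asu:expec_quantization_operator} directly: for any fixed $\Delta \in \R^d$, $\E[\|\mathcal{C}_\up(\Delta) - \Delta\|^2] \le \omgC^\up \|\Delta\|^2$. Applying this pointwise (i.e.\ on the event that $\Delta_{k+1}^i$ equals a given realization) and then taking the tower property / using that the compression noise $\epsilon^i$ is independent of $\Fsto_{k+1}$ gives
\[
\Expec{\|\widehat\Delta_{k+1}^i - \Delta_{k+1}^i\|^2}{\Fsto_{k+1}} = \Expec{\|\mathcal{C}_\up(\Delta_{k+1}^i) - \Delta_{k+1}^i\|^2}{\Fsto_{k+1}} \le \omgC^\up \|\Delta_{k+1}^i\|^2 \quad \text{a.s.}
\]
For the memoryless equivalence, one substitutes $h_k^i \equiv 0$, so $\Delta_k^i = \gwk^i$ and $\widehat\Delta_k^i = \gwkhat^i$, and the inequality reads $\Expec{\|\gwkhat^i - \gwk^i\|^2}{\Fsto_{k+1}} \le \omgC^\up \|\gwk^i\|^2$.

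The only mildly delicate point — and the one I would flag as the ``main obstacle,'' though it is really just bookkeeping — is the measurability/conditioning argument: one must be sure that $\Delta_{k+1}^i$ is measurable with respect to the conditioning $\sigma$-algebra so that \Cref{asu:expec_quantization_operator} can be pulled through the conditional expectation, and that the compression randomness is conditionally independent given that $\sigma$-algebra. This is exactly what the filtration setup in \Cref{sect:filration} is designed to guarantee (the sequence $\Fsto_{k+1} \subset \Fupcomp_{k+1}$ records the stochastic-gradient noise before the uplink compression noise $\epsilon^i_{k+1}$), so once that is cited the proof is a one-line application of the assumption. There is no real analytic content; the work is purely in lining up the conditioning correctly.
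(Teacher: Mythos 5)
Your proof is correct and matches the paper's (implicit) argument: the paper treats this proposition as an immediate consequence of \Cref{asu:expec_quantization_operator} together with the filtration setup of \Cref{sect:filration}, which is exactly the measurability-plus-independence bookkeeping you spell out, followed by a pointwise application of the compression variance bound and the specialization $h_k^i\equiv 0$ for the memoryless form. No gap.
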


Concerning downlink compression (information sent from central server to each node), \Cref{prop:downlink_expectation} gives its expectation w.r.t $\sigma$-algebras $\Fupcomp_{k+1}$ and $\Fdwncomp_{k+1}$.
\begin{proposition}[Downlink Compression Expectation]
\label{prop:downlink_expectation}
Let $k \in \N$, recall that $\Omega_{k+1} = \mathcal{C}_\dwn(\gwkhat) = \gwkhat + \epsilon_k$, then a.s:
\[\left\{
    \begin{array}{ll}
    	\Expec{ \Omega_{k+1}}{\Fdwncomp_{k+1}} = \Omega_{k+1} \\
    	\Expec{ \Omega_{k+1}}{\Fupcomp_{k+1}} = \gwkhat  \,.
    \end{array}
\right.
\]
\end{proposition}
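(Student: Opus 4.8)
The plan is to derive the two identities directly from the construction of the filtrations and from the unbiasedness of the downlink operator $\mathcal{C}_\dwn$ in \Cref{asu:expec_quantization_operator}; no inequality, smoothness, or convexity is needed here, only elementary properties of conditional expectation.

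For the first identity, I would observe that $\gwkhat$ is $\Fupcomp_{k+1}$-measurable (one of the properties recorded for the filtrations), hence a fortiori $\Fdwncomp_{k+1}$-measurable, and that the fresh downlink compression noise $\epsilon_{k+1}$ is $\Fdwncomp_{k+1}$-measurable by the very definition $\Fdwncomp_{k+1} = \sigma\bigl(\Gamma_k, (\xi_{k+1}^i)_{i=1}^N, (\epsilon_{k+1}^i)_{i=1}^N, \epsilon_{k+1}\bigr)$. Consequently $\Omega_{k+1} = \mathcal{C}_\dwn(\gwkhat) = \gwkhat + \epsilon_{k+1}(\gwkhat)$ is $\Fdwncomp_{k+1}$-measurable, so $\Expec{\Omega_{k+1}}{\Fdwncomp_{k+1}} = \Omega_{k+1}$ by the ``taking out what is known'' property of conditional expectation.

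For the second identity, I would condition on $\Fupcomp_{k+1}$ and freeze $\gwkhat$: since $\gwkhat$ is $\Fupcomp_{k+1}$-measurable while the randomness defining $\mathcal{C}_\dwn$ at step $k+1$ is independent of $\Fupcomp_{k+1}$, the unbiasedness property $\E[\mathcal{C}_\dwn(\Delta)] = \Delta$ of \Cref{asu:expec_quantization_operator} applies pointwise in $\Delta = \gwkhat$. Writing $\Omega_{k+1} = \gwkhat + \epsilon_{k+1}(\gwkhat)$ and using linearity of conditional expectation,
\[
\Expec{\Omega_{k+1}}{\Fupcomp_{k+1}} = \gwkhat + \Expec{\epsilon_{k+1}(\gwkhat)}{\Fupcomp_{k+1}} = \gwkhat ,
\]
since $\Expec{\epsilon_{k+1}(\Delta)}{\Fupcomp_{k+1}} = 0$ for every $\Fupcomp_{k+1}$-measurable $\Delta$. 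This is exactly the downlink analogue of \Cref{prop:uplink_expectation}, and the structure of the argument mirrors it.

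The only point requiring a touch of care — which I would state explicitly rather than wave away — is the measure-theoretic justification of the freezing step: the relation $\E[\mathcal{C}_\dwn(\Delta)] = \Delta$ is postulated for deterministic $\Delta$, so one must argue that it transfers to $\Expec{\mathcal{C}_\dwn(\gwkhat)}{\Fupcomp_{k+1}} = \gwkhat$ when $\gwkhat$ is a random vector measurable with respect to the conditioning $\sigma$-algebra and the compression draw is independent of it. This is the standard disintegration / monotone-class argument (check it for simple $\gwkhat$, then pass to a limit), and it is the only non-immediate step; everything else is bookkeeping with the nested $\sigma$-algebras.
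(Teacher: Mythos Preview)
Your proposal is correct and matches the paper's approach: the paper does not give an explicit proof of this proposition, treating both identities as immediate consequences of the filtration construction and of the unbiasedness in \Cref{asu:expec_quantization_operator}, which is exactly the reasoning you spell out. Your added remark on the freezing/disintegration step is a welcome bit of rigor beyond what the paper states, and your use of $\epsilon_{k+1}$ (rather than the $\epsilon_k$ appearing in the statement) is consistent with the filtration definitions given just before the proposition.
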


Next proposition states that downlink compression can be bounded as for \cref{prop:variance_uplink}.

\begin{proposition}[Downlink Compression Variance]
\label{prop:variance_downlink}
Let $k \in \N$, using \Cref{asu:expec_quantization_operator} following holds a.s:
\[\Expec{ \|\Omega_{k+1} - \gwkhat \|^2}{\Fupcomp_{k+1}} \leq \omgC^\dwn \| \gwkhat \|^2  \,.
\]
\end{proposition}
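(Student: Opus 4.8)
The plan is to reduce the statement to a one-line application of the variance bound in \Cref{asu:expec_quantization_operator}, once the conditioning is set up correctly. Recall that in the full-participation update \eqref{eq:update_schema} the server holds $\gwkhat = \frac{1}{N}\sum_{i=1}^N(\widehat{\Delta}_k^i + h_k^i)$ and then broadcasts $\Omega_{k+1} = \mathcal{C}_{\dwn}(\gwkhat)$. From the filtration construction in \Cref{sect:filration}, $\gwkhat$ is $\Fupcomp_{k+1}$-measurable, whereas the downlink compression noise $\epsilon_{k+1}$ (the internal randomness of $\mathcal{C}_{\dwn}$) is drawn only at the broadcast step and is independent of $\Fupcomp_{k+1}$, so that $\Fdwncomp_{k+1} = \sigma(\Fupcomp_{k+1} \cup \sigma(\epsilon_{k+1}))$.

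First I would condition on $\Fupcomp_{k+1}$. Almost surely $\gwkhat$ is then a fixed vector of $\WW$, and since the randomness of $\mathcal{C}_{\dwn}$ is independent of $\Fupcomp_{k+1}$, the ``freeze the measurable part, integrate the independent part'' property of conditional expectation yields, a.s.,
\[
\Expec{\SqrdNrm{\Omega_{k+1} - \gwkhat}}{\Fupcomp_{k+1}} = \psi(\gwkhat), \qquad \text{where } \psi(\Delta) := \E\big[\SqrdNrm{\mathcal{C}_{\dwn}(\Delta) - \Delta}\big] \text{ for } \Delta \in \WW.
\]
I would then apply the second inequality of \Cref{asu:expec_quantization_operator} to the operator $\mathcal{C}_{\dwn}$: $\psi(\Delta) \le \omgC^\dwn \SqrdNrm{\Delta}$ for all $\Delta$. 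Evaluating at $\Delta = \gwkhat$ gives $\Expec{\SqrdNrm{\Omega_{k+1} - \gwkhat}}{\Fupcomp_{k+1}} \le \omgC^\dwn \SqrdNrm{\gwkhat}$ a.s., which is the claim. This is the exact analogue, for the downlink direction, of the proof of \Cref{prop:variance_uplink}.

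There is essentially no obstacle here; the single point deserving care is the rigorous justification that conditioning on $\Fupcomp_{k+1}$ freezes $\gwkhat$ while leaving the downlink randomness untouched — precisely the content of the measurability statements listed after the definition of the filtrations in \Cref{sect:filration}. Finally, the partial-participation version follows from the identical computation carried out on $\sigma(\Fupcomp_{k+1} \cup \Fsamp_k)$ with $\gwkhats$ in place of $\gwkhat$, using that $\gwkhats$ is $\sigma(\Fupcomp_{k+1} \cup \Fsamp_k)$-measurable and that the downlink compression noise remains independent of this $\sigma$-algebra.
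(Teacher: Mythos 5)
Your proposal is correct and follows exactly the route the paper intends: the paper states this proposition as an immediate consequence of the filtration construction (where $\gwkhat$ is $\Fupcomp_{k+1}$-measurable and the downlink compression noise is independent of $\Fupcomp_{k+1}$) together with the variance bound of \Cref{asu:expec_quantization_operator} applied to $\mathcal{C}_{\dwn}$ at $\Delta = \gwkhat$, which is precisely your argument. Your added remark on the partial-participation variant matches the paper's note that these propositions carry over verbatim when conditioning on $\sigma(\Fupcomp_{k+1}\cup\Fsamp_k)$.
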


Now, we give in \Cref{prop:variance_samp,prop:exp_samp} the expectation and the variance w.r.t.~devices random sampling noise.

\begin{proposition}[Expectation of device sampling]
\label{prop:exp_samp}
Let $k \in \N$, let's note $a_{k+1} = \frac{1}{N} \sum_{i=1}^N a_{k+1}^i$ and $a_{k+1, S_k} = \frac{1}{pN} \sum_{i \in S_k} a_{k+1}^i$, where $(a_{k+1}^i)_{i=0}^N \in (\WW)^N$ are $N$ random variables independent of each other and $\Fartif_{k+1}$-measurable, for a $\sigma$-field $\Fartif_{k+1}$ s.t.~$(B_{k}^i)^N_{i=1}$ are independent of $\Fartif_{k+1}$. We have a.s:
\[
\Expec{a_{k+1, S_k}}{\Fartif_{k+1}} = a_{k+1} \,.
\]
\end{proposition}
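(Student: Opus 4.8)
The plan is to compute $\Expec{a_{k+1,S_k}}{\Fartif_{k+1}}$ directly by unfolding the definition and exploiting the independence structure. First I would rewrite the sampled average in terms of the Bernoulli indicators: since $S_k = \{i : B_k^i = 1\}$, we have the identity
\[
a_{k+1,S_k} = \frac{1}{pN}\sum_{i \in S_k} a_{k+1}^i = \frac{1}{pN}\sum_{i=1}^N B_k^i\, a_{k+1}^i \,,
\]
which turns a sum over a random index set into a fixed-length sum with random Bernoulli weights. This is the key reformulation, and everything else is routine.

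Next I would take the conditional expectation given $\Fartif_{k+1}$ and push it inside the finite sum by linearity:
\[
\Expec{a_{k+1,S_k}}{\Fartif_{k+1}} = \frac{1}{pN}\sum_{i=1}^N \Expec{B_k^i\, a_{k+1}^i}{\Fartif_{k+1}} \,.
\]
Now I would invoke the hypothesis that each $a_{k+1}^i$ is $\Fartif_{k+1}$-measurable while the $(B_k^i)_{i=1}^N$ are independent of $\Fartif_{k+1}$. Therefore, conditionally on $\Fartif_{k+1}$, the factor $a_{k+1}^i$ can be pulled out of the expectation, and $B_k^i$ — being independent of $\Fartif_{k+1}$ — has conditional expectation equal to its unconditional mean $\E[B_k^i] = p$ (using $B_k^i \sim \mathcal B(p)$ from Assumption~\ref{asu:partial_participation}). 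Hence $\Expec{B_k^i\, a_{k+1}^i}{\Fartif_{k+1}} = p\, a_{k+1}^i$ almost surely.

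Substituting back gives
\[
\Expec{a_{k+1,S_k}}{\Fartif_{k+1}} = \frac{1}{pN}\sum_{i=1}^N p\, a_{k+1}^i = \frac{1}{N}\sum_{i=1}^N a_{k+1}^i = a_{k+1} \,,
\]
which is the claimed identity. There is no real obstacle here; the only point requiring a little care is the justification that $B_k^i$ may be treated as a constant $p$ under the conditional expectation — this needs the combination of (i) independence of the sampling variables from $\Fartif_{k+1}$ and (ii) the factorization lemma for conditional expectations (a $\sigma(\Fartif_{k+1})$-measurable factor times an independent factor). Both are covered by the measurability bookkeeping established in the filtration subsection, in particular the observation that $\Fsamp_k$ is independent of the other filtrations, so when $\Fartif_{k+1}$ is one of $\Fsto_{k+1}$, $\Fupcomp_{k+1}$, $\Fdwncomp_{k+1}$ (or $\Fsamp_k$-free versions thereof), the hypothesis applies verbatim.
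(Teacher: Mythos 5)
Your proposal is correct and follows essentially the same route as the paper's proof: rewrite the sum over $S_k$ as $\frac{1}{pN}\sum_{i=1}^N B_k^i a_{k+1}^i$, apply linearity of conditional expectation, pull out the $\Fartif_{k+1}$-measurable factors $a_{k+1}^i$, and use independence of $(B_k^i)_{i=1}^N$ from $\Fartif_{k+1}$ to replace $\Expec{B_k^i}{\Fartif_{k+1}}$ by $p$. No gaps; nothing further to add.
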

The vector $a_{k+1}$ (resp. the $\sigma$-field $\Fartif_{k+1}$) may represent various objects, for instance : $\gwk$, $\gwkhat$, $\Omega_{k+1}$ (resp. $\Fsto_{k+1}$, $\Fupcomp_{k+1}$, $\Fdwncomp_{k+1}$).
\begin{proof}
For any $k\in \N^*$, we have that:
\begin{align*}
    \Expec{a_{k+1, S_k}}{\Fartif_{k+1}} &= \Expec{\frac{1}{pN} \sum_{i \in S_k} a_{k+1}^i}{\Fartif_{k+1}} = \Expec{\frac{1}{pN} \sum_{i=0}^N a_{k+1}^i B^i_k}{\Fartif_{k+1}} \\
    &= \frac{1}{pN} \sum_{i=0}^N \Expec{a_{k+1}^i B_k^i}{\Fartif_{k+1}}  \text{ by linearity of the expectation,}\\
    &= \frac{1}{pN} \sum_{i=0}^N a_{k+1}^i \Expec{B_k^i}{\Fartif_{k+1}} \text{because $(a_{k+1}^i)^N_{i=1}$ are $\Fartif_{k+1}$-measurable,} \\
    &= \frac{1}{N} \sum_{i=0}^N a_{k+1}^i = a_{k+1}\, \text{because $(B_{k}^i)^N_{i=1}$ are independent of $\Fartif_{k+1}$,} 
\end{align*}
which allows to conclude.
\end{proof}

\begin{proposition}[Variance of device sampling]
\label{prop:variance_samp}
Let $k \in \N^*$, with the same notation as \Cref{prop:exp_samp}, we have a.s:
\[
\Var{a_{k+1, S_k}}{\Fartif_{k+1}} = \ffrac{1-p}{p N^2} \sum_{i=0}^N \SqrdNrm{a_{k+1}^i} \,.
\]
\end{proposition}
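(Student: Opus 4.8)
The plan is to compute the variance directly from the definition, exploiting that the $(B^i_k)_{i=1}^N$ are i.i.d.\ Bernoulli$(p)$ and independent of $\Fartif_{k+1}$, and that the $(a^i_{k+1})_{i=1}^N$ are mutually independent and $\Fartif_{k+1}$-measurable. First I would write $a_{k+1,S_k} - \Expec{a_{k+1,S_k}}{\Fartif_{k+1}} = \frac{1}{pN}\sum_{i=1}^N a^i_{k+1}(B^i_k - p)$, using \Cref{prop:exp_samp} to identify the conditional mean as $a_{k+1} = \frac1N\sum_i a^i_{k+1}$. Then the conditional variance is $\E\big[\SqrdNrm{\frac{1}{pN}\sum_{i=1}^N a^i_{k+1}(B^i_k - p)}\,\big|\,\Fartif_{k+1}\big]$, which I would expand into a double sum $\frac{1}{p^2N^2}\sum_{i,j} \PdtScl{a^i_{k+1}}{a^j_{k+1}}\,\E[(B^i_k-p)(B^j_k-p)\mid \Fartif_{k+1}]$.

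The key step is then evaluating the scalar factor $\E[(B^i_k-p)(B^j_k-p)\mid \Fartif_{k+1}]$. Since $(B^i_k)_i$ are independent of $\Fartif_{k+1}$, this equals the unconditional expectation; and since the Bernoullis are independent across $i$, the cross terms $i\neq j$ vanish, leaving only the diagonal $i=j$ with value $\mathrm{Var}(B^i_k) = p(1-p)$. Here one should be slightly careful that the $a^i_{k+1}$ being $\Fartif_{k+1}$-measurable (hence ``constant'' under the conditional expectation) is exactly what lets us pull them out of the expectation, so that only the Bernoulli covariance structure matters; this is the same mechanism already used in the proof of \Cref{prop:exp_samp}. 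Substituting gives $\frac{1}{p^2N^2}\sum_{i=1}^N \SqrdNrm{a^i_{k+1}}\, p(1-p) = \frac{1-p}{pN^2}\sum_{i=1}^N \SqrdNrm{a^i_{k+1}}$, which is the claimed identity.

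I do not anticipate a genuine obstacle here — the statement is an elementary second-moment computation. The only point requiring a little care is bookkeeping the conditioning: one must justify that conditioning on $\Fartif_{k+1}$ simultaneously (i) freezes the vectors $a^i_{k+1}$ and (ii) leaves the law of the $B^i_k$ unchanged (full independence, not merely zero correlation), so that both the mean subtraction and the vanishing of off-diagonal terms are legitimate. Once that is in place, the rest is the expansion of a squared norm and collecting the diagonal terms. (A cosmetic remark: the index range in the displayed sums is written as $\sum_{i=0}^N$ in the paper, which is an off-by-one typo for $\sum_{i=1}^N$ with $N$ summands; the computation is unaffected.)
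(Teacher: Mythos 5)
Your proof is correct and follows essentially the same route as the paper: both arguments reduce the conditional variance of $\frac{1}{pN}\sum_i a^i_{k+1}B^i_k$ to the diagonal Bernoulli variances $p(1-p)$, using that the $a^i_{k+1}$ are $\Fartif_{k+1}$-measurable and the $B^i_k$ are i.i.d.\ and independent of $\Fartif_{k+1}$ (the paper invokes "variance of an independent sum" directly where you expand the double sum and kill the cross terms, which is the same computation). Your remark about the $\sum_{i=0}^N$ index being an off-by-one typo for $\sum_{i=1}^N$ is also accurate.
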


\begin{proof}
Let $k \in \N^*$,
\begin{align*}
    \Var{a_{k+1, S_k}}{\Fartif_{k+1}} &= \Var{\frac{1}{pN} \sum_{i=0}^N a_{k+1}^i B^i_k}{\Fartif_{k+1}} \\
    &= \frac{1}{p^2N^2} \sum_{i=0}^N \Var{a_{k+1}^iB_k^i}{\Fartif_{k+1}} \text{because  $(B_{k}^i)^N_{i=1}$ are independent,} \\
    &= \frac{1}{p^2 N^2} \sum_{i=0}^N \SqrdNrm{a_{k+1}^i} \Var{B_k^i}{\Fartif_{k+1}} \text{because $(a_{k+1}^i)^N_{i=1}$ are $\Fartif_{k+1}$-measurable,}\\
    &= \ffrac{1-p}{p N^2} \sum_{i=0}^N \SqrdNrm{a_{k+1}^i} \,\text{because $(B_{k}^i)^N_{i=1}$ are independent of $\Fartif_{k+1}$.} 
\end{align*}
\end{proof}

\section{Bandwidth speed}
\label{app:sec:speedtest}

In a network configuration where download would be much faster than upload, bidirectional compression would present no benefit over unidirectional, as downlink communications would have a negligible cost. However, this is not the case in practice:  to assess this point, we gathered  broadband speeds, for both download and upload communications, for fixed broadband (cable, T1, DSL ...) or mobile (cellphones, smartphones, tablets, laptops ...) from studies carried out in $2020$ over the $6$ continents by \href{https://www.speedtest.net/global-index}{\textit{Speedtest.net}} \cite[see][]{noauthor_speedtest_nodate}. Results are provided in  \Cref{fig:app:speedtest}, comparing download and upload speeds. The ratios (averaged by continents) between upload and download speeds stand between $1$ (in Asia, for fixed broadband) and  $3.5$ (in Europe, for mobile broadband): there is thus no apparent reason to simply disregard the downlink communication, and bi-directionnal compression is unavoidable to achieve substantial speedup.
More precisely, if we denote $v_d$ and $v_u$ the speed of download and upload (in Mbits per second), we typically have $v_d = \rho v_u$, with  $1< \rho < 3.5$. Using quantization with $s=1$ (see  \Cref{app:quantization_schema}), for unidirectional compression, each iteration takes $O \left(\frac{Nd}{\rho v_u} \right)$ seconds,  while for a bidirectional one it takes only $O \left( \frac{N \sqrt{d} \log(d)}{v_u} \right)$ seconds.

The dataset is pickled from a study carried out by \href{https://www.speedtest.net/global-index}{\textit{Speedtest.net}} \cite[see][]{noauthor_speedtest_nodate}. 
This study has measured the bandwidth speeds in 2020 accross the six continents. In order to get a better understanding of this dataset, we illustrate the speeds distribution on \Cref{fig:app:speedtest,fig:ratio_boxplot,fig:ratio_mobile,fig:ratio_fixed}. 

\begin{figure}
    \centering
\includegraphics[width=0.5\textwidth]{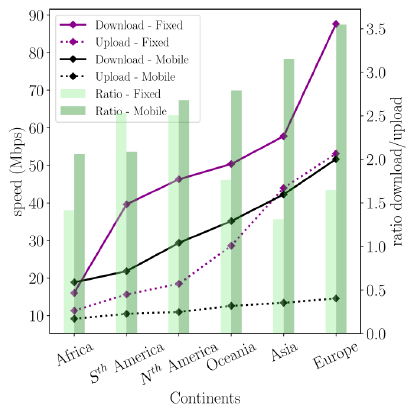}
    \caption{Left axis: upload and download speed for mobile and fixed broadband.  Left axis: speeds (in Mbps), right axis: ratio (green bars). The dataset is gathered from \href{https://www.speedtest.net/global-index}{\textit{Speedtest.net}}, see \cite{noauthor_speedtest_nodate}.\gs\gs}
    \label{fig:app:speedtest}
\end{figure}

In \Cref{fig:ratio_boxplot,fig:ratio_mobile,fig:ratio_fixed}, unlike \Cref{fig:app:speedtest}, we do not aggregate data by countries of a same continents. 
This allows to analyse the speeds ratio between upload and download with the \textit{proper} value of each countries. Looking at \Cref{fig:ratio_mobile,fig:ratio_fixed,fig:ratio_boxplot}, it is noticeable that in the world, the ratio between upload and download speed is between $1$ and $5$, and not between $1$ and $3.5$ as \Cref{fig:app:speedtest} was suggesting since we were aggregating data by continents. 
There are only nine countries in the world having a ratio higher than $5$. 
In Europe : Malta, Belgium and Montenegro. 
In Asia : South Korea. 
In North America : Canada, Saint Vincent and the Grenadines, Panama and Costa Rica. 
In Africa : Western Sahara. The highest ratio is $7.7$ observed in Malta.

\begin{figure}
\centering
    \begin{subfigure}{0.47\textwidth}
        \centering
        \includegraphics[width=1\textwidth]{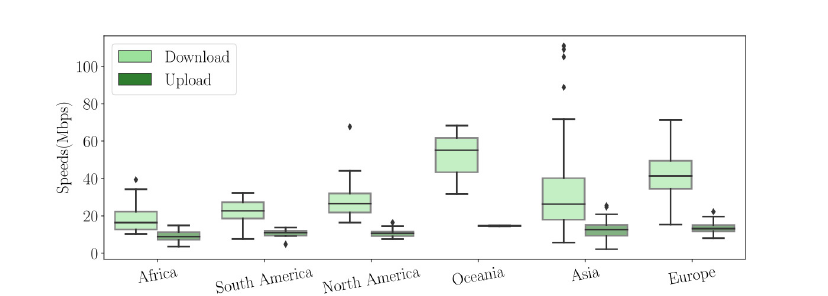}
        \caption{Mobile broadband.\gs}
        \label{fig:ratio_mobile}
    \end{subfigure} \hspace{0.04\textwidth}
    \begin{subfigure}{0.47\textwidth}
        \centering
         \includegraphics[width=1\textwidth]{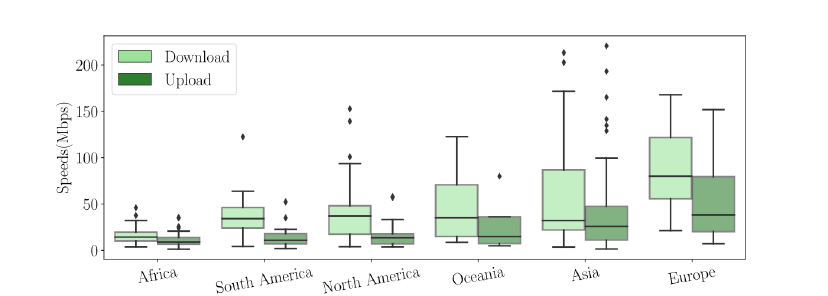}
        \caption{Fixed broadband. \gs}
        \label{fig:ratio_fixed}
    \end{subfigure}\hfill
    \caption{Upload/download speed (in Mbps). Best seen incolors.}
    \label{app:fig:upload_download}
\end{figure}

\begin{figure}
    \centering
    \includegraphics[width=0.47\textwidth]{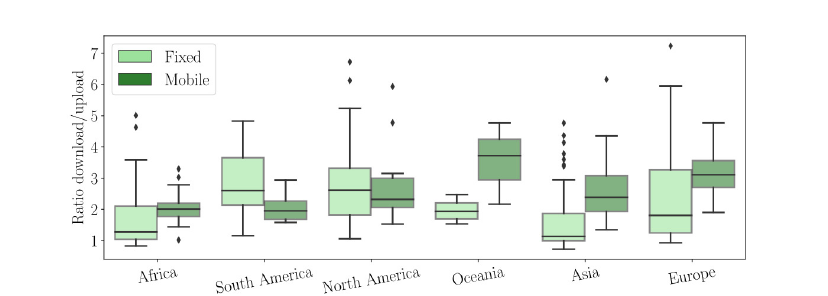}
    \caption{Distribution of the download/upload speeds ratio by continents. Best seen in colors.\gs\gs}
    \label{fig:ratio_boxplot}
\end{figure}

\section{Experiments}
\label{app:experiments}

In this section we provide additional details about our experiments.  We recall that we use two kind of datasets: 1) toy-ish synthetic datasets and 2) real datasets: \textit{superconduct} \citep[see][21263 points, 81 features]{hamidieh_data-driven_2018} and \textit{quantum} \citep[see][50,000 points, 65 features]{caruana_kdd-cup_2004}. 
The aim of using synthetic datasets is mainly to underline the properties resulting from \Cref{thm:cvdist,thm:cvgce_artemis,thm:main_PRave}.

We use the same $1$-quantization scheme (defined in \Cref{app:quantization_schema}, $s=1$ is the most drastic compression) for both uplink and downlink, and thus, we consider that $\omgC^\up = \omgC^\dwn$. In addition, we choose $\alpha^\up = \alpha^\dwn = \ffrac{1}{2(1+\omgC^{\mathrm{up}/\mathrm{dwn}})}$. 

For each figure, we plot the convergence w.r.t. the number of iteration $k$ or w.r.t. the theoretical number of bits exchanged after $k$ iterations. On the Y-axis we display $\log_{10}(F(w_k) - F(w_*))$, with $k$ in $\N$.
All experiments have been run $5$ times and averaged before displaying the curves. 
We plot error bars on all figures. To compute error bars we use the standard deviation of the logarithmic difference between the loss function at iteration $k$ and the objective loss, that is we take standard deviation of $\log_{10}(F(w_k) - F(w_*))$. We then plot the curve $\pm$ this standard deviation.
 
All the code is available in supplementary material.

\subsection{Synthetic dataset}

We build two different synthetic dataset for i.i.d.~or non-i.i.d. cases. We use linear regression to tackle the i.i.d~case and logistic regression to handle the non-i.i.d.~settings. As explained in \Cref{sect:intro}, each worker $i$ holds $n_i$ observations $(z_j^i)_{1 \leq j \leq n_i} = (x_j^i, y_j^i)_{1 \leq j \leq n_i} = (X^i, Y^i)$ following a distribution $D_i$.

We use  $N=10$ devices, each holding $200$ points of dimension $d=20$ for least-square regression and $d=2$ for logistic regression. We ran algorithms over $100$ epochs. 

\paragraph{Choice of the step size for synthetic dataset.} For stochastic descent, we use a step size $\gamma = \frac{1}{L\sqrt{k}}$ with $k$ the number of iteration, and for the batch descent we choose $\gamma = \frac{1}{L}$. 

\textbf{For i.i.d. setting}, we use a linear regression model without bias. For each worker $i$, data points are generated from a normal distribution $(x_j^i)_{1 \leq j \leq n_i} \sim \mathcal{N}(0, \Sigma)$. And then, for all $j$ in $\llbracket 1 , n_i \rrbracket$, we have: $y_j^i = \PdtScl{w}{x_j^i} + e_i$ with $e_i\sim\mathcal{N}(0, \lambda^2)$ and $w$ the true model.

To obtain $\sigmstar = 0$, it is enough to remove the noise $e_i$ by setting the variance $\lambda^2$ of the dataset distribution to $0$. 
Indeed, using a least-square regression, for all $i$ in $\llbracket 1, N\rrbracket$, the cost function evaluated at point $w$ is $F_i(w) = \frac{1}{2} \|{{X^i}^T w - Y^i}\|^2$. Thus the stochastic gradient $j$ in $\llbracket 1 , n_i \rrbracket$ on device $i$ in $\llbracket 1, N \rrbracket$ is $\g_j^i(w) = ({X^i_j}^T w - Y^i_ j) X^i_j$. On the other hand, the true gradient is $\nabla F_i(w) = \E X^i {X^i}^T (w - w^*)$. Computing the difference, we have for all device $i$ in $\llbracket 1 , N \rrbracket$ and all $j$ in $\llbracket1, n_i \rrbracket$: 
\begin{align}
\label{eq:app:explication_sigmastar_zero}
\g_j^i(w) - F_i(w) =  \underbrace{(X^i_j {X^i_j}^T - \E X^i {X^i}^T) (w - w_*)}_{\text{multiplicative noise equal to }0 \text{ in } w_*} + \underbrace{({X^i_j}^T w_* - Y_j^i)}_{\sim \mathcal{N}(0, \lambda^2)} X^i_j
\end{align}

This is why, if we set $\lambda = 0$ and evaluate \cref{eq:app:explication_sigmastar_zero} at $w_*$, we get back \Cref{asu:noise_sto_grad} with $\sigmstar = 0$, and as a consequence, the stochastic noise at the optimum is removed. Remark that it remains a stochastic gradient descent, and the uniform bound on the gradients noise \textbf{is not 0}. We set $\lambda^2 = 0 (\Leftrightarrow \sigmstar^2 = 0)$ in \Cref{fig:app:without_noise}. Otherwise, we set $\lambda^2 = 0.4$. 

\textbf{For non-i.i.d.}, we generate two different datasets based on a logistic model with two different parameters: $w_1 = (10, 10)$ and $w_2 = (10, -10)$. Thus the model is expected to converge to $w_* = (10, 0)$. We have two different data distributions $x_1\sim \mathcal{N} \left(0, {\Sigma_{1}} \right)$ and $x_2\sim \mathcal{N}\left(0, {\Sigma_{2}} \right)$, and for all $i$ in $\llbracket 1, N \rrbracket$, for all $k$ in $\llbracket 1, n_i \rrbracket\,, y_{k}^i = \mathcal{R}\left(\text{Sigm} \left(\PdtScl{w_{(i \mod 2) + 1}}{ x_{(i \mod 2) + 1}^k} \right) \right) \in \{-1, +1\}$. That is, half the machines use the first distribution $\mathcal{N} \left(0, {\Sigma_{1}} \right)$ for inputs and model $w_1$ and the other half the second distribution for inputs and model $w_2$.
Here, $\mathcal{R}$ is the Rademacher distribution and $\text{Sigm}$ is the sigmoid function defined as $\text{Sigm:} x \mapsto \ffrac{\e^x}{1+ \e^x}$. These two distributions are presented on \Cref{fig:distibution_logistic}.

\begin{figure*}
    \centering
    \begin{subfigure}{0.45\textwidth}
        \centering
        \includegraphics[width=1\textwidth]{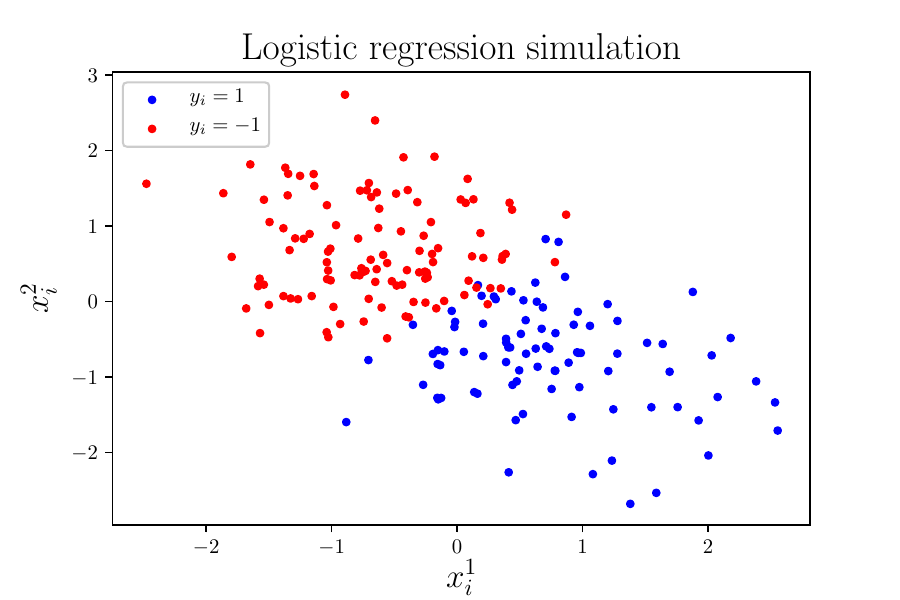}
        \caption{Dataset 1}
    \end{subfigure}
    \begin{subfigure}{0.45\textwidth}
        \centering
        \includegraphics[width=1\textwidth]{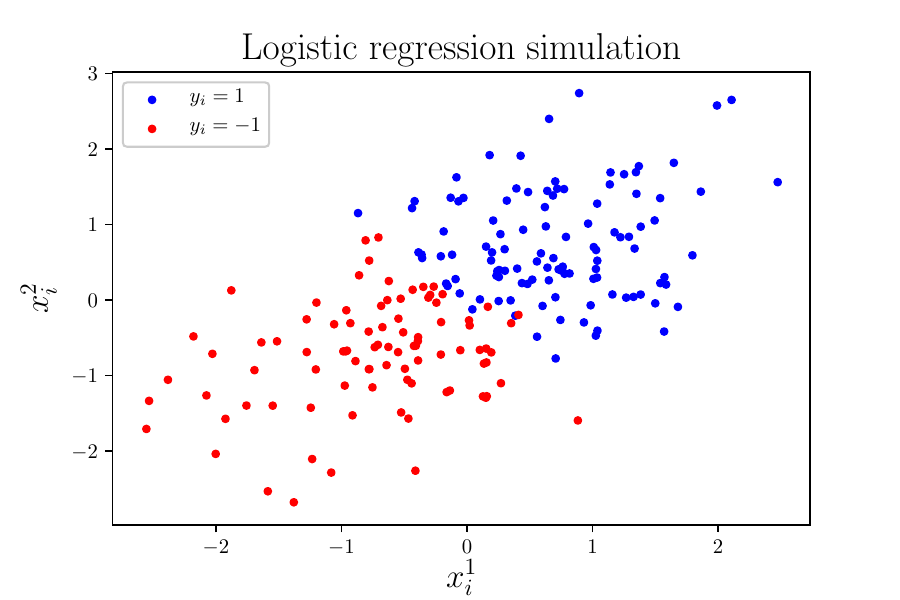}
        \caption{Dataset 2}
    \end{subfigure}
    \caption{Data distribution for logistic regression to simulate non-i.i.d. data. Half of the device hold first dataset, and the other half the second one.\gs\gs\gs}
    \label{fig:distibution_logistic}
\end{figure*}

\subsubsection{Least-square regression}
\label{app:exp:leastsquare}

In this section, we present all figures generated using Least-Square regression. \Cref{fig:app:with_noise} corresponds to \Cref{fig:LSR_noised}.

As explained in the main of the paper, in the case of $\sigmstar \neq 0$ (\Cref{fig:app:with_noise}), algorithm using memory (i.e \texttt{Diana} and \Artemis) are not expected to outperform those without (i.e \texttt{QSQGD} and \texttt{Bi-QSGD}). On the contrary, they saturate at a higher level. However, as soon as the noise at the optimum is $0$ (\Cref{fig:app:without_noise}), all algorithms (regardless of memory), converge at a linear rate exactly as classical SGD.

\begin{figure}
    \centering
    \begin{subfigure}{\sizefig\textwidth}
        \centering
        \includegraphics[width=1\textwidth]{pictures/with_noise_noavg_it-eps-converted-to.pdf}
        \caption{LSR: $\sigmstar^2 \neq 0$}
        \label{fig:app:LSR_noised}
    \end{subfigure}
    \begin{subfigure}{\sizefig\textwidth}
        \centering
        \includegraphics[width=1\textwidth]{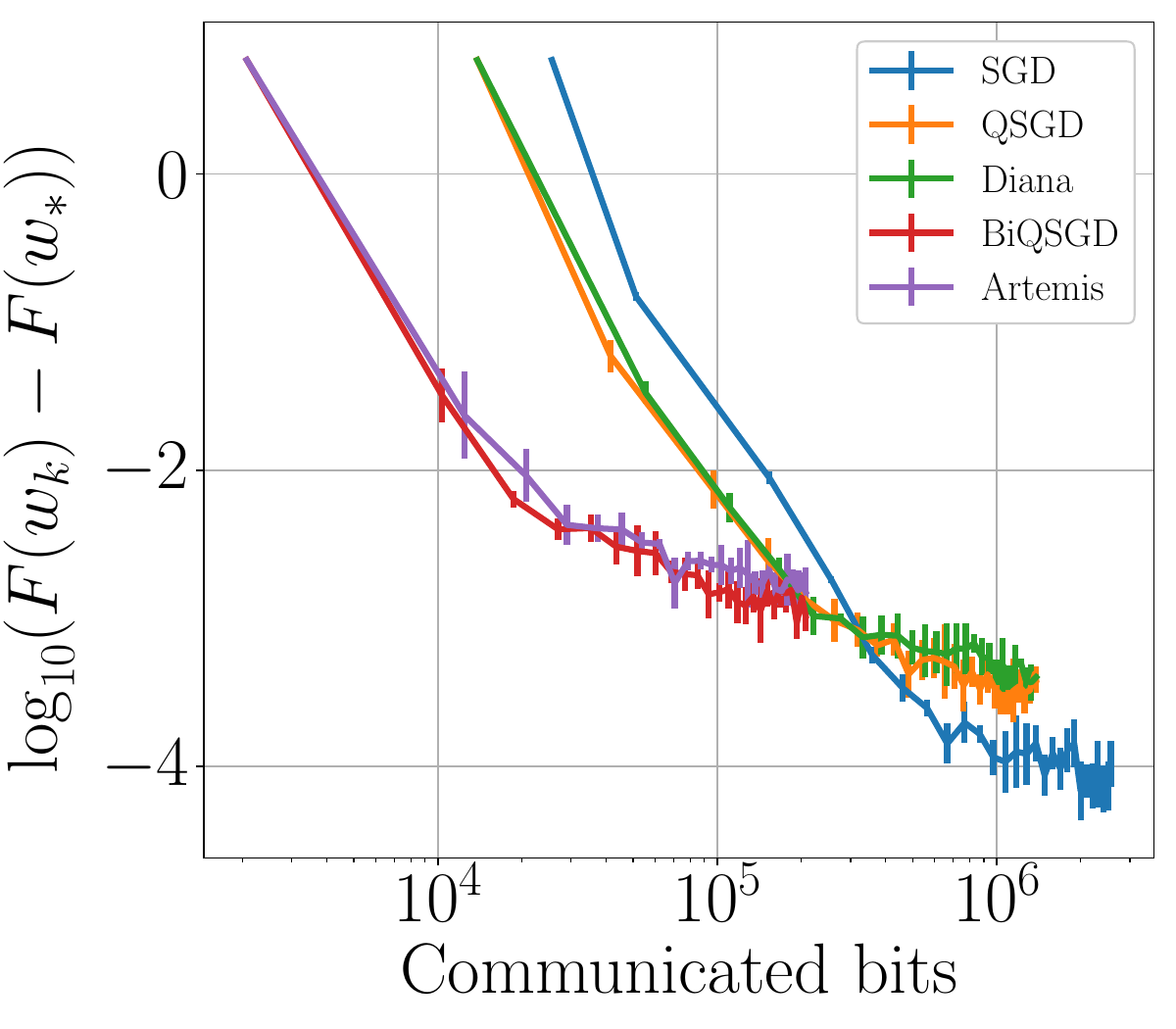} \label{fig:app:LSR_bits}
        \caption{X-axis in \# bits.}
    \end{subfigure}\hfill
    \caption[Figures of the main]{\textbf{Synthetic dataset, Least-Square Regression with noise} ($\sigmstar \neq 0$). In a situation where data is i.i.d., the memory does not present much interest, and has no impact on the convergence. Because $\sigmstar^2\neq0$, all algorithms saturate ; and saturation level is higher for double compression (\Artemis, \texttt{Bi-QSGD}), than for simple compression (\texttt{Diana}, \texttt{QSGD}) or than for SGD. This corroborates findings in \Cref{thm:cvgce_artemis} and \Cref{thm:cvdist}.\gs\gs\gs}
    \label{fig:app:with_noise}
\end{figure}\gs

\begin{figure}
    \centering
    \gs 
    \begin{subfigure}{\sizefig\textwidth}
        \centering
        \includegraphics[width=1\textwidth]{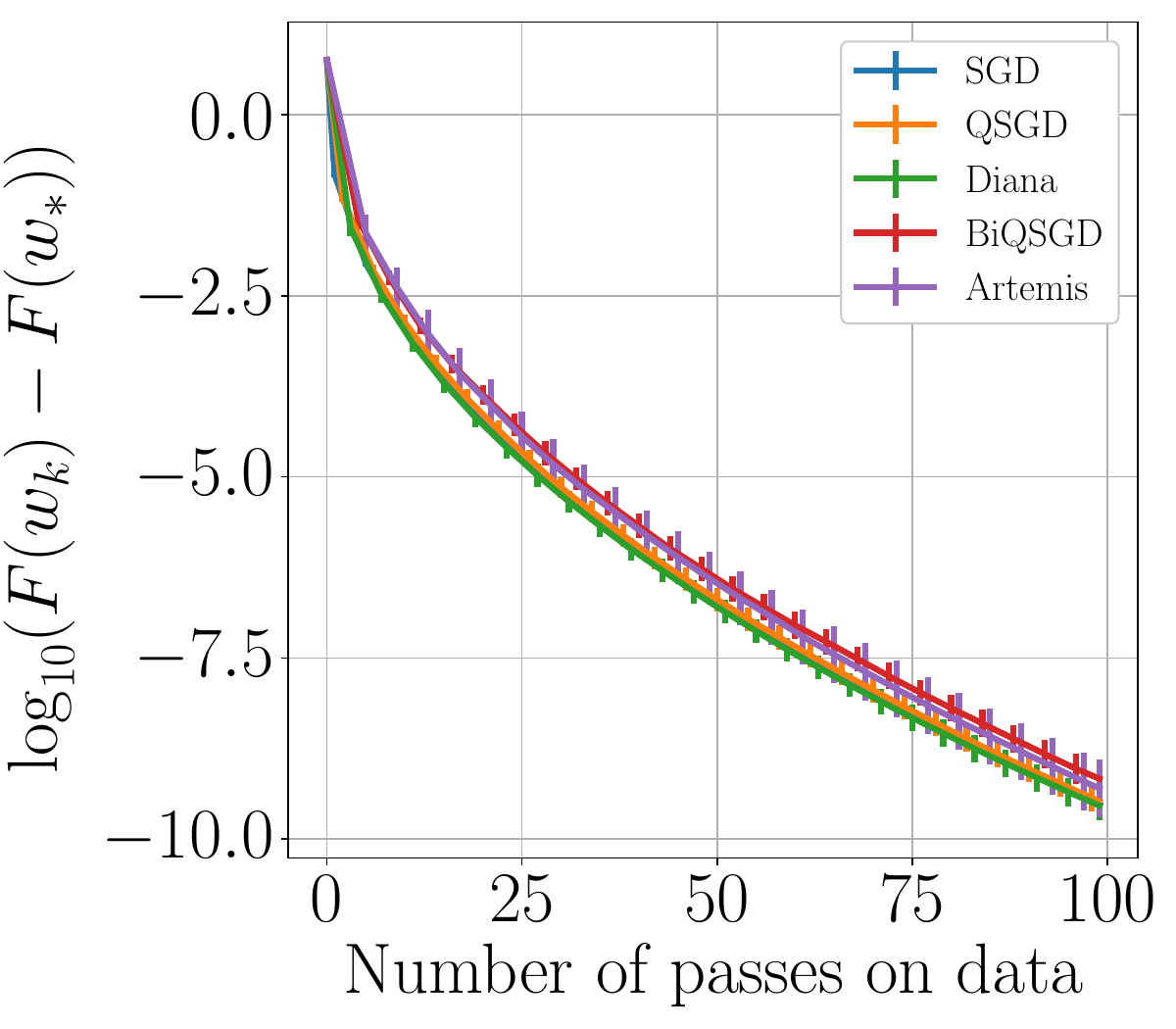}
        \caption{LSR: $\sigmstar^2 = 0$}
        \label{fig:app:LSR_nonoise}
    \end{subfigure}
    \begin{subfigure}{\sizefig \textwidth}
        \centering
        \includegraphics[width=1\textwidth]{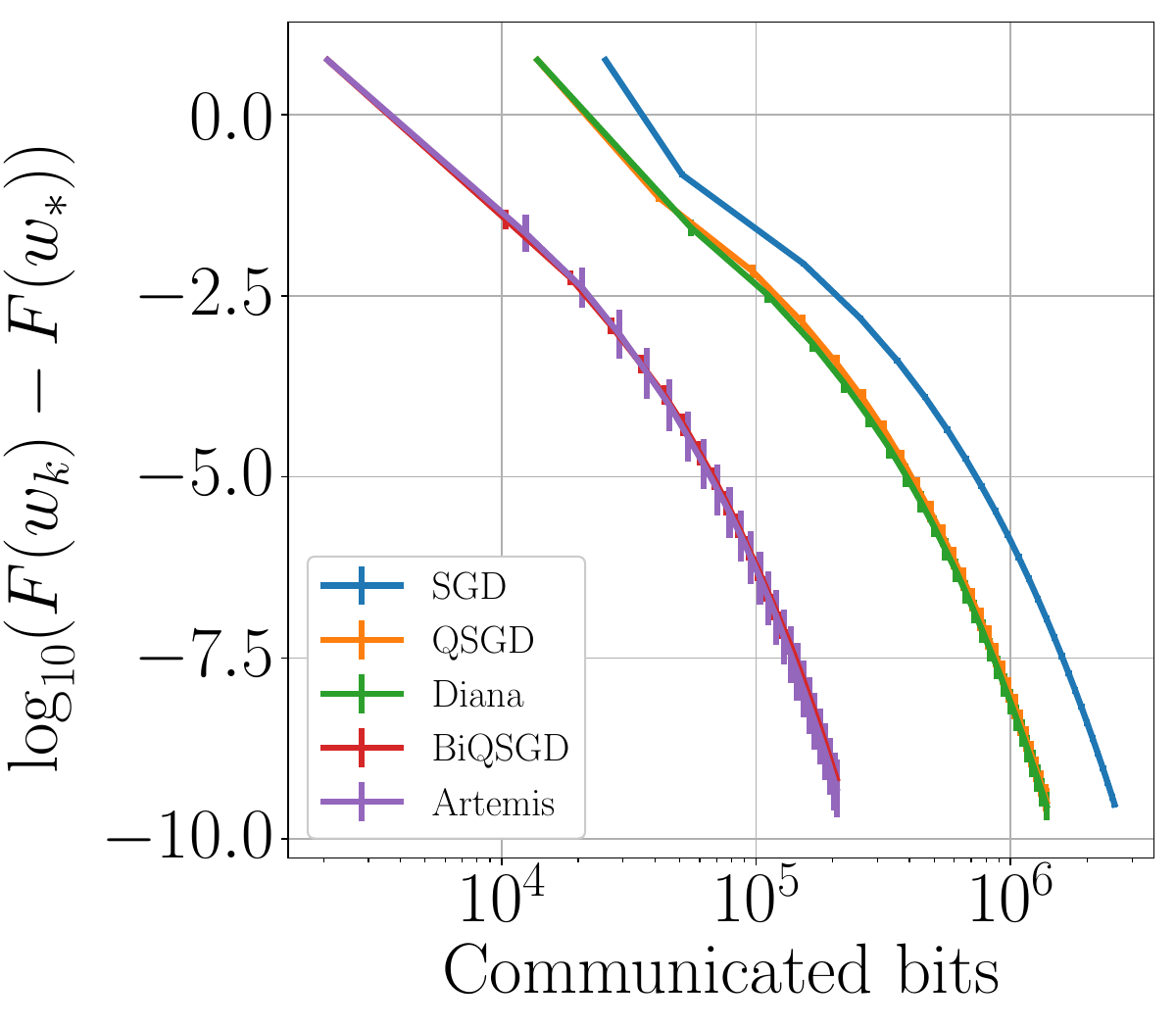} 
        \caption{X-axis in \# bits.}
    \end{subfigure}\hfill
    \caption[Figures of the main]{
    \textbf{Synthetic dataset, Least-Square Regression without noise} ($\sigmstar = 0$). Without surprise, with i.i.d~data and $\sigmstar = 0$, the convergence of each algorithm is linear. Thus, in i.i.d.~settings, the impact of the memory is negligible, but this will not be the case in the non-i.i.d.~settings as underlined by \Cref{fig:app:determinist_gradient_without_avg}.\gs\gs\gs}
    \label{fig:app:without_noise}
\end{figure}

\subsubsection{Logistic regression}
\label{app:exp:logistic_regression}

In this section, we present all figures generated using a logistic regression model. \Cref{fig:app:determinist_gradient_without_avg} corresponds to \Cref{fig:deterministic_noniid}. Data is non-i.d.d.~and we use a full batch gradient descent to get $\sigmstar = 0$ to shed into light the impact of memory over convergence.

\begin{figure}
    \centering
    \begin{subfigure}{\sizefig \textwidth}
        \centering
        \includegraphics[width=1\textwidth]{pictures/logistic_deterministic_noavg_it-eps-converted-to.pdf}
        \caption{LR: $\sigmstar^2 = 0$}
    \end{subfigure}
    \begin{subfigure}{\sizefig \textwidth}
        \centering
        \includegraphics[width=1\textwidth]{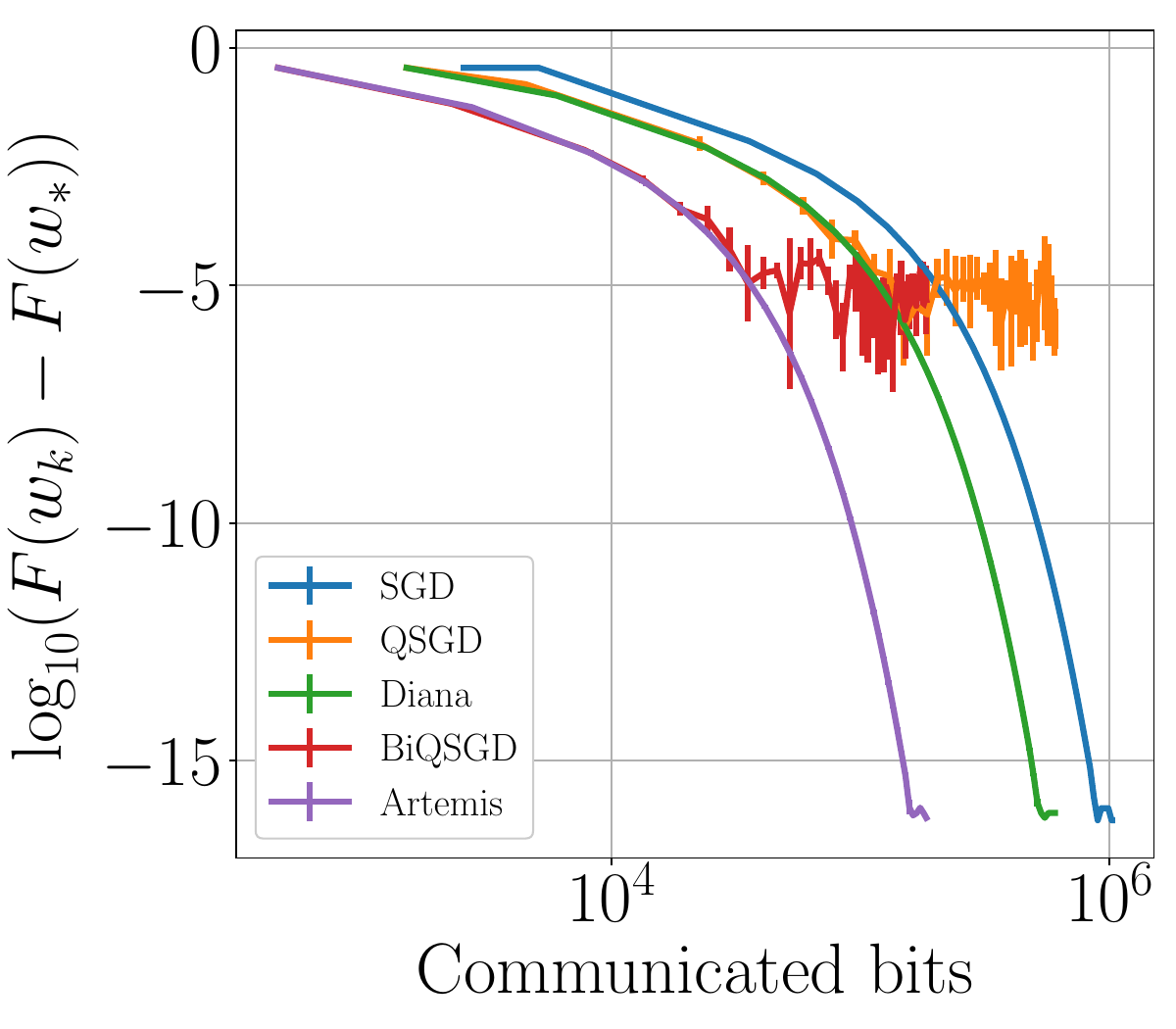} 
        \caption{X-axis in \# bits.}
    \end{subfigure}\hfill
    \caption[Figures of the main]{\textbf{Synthetic dataset, Logistic Regression on non-i.i.d.~data} using a batch gradient descent (to get $\sigmstar = 0$). The benefit of memory is obvious, it makes the algorithm to converge linearly, while algorithms without are saturating at a high level. This stress on the importance of using the memory in non-i.i.d.~settings.\gs\gs\gs}
    \label{fig:app:determinist_gradient_without_avg}
\end{figure}

\begin{figure}
    \centering
    \begin{subfigure}{\sizefig\textwidth}
        \centering
        \includegraphics[width=1\textwidth]{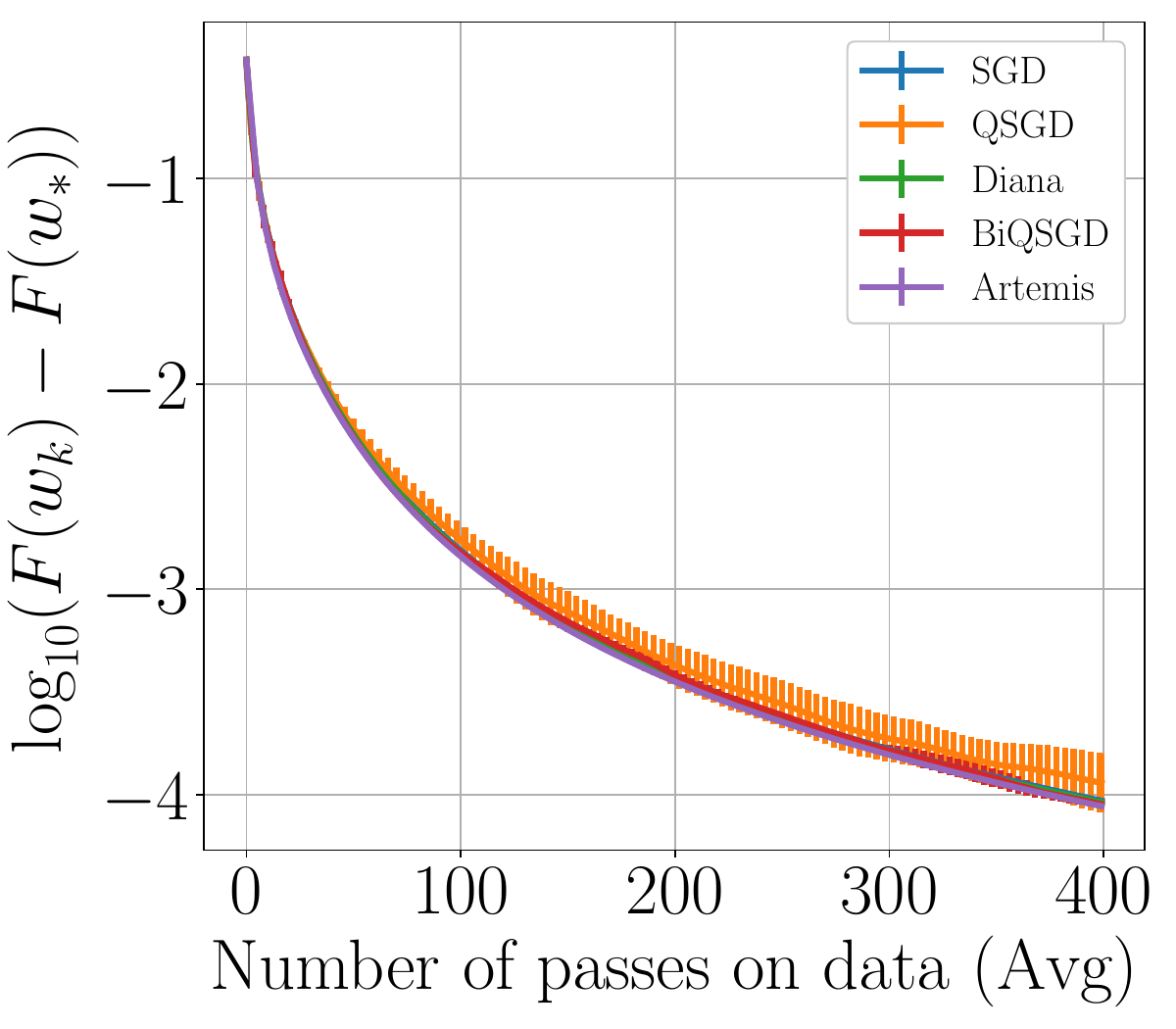}
        \caption{LR: $\sigmstar^2 = 0$}
    \end{subfigure}
    \begin{subfigure}{\sizefig\textwidth}
        \centering
        \includegraphics[width=1\textwidth]{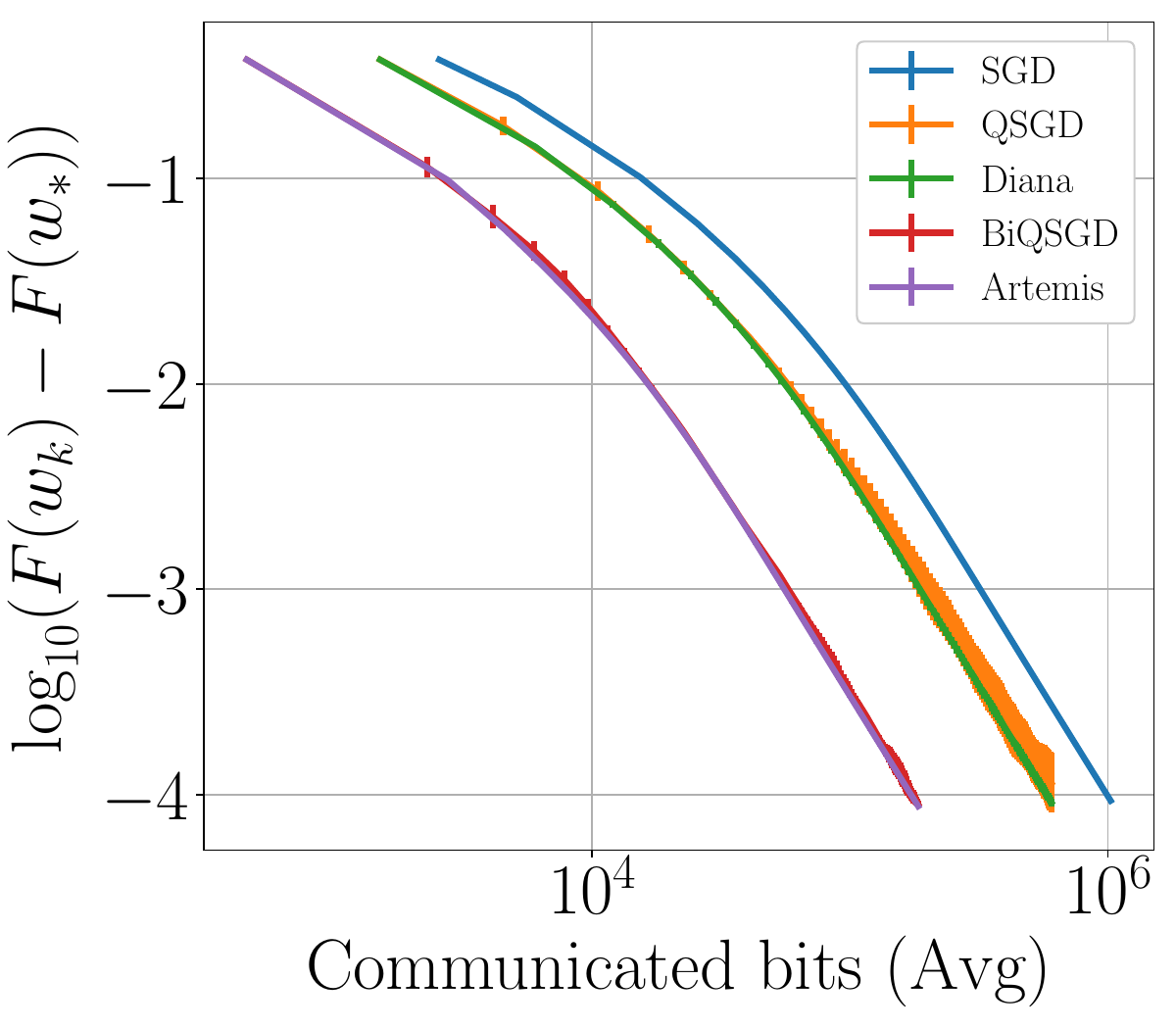} 
        \caption{X-axis in \# bits.\gs}\gs
    \end{subfigure}\hfill
    \caption[Polyak-Ruppert]{\textbf{Polyak-Ruppert averaging, synthetic dataset.} Logistic Regression on non-i.i.d.~data using a batch gradient descent (to get $\sigmstar = 0$) and a Polyak-Ruppert averaging. The convergence is linear as predicted by \Cref{thm:main_PRave} because $\sigmstar = 0$. Best seen in colors.\gs\gs\gs}
    \label{fig:app:determinist_gradient_with_avg}
\end{figure}

\Cref{fig:app:determinist_gradient_with_avg} is using same data and configuration as \Cref{fig:app:determinist_gradient_without_avg}, except that \textit{it is combined with a Polyak-Ruppert averaging}. Note that in the absence of memory the variance increases compared to algorithms using memory. To generate these figures, we didn't take the optimal step size. But if we took it, the trade-off between variance and bias would be worse and algorithms using memory would outperform those without. 

\subsection{Real datasets: \textit{Quantum} and \textit{Superconduct}}

In this section, we present details about experiments conducted on real-life datasets: \textit{superconduct} (from \citet{caruana_kdd-cup_2004}) where we use a least-square regression, and \textit{quantum} (from \citet{hamidieh_data-driven_2018}) with a logistic regression. All figures can be found in the notebooks provided in supplementary materials. 

In this following, we present results on superconduct and quantum in the setting of full device participation. We detail experiments in the PP setting in \Cref{app:subsubsec:PP}. Next, we address the issue of the optimal step size in \Cref{app:subsect:opt_step_size}. In \Cref{app:subsec:artemis_vs_existing} we compare \Artemis~to other existing algorithm doing compression in a distributed learning framework. Finally, we estimate in \Cref{app:subsec:cpu_usage} the carbon footprint of our work.

In order to simulate non-i.i.d.~data and to make the experiments closer to real-life usage, we split the dataset in heterogeneous groups using a Gaussian mixture clustering on TSNE representations (defined by \citet{maaten_visualizing_2008}). Thus, the data are highly non-i.i.d.~\textit{and} unbalanced over devices. We plot on \Cref{app:fig:quantum_and_superconduct_tsne} the TSNE representation of the two real datasets. 

\begin{figure}
\centering
    \begin{subfigure}{0.47\textwidth}
        \centering
        \includegraphics[width=1\textwidth]{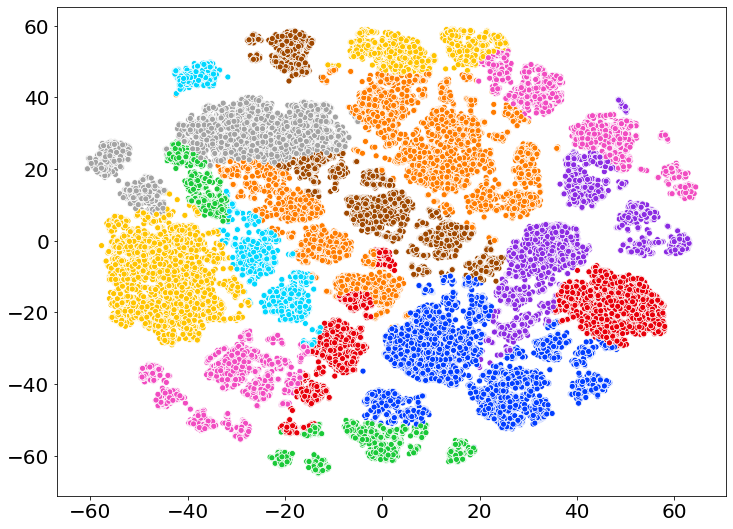} 
        \caption{Quantum dataset: $20$ clusters. Each cluster has between $900$ and $10500$ points with a median at $2300$ points.}
    \end{subfigure}
    \begin{subfigure}{0.47\textwidth}
        \centering
        \includegraphics[width=1\textwidth]{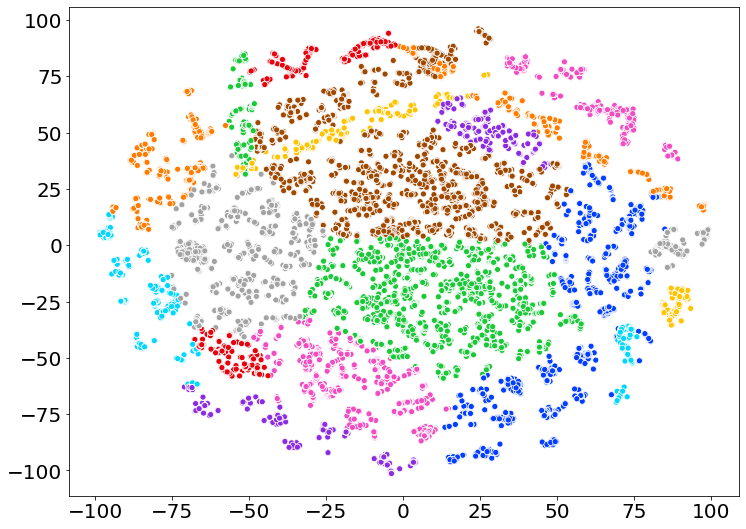}
        \caption{Superconduct dataset: $20$ cluster. Each cluster has between $250$ and $3900$ points with a median at $750$ points.}
    \end{subfigure} \hspace{0.04\textwidth} \hfill
    \caption{TSNE representations. Best seen in colors.}
    \label{app:fig:quantum_and_superconduct_tsne}
\end{figure}

There are $N = 20$ devices for \textit{superconduct} and \textit{quantum} datasets. For \textit{superconduct}, there are between $250$ and $3900$ points by worker, with a median at $750$ ; and for \textit{quantum}, there are between $900$ and $10500$ points, with a median at $2300$. On each figure, we indicate which step size $\gamma$ has been used.

\textbf{Convex settings} are given in \Cref{app:tab:settings_convex}. Experiments have been performed with $150$ epochs in the stochastic regime, and $800$ epochs in the full batch regime. We use quantization \citep[defined in][]{alistarh_qsgd_2017} with $s = 2^0$ for all experiments, except in the case of partial participation where we used $s=2^1$.

\begin{table}[!htp]
\caption{Settings of experiments.}
\label{app:tab:settings_convex}
\centering
\begin{tabular}{lcccc}
Settings & quantum & superconduct \\
\hline 
references & \cite{caruana_kdd-cup_2004} & \cite{hamidieh_data-driven_2018} \\
model & LR & LSR \\
dimension $d$ & $66$ & $82$ \\
training dataset size & $50,000$  & $21,200$ \\
batch size $b$& $256$& $64$  \\
compression rate $s$ & \multicolumn{2}{c}{$2^0$ (\textit{i.e.} two levels)}\\
norm quantization  &\multicolumn{2}{c}{$\| \cdot \|_{2}$}  \\
momentum $m$ & \multicolumn{2}{c}{no momentum}\\
step size $\gamma$ & \multicolumn{2}{c}{$1/L$} \\
\bottomrule
\end{tabular}
\end{table}

\Cref{fig:app:superconduct_sto,fig:app:quantum_sto} correspond to \Cref{fig:real_dataset}.
We observe on these figures the benefit of the memory.
The level of saturation of algorithms using memory is much lower than those without memory. 
Additionally, \Cref{thm:cvgce_artemis} highlights that the level of saturation (see constant $E$ of \Cref{tab:p_and_E}) is proportional to the level of compression $\omgC^{\up/\dwn}$. This is indeed observed on \Cref{fig:app:superconduct_sto,fig:app:quantum_sto,fig:app:superconduct_full,fig:app:quantum_full}.

In the case of the \textit{quantum} dataset (see \Cref{fig:app:quantum_sto}), \Artemis~is not only better than $\texttt{Bi-QSGD}$, but in fact, as good as $\texttt{QSGD}$. That is to say, we achieve to make an algorithm using a bidirectional compression, as good as an algorithm handling unidirectional compression.

On \Cref{fig:app:quantum_full,fig:app:superconduct_full}, we represent the convergence of the five algorithms in a full batch mode resulting to $\sigmstar = 0$. In this case, as the dependency on $B^2$ is removed, \Cref{thm:cvgce_artemis} predicts that we must have a linear convergence for algorithms using memory. This is experimentally observed.

\textbf{Memory trade-off: batch size, noise at the optimum, and heterogeneity.} Because the variance of the algorithm (see constant $E$ of \Cref{tab:p_and_E}) is divided by the batch size $b$, the choice of this hyperparameter is not without importance. Indeed, reducing the batch size will increase the impact of $\sigmstar$ on the convergence's rate, while the impact of $B^2$ will remain constant. Thus, there is a \textit{trade-off}: if the batch-size is too small, the quantity $\sigmstar/b$ will become larger than $B^2$, and the impact of the memory will be hidden by the second term depending on the dataset heterogeneity. This will lead \Artemis-like algorithms to fail: the memory term is canceled by the high heterogeneity.
On the other hand, if the dataset does not present enough heterogeneity, the constant $B^2$, will be negligible making memory useless, or even penalizing. 

To summarize, \Cref{fig:app:superconduct_sto,fig:app:quantum_sto,fig:app:superconduct_full,fig:app:quantum_full} underline the benefit of using memory in the stochastic and full batch regime for non-i.i.d. datasets.

\begin{figure}
    \centering
    \begin{subfigure}{\sizefig \textwidth}
        \centering
        \includegraphics[width=1\textwidth]{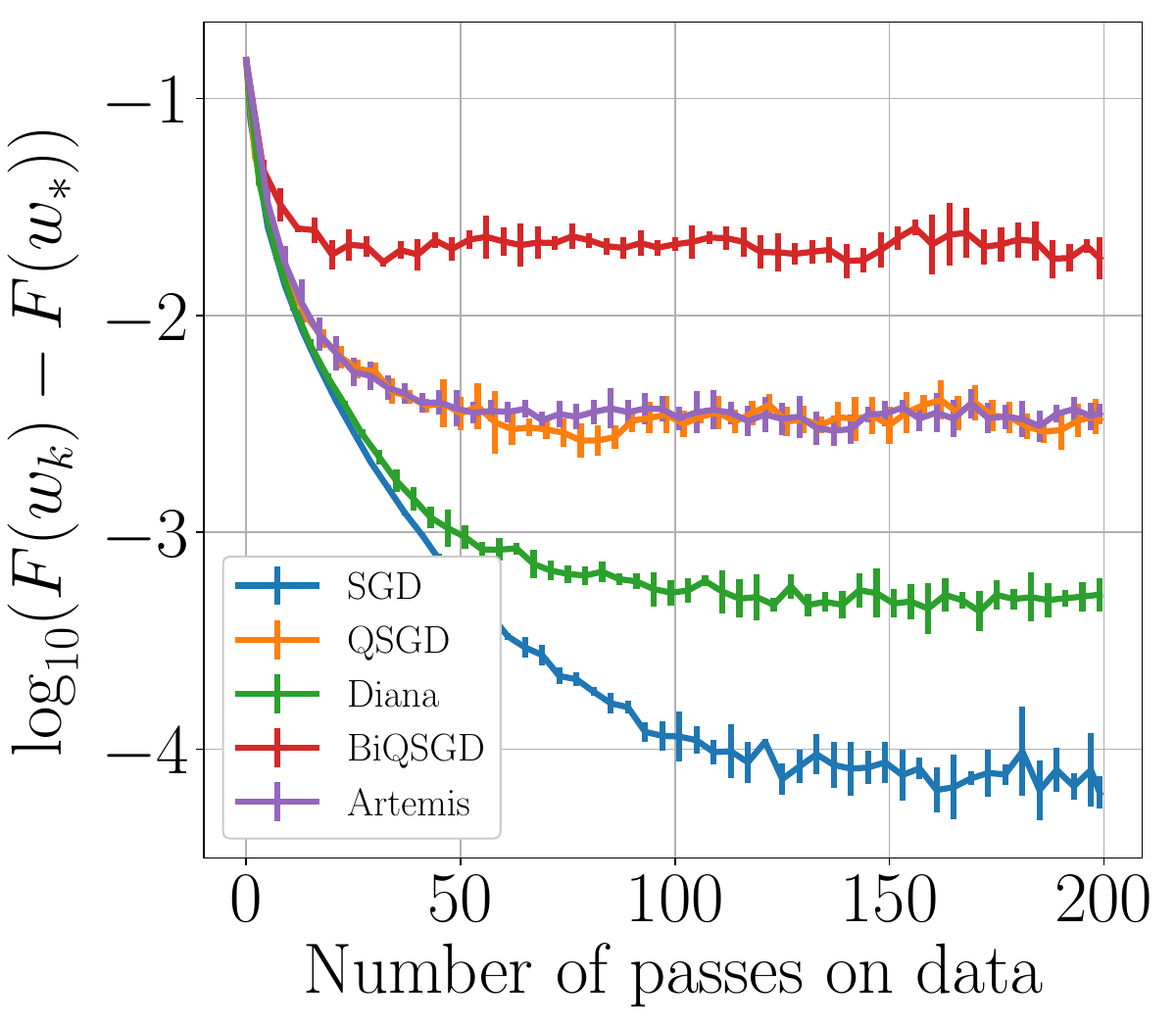}
        \caption{X-axis in \# epoch}
        \label{fig:app:quantum_it_sto}
    \end{subfigure}
    \begin{subfigure}{\sizefig \textwidth}
        \centering
        \includegraphics[width=1\textwidth]{pictures/quantum/bits-noavg-Qtzd-sto-b256-eps-converted-to.pdf}
        \caption{X-axis in \# bit}
        \label{fig:app:quantum_bits_sto}
    \end{subfigure}\hfill
    \caption[Figures of the main]{\textbf{\textit{Quantum}}. least-square regression, $\sigmstar \neq 0$, $\gamma = 1/L$, $b=256$, non-i.i.d.. Best seen in colors.\gs\gs\gs}
    \label{fig:app:quantum_sto}
\end{figure}

\begin{figure}
    \centering
    \begin{subfigure}{\sizefig \textwidth}
        \centering
        \includegraphics[width=1\textwidth]{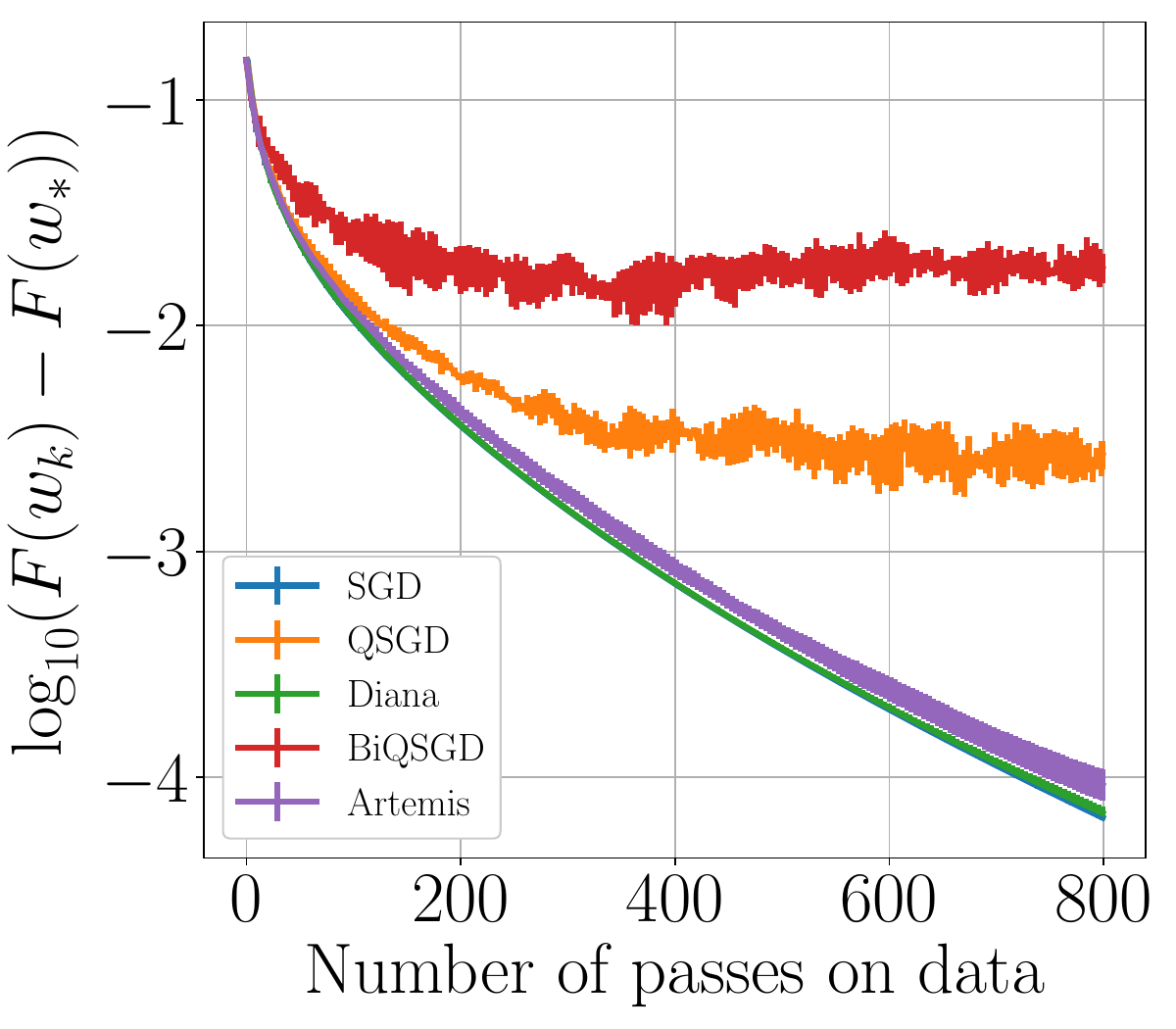}
        \caption{X-axis in \# epoch}
        \label{fig:app:quantumm_it_full}
    \end{subfigure}
    \begin{subfigure}{\sizefig \textwidth}
        \centering
        \includegraphics[width=1\textwidth]{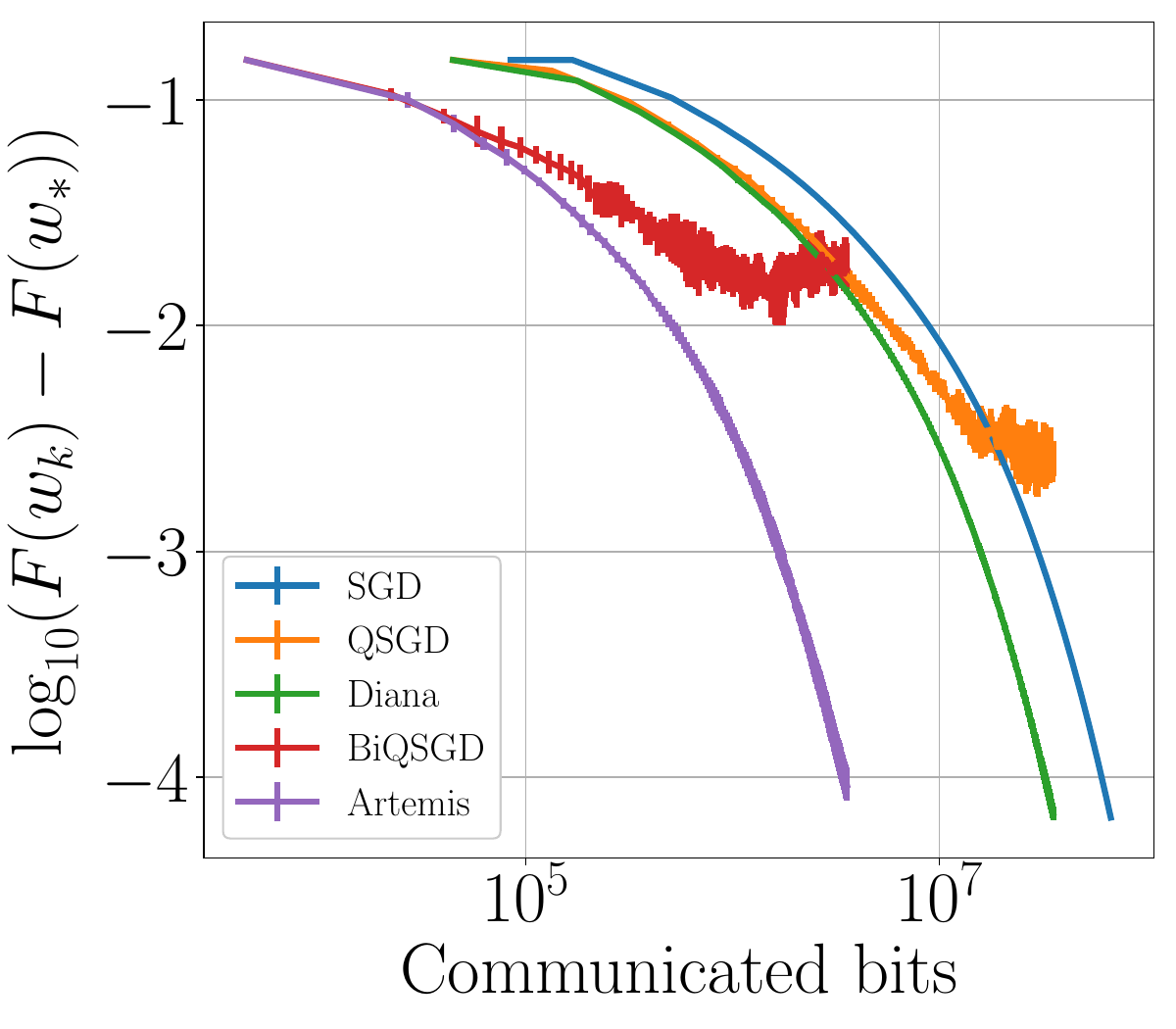}
        \caption{X-axis in \# bit}
        \label{fig:app:quantum_bits_full}
    \end{subfigure}\hfill
    \caption[Figures of the main]{\textbf{\textit{Quantum}}. least-square regression, $\sigmstar = 0$, $\gamma = 1/L$, $b=256$, non-i.i.d.. Best seen in colors.\gs\gs\gs}
    \label{fig:app:quantum_full}
\end{figure}

\begin{figure}
    \centering
    \begin{subfigure}{\sizefig \textwidth}
        \centering
        \includegraphics[width=1\textwidth]{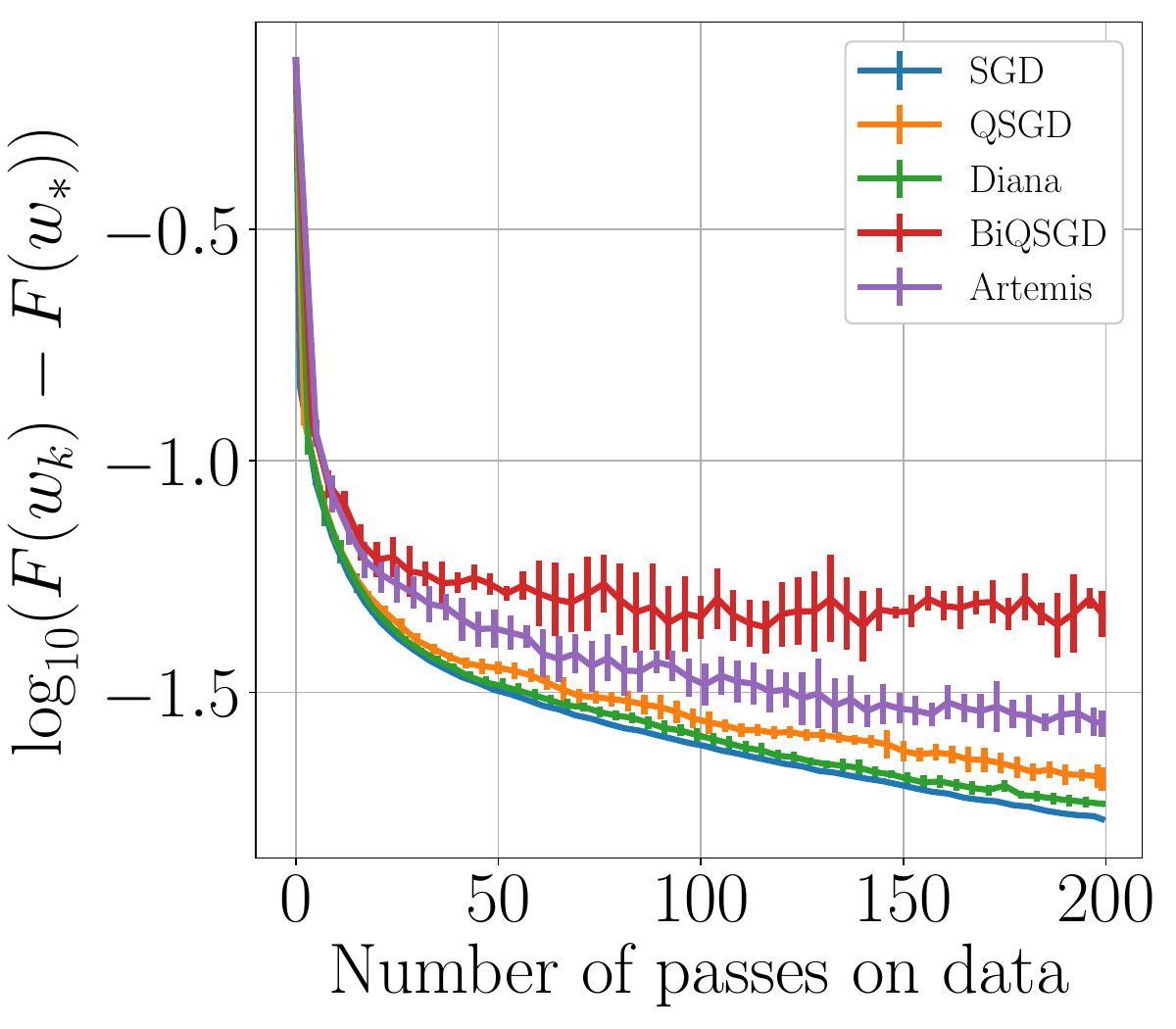}
        \caption{X-axis in \# epochs}
        \label{fig:app:superconduct_it_sto}
    \end{subfigure}
    \begin{subfigure}{\sizefig \textwidth}
        \centering
        \includegraphics[width=1\textwidth]{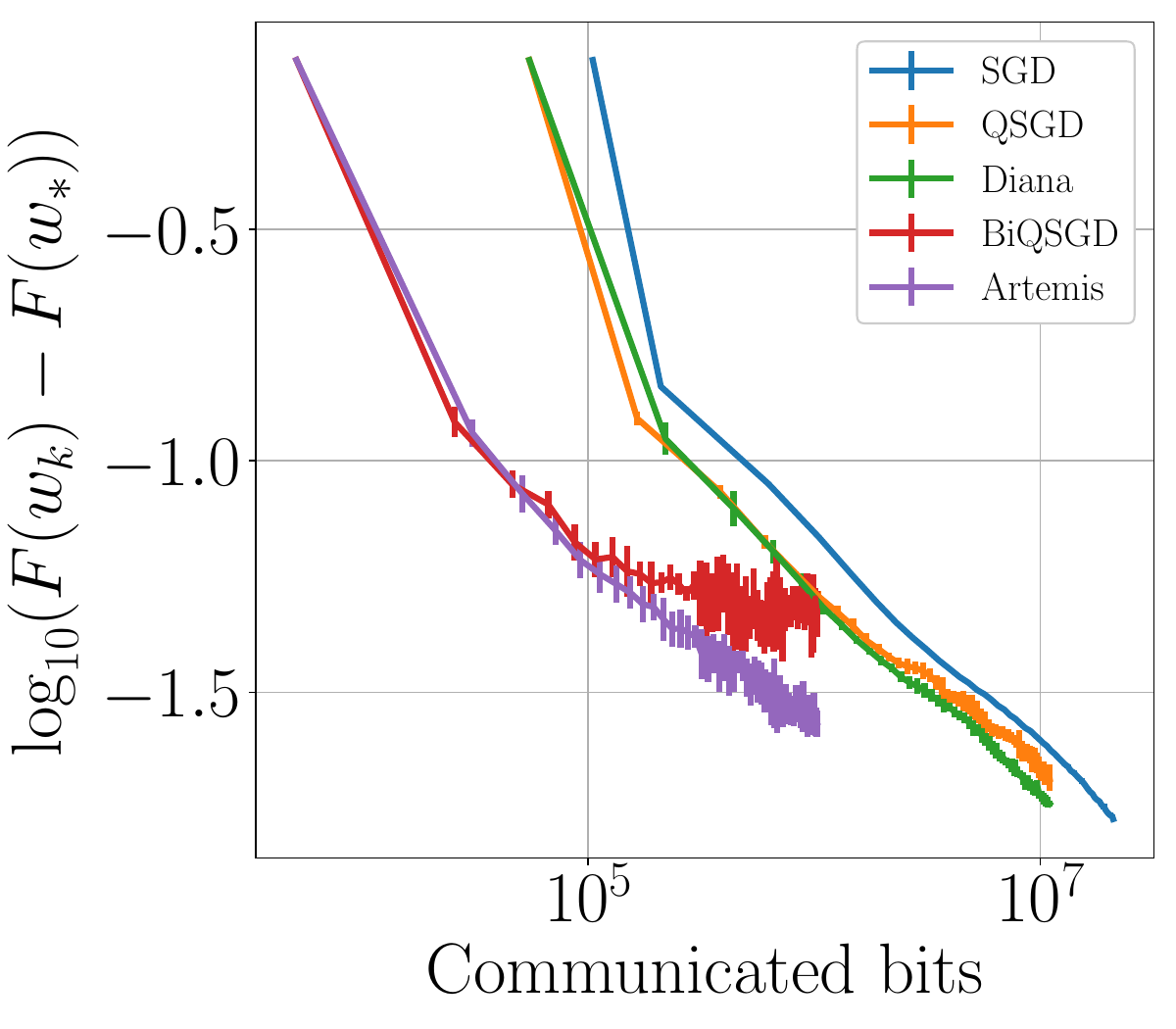} 
        \caption{X-axis in \# bit}
        \label{fig:app:superconduct_bits_sto}
    \end{subfigure}\hfill
    \caption[Figures of the main]{\textbf{\textit{Superconduct}}. least-square regression, $\sigmstar \neq 0$, $\gamma = 1/L$, $b=64$, non-i.i.d.. Best seen in colors. \gs\gs\gs}
    \label{fig:app:superconduct_sto}
\end{figure}

\begin{figure}
    \centering
    \begin{subfigure}{\sizefig \textwidth}
        \centering
        \includegraphics[width=1\textwidth]{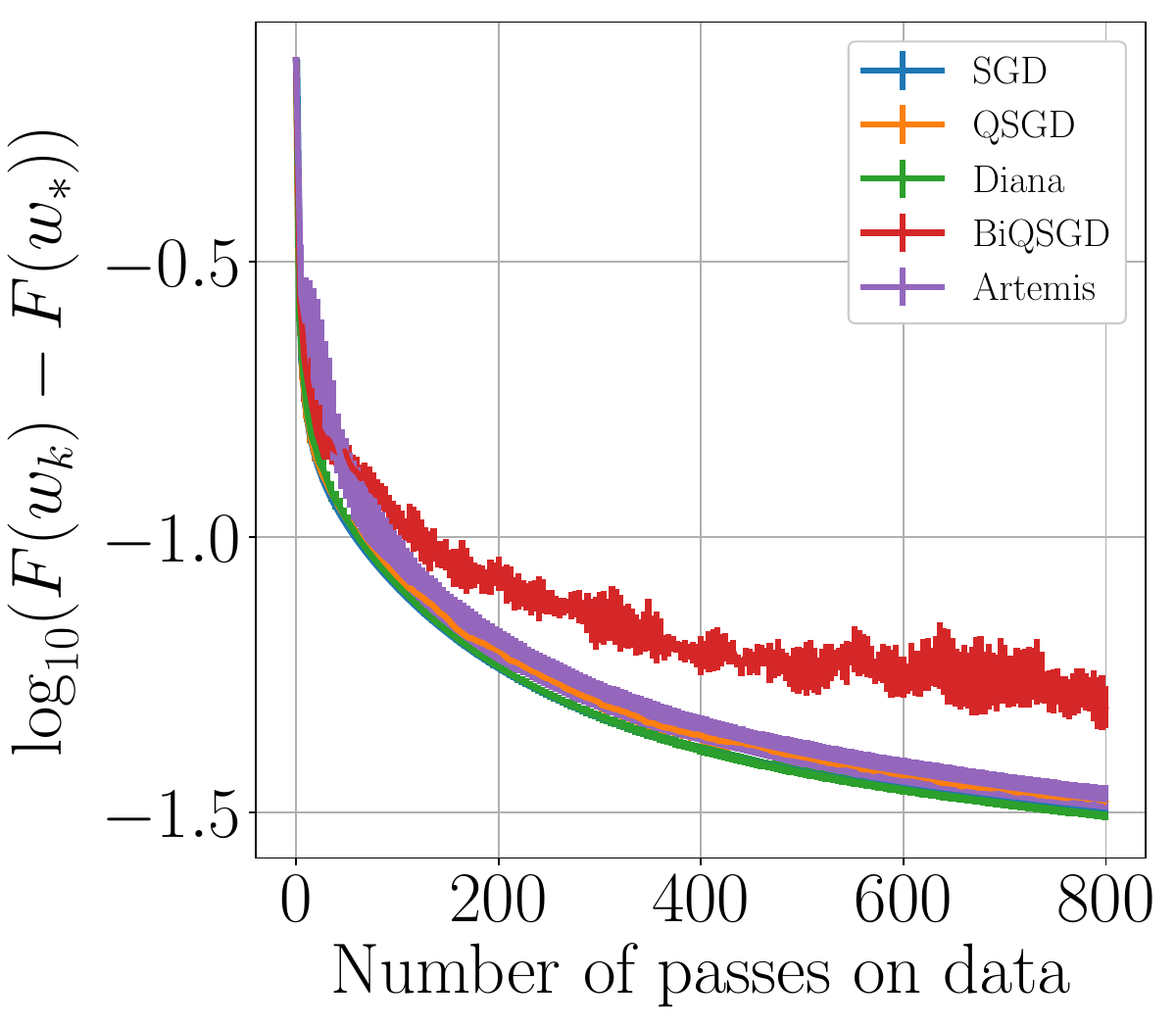}
        \caption{X-axis in \# epochs}
        \label{fig:app:superconduct_it_full}
    \end{subfigure}
    \begin{subfigure}{\sizefig \textwidth}
        \centering
        \includegraphics[width=1\textwidth]{pictures/superconduct/bits-noavg-Qtzd-full-eps-converted-to.pdf} 
        \caption{X-axis in \# bit}
        \label{fig:app:superconduct_bits_full}
    \end{subfigure}\hfill
    \caption[Figures of the main]{\textbf{\textit{Superconduct}}. least-square regression, $\sigmstar = 0$, $\gamma = 1/L$, $b=64$, non-i.i.d.. Best seen in colors. \gs\gs\gs}
    \label{fig:app:superconduct_full}
\end{figure}

\subsubsection{Partial participation}
\label{app:subsubsec:PP}

In this section we provide additional experiments on partial participation in the stochastic regime. Only half of the devices (randomly sampled) participate at each round, we use $2^1$-quantization.

\Cref{app:fig:real_dataset_PP1:sto} presents the first naive approach (\textbf{PP1}) to handle partial participation. This naive solution fails to properly converge. In the other hand, algorithms using \textbf{PP2} - \texttt{SGD} with memory i.e \Artemis~with $\omgC^{\up/\dwn} = 0$, \Artemis~with unidirectional compression i.e. $\omgC^\dwn = 0$ and \Artemis~with $\omgC^{\up/\dwn} \neq 0$ - presents much better convergence, see \Cref{app:fig:real_dataset_PP2:sto}. As an example, \texttt{SGD} with memory matches the results of \texttt{SGD} in the case of full participation (\Cref{fig:app:superconduct_sto,fig:app:quantum_sto}).
However, the convergence of \texttt{QSGD} and \texttt{Bi-QSGD} is unchanged as there is no difference between the two approaches in the absence of memory. 

The result in the full gradient regime is given in \Cref{sec:expemain}. On \Cref{fig:real_dataset_PP2,fig:real_dataset_PP1}, we can observe that our new algorithm \textbf{PP2} has a linear convergence unlike \textbf{PP1}.

As a conclusion on partial participation: with \textbf{PP2}, we observe the \textit{significant} impact of memory when using non-i.i.d.~data. Comparing \Cref{app:fig:real_dataset_PP2:sto} to \Cref{app:fig:real_dataset_PP1:sto}, the saturation of algorithms with $\alpha$ different to zero (i.e. using memory) is much lower; and classical \texttt{SGD} is outperformed by its variant using the memory mechanism.

\begin{figure}
    \centering
    \begin{subfigure}{\sizefig\textwidth}
        \centering
        \includegraphics[width=1\textwidth]{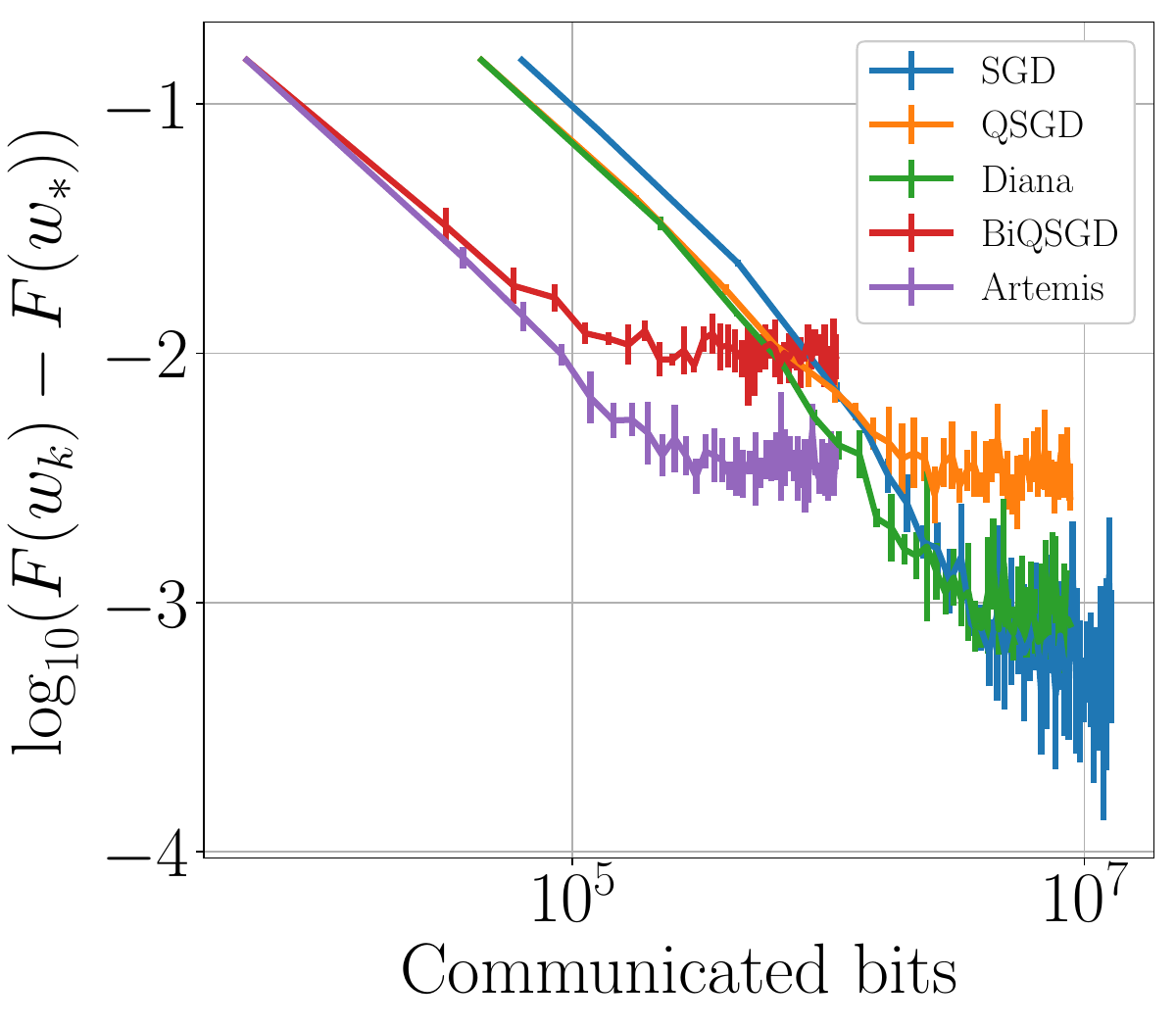}
        \caption{Quantum\gs}
    \end{subfigure}
    \begin{subfigure}{\sizefig\textwidth}
        \centering
        \includegraphics[width=1\textwidth]{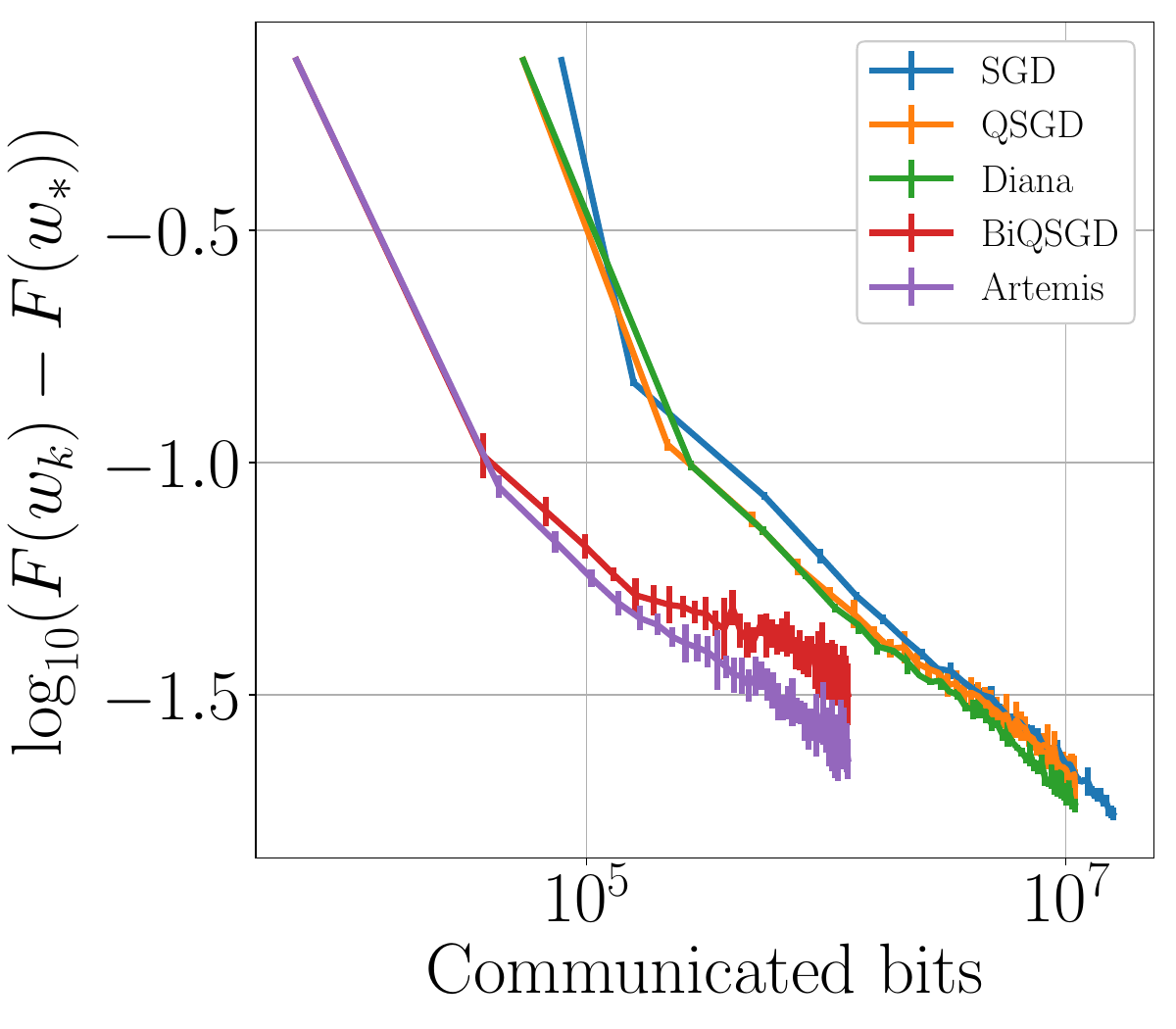}
        \caption{Superconduct\gs}
    \end{subfigure}
    \caption[Real dataset]{\textbf{Partial participation, stochastic regime - PP1.} $\sigmstar \neq 0$, $N=20$ workers, $p=0.5$,  $b >1$ ($150$ iter.) With this variant, all algorithms are saturating at a high level. 
    \vspace{-1em}}
    \label{app:fig:real_dataset_PP1:sto}
\end{figure}

\begin{figure}
    \centering
    \begin{subfigure}{\sizefig\textwidth}
        \centering
        \includegraphics[width=1\textwidth]{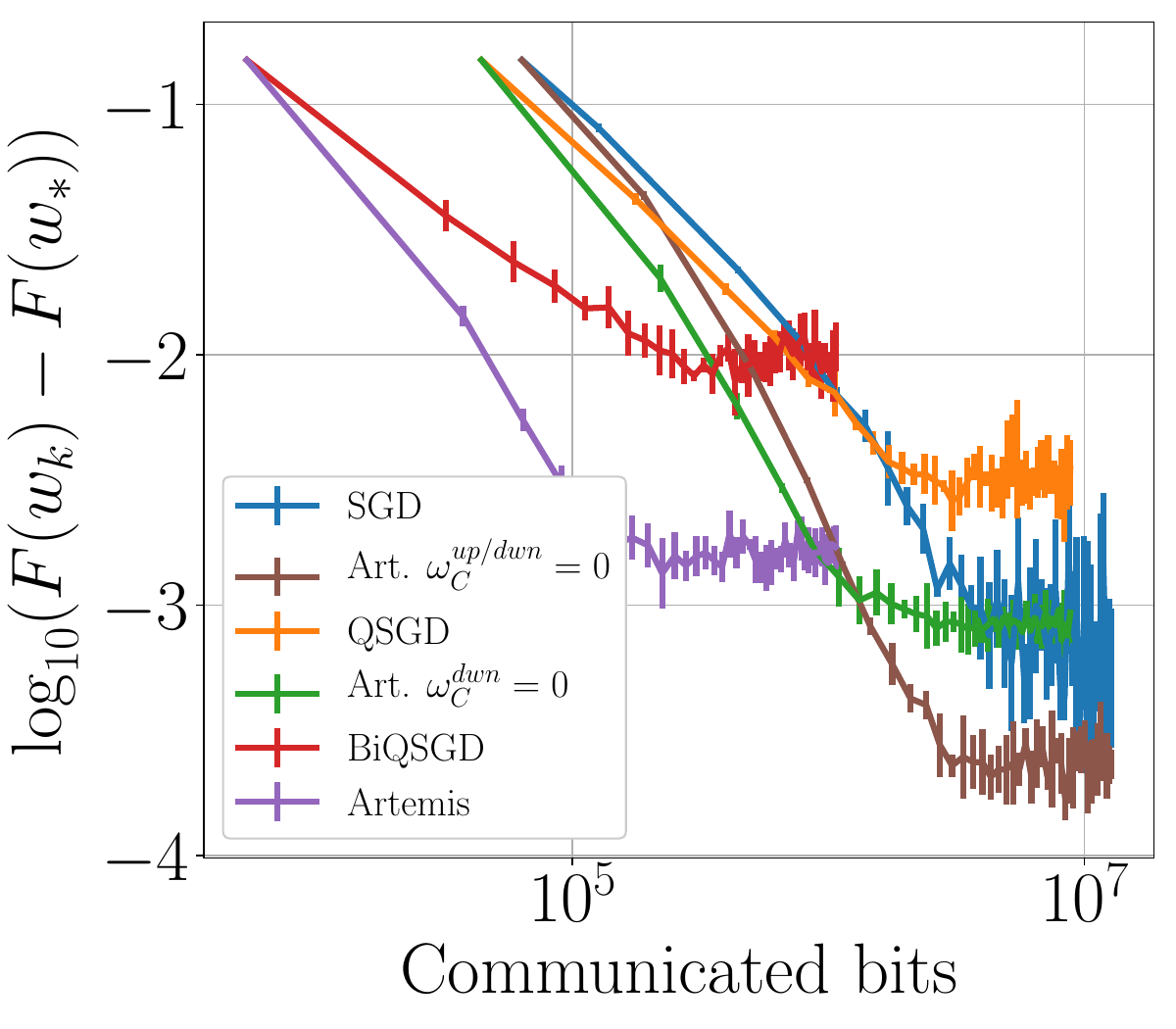}
        \caption{Qauntum\gs}
    \end{subfigure}
    \begin{subfigure}{\sizefig\textwidth}
        \centering
        \includegraphics[width=1\textwidth]{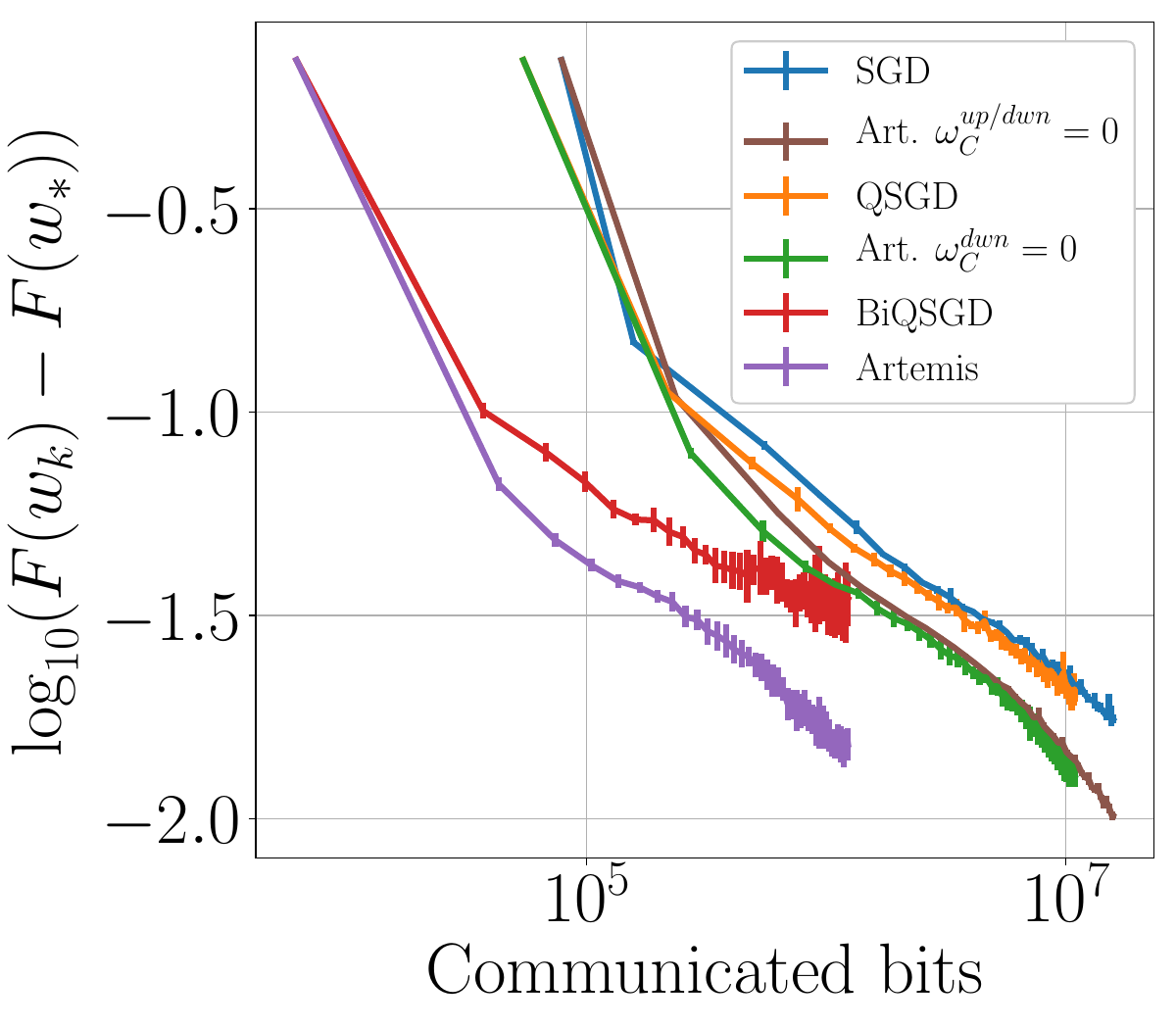}
        \caption{Superconduct\gs}
    \end{subfigure}
    \caption[Real dataset]{\textbf{Partial participation, stochastic regime - PP2.} $\sigmstar \neq 0$, $N=20$ workers, $p=0.5$,  $b >1$ ($150$ iter.). 
    \vspace{-1em}}
    \label{app:fig:real_dataset_PP2:sto}
\end{figure}

\subsubsection{Optimized step size}
\label{app:subsect:opt_step_size}

In this section, we want to address the issue of the optimal step size.
On \Cref{fig:app:gamma_opt} we plot the minimal loss after $150$ iterations for each of the $5$ algorithms. We can see that algorithms with memory clearly outperform those without. Then, on \Cref{fig:app:artemis_gamma} we present the loss of \Artemis~after $150$ iterations for various step size: $\frac{N = 20}{2L}$, $\frac{5}{L}$, $\frac{2}{L}$, $\frac{1}{L}$, $\frac{1}{2L}$, $\frac{1}{4L}$, $\frac{1}{8L}$ and $\frac{1}{16L}$. This helps to understand which step size should be taken to obtain the best accuracy after $k$ in $\llbracket 1, 150 \rrbracket$ iterations. Finally, on \Cref{fig:app:gamma_opt_for_each_algo}, we plot the loss obtained with the optimal step size $\gamma_{opt}$ of each algorithms (found with \Cref{fig:app:gamma_opt}) w.r.t the number of communicated bits. 

On \Cref{fig:app:gamma_opt}, it is interesting to note that the memory allows to increase the maximal step size. So, the optimal step size is $\gamma_{opt} = \frac{1}{L}$ for \Artemis~, but is $\gamma_{opt} = \frac{1}{2L}$ for \texttt{BiQSGD}. 

We plot the loss of \Artemis~after $150$ iterations for different step size on \Cref{fig:app:artemis_gamma}. As stressed by \Cref{fig:app:gamma_opt}, after $150$ iterations, the best accuracy for both datasets is indeed obtained with $\gamma_{opt} = \frac{1}{L}$. And we observe that (as for Vanilla \texttt{SGD}), the optimal step size of \Artemis~decreases with the number of iterations (e.g., for \textit{quantum}, it is $1/L$ before 50 iterations and $1/2L$ after). This is consistent with \Cref{thm:cvgce_artemis}.

\Cref{fig:app:gamma_opt_for_each_algo} plots the loss of each algorithm obtained with its optimal step size $\gamma$ i.e. the step size that attains the lowest error after $150$ iterations. For instance $\gamma = \frac{1}{L}$ for \Artemis, but $\gamma = \frac{2}{L}$ for \texttt{SGD}. For both \textit{superconduct} and \textit{quantum} datasets, taking the optimal step size leads \Artemis~to superior performance than other variants w.r.t. both accuracy and number of bits.

In conclusion of this subsection, \Cref{fig:app:artemis_gamma,fig:app:gamma_opt,fig:app:gamma_opt_for_each_algo} allow to conclude on the significant impact of memory in a non-i.i.d.~settings, and to claim that bidirectional compression with memory is by far superior (up to a threshold) to the four other algorithm: \texttt{SGD, QSGD, Diana} and \texttt{BiQSGD}. 

\begin{figure}
    \centering
    \begin{subfigure}{\sizefig \textwidth}
        \centering
        \includegraphics[width=1\textwidth]{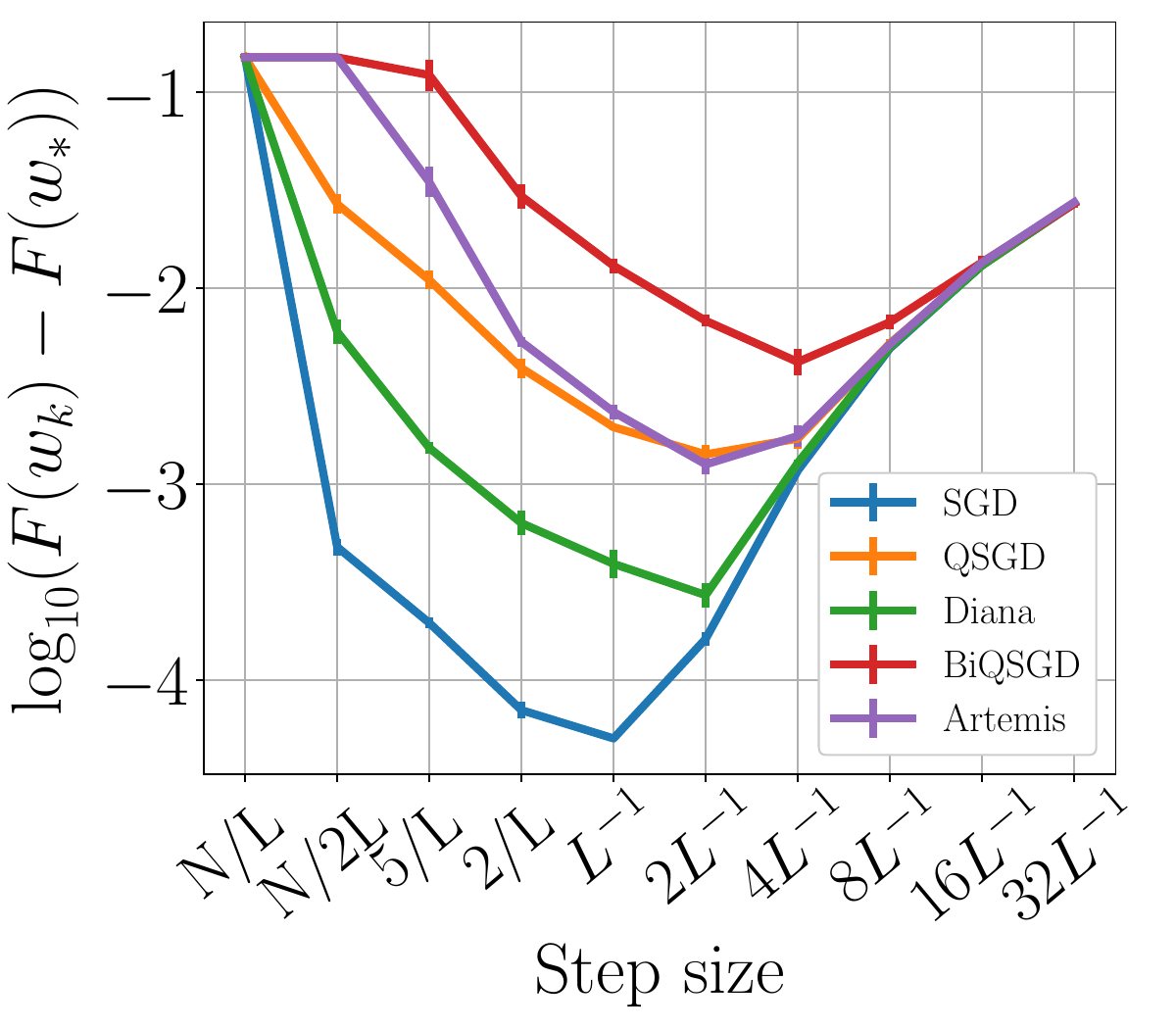}
        \caption{Quantum\gs}
    \end{subfigure}
    \begin{subfigure}{\sizefig \textwidth}
        \centering
        \includegraphics[width=1\textwidth]{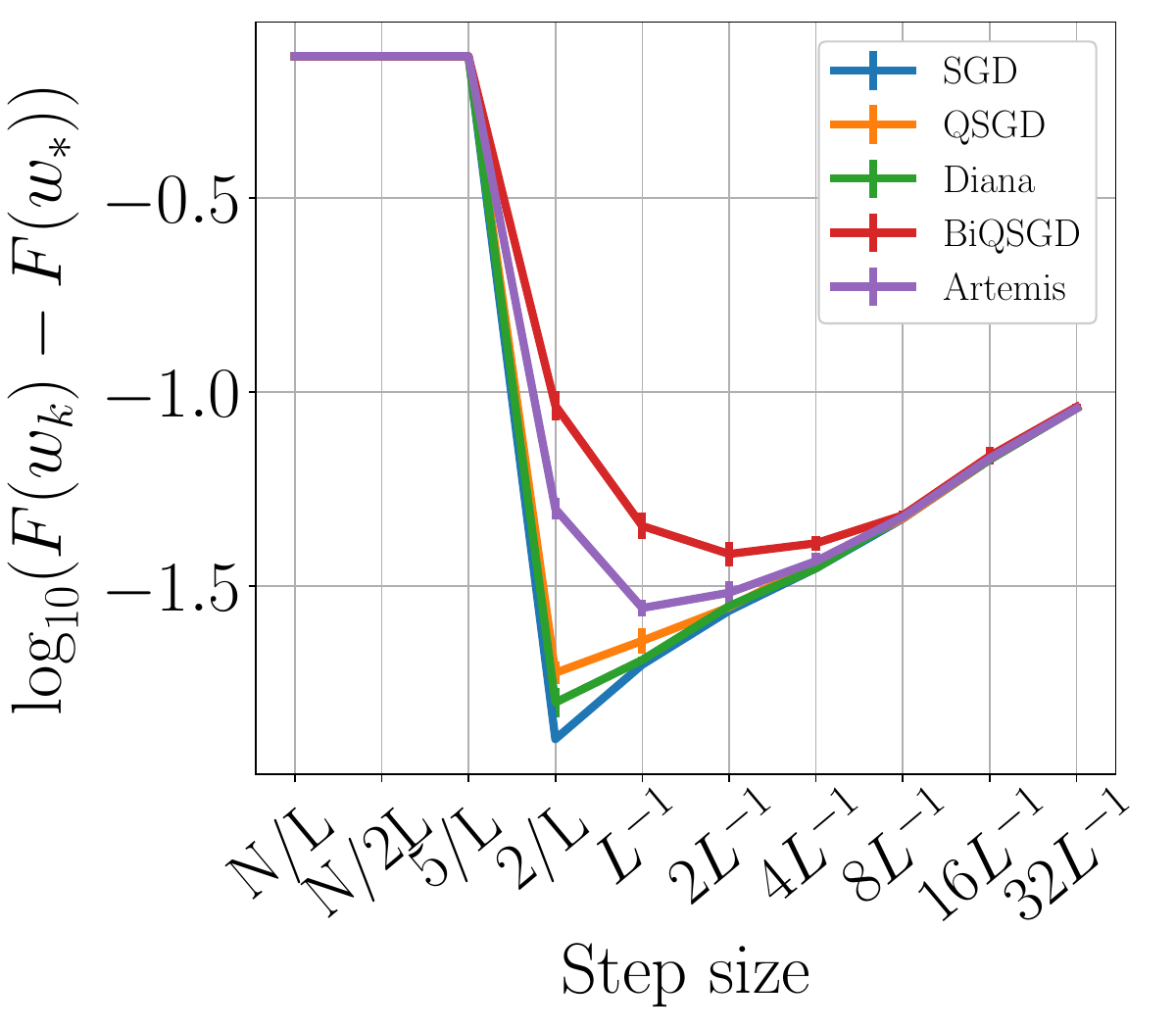} 
        \caption{Superconduct\gs}
    \end{subfigure}\hfill
    \caption[Figures of the main]{\textbf{Searching for the optimal step size $\gamma_{opt}$ for each algorithm.}  X-axis - value on step size, Y-axis - minimal loss after running $250$ iterations \gs\gs\gs}
    \label{fig:app:gamma_opt}
\end{figure}

\begin{figure}
    \centering
    \begin{subfigure}{\sizefig \textwidth}
        \centering
        \includegraphics[width=1\textwidth]{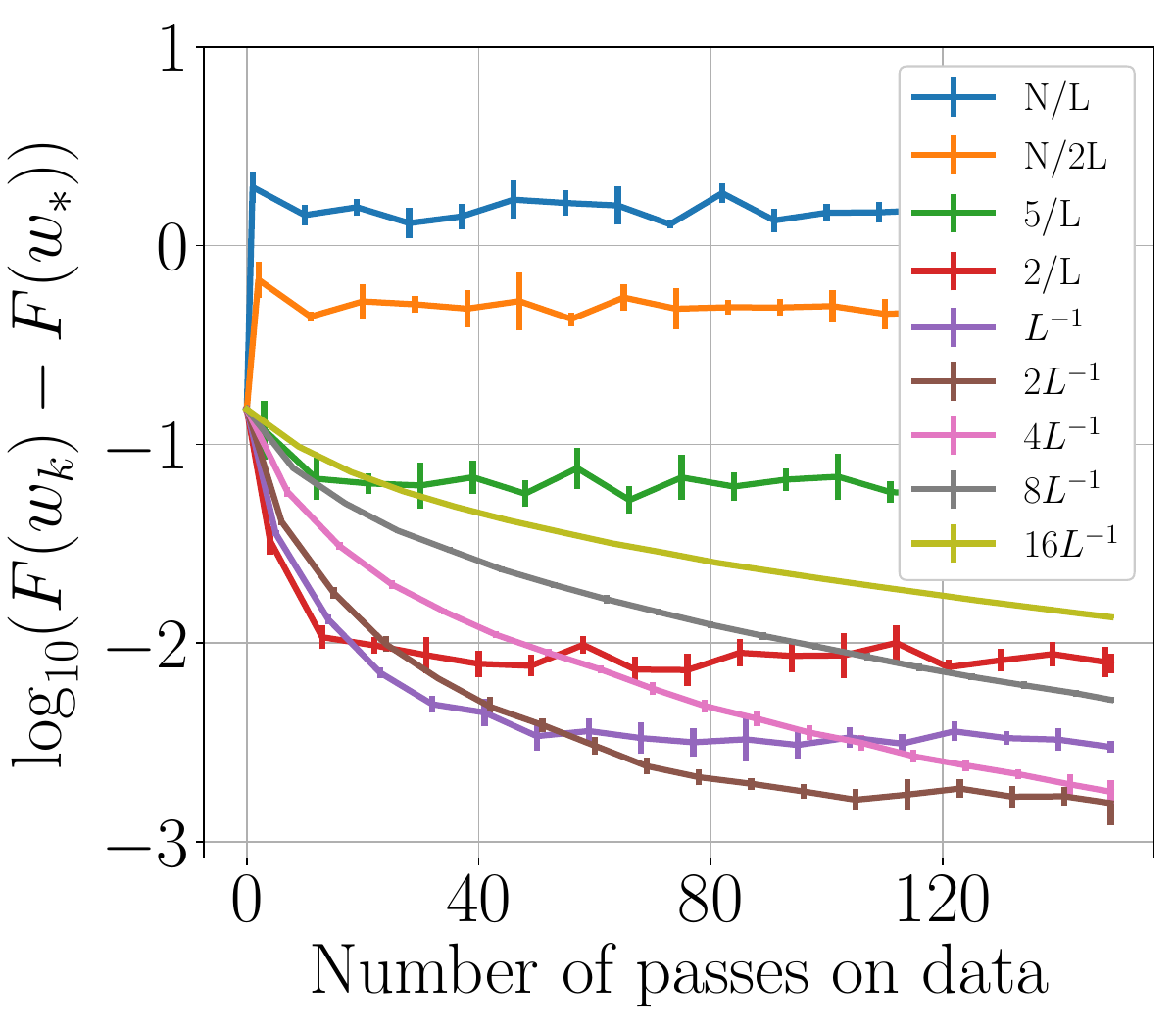}
        \caption{Quantum\gs}
    \end{subfigure}
    \begin{subfigure}{\sizefig \textwidth}
        \centering
        \includegraphics[width=1\textwidth]{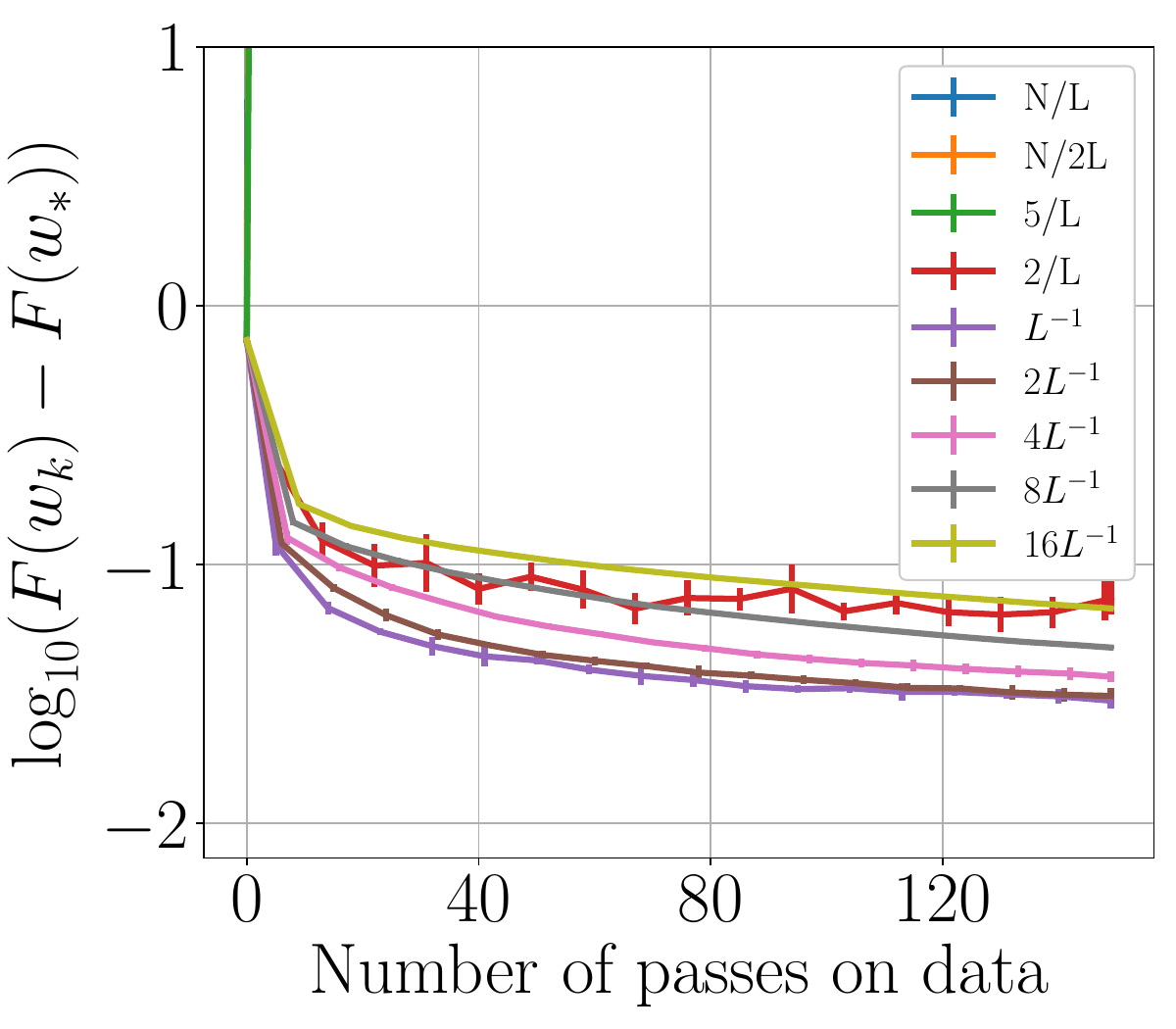}
        \caption{Superconduct\gs}
    \end{subfigure}\hfill
    \caption[Figures of the main]{Loss w.r.t.~step size $\gamma$. \gs\gs\gs}
    \label{fig:app:artemis_gamma}
\end{figure}

\begin{figure}
    \centering
    \begin{subfigure}{\sizefig \textwidth}
        \centering
        \includegraphics[width=1\textwidth]{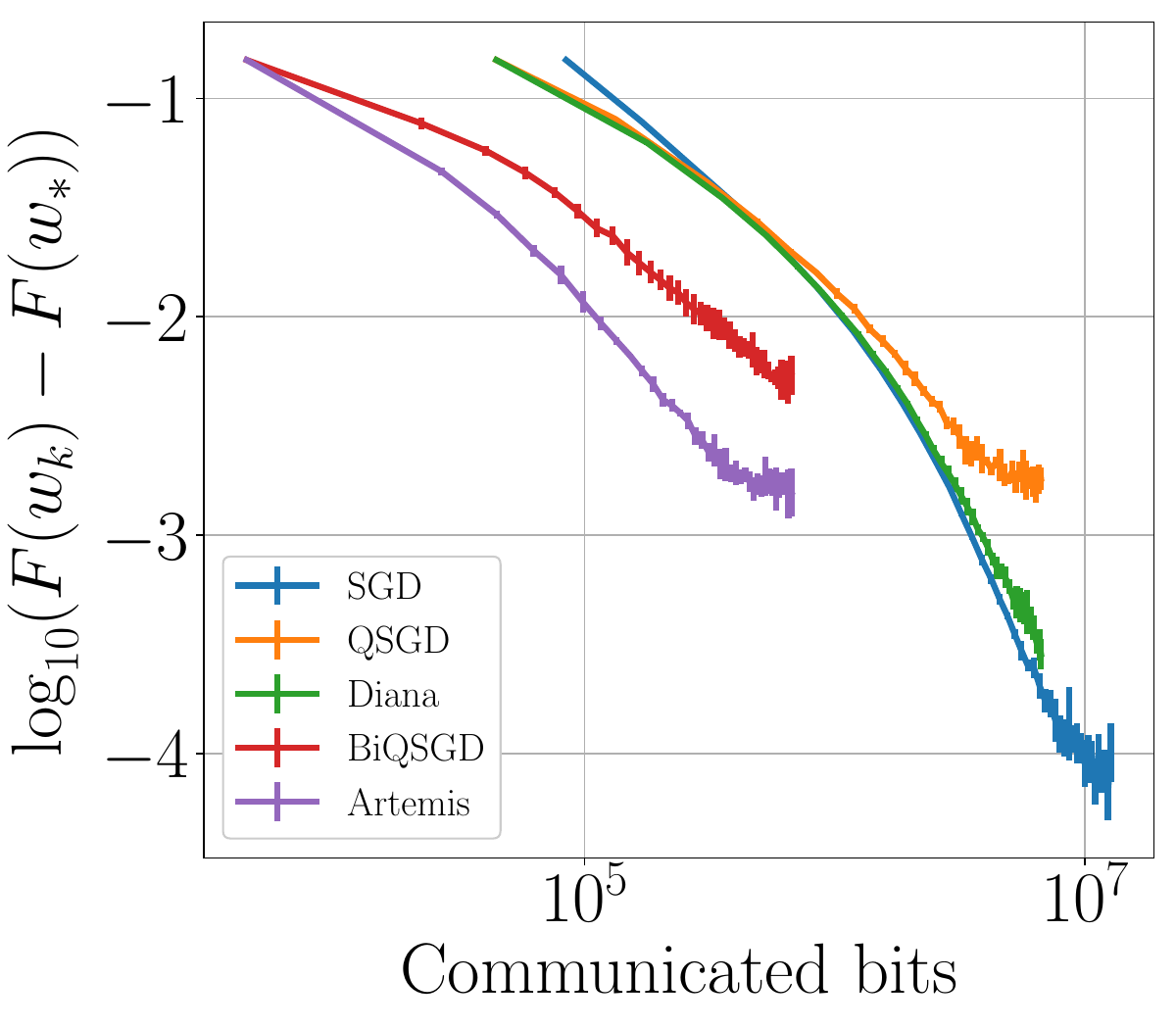}
        \caption{Quantum\gs}
    \end{subfigure}
    \begin{subfigure}{\sizefig \textwidth}
        \centering
        \includegraphics[width=1\textwidth]{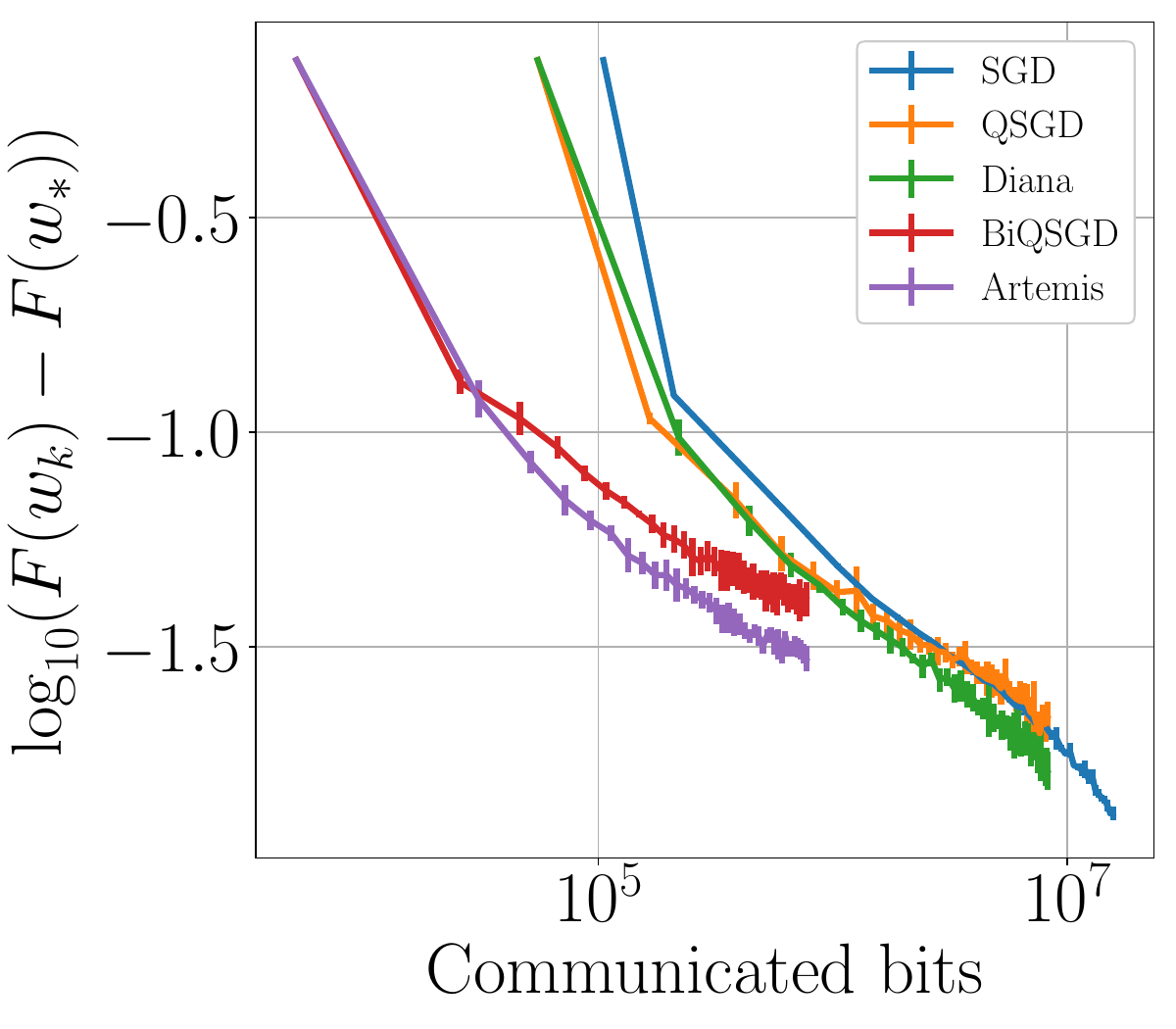} 
        \caption{Superconduct \gs\gs\gs}
    \end{subfigure}\hfill
    \caption[Figures of the main]{\textbf{Optimal step size for each of the algorithms.} X-axis in \# bits.}
    \label{fig:app:gamma_opt_for_each_algo}
\end{figure}

\subsubsection{Comparing \Artemis~with other existing algorithms}
\label{app:subsec:artemis_vs_existing}

On \Cref{fig:artemis_vs_existing} we compare \Artemis~with other existing algorithms: \texttt{FedSGD}, \texttt{FedPAQ}, \Diana, \Dore~and \texttt{Double-Squeeze}. We take $\gamma = 1/(2L)$ because otherwise \texttt{FedSGD} and \texttt{FedPAQ} diverge. These two algorithms present worse performance because they have not been designed for non-i.i.d.~datasets.

We can observe that \texttt{Double-Squeeze} (which only uses error-feedback) is outperformed by \Artemis. Besides, we observe that \texttt{Dore} (which combines this mechanism with memory) has identical rate of convergence than \Artemis. It underlines that for unbiased operators of compression, \textbf{the enhancement comes from the memory and not from the error-feedback}. 

\texttt{FedPAQ} (unidirectional compression) has a very fast convergence during first iterations, but then saturates at a level higher than for \Artemis-like algorithms. \texttt{FedSGD} (no compression) presents a convergence's rate worse that vanilla \texttt{SGD} because it does not correctly handle heterogeneous datasets.

\begin{figure}[H]
    \centering
    \begin{subfigure}{0.22 \textwidth}
        \centering
        \includegraphics[width=1\textwidth]{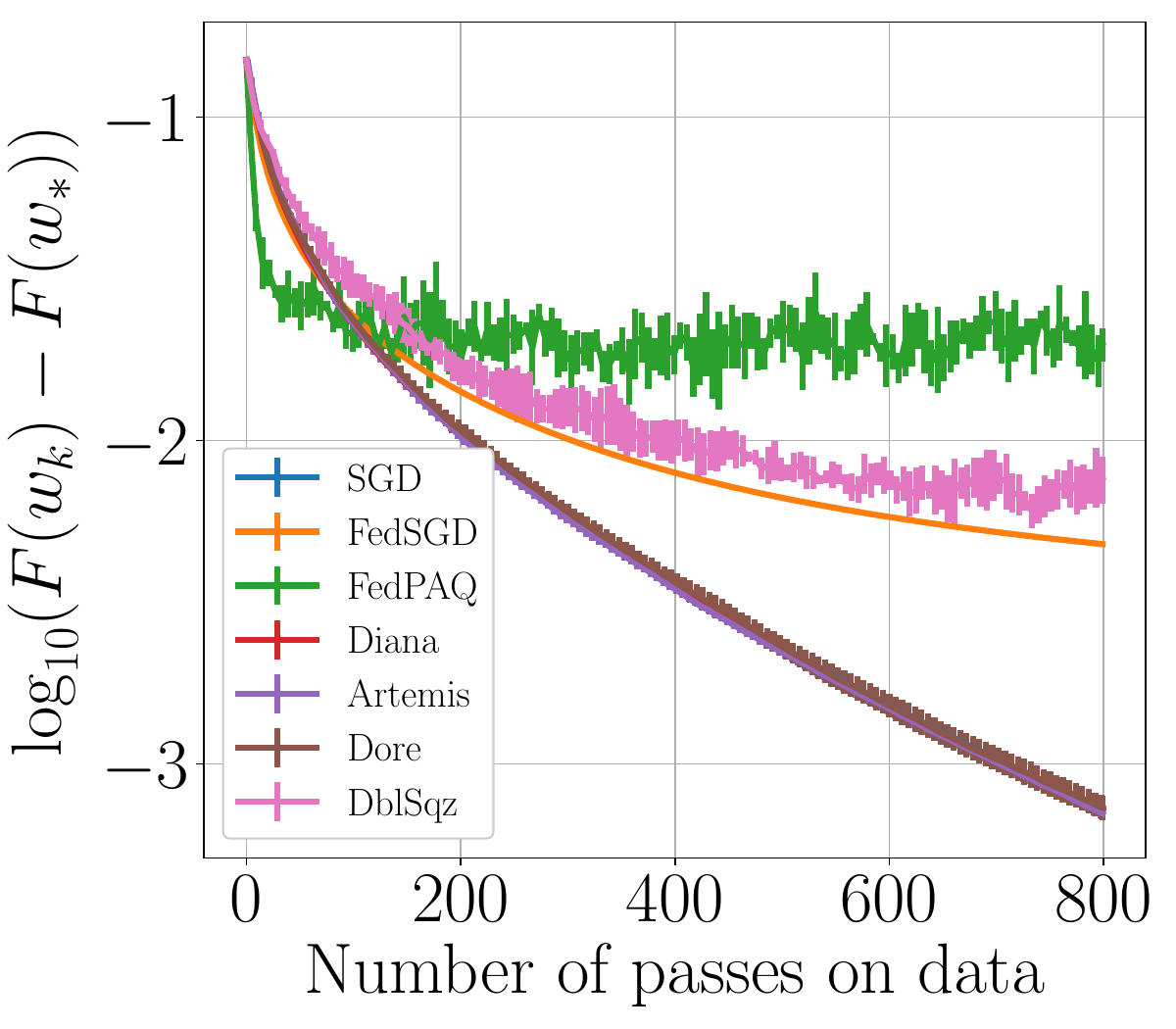}
        \caption{Superconduct\gs}
        \label{fig:artemis_vs_existing:quantum_it}
    \end{subfigure}
    \begin{subfigure}{0.22 \textwidth}
        \centering
        \includegraphics[width=1\textwidth]{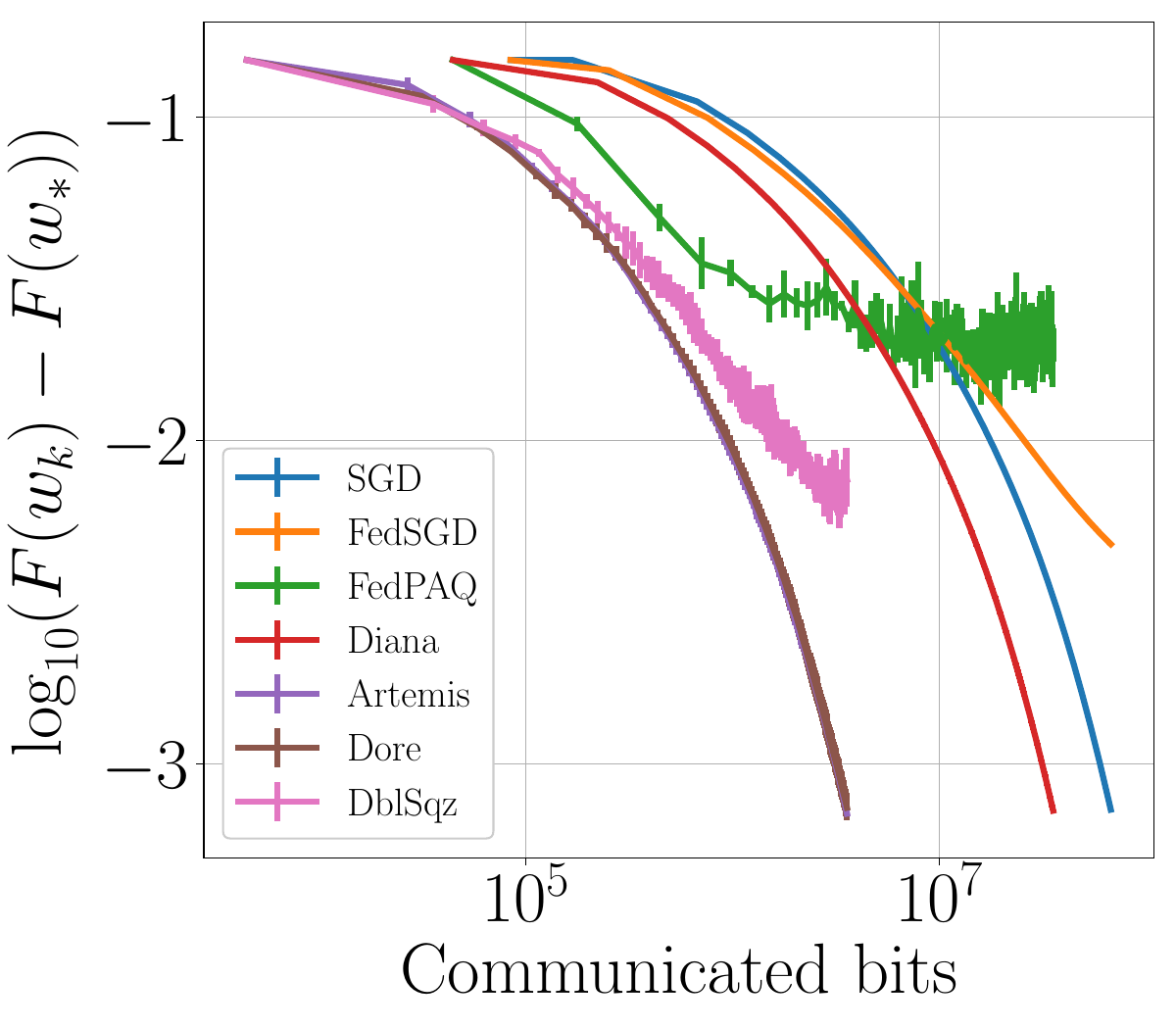}
        \caption{Superconduct\gs}
        \label{fig:artemis_vs_existing:quantum_bits}
    \end{subfigure}
    \begin{subfigure}{0.22 \textwidth}
        \centering
        \includegraphics[width=1\textwidth]{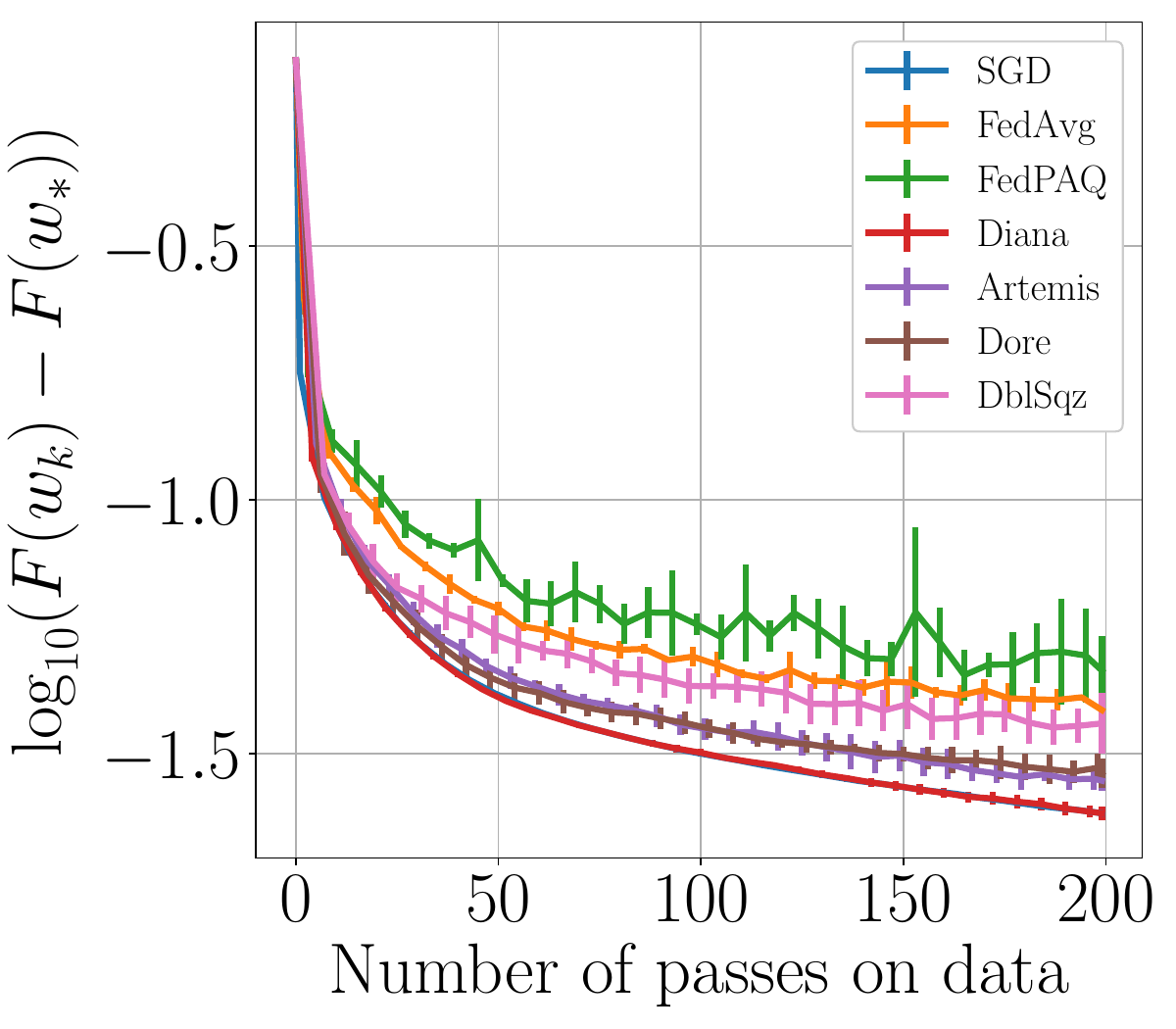}
        \caption{Quantum\gs}
        \label{fig:artemis_vs_existing:superconduct_it}
    \end{subfigure}
    \begin{subfigure}{0.22 \textwidth}
        \centering
        \includegraphics[width=1\textwidth]{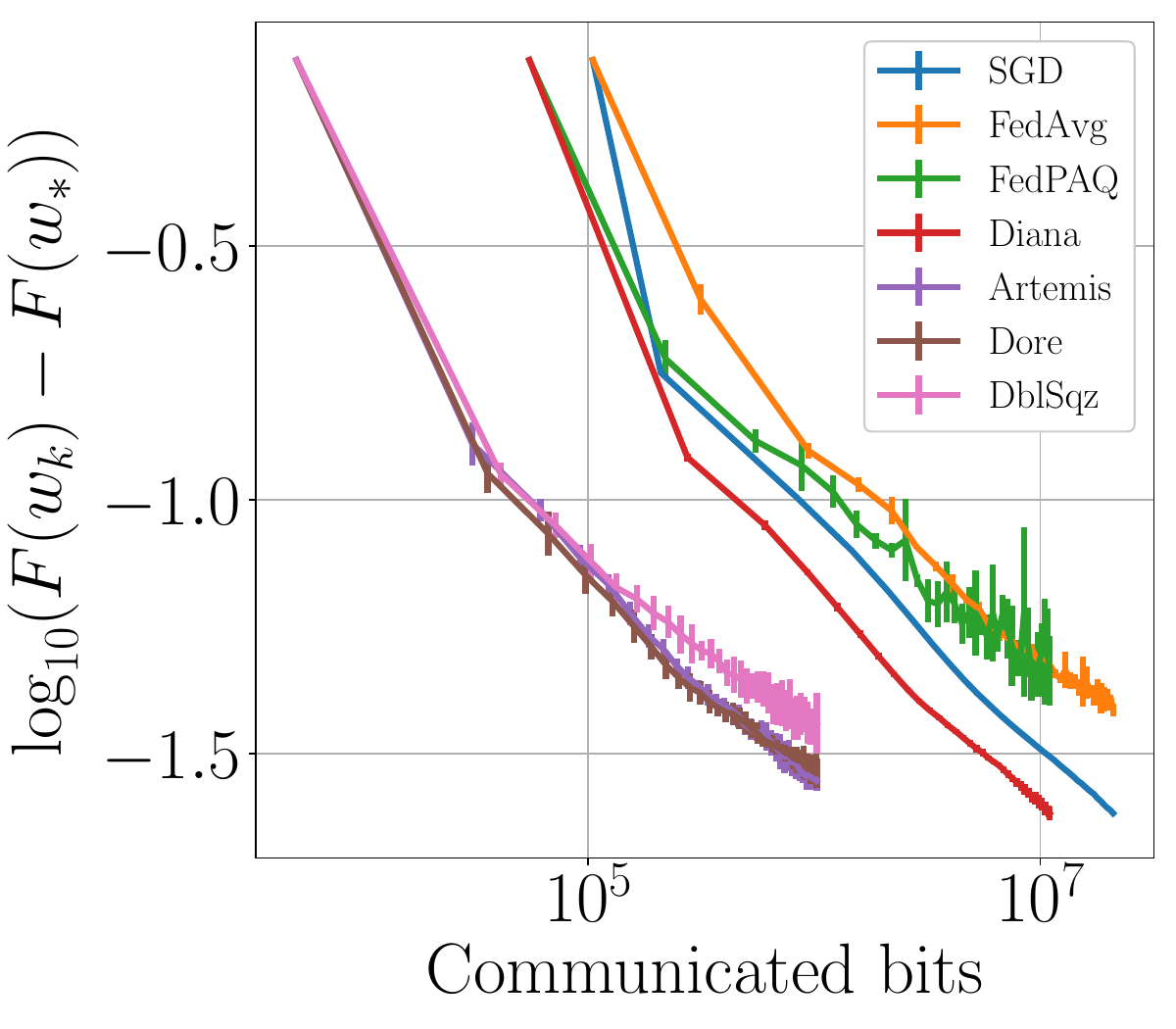}
        \caption{Quantum\gs}
        \label{fig:artemis_vs_existing:superconduct_bits}
    \end{subfigure}
    \caption[Real dataset]{\textbf{Artemis compared to other existing algorithms.} $\gamma = 1/(2L)$, X-axis in \# epoch or in \# bits. }
    \label{fig:artemis_vs_existing}
\end{figure}

\subsection{CPU usage and Carbon footprint}
\label{app:subsec:cpu_usage}

As part as a community effort to report the amount of experiments that were performed, we estimated that overall our experiments ran for 220 to 270 hours end to end. We used an Intel(R) Xeon(R) CPU E5-2667 processor with 16 cores. 

 The carbon emissions caused by this work were subsequently evaluated with \texttt{Green Algorithm} built by \citet{lannelongue_green_2020}. It estimates  our computations to generate $30$ to $35$ kg of CO2, requiring $100$ to $125$ kWh. To compare, it corresponds to about $160$ to $200$km by car. This is a relatively moderate impact, matching the goal to keep the experiments for an illustrative purpose.

\section{Technical Results}\label{app:technical}
In this section, we introduce a few technical lemmas that will be used in the proofs. In \Cref{sect:useful_identies}, we give four simple lemmas, while in \Cref{app:lemma_for_proof} we present a lemma which will be invoked in \Cref{app:all_proofs} to demonstrate \Cref{app:thm:without_mem,app:thm:with_mem,thm:doublecompress_avg}. 

\paragraph{Notation.} Let $k^* \in \N$ and $(a_{k+1}^i)_{i=0}^N \in (\WW)^N$ random variables independent of each other and $\Fartif_{k+1}$-measurable, for a $\sigma$-field $\Fartif_{k+1}$ s.t.~$(B_{k}^i)^N_{i=1}$ are independent of $\Fartif_{k+1}$.. Then in all the following demonstration, we note $a_{k+1} = \frac{1}{N} \sum_{i=1}^N a_{k+1}^i$ and $a_{k+1, S_k} = \frac{1}{pN} \sum_{i \in S_k} a_{k+1}^i$. 

The vector $a_{k+1}$ (resp. the $\sigma$-field $\Fartif_{k+1}$) may represent various objects, for instance : $\gwk$, $\gwkhat$, $G_{k+1}$ (resp. $\Fsto_{k+1}$, $\Fupcomp_{k+1}$, $\Fdwncomp_{k+1}$).

\begin{remark}We can add the following remarks on the assumptions.
\begin{itemize}
    \item \cref{asu:partial_participation} can be extended to probabilities depending on the worker $(p_i)_{i\in\llbracket1, N\rrbracket }$.
    \item \Cref{asu:expec_quantization_operator} requires in fact to access a \emph{sequence of i.i.d.~compression operators} $\C_{\up\slash\dwn, k}$ for $k\in \N$ -- but for simplicity, we generally omit the $k$ index. 
    \item \Cref{asu:bounded_noises_across_devices} in fact only requires that for any $i\in\llbracket1, N\rrbracket $,  $\Expec{\|\gwkstar^i - \nabla F_i(w_*)\|^2}{\Fdwncomp_{k}} \leq \frac{(\sigmstar)_i^2}{b}$, and the results then hold for $\sigmstar = \frac{1}{N} \sum_{i=1}^N (\sigmstar)_i^2$. In other words, the bounds does not need to be uniform over workers, only the average truly matters.
\end{itemize}
\end{remark}

\subsection{Useful identities and inequalities}
\label{sect:useful_identies}

\begin{lemma}
\label{lem:two_inequalities}
Let $N \in \N$ and $d \in \N$. For any sequence of vector $(a_i)_{i=1}^N \in \mathbb{R}^d$, we have the following inequalities:
\[
\lnrm \sum_{i=1}^N a_i \rnrm^2 \leq \left( \sum_{i=1}^N \lnrm a_i \rnrm \right)^2 \leq N \sum_{i=1}^N \lnrm a_i \rnrm^2 \,.
\]
\end{lemma}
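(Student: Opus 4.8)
The statement splits into two independent inequalities, each of which is a classical estimate, so the plan is to handle them in turn. For the left inequality, I would invoke the triangle inequality for the Euclidean norm: $\lnrm \sum_{i=1}^N a_i \rnrm \leq \sum_{i=1}^N \lnrm a_i \rnrm$. Since both sides are nonnegative, squaring preserves the inequality, which gives $\lnrm \sum_{i=1}^N a_i \rnrm^2 \leq \bigpar{\sum_{i=1}^N \lnrm a_i \rnrm}^2$.

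For the right inequality, the plan is to apply the Cauchy--Schwarz inequality in $\R^N$ to the vectors $(\lnrm a_1 \rnrm, \dots, \lnrm a_N \rnrm)$ and $(1, \dots, 1)$. This yields
\[
\bigpar{\sum_{i=1}^N \lnrm a_i \rnrm}^2 = \bigpar{\sum_{i=1}^N \lnrm a_i \rnrm \cdot 1}^2 \leq \bigpar{\sum_{i=1}^N \lnrm a_i \rnrm^2}\bigpar{\sum_{i=1}^N 1^2} = N \sum_{i=1}^N \lnrm a_i \rnrm^2 \,.
\]
Alternatively one could phrase this second step via convexity of $t \mapsto t^2$ (Jensen's inequality applied to the uniform average of the $\lnrm a_i \rnrm$), but Cauchy--Schwarz is the most direct route. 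Chaining the two displays establishes the claimed chain of inequalities.

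There is no real obstacle here: both steps are textbook facts, and the only thing to be careful about is that the triangle inequality is being used for a sum of $N$ terms (a trivial induction on the two-term case) and that squaring is monotone on the nonnegative reals. I would therefore keep the proof to essentially the two displayed lines above.
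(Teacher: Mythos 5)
Your proof is correct and follows exactly the route the paper indicates: the first inequality is the triangle inequality (squared, using monotonicity on nonnegative reals) and the second is Cauchy--Schwarz applied to $(\lnrm a_1\rnrm,\dots,\lnrm a_N\rnrm)$ and the all-ones vector. The paper gives only this one-line justification, so your slightly more detailed write-up matches its argument in substance.
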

The first part of the inequality corresponds to the triangular inequality, while the second part is Cauchy's inequality.

\begin{lemma}
\label{lem:inequalities_with_alpha}
Let $\alpha \in [0, 1]$ and $x, y \in (\R^d)^2$, then:
\[
\lnrm \alpha x + (1- \alpha) y \rnrm^2 = \alpha \lnrm x \rnrm^2 + (1- \alpha) \lnrm y \rnrm^2 - \alpha (1-\alpha) \lnrm x - y \rnrm^2 \,.
\]
\end{lemma}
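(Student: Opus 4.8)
The plan is to prove this identity by direct expansion of the Euclidean norm via bilinearity of the inner product; no structural argument is required, so I expect the ``main obstacle'' to be purely bookkeeping of the $\alpha$ and $(1-\alpha)$ coefficients. First I would expand the left-hand side as
\[
\SqrdNrm{\alpha x + (1-\alpha) y} = \alpha^2 \SqrdNrm{x} + 2\alpha(1-\alpha)\PdtScl{x}{y} + (1-\alpha)^2 \SqrdNrm{y}\,.
\]
Then I would expand the right-hand side, substituting $\SqrdNrm{x-y} = \SqrdNrm{x} - 2\PdtScl{x}{y} + \SqrdNrm{y}$, and collect terms: the coefficient of $\SqrdNrm{x}$ is $\alpha - \alpha(1-\alpha) = \alpha^2$, the coefficient of $\SqrdNrm{y}$ is $(1-\alpha) - \alpha(1-\alpha) = (1-\alpha)^2$, and the coefficient of $\PdtScl{x}{y}$ is $2\alpha(1-\alpha)$. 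These agree with the expansion of the left-hand side, which proves the claim.

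Alternatively, and perhaps cleaner to present, I could argue probabilistically: introduce an auxiliary $\WW$-valued random variable $Z$ equal to $x$ with probability $\alpha$ and to $y$ with probability $1-\alpha$. Then $\E[Z] = \alpha x + (1-\alpha) y$ and $\E[\SqrdNrm{Z}] = \alpha \SqrdNrm{x} + (1-\alpha)\SqrdNrm{y}$, so the bias--variance decomposition $\E[\SqrdNrm{Z}] = \SqrdNrm{\E[Z]} + \E[\SqrdNrm{Z-\E[Z]}]$ reduces the statement to computing $\E[\SqrdNrm{Z - \E[Z]}]$. Since $x - \E[Z] = (1-\alpha)(x-y)$ and $y - \E[Z] = -\alpha(x-y)$, this last quantity equals $\alpha(1-\alpha)^2\SqrdNrm{x-y} + (1-\alpha)\alpha^2\SqrdNrm{x-y} = \alpha(1-\alpha)\SqrdNrm{x-y}$, and rearranging yields the identity.

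I would go with the direct expansion for brevity. I would also note in passing that the hypothesis $\alpha \in [0,1]$ is not actually used --- the identity holds for every real $\alpha$ --- but keep the stated range since that is all the lemma is invoked for in the proofs of \Cref{app:thm:without_mem,app:thm:with_mem}.
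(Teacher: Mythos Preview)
Your proof is correct; the direct expansion via bilinearity of the inner product is the standard argument, and the probabilistic alternative is also valid. The paper itself does not give a proof of this lemma beyond the remark ``This is a norm's decomposition of a convex combination,'' so your expansion supplies the details the paper omits; your observation that the identity holds for all real $\alpha$ (not just $\alpha\in[0,1]$) is also accurate.
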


This is a norm's decomposition of a convex combination.

\begin{lemma}
\label{lem:expectation_decomposition}
Let $X$ be a random vector of $\R^d$, then for any vector $x \in \R^d$:
\[
\E \lnrm X - \E X \rnrm^2 = \E \lnrm X - x \rnrm^2 - \lnrm \E X - x \rnrm^2 \,.
\]
\end{lemma}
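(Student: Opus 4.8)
The plan is to obtain this as the standard bias--variance decomposition, proved by inserting $\E X$ and expanding the square. First I would fix the deterministic vector $x$ and write $X - x = (X - \E X) + (\E X - x)$, and then expand the Euclidean squared norm:
\[
\SqrdNrm{X - x} = \SqrdNrm{X - \E X} + 2\,\PdtScl{X - \E X}{\E X - x} + \SqrdNrm{\E X - x}\,.
\]

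Next I would take expectations of both sides and use linearity of $\E$. The term $\SqrdNrm{\E X - x}$ is deterministic, hence unchanged by $\E$. For the cross term, $\E X - x$ carries no randomness and therefore factors out of the expectation, leaving $2\,\PdtScl{\E[X - \E X]}{\E X - x} = 2\,\PdtScl{0}{\E X - x} = 0$. What remains is
\[
\E\,\SqrdNrm{X - x} = \E\,\SqrdNrm{X - \E X} + \SqrdNrm{\E X - x}\,,
\]
and moving $\SqrdNrm{\E X - x}$ to the other side gives exactly the claimed identity.

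There is no genuine obstacle here: the argument is a two-line computation. The only point deserving a word of care is that all expectations appearing must be finite and that linearity may be applied to the (finitely many) terms of the expansion; this holds as soon as $X \in L^2$, i.e.\ $\E\,\SqrdNrm{X} < \infty$, which is implicit in the statement (and is the setting in which the lemma is later used, typically with $X$ a stochastic gradient and $x$ a fixed point such as $\nabla F(w)$ or $0$). The identical computation goes through verbatim for conditional expectations, which is the form actually invoked in the convergence proofs; one could alternatively package the expansion step through the polarization identity already implicit in \Cref{lem:inequalities_with_alpha}, but the direct expansion above is the cleanest route.
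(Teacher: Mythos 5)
Your proof is correct and is exactly the standard argument: the paper itself offers no written proof of this lemma, merely remarking that it generalizes the usual variance decomposition (the case $x=0$), and your expansion-plus-vanishing-cross-term computation is precisely that standard derivation, with the $L^2$ integrability caveat appropriately noted.
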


This equality is a generalization of the well know decomposition of the variance (with $x = 0$).

\begin{lemma}
\label{lem:strongly-cvxe}
If $F: \mathcal{X} \subset  \mathbb{R}^d \rightarrow \mathbb{R}$ is strongly convex, then the following inequality holds:
\[ \forall (x,y) \in \WW, \PdtScl{\nabla F(x) - \nabla F(y)}{x - y} \geq \mu \lnrm x - y \rnrm^2 \,. \]
\end{lemma}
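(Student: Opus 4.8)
The plan is to apply the definition of $\mu$-strong convexity from \Cref{asu:strongcvx} at the pair $(x,y)$ and again at the pair $(y,x)$, and then add the two resulting first-order inequalities. First I would instantiate \Cref{asu:strongcvx} with $v = y$, $w = x$, which gives
\[
F(y) \geq F(x) + (y-x)^T \nabla F(x) + \frac{\mu}{2}\SqrdNrm{y-x}\,,
\]
and then with $v = x$, $w = y$, which gives
\[
F(x) \geq F(y) + (x-y)^T \nabla F(y) + \frac{\mu}{2}\SqrdNrm{x-y}\,.
\]

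Summing these two inequalities, the terms $F(x) + F(y)$ appear on both sides and cancel, leaving
\[
0 \geq (y-x)^T \nabla F(x) + (x-y)^T \nabla F(y) + \mu \SqrdNrm{x-y}\,.
\]
Using $(y-x)^T \nabla F(x) + (x-y)^T \nabla F(y) = -(x-y)^T\bigl(\nabla F(x) - \nabla F(y)\bigr) = -\PdtScl{\nabla F(x) - \nabla F(y)}{x-y}$, this rearranges exactly to $\PdtScl{\nabla F(x) - \nabla F(y)}{x-y} \geq \mu \SqrdNrm{x-y}$, which is the claim. Here we invoke \Cref{asu:strongcvx} as stated for $F\colon \R^d \to \R$, so the inequality holds for all $x,y$ in $\WW = \R^d$.

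There is essentially no obstacle here: this is the classical fact that the gradient of a $\mu$-strongly convex function is $\mu$-strongly monotone, and the only mild care needed is the sign bookkeeping when adding the two inequalities. An alternative route would go through a cocoercivity-type argument, but the two-line symmetrization above is the most direct and is all that is required for the uses of this lemma downstream (e.g.~in the proofs of \Cref{thm:cvgce_artemis,thm:main_PRave}).
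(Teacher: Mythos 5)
Your proof is correct: the two applications of \Cref{asu:strongcvx} with the roles of $x$ and $y$ exchanged, followed by summation and the sign rearrangement, is exactly the standard argument, and the paper itself gives no proof beyond citing \cite[equation 2.1.22]{nesterov_introductory_2004}, which rests on this same symmetrization. Nothing further is needed.
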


This inequality is a consequence of strong convexity and can be found in \cite[][equation $2.1.22$]{nesterov_introductory_2004}.

\subsection{Lemmas for proof of convergence}
\label{app:lemma_for_proof}

Below are presented technical lemmas needed to prove the contraction of the Lyapunov function for \Cref{app:thm:without_mem,app:thm:with_mem}. In this section we assume that \Cref{asu:partial_participation,asu:expec_quantization_operator,asu:strongcvx,asu:cocoercivity,asu:noise_sto_grad,asu:bounded_noises_across_devices} are verified.
In \Cref{app:subsubsec:lem_with_mem,app:subsubsec:lem_without_mem} we separate lemmas that required only for the case with memory or without.

The first lemma is very simple and straightforward from the definition of $\Delta_k^i$. We remind that $\Delta_k^i$ is the difference between the computed gradient and the memory hold on device $i$. It corresponds to the information which will be compressed and sent from device $i$ to the central server. 

\begin{lemma}[Bounding the compressed term]
\label{lem:bounding_compressed_term}
The squared norm of the compressed term sent by each node to the central server can be bounded as following:
\begin{align*}
    \forall k \in \N\,, \,\forall i \in \llbracket 1 , N \rrbracket\,, \quad \SqrdNrm{ \Delta_k^i } \leq 2 \left( \SqrdNrm{ \gwk^i - h_*^i } + \SqrdNrm{ h_k^i - h_*^i } \right)\,. \\
\end{align*}
\end{lemma}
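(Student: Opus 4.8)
The plan is to unfold the definition $\Delta_k^i = \gwk^i - h_k^i$ and introduce the optimal-point offset $h_*^i = \nabla F_i(w_*)$ as a pivot. First I would write
\[
\Delta_k^i = \gwk^i - h_k^i = \bigl(\gwk^i - h_*^i\bigr) - \bigl(h_k^i - h_*^i\bigr),
\]
which is just adding and subtracting $h_*^i$. This rewrites the quantity we want to bound as the squared norm of a difference of two vectors.

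Next I would apply the elementary inequality $\SqrdNrm{a - b} \le 2\SqrdNrm{a} + 2\SqrdNrm{b}$, valid for all $a, b \in \WW$ (this is the special case $N=2$ of \Cref{lem:two_inequalities}, or equivalently the parallelogram-type bound $\SqrdNrm{a-b}\le 2\SqrdNrm a+2\SqrdNrm b$). Taking $a = \gwk^i - h_*^i$ and $b = h_k^i - h_*^i$ yields
\[
\SqrdNrm{\Delta_k^i} = \SqrdNrm{(\gwk^i - h_*^i) - (h_k^i - h_*^i)} \le 2\SqrdNrm{\gwk^i - h_*^i} + 2\SqrdNrm{h_k^i - h_*^i},
\]
which is exactly the claimed bound. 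Since this holds pointwise (deterministically, for every realization) it holds for all $k\in\N$ and all $i\in\llbracket 1,N\rrbracket$ without any need to invoke the assumptions on noise, compression, or convexity.

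There is essentially no obstacle here: the lemma is a one-line consequence of the triangle/Cauchy inequality once the right pivot point $h_*^i$ is inserted. The only thing to be a little careful about is that the decomposition is chosen so that later lemmas can control $\SqrdNrm{\gwk^i - h_*^i}$ via cocoercivity (\Cref{asu:cocoercivity}) and the noise bound at the optimum (\Cref{asu:noise_sto_grad}), and $\SqrdNrm{h_k^i - h_*^i}$ via a Lyapunov term tracking the memory sequence — but that relevance is not part of proving this particular statement. So the proof is simply: add and subtract $h_*^i$, then apply $\SqrdNrm{a-b}\le 2\SqrdNrm a + 2\SqrdNrm b$.
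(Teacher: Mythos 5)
Your proof is correct and matches the paper's argument: the paper likewise inserts $h_*^i$ as a pivot, writing $\SqrdNrm{\Delta_k^i} = \SqrdNrm{(\gwk^i - h_*^i) + (h_*^i - h_k^i)}$, and then applies \Cref{lem:two_inequalities} (your inequality $\SqrdNrm{a-b}\le 2\SqrdNrm a + 2\SqrdNrm b$ is exactly its $N=2$ case). No gaps; nothing further needed.
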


\begin{proof}

 Let $k \in \N$ and  $i \in \llbracket1, N\rrbracket$, we have by definition:
\begin{align*}
    &\SqrdNrm{ \Delta_k^i } = \SqrdNrm{ \gwk^i - h_k^i } = \SqrdNrm{ (\gwk^i - h_*^i) + (h_*^i - h_k^i) } \,.\\
\end{align*}
Applying \Cref{lem:two_inequalities} gives the expected result.
\end{proof}

Below, we show up a recursion over the memory term $h_k^i$ involving the stochastic gradients. This recursion will be used in \Cref{lem:recursive_inequalities_over_memory}. The existence of recursion has been first shed into light by \citet{mishchenko_distributed_2019}.

\begin{lemma}[Expectation of memory term]
\label{lem:expectation_h_l}
The memory term $h_{k+1}^i$ can be expressed using a recursion involving the stochastic gradient $\gwk^i$:

\begin{align*}
    \forall k \in \N\,,\, \forall i \in \llbracket 1 , N \rrbracket\,, \quad\Expec{h_{k+1}^i}{\Fsto_{k+1}} = (1-\alpha) h_k^i + \alpha \gwk^i  \,.
\end{align*}
\end{lemma}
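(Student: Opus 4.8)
The plan is to unfold the definition of the memory update from \Cref{algo} and \Cref{eq:update_schema} and then invoke the unbiasedness of the uplink compression operator. Recall that the memory on device $i$ evolves as $h_{k+1}^i = h_k^i + \alpha \widehat{\Delta}_k^i$, where $\widehat{\Delta}_k^i = \mathcal{C}_{\up}(\Delta_k^i)$ and $\Delta_k^i = \gwk^i - h_k^i$. So the only thing to do is to take the conditional expectation $\Expec{\cdot}{\Fsto_{k+1}}$ of this identity and simplify.

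First I would record the relevant measurability facts from \Cref{sect:filration}: the term $h_k^i$ is built only from quantities indexed by times $\le k$, hence is $\Flast_k$-measurable and a fortiori $\Fsto_{k+1}$-measurable; and $\gwk^i = \g_{k+1}^i(w_k)$ is $\Fsto_{k+1}$-measurable (cf.\ \Cref{prop:stochastic_expectation}). Consequently $\Delta_k^i = \gwk^i - h_k^i$ is $\Fsto_{k+1}$-measurable, while the uplink compression noise $\epsilon_{k+1}^i$ acting on $\Delta_k^i$ is ``fresh'' with respect to $\Fsto_{k+1}$ (it is only $\Fupcomp_{k+1}$-measurable). Hence \Cref{prop:uplink_expectation} — i.e.\ the unbiasedness part of \Cref{asu:expec_quantization_operator} applied to $\mathcal{C}_{\up}$ — gives $\Expec{\widehat{\Delta}_k^i}{\Fsto_{k+1}} = \Delta_k^i$.

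Putting these together, and pulling the $\Fsto_{k+1}$-measurable term $h_k^i$ out of the conditional expectation by linearity,
\begin{align*}
    \Expec{h_{k+1}^i}{\Fsto_{k+1}} = h_k^i + \alpha \Expec{\widehat{\Delta}_k^i}{\Fsto_{k+1}} = h_k^i + \alpha \bigpar{\gwk^i - h_k^i} = (1-\alpha) h_k^i + \alpha \gwk^i \,,
\end{align*}
which is the claimed recursion. I do not anticipate any real obstacle here: the statement is essentially a one-line consequence of the update rule and unbiased compression; the only point requiring a little care is the bookkeeping of the filtrations, namely checking that at step $k$ the memory $h_k^i$ and the stochastic gradient $\gwk^i$ are already resolved under $\Fsto_{k+1}$ whereas the compression randomness is not — which is exactly how the nested $\sigma$-algebras $(\Fsto_k, \Fupcomp_k, \Fdwncomp_k, \Flast_k)$ are constructed in \Cref{sect:filration}.
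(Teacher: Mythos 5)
Your proof is correct and follows essentially the same route as the paper: unfold the update $h_{k+1}^i = h_k^i + \alpha \widehat{\Delta}_k^i$, condition on $\Fsto_{k+1}$, and use the unbiasedness of $\mathcal{C}_{\up}$ (via \Cref{prop:uplink_expectation}, which is equivalent to your statement $\Expec{\widehat{\Delta}_k^i}{\Fsto_{k+1}} = \Delta_k^i$ since $h_k^i$ is $\Fsto_{k+1}$-measurable). Your extra bookkeeping of the filtrations is a harmless elaboration of what the paper leaves implicit.
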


\begin{proof}
 Let $k \in \N$ and $ i \in \llbracket1, N\rrbracket$. We just need to decompose $h^i_k$ using its definition:

\[
    h_{k+1}^i = h_k^i + \alpha \widehat{\Delta}_k^i
    = h_k^i + \alpha (\gwkhat^i - h_k^i) 
    = (1 - \alpha) h_k^i + \alpha \gwkhat^i \,,
\]

and considering that $\Expec{\gwkhat^i}{\Fsto_{k+1}} = \gwk^i$ (\Cref{prop:uplink_expectation}), the proof is completed.

\end{proof}

In \Cref{lem:before_using_coco}, we rewrite $\SqrdNrm{\gwk}$ and $\SqrdNrm{\gwk - h_*^i}$ to make appears:

\begin{enumerate}
    \item the noise over stochasticity
    \item $\SqrdNrm{\gwk - \gwkstar}$ which is the term on which will later be applied cocoercivity (see \Cref{asu:cocoercivity}).
\end{enumerate}

\Cref{lem:before_using_coco} is required to correctly apply cocoercivity in \Cref{lem:applying_coco}.
\begin{lemma}[Before using co-coercivity]
\label{lem:before_using_coco}
Let $k \in \llbracket 0, K \rrbracket$ and $i \in \llbracket 1, N \rrbracket$. The noise in the stochastic gradients as defined in \Cref{asu:noise_sto_grad,asu:bounded_noises_across_devices} can be controlled as following:
\begin{align}
    &\frac{1}{N} \sum_{i=1}^N\Expec{\SqrdNrm{\gwk^i}}{\Fdwncomp_{k}} \leq \frac{2}{N} \sum_{i=1}^N \left( \Expec{\SqrdNrm{ \gwk^i - \gwkstar^i}}{\Fdwncomp_{k}} + (\ffrac{\sigmstar^2}{b} +B^2) \right) \label{eq:g_i_k} \,,\\
    &\frac{1}{N} \sum_{i=1}^N \Expec{\SqrdNrm{\gwk^i - h_*^i}}{ \Fdwncomp_{k}} \leq \frac{2}{N} \sum_{i=1}^N \left( \Expec{\SqrdNrm{\gwk^i - \gwkstar^i}}{\Fdwncomp_{k}} + \ffrac{\sigmstar^2}{b} \right) \,. \label{eq:g_i_k_h_i_k}
\end{align}

\end{lemma}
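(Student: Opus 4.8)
The plan is to derive both inequalities from the single elementary bound $\SqrdNrm{a+b}\le 2\SqrdNrm{a}+2\SqrdNrm{b}$ — the case $N=2$ of \Cref{lem:two_inequalities} — by splitting each quantity of interest around the stochastic gradient evaluated at the optimum, $\gwkstar^i$, and then absorbing the residual terms with \Cref{asu:noise_sto_grad,asu:bounded_noises_across_devices}.

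For \eqref{eq:g_i_k}, I would write $\gwk^i = (\gwk^i - \gwkstar^i) + \gwkstar^i$, so that $\SqrdNrm{\gwk^i}\le 2\SqrdNrm{\gwk^i-\gwkstar^i}+2\SqrdNrm{\gwkstar^i}$; taking $\Expec{\cdot}{\Fdwncomp_k}$ and averaging over $i$, it remains to control $\frac1N\sum_{i=1}^N\Expec{\SqrdNrm{\gwkstar^i}}{\Fdwncomp_k}$. Since $w_*$ and hence $\nabla F_i(w_*)$ are deterministic and $\Expec{\gwkstar^i}{\Fdwncomp_k}=\nabla F_i(w_*)$ by \Cref{prop:stochastic_expectation}, the bias–variance identity of \Cref{lem:expectation_decomposition} (with $x=0$) gives $\Expec{\SqrdNrm{\gwkstar^i}}{\Fdwncomp_k}=\Expec{\SqrdNrm{\gwkstar^i-\nabla F_i(w_*)}}{\Fdwncomp_k}+\SqrdNrm{\nabla F_i(w_*)}$. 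Bounding the first summand by $\ffrac{\sigmstar^2}{b}$ via \Cref{asu:noise_sto_grad} and averaging the second over $i$ via \Cref{asu:bounded_noises_across_devices} yields $\frac1N\sum_i\Expec{\SqrdNrm{\gwkstar^i}}{\Fdwncomp_k}\le \ffrac{\sigmstar^2}{b}+B^2$, which after substitution is exactly the claimed bound.

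For \eqref{eq:g_i_k_h_i_k}, recalling $h_*^i=\nabla F_i(w_*)$, I would decompose $\gwk^i-h_*^i=(\gwk^i-\gwkstar^i)+(\gwkstar^i-\nabla F_i(w_*))$, apply the same $\SqrdNrm{a+b}\le 2\SqrdNrm{a}+2\SqrdNrm{b}$ inequality, take $\Expec{\cdot}{\Fdwncomp_k}$, and bound $\Expec{\SqrdNrm{\gwkstar^i-\nabla F_i(w_*)}}{\Fdwncomp_k}\le \ffrac{\sigmstar^2}{b}$ by \Cref{asu:noise_sto_grad} before summing over $i$; no use of \Cref{asu:bounded_noises_across_devices} is needed here since the $h_*^i$ term is exactly cancelled by the recentering.

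There is no deep obstacle; the one subtlety to respect is that $\gwk^i-\gwkstar^i$ is \emph{not} $\Fdwncomp_k$-measurable — the stochastic gradient function $\g^i_{k+1}$ carries its own randomness, shared between its evaluations at $w_k$ and at $w_*$ — so one cannot split $\SqrdNrm{\gwk^i}$ or $\SqrdNrm{\gwk^i-h_*^i}$ by an exact conditional variance decomposition around $\gwkstar^i$. Using the lossy factor-$2$ inequality instead is precisely what keeps the argument valid, and it is what produces the constant $2$ on the right-hand sides.
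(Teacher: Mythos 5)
Your proposal is correct and follows essentially the same route as the paper: split around $\gwkstar^i$, apply the factor-$2$ inequality (\Cref{lem:two_inequalities}), take conditional expectation, and invoke \Cref{asu:noise_sto_grad} (plus \Cref{asu:bounded_noises_across_devices} for the first bound, via the averaged bound $\frac1N\sum_i\Expec{\SqrdNrm{\gwkstar^i}}{\Fdwncomp_k}\le \sigmstar^2/b+B^2$, which the paper states after the assumptions and which you re-derive by the bias–variance identity). Your measurability remark and the observation that the second inequality needs no \Cref{asu:bounded_noises_across_devices} are both accurate and consistent with the paper's argument.
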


\begin{proof}
 Let $k \in \N$.
For \cref{eq:g_i_k}:
\begin{align*}
\SqrdNrm{ \gwk^i } &= \SqrdNrm{ \gwk^i -  \gwkstar^i +  \gwkstar^i} \\
&\quad\leq 2 \left( \SqrdNrm{ \gwk^i -  \gwkstar^i } + \SqrdNrm{ \gwkstar^i} \right) \text{using inequality of \Cref{lem:two_inequalities}.}\\
\end{align*}
Taking expectation with regards to filtration $\Fdwncomp_k$ and using \Cref{asu:noise_sto_grad,asu:bounded_noises_across_devices} gives the first result.

For \cref{eq:g_i_k_h_i_k}, we use \Cref{lem:two_inequalities} and we write:
\begin{align*}
\SqrdNrm{ \gwk^i - h_*^i } &= \SqrdNrm{ \left(\gwk^i - \gwkstar^i \right) + \left(\gwkstar^i - \nabla F_i(w_*) \right) } \\
&\leq  2(\SqrdNrm{ \gwk^i - \gwkstar^i } + \SqrdNrm{ \gwkstar^i - \nabla F_i(w_*) }) \,.\\
\end{align*}

Taking expectation, we have:
\begin{align*}
    \Expec{\SqrdNrm{ \gwk^i - h_*^i }}{\Fdwncomp_{k}} &\leq 2 \left(\Expec{\SqrdNrm{ \gwk^i - \gwkstar^i }}{\Fdwncomp_{k}} + \Expec{\SqrdNrm{ \gwkstar^i - \nabla F_i(w_*) }}{\Fdwncomp_{k}} \right)\\
    &\leq 2\left(\Expec{\SqrdNrm{ \gwk^i - \gwkstar^i}}{\Fdwncomp_{k}} + \ffrac{\sigmstar^2}{b}\right) \text{\quad using \Cref{asu:noise_sto_grad}.}
\end{align*}
\end{proof}

Demonstrating that the Lyapunov function is a contraction requires to bound $\SqrdNrm{ \gwks }$ which needs to control each term $(\SqrdNrm{ \gwks^i })_{i=1}^N$ of the sum. This leads to invoke smoothness of $F$ (consequence of \Cref{asu:cocoercivity}).
\begin{lemma}
\label{lem:bounding_g_k+1}
Regardless if we use memory, we have the following bound on the squared norm of the gradient, for all $k$ in $ \N$:
\begin{align*}
     \quad \Expec{ \SqrdNrm{ \gwks } }{\Ftotal}&\leq \ffrac{1}{pN^2}\sum_{i=1}^N\Expec{\SqrdNrm{\gwk^i -h_*^i}}{\Fdwncomp_{k}} + L \PdtScl{\nabla F(w_k)}{w_k - w_*}\,.
 \end{align*}
 \end{lemma}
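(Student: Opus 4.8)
The plan is to average out, one layer at a time, the two sources of randomness that $\gwks=\tfrac1{pN}\sum_{i\in S_k}\gwk^i$ still carries once we condition on $\Ftotal$: the device‑sampling variables $(B_k^i)_{i=1}^N$, which are mutually independent and independent of everything revealed up to the end of round $k-1$, and the fresh stochastic‑gradient noise $(\xi_{k+1}^i)_{i=1}^N$, which becomes measurable in $\Fsto_{k+1}\supset\Ftotal$. Since $\Ftotal\subset\Fsto_{k+1}$, I would start from $\Expec{\SqrdNrm{\gwks}}{\Ftotal}=\E\big[\Expec{\SqrdNrm{\gwks}}{\Fsto_{k+1}}\,\big|\,\Ftotal\big]$ and handle the inner, then the outer, conditional expectation.

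Conditionally on $\Fsto_{k+1}$ the only remaining randomness is the sampling, so the bias--variance identity (\Cref{lem:expectation_decomposition}) combined with \Cref{prop:exp_samp} (which gives $\Expec{\gwks}{\Fsto_{k+1}}=\gwk$, where $\gwk=\tfrac1N\sum_{i=1}^N\gwk^i$) and \Cref{prop:variance_samp} (which gives $\Var{\gwks}{\Fsto_{k+1}}=\tfrac{1-p}{pN^2}\sum_{i=1}^N\SqrdNrm{\gwk^i}$) yields $\Expec{\SqrdNrm{\gwks}}{\Fsto_{k+1}}=\SqrdNrm{\gwk}+\tfrac{1-p}{pN^2}\sum_{i=1}^N\SqrdNrm{\gwk^i}$. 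Equivalently, expanding the square directly as a double sum over workers, each diagonal term carries the factor $\E[B_k^i]/(pN)^2=1/(pN^2)$ and each off‑diagonal term (by pairwise independence of the Bernoullis) the factor $\E[B_k^iB_k^j]/(pN)^2=1/N^2$, and the off‑diagonal part reassembles via $\sum_{i,j}\PdtScl{\gwk^i}{\gwk^j}=N^2\SqrdNrm{\gwk}$; this route makes the eventual $1/(pN^2)$ coefficient transparent and shows the extra $1/p$ is exactly the price of device sampling.

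It then remains to take the outer expectation, i.e.\ to average over the gradient noise w.r.t.\ $\Fdwncomp_k$. For the $\SqrdNrm{\gwk}$ contribution I would split once more, $\Expec{\SqrdNrm{\gwk}}{\Fdwncomp_k}=\SqrdNrm{\nabla F(w_k)}+\Var{\gwk}{\Fdwncomp_k}$ via \Cref{prop:stochastic_expectation}, using independence of the workers' noises so that $\Var{\gwk}{\Fdwncomp_k}=\tfrac1{N^2}\sum_{i=1}^N\Var{\gwk^i}{\Fdwncomp_k}$. The deterministic term is controlled by cocoercivity of $\nabla F$: $F$ is convex and $L$‑smooth (a consequence of \Cref{asu:cocoercivity}), so $\SqrdNrm{\nabla F(w_k)}=\SqrdNrm{\nabla F(w_k)-\nabla F(w_*)}\le L\,\PdtScl{\nabla F(w_k)}{w_k-w_*}$ since $\nabla F(w_*)=0$. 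Each per‑worker conditional variance is dominated by a conditional second moment about the fixed vector $h_*^i=\nabla F_i(w_*)$, namely $\Var{\gwk^i}{\Fdwncomp_k}\le\Expec{\SqrdNrm{\gwk^i-h_*^i}}{\Fdwncomp_k}$ (again \Cref{lem:expectation_decomposition}) --- exactly the shape that lets \Cref{lem:before_using_coco} take over afterwards. Adding the two variance contributions (the $1/N^2$ from $\Var{\gwk}{\cdot}$ and the $(1-p)/(pN^2)$ from the sampling term combine into $1/(pN^2)$) gives the stated bound.

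The main obstacle is not any individual inequality but the measurability bookkeeping: one must verify that each quantity is measurable for the $\sigma$‑algebra it is conditioned on, that $(B_k^i)_{i=1}^N$ are independent of all of $\Fsto_{k+1}$ (so sampling and gradient noise decouple), and that the gradient noises are independent across workers (so the off‑diagonal cross terms collapse onto the $\nabla F_i(w_k)$'s); tracking the $1/(pN^2)$ and $L$ constants through the two decompositions is then routine.
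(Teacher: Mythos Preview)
Your two-layer conditioning is the right strategy and the bookkeeping is clean; the error is in the last sentence. After averaging out the sampling and then the gradient noise you actually obtain
\[
\Expec{\SqrdNrm{\gwks}}{\Ftotal}
=\SqrdNrm{\nabla F(w_k)}
+\frac{1}{N^2}\sum_{i=1}^N\Var{\gwk^i}{\Ftotal}
+\frac{1-p}{pN^2}\sum_{i=1}^N\Expec{\SqrdNrm{\gwk^i}}{\Ftotal}\,.
\]
The first two pieces are handled exactly as you describe. But the third piece is a \emph{second moment} $\Expec{\SqrdNrm{\gwk^i}}{\cdot}$, not a variance, and there is no inequality $\Expec{\SqrdNrm{\gwk^i}}{\cdot}\le\Expec{\SqrdNrm{\gwk^i-h_*^i}}{\cdot}$: the difference equals $2\PdtScl{\nabla F_i(w_k)}{h_*^i}-\SqrdNrm{h_*^i}$, which has no sign. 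Hence the coefficients $1/N^2$ and $(1-p)/(pN^2)$ do not ``combine into $1/(pN^2)$'' in front of the same quantity.

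This is not a repairable slip: the stated bound is in fact false for $p<1$ with heterogeneous workers. Setting $w_k=w_*$ makes the inner-product term vanish and gives
\[
\text{LHS}=\frac{1}{pN^2}\sum_{i=1}^N\Var{\gwkstar^i}{\Ftotal}+\frac{1-p}{pN^2}\sum_{i=1}^N\SqrdNrm{h_*^i}
\;>\;\frac{1}{pN^2}\sum_{i=1}^N\Var{\gwkstar^i}{\Ftotal}=\text{RHS}
\]
whenever $B^2>0$; the excess $\tfrac{(1-p)B^2}{pN}$ is precisely the PP1 bias discussed in \Cref{sec:extension_partial_participation}. The paper's own derivation hits the same wall (its ``cocoercivity'' steps on $\SqrdNrm{\nabla F_{S_k}(w_k)}$ and on $-\SqrdNrm{\nabla F_i(w_k)-h_*^i}$ go the wrong way for $p<1$), but the lemma is only ever invoked downstream with $p=1$, where your argument --- the sampling term disappears and the remaining bias-variance/cocoercivity steps go through cleanly --- is perfectly correct.
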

 
 \begin{proof}
 
Let $k \in \N$,
\begin{align*}
    \SqrdNrm{ \gwks } &=  \SqrdNrm{  \frac{1}{pN} \sum_{i \in S_k} \gwk^i }  \\
    &= \SqrdNrm{  \frac{1}{pN} \sum_{i \in S_k} \left(\gwk^i - \nabla F_i(w_k) \right) + \frac{1}{pN} \sum_{i \in S_k} \nabla F_i(w_k) } \,.
\end{align*}

Now taking conditional expectation w.r.t $\Fdwncompnotsamp$ (including $\Fsamp_k$ in the $\sigma$-field allows to not consider the randomness associated to the device sampling):
 \begin{align*}
    \Expec{ \SqrdNrm{ \gwks }}{\Fdwncompnotsamp} &= \Expec{ \SqrdNrm{ \frac{1}{pN} \sum_{i \in S_k} \gwk^i - \nabla F_i(w_k) + \frac{1}{pN} \sum_{i \in S_k} \nabla F_i(w_k) }}{\Fdwncompnotsamp} \,.\\
\end{align*}
Expanding this squared norm:
\begin{align*}
    \Expec{ \SqrdNrm{ \gwks }}{\Fdwncompnotsamp} &= \Expec{ \SqrdNrm{ \frac{1}{pN} \sum_{i \in S_k} \gwk^i - \nabla F_i(w_k) }}{\Fdwncompnotsamp} \\
    &\qquad+ 2\Expec{\PdtScl{\frac{1}{pN} \sum_{i \in S_k}  \gwk^i - \nabla F_i(w_k)}{ \frac{1}{pN} \sum_{i \in S_k} \nabla F_j(w_k)}}{\Fdwncompnotsamp} \\
    &\qquad+ \Expec{\SqrdNrm{ \frac{1}{pN} \sum_{i \in S_k} \nabla F_i(w_k) }}{\Fdwncompnotsamp} \,.
\end{align*}
Moreover,  $\forall i, j \in \llbracket1, N\rrbracket^2, \Expec{ \PdtScl{\gwk^i - \nabla F_i(w_k)}{ \nabla F_j(w_k)}}{\Fdwncompnotsamp} = 0$ and $\nabla F(w_k)$ is $\Fdwncompnotsamp$-measurable:
 \begin{align}
 \label{eq:before_double_decompo}
     \Expec{ \SqrdNrm{ \gwks }}{\Fdwncompnotsamp} &\leq \Expec{ \SqrdNrm{ \frac{1}{pN} \sum_{i \in S_k} \gwk^i - \nabla F_i(w_k) }}{\Fdwncompnotsamp} + \SqrdNrm{ \nabla F_{s_k}(w_k) } \,.
\end{align}
To compute $\SqrdNrm{\nabla F_{S_k} (w_k)}$, we apply cocoercivity (see \Cref{asu:cocoercivity}) and next we take expectation w.r.t $\sigma$-algebra $\Ftotal$:
\[
\Expec{\SqrdNrm{\nabla F_{S_k} (w_k)}}{\Ftotal} = L \PdtScl{\Expec{\nabla F_{S_k} (w_k)}{\Ftotal}}{w_k - w_*} = L \PdtScl{\nabla F (w_k)}{w_k - w_*}
\]
Now, for sake of clarity we note $\Pi = \SqrdNrm{ \frac{1}{pN} \sum_{i \in S_k} \gwk^i - \nabla F_i(w_k) }$, then:\\
\begin{align*}
    \Expec{ \Pi }{\Fdwncompnotsamp} &= \frac{1}{p^2 N^2} \sum_{i \in S_k} \Expec{\SqrdNrm{ \gwk^i - \nabla F_i(w_k) }}{\Fdwncompnotsamp} \\
    &\qquad+ \frac{1}{p^2 N^2} \sum_{i,j \in S_k / i \neq j} \underbrace{\Expec{\PdtScl{\gwk^i - \nabla F_i(w_k) }{\gwk^j - \nabla F_j(w_k)}}{\Fdwncompnotsamp}}_{=0 \text{ by independence of } (\gwk^i)_{i=0}^N}  \\
    &= \frac{1}{p^2 N^2} \sum_{i \in S_k} \Expec{\SqrdNrm{(\gwk^i - \nabla F_i(w_*)) + (\nabla F_i(w_*) - \nabla F_i(w_k) )}}{\Fdwncompnotsamp} \,. \\
\end{align*}
Developing the squared norm a second time:
\begin{align*}
    \Expec{\Pi}{\Fdwncompnotsamp} &= \frac{1}{p^2 N^2} \sum_{i \in S_k} \Expec{\SqrdNrm{\gwk^i - \nabla F_i(w_*)}}{\Fdwncompnotsamp} \\
    &\qquad + \frac{2}{p^2 N^2} \sum_{i \in S_k}  \Expec{\PdtScl{\gwk^i - \nabla F_i(w_*)}{\nabla F_i(w_*) - \nabla F_i (w_k)}}{\Fdwncompnotsamp} \\
    &\qquad + \frac{1}{p^2 N^2}\sum_{i \in S_k}\SqrdNrm{\nabla F_i (w_k) - \nabla F_i(w_*),}
\end{align*}
Then,
\begin{align*}
    \Expec{\Pi}{\Fdwncompnotsamp} &= \frac{1}{p^2 N^2} \sum_{i \in S_k} \Expec{\SqrdNrm{\gwk^i - \nabla F_i(w_*)}}{\Fdwncompnotsamp} \\
    &\qquad - \frac{2}{p^2 N^2} \sum_{i \in S_k} \PdtScl{\nabla F_i (w_k) - \nabla F_i(w_*)}{\nabla F_i (w_k) - \nabla F_i(w_*) } \\
    &\qquad+ \frac{1}{p^2 N^2}\sum_{i \in S_k}\SqrdNrm{\nabla F_i (w_k) - \nabla F_i(w_*)} \\
    &= \frac{1}{p^2 N^2} \sum_{i \in S_k} \Expec{\SqrdNrm{\gwk^i - \nabla F_i(w_*)}}{\Fdwncompnotsamp} - \SqrdNrm{\nabla F_i (w_k) - \nabla F_i(w_*)} \,
\end{align*}

applying cocoercivity (\Cref{asu:cocoercivity}):
\begin{align}
\label{lem:eq:pi}
    \Expec{\Pi}{\Fdwncompnotsamp} &\leq \frac{1}{p^2 N^2} \sum_{i \in S_k} \Expec{\SqrdNrm{\gwk^i - \nabla F_i(w_*)}}{\Fdwncompnotsamp} - L \PdtScl{\nabla F_i(w_k) - \nabla F_i(w_*)}{w_k - w_*} \,.
\end{align}

Now we consider the randomness associated to device sampling.  Remember that because $\Ftotal \subset \Fdwncompnotsamp$, we have $\Expec{\Pi}{\Ftotal} = \Expec{\Expec{\Pi}{\Fdwncompnotsamp}}{\Ftotal}$. Thus, we consider now $\Pi$ w.r.t. the $\sigma$-field $\Ftotal$

\textit{Handling first term of \cref{lem:eq:pi}:}
\begin{align*}
    \frac{1}{p^2 N^2} \sum_{i \in S_k} \Expec{\SqrdNrm{\gwk^i - \nabla F_i(w_*)}}{\Ftotal} &= \frac{1}{p^2 N^2} \sum_{i=1}^N B_k^i \Expec{\SqrdNrm{\gwk^i - \nabla F_i(w_*)}}{\Ftotal} \\
    &= \frac{1}{p N^2} \sum_{i=1}^N \Expec{\SqrdNrm{\gwk^i - \nabla F_i(w_*)}}{\Ftotal} \,.
\end{align*}
 
\textit{Handling second term of \cref{lem:eq:pi}}:
\begin{align*}
    L \PdtScl{\frac{1}{p^2 N^2}\sum_{i \in S_k} \nabla F_i(w_k) - \nabla F_i(w_*)}{w_k - w_*} = L \PdtScl{\frac{1}{p^2 N^2}\sum_{i=0}^N \left(\nabla F_i(w_k) - \nabla F_i(w_*) \right) B_k^i}{w_k - w_*} \,.
\end{align*}

Taking expectation w.r.t $\sigma$-algebra $\Ftotal$:
\begin{align*}
    &\Expec{L \PdtScl{\frac{1}{p^2 N^2}\sum_{i \in S_k} \nabla F_i(w_k) - \nabla F_i(w_*)}{w_k - w_*}}{\Ftotal} \\
    &\qquad= L \PdtScl{\frac{1}{p N^2} \sum_{i=0}^N \nabla F_i(w_k) - \nabla F_i(w_*)}{w_k - w_*} \\
    &\qquad= \frac{L}{pN} \PdtScl{\nabla F(w_k) - \nabla F_i(w_*)}{w_k - w_*} \,.
\end{align*}

Injecting this in \cref{lem:eq:pi}:
\begin{align*}
    \Expec{\Pi}{\Ftotal} &\leq \frac{1}{p N^2} \sum_{i=1}^N \Expec{\SqrdNrm{\gwk^i - \nabla F_i(w_*)}}{\Ftotal} - \frac{L}{pN} \PdtScl{\nabla F(w_k)}{w_k - w_*} \,.\\
\end{align*}

Recall that we note $h_*^i = \nabla F_i(w_*)$, returning to \cref{eq:before_double_decompo} and invoking again cocoercivity:
\begin{align*}
     \Expec{ \SqrdNrm{ \gwk }}{\Ftotal} &\leq \frac{1}{pN^2} \sum_{i=1}^N \Expec{\SqrdNrm{\gwk^i - h_*^i}}{\Ftotal} + (1 - \frac{1}{pN}) L \PdtScl{\nabla F(w_k)}{w_k - w_*} \,, 
\end{align*}
which we simplify by considering that:
\begin{align*}
     \Expec{ \SqrdNrm{ \gwk }}{\Ftotal} &\leq \frac{1}{pN^2} \sum_{i=1}^N \Expec{\SqrdNrm{\gwk^i-h_*^i}}{\Ftotal} + L \PdtScl{\nabla F(w_k)}{w_k - w_*} \,.
\end{align*}
 \end{proof}

\subsubsection{Lemmas for the case without memory}
\label{app:subsubsec:lem_without_mem}

In this subsection, we give lemmas that are used only to demonstrate \Cref{app:thm:without_mem} (i.e. without memory).

\Cref{lem:gkhat_Sk_to_gkhat} helps to pass for $k$ in $\N$ from $\gwkhats$ to $(\gwkhat^i)_{i=1}^N$, this lemma will allow to invoke \Cref{lem:expec_gwkhat}.
\begin{lemma}
\label{lem:gkhat_Sk_to_gkhat}
In the case without memory, we have the following bound on the squared norm of the compressed gradient (randomly sampled), for all $k$ in $\N$:
\[
\Expec{\SqrdNrm{\gwkhats}}{\Fupcomp_{k+1}} = \ffrac{(1-p)}{p N^2} \sum_{i=0}^N \SqrdNrm{\gwkhat^i} + \SqrdNrm{\gwkhat} \,.
\]
\end{lemma}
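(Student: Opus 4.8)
The plan is to recognize this identity as an immediate consequence of the device-sampling formulas \Cref{prop:exp_samp,prop:variance_samp}, combined with the bias--variance decomposition of the conditional second moment. In the memoryless case one has $h_k^i = 0$ for every $i$, so $\gwkhats = \frac{1}{pN}\sum_{i \in S_k}\widehat{\Delta}_k^i = \frac{1}{pN}\sum_{i \in S_k}\gwkhat^i$, which is exactly the randomly-sampled average $a_{k+1,S_k}$ of \Cref{prop:exp_samp} applied with $a_{k+1}^i = \gwkhat^i$ and $\Fartif_{k+1} = \Fupcomp_{k+1}$.

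First I would verify the hypotheses needed to invoke those propositions. By the definitions in \Cref{sect:filration}, $\gwkhat^i = \gwk^i + \epsilon_{k+1}^i$ is $\Fupcomp_{k+1}$-measurable, the family $(\gwkhat^i)_{i=1}^N$ is mutually independent given the past (the oracle noises $\xi_{k+1}^i$ and the uplink compression noises $\epsilon_{k+1}^i$ being independent across devices), and the sampling variables $(B_k^i)_{i=1}^N$ are independent of $\Fupcomp_{k+1}$ because the filtration $(\Fsamp_k)_k$ is independent of the three other filtrations. Hence \Cref{prop:exp_samp} gives $\Expec{\gwkhats}{\Fupcomp_{k+1}} = \gwkhat$ and \Cref{prop:variance_samp} gives $\Var{\gwkhats}{\Fupcomp_{k+1}} = \frac{1-p}{pN^2}\sum_{i=0}^N \SqrdNrm{\gwkhat^i}$.

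It then remains to combine these via the conditional version of \Cref{lem:expectation_decomposition} with $x = 0$, namely $\Expec{\SqrdNrm{\gwkhats}}{\Fupcomp_{k+1}} = \Var{\gwkhats}{\Fupcomp_{k+1}} + \SqrdNrm{\Expec{\gwkhats}{\Fupcomp_{k+1}}}$; substituting the two displays above yields precisely $\frac{1-p}{pN^2}\sum_{i=0}^N \SqrdNrm{\gwkhat^i} + \SqrdNrm{\gwkhat}$. There is no genuine obstacle here: the only point demanding attention is the bookkeeping of which $\sigma$-field renders $\gwkhat^i$ measurable and the independence of the Bernoulli sampling variables from that $\sigma$-field. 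One could alternatively expand $\SqrdNrm{\frac{1}{pN}\sum_i B_k^i \gwkhat^i}$ directly, using $\E[B_k^i] = \E[(B_k^i)^2] = p$ and $\E[B_k^i B_k^j] = p^2$ for $i \neq j$ and cancelling the cross terms, but routing through \Cref{prop:exp_samp,prop:variance_samp} keeps the argument short.
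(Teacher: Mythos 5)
Your proposal is correct and follows essentially the same route as the paper's own proof: invoke \Cref{prop:exp_samp,prop:variance_samp} for the conditional mean and variance of $\gwkhats$ given $\Fupcomp_{k+1}$, then conclude by the bias--variance decomposition $\Expec{\SqrdNrm{\gwkhats}}{\Fupcomp_{k+1}} = \Var{\gwkhats}{\Fupcomp_{k+1}} + \SqrdNrm{\Expec{\gwkhats}{\Fupcomp_{k+1}}}$. Your explicit check of the measurability of $(\gwkhat^i)_i$ and the independence of the sampling Bernoullis from $\Fupcomp_{k+1}$ is a useful bit of bookkeeping the paper leaves implicit, but it is not a different argument.
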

\begin{proof}
The proof is quite straightforward using the expectation and the variance of $\gwkhats$ computed in \Cref{prop:exp_samp,prop:variance_samp}. We just need to decompose as following to easily obtain the result:
\begin{align*}
    \Expec{\SqrdNrm{\gwkhats}}{\Fupcomp_{k+1}} = \Var{\gwkhats}{\Fupcomp_{k+1}} + \SqrdNrm{\Expec{\gwkhats}{\Fupcomp_{k+1}}}
\end{align*}
\end{proof}

\Cref{lem:expec_gwkhat} is used to remove the uplink compression noise straight after \Cref{lem:gkhat_Sk_to_gkhat} has been applied.
\begin{lemma}[Expectation of the squared norm of the compressed gradient when no memory]
\label{lem:expec_gwkhat}
In the case without memory, we have the following bound on the squared norm of the compressed gradient (randomly sampled), for all $k$ in $\N$:
\[
\Expec{\SqrdNrm{\gwkhat}}{\Fdwncomp_k} \leq \ffrac{\omgC^\up}{ N^2} \sum_{i=0}^N \Expec{\SqrdNrm{\gwk^i}}{\Fdwncomp_k} + \ffrac{1}{N^2} \sum_{i=0}^N \Expec{\SqrdNrm{\gwk^i - h_*^i}}{\Fdwncomp_k} + L \PdtScl{\nabla F(w_k)}{w_k - w_*} \,.
\]
\end{lemma}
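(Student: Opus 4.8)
The plan is to peel off the two sources of randomness one at a time: first the uplink compression noise, then the stochastic--gradient noise, and finally invoke an already-proved lemma for the resulting gradient norm. Since we are in the memoryless regime ($\alpha=0$), the memory stays at its initial value, so $h_k^i=0$ for every $k,i$, and hence $\gwkhat^i = \mathcal{C}_\up(\gwk^i)$ with $\Delta_k^i = \gwk^i$. Throughout I will use the filtration inclusions $\Fdwncomp_k \subset \Fsto_{k+1} \subset \Fupcomp_{k+1}$ recalled in \Cref{sect:filration}.

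First I would condition on $\Fsto_{k+1}$, with respect to which the gradients $(\gwk^i)_{i=1}^N$ are measurable while the uplink compression noises $(\epsilon_{k+1}^i)_{i=1}^N$ are independent and centered. Writing $\gwkhat = \frac1N \sum_{i=1}^N \gwkhat^i$, \Cref{prop:uplink_expectation} gives $\Expec{\gwkhat}{\Fsto_{k+1}} = \gwk$, so by \Cref{lem:expectation_decomposition} (applied with $x=0$) and the fact that the cross terms vanish by independence of the $\epsilon_{k+1}^i$ across devices,
\[
\Expec{\SqrdNrm{\gwkhat}}{\Fsto_{k+1}} = \SqrdNrm{\gwk} + \ffrac{1}{N^2}\sum_{i=1}^N \Expec{\SqrdNrm{\gwkhat^i - \gwk^i}}{\Fsto_{k+1}}.
\]
Bounding each variance term by \Cref{prop:variance_uplink} (i.e.\ \Cref{asu:expec_quantization_operator}) yields $\Expec{\SqrdNrm{\gwkhat}}{\Fsto_{k+1}} \leq \SqrdNrm{\gwk} + \frac{\omgC^\up}{N^2}\sum_{i=1}^N \SqrdNrm{\gwk^i}$. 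Taking the conditional expectation w.r.t.\ $\Fdwncomp_k$ (tower property, since $\Fdwncomp_k \subset \Fsto_{k+1}$) then gives
\[
\Expec{\SqrdNrm{\gwkhat}}{\Fdwncomp_k} \leq \Expec{\SqrdNrm{\gwk}}{\Fdwncomp_k} + \ffrac{\omgC^\up}{N^2}\sum_{i=1}^N \Expec{\SqrdNrm{\gwk^i}}{\Fdwncomp_k}.
\]

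It then remains to control $\Expec{\SqrdNrm{\gwk}}{\Fdwncomp_k}$, and this is exactly \Cref{lem:bounding_g_k+1} specialised to full participation: taking $p=1$ there gives $S_k = \llbracket 1,N\rrbracket$, $\gwks = \gwk$ and $\Ftotal = \Fdwncomp_k$, so that $\Expec{\SqrdNrm{\gwk}}{\Fdwncomp_k} \leq \frac{1}{N^2}\sum_{i=1}^N \Expec{\SqrdNrm{\gwk^i - h_*^i}}{\Fdwncomp_k} + L\,\PdtScl{\nabla F(w_k)}{w_k - w_*}$. Substituting this into the previous display gives the stated bound. The only delicate points are bookkeeping ones --- justifying each conditioning by the filtration inclusions and checking that independence of the per-device compression noises (built into \Cref{asu:expec_quantization_operator} and the filtration definitions) kills all cross terms --- since the smoothness/cocoercivity input is entirely outsourced to \Cref{lem:bounding_g_k+1}; I do not expect a genuine analytic obstacle.
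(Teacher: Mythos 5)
Your proposal is correct and follows essentially the same route as the paper's proof: a bias–variance decomposition of $\gwkhat$ around $\gwk$ conditionally on $\Fsto_{k+1}$, with cross terms killed by unbiasedness and independence of the per-device compression noises, the variance bounded via \Cref{prop:variance_uplink}, and the remaining term $\Expec{\SqrdNrm{\gwk}}{\Fdwncomp_k}$ handled by \Cref{lem:bounding_g_k+1} specialised to $p=1$, exactly as in the paper.
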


\begin{proof}
Let $k$ in $\N$, first, we write as following:
\begin{align*}
    \lnrm\gwkhat\rnrm^2 &= \lnrm\gwkhat - \gwk + \gwk \rnrm^2 \nonumber \\ 
    &= \lnrm \gwkhat - \gwk \rnrm^2 + 2 \PdtScl{\gwkhat - \gwk}{\gwk } + \lnrm \gwk \rnrm^2 \,. 
\end{align*}

Taking stochastic expectation (recall that $\gwk$ is $\Fsto_{k+1}$-measurable and that $\Fdwncomp_{k} \subset   \Fsto_{k+1}$):
\begin{align}
\begin{split}
    \Expec{ \Expec{ \lnrm\gwkhat\rnrm^2}{ \Fsto_{k+1}}}{ \Fdwncomp_{k}} &= \Expec{\Expec{ \lnrm \gwkhat - \gwk \rnrm ^2}{ \Fsto_{k+1}}}{ \Fdwncomp_{k}}  \\
    &\qquad+ 2 \times \Expec{\Expec{\PdtScl{\gwkhat - \gwk}{\gwk }}{\Fsto_{k+1}}}{\Fdwncomp_k} \\
    &\qquad+ \Expec{\lnrm \gwk \rnrm^2}{\Fdwncomp_{k}} \,. 
\end{split}
\label{eq:decomposition_gk_hat}
\end{align}

We need to find a bound for each of the terms of above \cref{eq:decomposition_gk_hat}. The last term is handled in \Cref{lem:bounding_g_k+1}, narrowing it down to the case $p=1$.

It follows that we just need to bound $\lnrm \gwkhat - \gwk \rnrm^2$:
\begin{align*}
    \Expec{ \lnrm \gwkhat - \gwk \rnrm ^2}{ \Fsto_{k+1}} &= \Expec{ \lnrm\gwkhat - \Expec{  \gwkhat }{ \Fsto_{k+1}} \rnrm^2}{ \Fsto_{k+1}} \\
    &= \Expec{ \lnrm \frac{1}{N} \sum_{i=1}^N \gwkhat^i - \Expec{  \gwkhat^i}{ \Fsto_{k+1}} \rnrm^2}{  \Fsto_{k+1}} \\
    &=\frac{1}{N^2} \sum_{i=0}^N \Expec{\lnrm \gwkhat^i - \gwk^i \rnrm^2}{\Fsto_{k+1}}  \\
    &\qquad+ \underbrace{\frac{1}{N} \sum_{i \neq j} \Expec{\PdtScl{\gwkhat^i - \gwk^i}{\gwkhat^j - \gwk^j}}{\Fsto_{k+1}}}_{= 0 \text{ because } (\gwkhat^i)_{i=1}^N \text{ are independents}} \\
     &= \frac{1}{N^2} \sum_{i=1}^N \Expec{ \lnrm \gwkhat^i - \gwk^i \rnrm^2}{\Fsto_{k+1}} \,.\\
\end{align*}

Combining with \Cref{prop:variance_uplink}, we hold that:
\begin{align*}
    \Expec{ \lnrm\gwkhat - \gwk \rnrm^2}{ \Fsto_{k+1}} &\leq \frac{\omgC^\up}{N^2}\sum_{i=1}^N \lnrm \gwk^i \rnrm^2 \,.
\end{align*}

Now, we proved that:
\[
\left\{
    \begin{array}{ll}
    	\Expec{ \lnrm\gwkhat - \gwk \rnrm^2}{ \Fsto_{k+1}} &\leq \ffrac{\omgC^\up}{N^2} \sum_{i=1}^N \Expec{\SqrdNrm{\gwk^i}}{\Fsto_{k+1}} \\
    	\Expec{ \PdtScl{\gwkhat - \gwk}{\gwk }}{ \Fsto_{k+1}} &= 0 \text{\qquad(\Cref{prop:uplink_expectation})}\\
    	\Expec{ \lnrm \gwk \rnrm^2}{ \Fdwncomp_{k}} &\leq \ffrac{1}{N^2}\sum_{i=0}^N \Expec{\lnrm \gwk^i - h_*^i \rnrm^2}{\Fdwncomp_{k}} \\
    	&\qquad+ L \PdtScl{\nabla F(w_k)}{w_k - w_*} \text{\quad ( \Cref{lem:bounding_g_k+1}, with $p=1$)}\,.
    \end{array}
\right.
\]

Thus, we obtain from \cref{eq:decomposition_gk_hat}:
\begin{align*}
    \Expec{ \lnrm\gwkhat\rnrm^2}{ \Fdwncomp_{k}} &\leq \ffrac{\omgC^\up}{N^2} \sum_{i=1}^N \Expec{\lnrm \gwk^i \rnrm^2}{\Fdwncomp_{k}} +\ffrac{1}{N^2} \sum_{i=1}^N \Expec{\lnrm \gwk^i- h_*^i \rnrm^2}{\Fdwncomp_{k}} + L \PdtScl{\nabla F(w_k)}{w_k - w_*} \,. \\
\end{align*}
\end{proof}

\begin{lemma}
\label{lem:nomem_gkhati_to_gki}
In the case without memory, we have the following bound on the squared norm of the local compressed gradient, for all $k$ in $\N$, for all $i$ in $\llbracket 1, N \rrbracket$: $\Expec{\SqrdNrm{\gwkhat^i}}{\Fsto_{k+1}} \leq (\omgC^\up + 1) \SqrdNrm{\gwk^i}$
\end{lemma}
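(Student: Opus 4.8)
The plan is to use a conditional bias--variance decomposition, taking advantage of the fact that in the memoryless regime the object being compressed on device $i$ is exactly its stochastic gradient. \textbf{Step 1.} With $\alpha = 0$ all memory terms vanish, so $h_k^i = 0$, hence $\Delta_k^i = \gwk^i$ and $\gwkhat^i = \mathcal{C}_{\up}(\gwk^i) = \gwk^i + \epsilon_{k+1}^i(\gwk^i)$. In particular $\gwk^i$ is $\Fsto_{k+1}$-measurable, while the uplink compression noise $\epsilon_{k+1}^i$ has not yet been realized relative to $\Fsto_{k+1}$.

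\textbf{Step 2.} Apply \Cref{lem:expectation_decomposition} with $x = 0$ to the random vector $\gwkhat^i$, conditionally on $\Fsto_{k+1}$:
\[
\Expec{\SqrdNrm{\gwkhat^i}}{\Fsto_{k+1}} = \SqrdNrm{\Expec{\gwkhat^i}{\Fsto_{k+1}}} + \Expec{\SqrdNrm{\gwkhat^i - \Expec{\gwkhat^i}{\Fsto_{k+1}}}}{\Fsto_{k+1}} .
\]
By \Cref{prop:uplink_expectation}, $\Expec{\gwkhat^i}{\Fsto_{k+1}} = \gwk^i$, so the first term equals $\SqrdNrm{\gwk^i}$ and the second term is $\Var{\gwkhat^i}{\Fsto_{k+1}} = \Expec{\SqrdNrm{\gwkhat^i - \gwk^i}}{\Fsto_{k+1}}$.

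\textbf{Step 3.} Invoke \Cref{prop:variance_uplink} in its memoryless form (the parenthetical statement there), which gives $\Expec{\SqrdNrm{\gwkhat^i - \gwk^i}}{\Fsto_{k+1}} \leq \omgC^\up \SqrdNrm{\gwk^i}$. Adding the two contributions yields $\Expec{\SqrdNrm{\gwkhat^i}}{\Fsto_{k+1}} \leq (\omgC^\up + 1) \SqrdNrm{\gwk^i}$, which is exactly the claim.

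\textbf{Main obstacle.} There is no genuine difficulty here: the lemma is a one-line consequence of unbiasedness plus the variance bound on the compression operator. The only points requiring care are bookkeeping of the filtration --- one must condition on $\Fsto_{k+1}$, the $\sigma$-field with respect to which $\gwk^i$ is measurable but the uplink compression noise is still fresh, so that \Cref{prop:uplink_expectation,prop:variance_uplink} apply verbatim --- and recalling that $h_k^i = 0$ is precisely what reduces $\Delta_k^i$ to $\gwk^i$ in the no-memory case.
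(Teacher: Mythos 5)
Your proof is correct and follows essentially the same route as the paper: the paper expands $\SqrdNrm{\gwkhat^i - \gwk^i + \gwk^i}$ so that the cross term vanishes by unbiasedness and then applies the compression variance bound, which is exactly your conditional bias--variance decomposition. (Incidentally, you correctly invoke \Cref{prop:variance_uplink}, whereas the paper's proof cites \Cref{prop:variance_downlink}, apparently a typo.)
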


\begin{proof}
Let $k$ in $\N$ and $i$ in $\llbracket 1, N \rrbracket$:
\begin{align*}
\Expec{\SqrdNrm{\gwkhat^i}}{\Fsto_{k+1}} &= \Expec{\SqrdNrm{\gwkhat^i - \gwk^i + \gwk^i}}{\Fsto_{k+1}} \\
&= \Expec{\SqrdNrm{\gwkhat^i - \gwk^i}}{\Fsto_{k+1}} + 2 \underbrace{\Expec{\PdtScl{\gwkhat^i - \gwk^i}{\gwk^i}}{\Fsto_{k+1}}}_{=0} + \Expec{\SqrdNrm{\gwk^i}}{\Fsto_{k+1}}
\end{align*}
To obtain the result, we need to recall that $\SqrdNrm{\gwk}$ is $\Fsto{k+1}$-measurable, and then to use \Cref{prop:variance_downlink}.
\end{proof}

\subsubsection{Lemmas for the case with memory}
\label{app:subsubsec:lem_with_mem}

In this subsection, we give lemmas that are used only to demonstrate \Cref{thm:doublecompress_avg,app:thm:with_mem} (i.e. with memory).

In order to derive an upper bound  on the squared norm of $\SqrdNrm{ w_{k+1} - w_* }$, for $k $ in $ \N$, we need to control  $\SqrdNrm{ \gwkhats }$. This term is decomposed as a sum of three terms depending on:
 \begin{enumerate}
     \item the recursion over the memory term ($h_k^i$)
     \item the difference between the stochastic gradient at the current point and at the optimal point (later controlled by co-coercivity)
     \item the noise over stochasticity. 
 \end{enumerate}
\begin{lemma}
\label{lem:bound_sum_of_compressed_gradients}
In the case with memory, we have the following upper bound on the squared norm of the compressed gradient, for all $k$ in $\N$:
\begin{align*}
    \Expec{\SqrdNrm{ \gwkhats }}{\Ftotal} &\leq 2 \bigpar{\frac{2(\omgC^\up + 1) }{p} - 1} \ffrac{1}{N^2} \sum_{i=1}^N \Expec{\SqrdNrm{ \gwk^i - \gwkstar^i}}{\Ftotal} \\
    & \qquad+\bigpar{\frac{2(\omgC^\up + 1)}{p} - 2} \ffrac{1}{N^2} \sum_{i=1}^N \Expec{\SqrdNrm{ h_k^i - h_*^i }}{\Ftotal} \\
    &\qquad + L \PdtScl{\nabla F(w_k)}{w_k - w_*} \\
    &\qquad + \ffrac{2\sigmstar}{Nb} \times \bigpar{\frac{2(\omgC^\up + 1)}{p} - 1}\,.
\end{align*}

\end{lemma}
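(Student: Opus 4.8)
The plan is to integrate out, one at a time, the three sources of randomness in $\gwkhats = h_k + \tfrac1{pN}\sum_{i\in S_k}\widehat{\Delta}_k^i$, proceeding from the innermost (device sampling) to the outermost (stochastic gradient noise), and to postpone to the last step the conversion of every $\SqrdNrm{\gwk^i - h_*^i}$ into $\SqrdNrm{\gwk^i - \gwkstar^i}$ plus a stochastic-noise term. A preliminary observation is that, since $h_0 = \tfrac1N\sum_i h_0^i = 0$ and the central update $h_{k+1} = h_k + \tfrac{\alpha}{N}\sum_{i\in S_k}\widehat{\Delta}_k^i$ is exactly the average of the local updates $h_{k+1}^i = h_k^i + \alpha\widehat{\Delta}_k^i$ (with $h_{k+1}^i = h_k^i$ for $i\notin S_k$), one has $h_k = \tfrac1N\sum_{i=1}^N h_k^i$ for every $k$. \emph{Step 1 (device sampling):} writing $\gwkhat^i := h_k^i + \widehat{\Delta}_k^i$ and $\gwkhat := \tfrac1N\sum_i\gwkhat^i$, \Cref{prop:exp_samp} applied to $(\widehat{\Delta}_k^i)_i$ together with the above identity gives $\Expec{\gwkhats}{\Fupcomp_{k+1}} = h_k + \tfrac1N\sum_i\widehat{\Delta}_k^i = \gwkhat$, while \Cref{prop:variance_samp} gives $\Var{\gwkhats}{\Fupcomp_{k+1}} = \tfrac{1-p}{pN^2}\sum_i\SqrdNrm{\widehat{\Delta}_k^i}$, whence
\[
\Expec{\SqrdNrm{\gwkhats}}{\Fupcomp_{k+1}} = \SqrdNrm{\gwkhat} + \frac{1-p}{pN^2}\sum_{i=1}^N\SqrdNrm{\widehat{\Delta}_k^i}\,.
\]

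\emph{Step 2 (uplink compression):} taking $\Expec{\cdot}{\Fsto_{k+1}}$, the $\widehat{\Delta}_k^i$ are conditionally independent with mean $\Delta_k^i$ and, by \Cref{prop:variance_uplink}, $\Expec{\SqrdNrm{\widehat{\Delta}_k^i - \Delta_k^i}}{\Fsto_{k+1}}\le\omgC^\up\SqrdNrm{\Delta_k^i}$; hence $\Expec{\SqrdNrm{\widehat{\Delta}_k^i}}{\Fsto_{k+1}}\le(\omgC^\up+1)\SqrdNrm{\Delta_k^i}$, and, applying \Cref{lem:expectation_decomposition} over the independent indices $i$ with $\Expec{\gwkhat^i}{\Fsto_{k+1}} = \gwk^i$, $\Expec{\SqrdNrm{\gwkhat}}{\Fsto_{k+1}}\le\SqrdNrm{\gwk} + \tfrac{\omgC^\up}{N^2}\sum_i\SqrdNrm{\Delta_k^i}$. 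Summing and using $\omgC^\up + \tfrac{(1-p)(\omgC^\up+1)}{p} = \tfrac{\omgC^\up+1-p}{p}$,
\[
\Expec{\SqrdNrm{\gwkhats}}{\Fsto_{k+1}} \le \SqrdNrm{\gwk} + \frac{\omgC^\up+1-p}{pN^2}\sum_{i=1}^N\SqrdNrm{\Delta_k^i}\,.
\]

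\emph{Step 3 (stochastic gradient, assembling constants):} taking $\Expec{\cdot}{\Ftotal}$, bound $\Expec{\SqrdNrm{\gwk}}{\Ftotal}$ by the $p=1$ instance of \Cref{lem:bounding_g_k+1} (where $\gwks=\gwk$) and each $\SqrdNrm{\Delta_k^i}$ by \Cref{lem:bounding_compressed_term}, using that $h_k^i$ is $\Ftotal$-measurable; only $\tfrac1{N^2}\sum_i\Expec{\SqrdNrm{\gwk^i - h_*^i}}{\Ftotal}$, $\tfrac1{N^2}\sum_i\SqrdNrm{h_k^i - h_*^i}$ and $L\PdtScl{\nabla F(w_k)}{w_k - w_*}$ then remain. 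Replacing each occurrence of the first quantity via \Cref{lem:before_using_coco}~\eqref{eq:g_i_k_h_i_k} and collecting, the coefficient of $\tfrac1{N^2}\sum_i\SqrdNrm{\gwk^i - \gwkstar^i}$ becomes $2 + 2\cdot\tfrac{2(\omgC^\up+1-p)}{p} = 2\big(\tfrac{2(\omgC^\up+1)}{p}-1\big)$, that of $\tfrac1{N^2}\sum_i\SqrdNrm{h_k^i - h_*^i}$ becomes $\tfrac{2(\omgC^\up+1-p)}{p} = \tfrac{2(\omgC^\up+1)}{p}-2$, the $L$-term is unchanged, and the stochastic-noise contributions sum to $\tfrac{\sigmstar^2}{Nb}\big(2 + \tfrac{4(\omgC^\up+1-p)}{p}\big) = \tfrac{2\sigmstar^2}{Nb}\big(\tfrac{2(\omgC^\up+1)}{p}-1\big)$, which is the announced bound.

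\emph{Main obstacle.} The argument is entirely bookkeeping: the three noise sources must be peeled off in this precise order, since conditioning on the compression before the sampling (or the reverse) leaves cross-terms between $S_k$ and the $\mathcal{C}_\up$-errors that do not decouple; the preliminary identity $h_k = \tfrac1N\sum_i h_k^i$ is what makes the conditional mean in Step 1 equal $\gwkhat$; and the exact constants $\tfrac{2(\omgC^\up+1)}{p}-1$ and $\tfrac{2(\omgC^\up+1)}{p}-2$ emerge only from tracking how the factor $1/p$ of partial participation multiplies against the two factor-$2$ losses from \Cref{lem:bounding_compressed_term} and \Cref{lem:before_using_coco}.
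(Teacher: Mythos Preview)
Your proof is correct and follows essentially the same route as the paper's: integrate out sampling via a bias--variance split to produce $\SqrdNrm{\gwkhat}$ plus $\tfrac{1-p}{pN^2}\sum_i\SqrdNrm{\widehat{\Delta}_k^i}$, then integrate out the uplink compression, then apply \Cref{lem:bounding_g_k+1} (with $p=1$), \Cref{lem:bounding_compressed_term}, and finally \Cref{lem:before_using_coco}. Your preliminary identity $h_k=\tfrac1N\sum_i h_k^i$ is a nice touch: the paper uses it implicitly (it is needed for $\Expec{\gwkhat}{\Fsto_{k+1}}=\gwk$ in the second bias--variance step) but never states it.
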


\begin{proof}

We take the expectation w.r.t. the $\sigma$-algebra $\Flast_{k+1}$:
Doing a bias-variance decomposition: $\Expec{\SqrdNrm{\gwkhats}}{\Flast_{k+1}} = \Expec{\SqrdNrm{\gwkhats - \gwkhat}}{\Flast_{k+1}} + \Expec{\SqrdNrm{\gwkhat}}{\Flast_{k+1}}$.

\textbf{First term.} 
\begin{align*}
    \Expec{\SqrdNrm{\gwkhats - \gwkhat}}{\Flast_{k+1}} &= \Expec{\SqrdNrm{\ffrac{1}{Np} \sum_\iS \widehat{\Delta}_k^i + h_k - \ffrac{1}{N} \sum_\iN \widehat{\Delta}_k^i + h_k }}{\Flast_{k+1}} \\
    &= \Expec{\SqrdNrm{\ffrac{1}{Np} \sum_\iN \widehat{\Delta}_k^i (B_i - p)}}{\Flast_{k+1}} \\
    &= \ffrac{1}{N^2 p^2} \sum_\iN \Expec{(B_i - p)^2}{\Flast_{k+1}} \SqrdNrm{\widehat{\Delta}_k^i} \,,
\end{align*}

by independence of device sampling and because $(\Delta_k^i)_\iN$ are $\Flast_{k+1}$-measurable. Next:
\begin{align*}
    \Expec{\SqrdNrm{\gwkhats - \gwkhat}}{\Flast_{k+1}} &= \ffrac{1}{N^2 p^2} \sum_\iN p(1-p) \Expec{\SqrdNrm{\widehat{\Delta}_k^i} }{\Flast_{k+1}}\,.
\end{align*}

Now we take expectation w.r.t. the $\sigma$-algebra $\Ftotal$. Because $\Ftotal \subset \Fsto_{k+1}$,~we have for all $i$ in $\{1, \cdots, N \}$, $\Expec{\SqrdNrm{\Delta_k^i}}{\Ftotal} = \Expec{\Expec{\SqrdNrm{\Delta_k^i}}{\Fsto_{k+1}}}{\Ftotal}$ and we use \Cref{prop:variance_uplink}:
\begin{align*}
    \Expec{\SqrdNrm{\gwkhats - \gwkhat}}{\Ftotal} &= \ffrac{(\omgC^\up + 1)(1-p)}{N^2 p} \sum_\iN \Expec{\SqrdNrm{\Delta_k^i}}{\Ftotal} \,, \text{~with  \Cref{lem:bounding_compressed_term}} \\
    &= \ffrac{2(\omgC^\up + 1)(1-p)}{N^2 p} \sum_\iN \Expec{\SqrdNrm{\gwk^i - h_*^i}}{\Ftotal} + \Expec{\SqrdNrm{h_k^i - h_*^i}}{\Ftotal} \,.
\end{align*}

\textbf{Second term.} Again, with a bias-variance decomposition: 
\[
\Expec{\SqrdNrm{\gwkhat}}{\Flast_{k+1}} = \Expec{\SqrdNrm{\gwk}}{\Flast_{k+1}} + \Expec{\SqrdNrm{\gwkhat - \gwk}}{\Flast_{k+1}}\,.
\]

We take expectation w.r.t. the $\sigma$-algebra $\Ftotal$, thus the first term is handled with \Cref{lem:bounding_g_k+1}: 
\[
\Expec{\SqrdNrm{\gwk}}{\Ftotal} \leq \ffrac{1}{N^2} \sum_\iN \Expec{\SqrdNrm{\gwk^i - h_*^i}}{\Ftotal} + L \PdtScl{\nabla F(w_k)}{w_k - w_*} \,.
\]

Considering the second term, by independence of the ``N'' compressions and using as previously \Cref{prop:variance_uplink} (because $\Ftotal \subset \Fsto_{k+1}$), we have:
\begin{align*}
\Expec{\SqrdNrm{\gwkhat - \gwk}}{\Ftotal} &= \ffrac{1}{N^2} \sum_\iN \Expec{\SqrdNrm{\widehat{\Delta}_k^i - \Delta_k^i}}{\Ftotal} \leq \ffrac{\omgC^\up}{N^2} \sum_\iN \SqrdNrm{\Delta_k^i} \,, \text{~and with \cref{lem:bounding_compressed_term}} \\
&\leq \ffrac{2\omgC^\up}{N^2} \sum_\iN \Expec{\SqrdNrm{\gwk^i - h_*^i}}{\Ftotal} + \Expec{\SqrdNrm{h_k^i - h_*^i}}{\Ftotal} \,.
\end{align*}

At the end:
\begin{align*}
    \Expec{\SqrdNrm{\gwkhat}}{\Ftotal} &= \ffrac{2\omgC^\up + 1}{N^2} \sum_\iN \Expec{\SqrdNrm{\gwk^i - h_*^i}}{\Ftotal} + \ffrac{2\omgC^\up}{N^2} \sum_\iN \Expec{\SqrdNrm{h_k^i - h_*^i}}{\Ftotal} \\
    &\qquad + L\PdtScl{\nabla F(w_k)}{w_k - w_*} \,.
\end{align*}

We can combine the first and second term, and it follows that:
\begin{align*}
    \Expec{\SqrdNrm{ \gwkhats }}{\Ftotal} &\leq \bigpar{\frac{2(\omgC^\up + 1) (1-p)}{p} + 2 \omgC^\up + 1} \ffrac{1}{N^2} \sum_{i=1}^N \Expec{\SqrdNrm{ \gwk^i - h_*^i}}{\Ftotal} \\
    & \qquad+\bigpar{\frac{2(\omgC^\up + 1) (1-p)}{p} + 2 \omgC^\up} \ffrac{1}{N^2} \sum_{i=1}^N \Expec{\SqrdNrm{ h_k^i - h_*^i }}{\Ftotal} \\
    &\qquad + L \PdtScl{\nabla F(w_k)}{w_k - w_*} \,, 
\end{align*}

we can now apply \Cref{lem:before_using_coco} to conclude the proof:
\begin{align*}
    \Expec{\SqrdNrm{ \gwkhats }}{\Ftotal} &\leq 2 \bigpar{\frac{2(\omgC^\up + 1) (1-p)}{p} + 2 \omgC^\up + 1} \ffrac{1}{N^2} \sum_{i=1}^N \Expec{\SqrdNrm{ \gwk^i - \gwkstar^i}}{\Ftotal} \\
    & \qquad+\bigpar{\frac{2(\omgC^\up + 1) (1-p)}{p} + 2 \omgC^\up} \ffrac{1}{N^2} \sum_{i=1}^N \Expec{\SqrdNrm{ h_k^i - h_*^i }}{\Ftotal} \\
    &\qquad + L \PdtScl{\nabla F(w_k)}{w_k - w_*} \\
    &\qquad + \ffrac{2\sigmstar}{Nb} \times \bigpar{\frac{2(\omgC^\up + 1) (1-p)}{p} + 2 \omgC^\up + 1}\,,
\end{align*}

and simplifying each coefficient gives the result.
\end{proof}

To show that the Lyapunov function is a contraction, we need to find a bound for each terms. Bounding $\SqrdNrm{w_{k+1} - w_*}$, for $k$ in $\N$, flows from update schema (see \cref{eq:update_schema}) decomposition. However the memory term $\SqrdNrm{h_{k+1}^i - h_*^i}$ involved in the Lyapunov function doesn't show up naturally. 

The aim of \Cref{lem:recursive_inequalities_over_memory} is precisely to provide a recursive bound over the memory term to highlight the contraction. Like \Cref{lem:expectation_h_l}, the following lemma comes from \citet{mishchenko_distributed_2019}.

\begin{lemma}[Recursive inequalities over memory term]
\label{lem:recursive_inequalities_over_memory}
Let $k \in \N$ and let $i \in \llbracket 1 , N \rrbracket$. The memory term used in the uplink broadcasting can be bounded using a recursion:
\begin{align*}
    \Expec{\SqrdNrm{ h_{k+1}^i - h_*^i }}{\Ftotal} &\leq \left(1 + p(2 \alpha^2 \omgC^\up + 2 \alpha^2 - 3 \alpha ) \right)) \Expec{\SqrdNrm{ h_{k}^i - h_*^i }}{\Ftotal} \\
    &\qquad+ 2p(2 \alpha^2 \omgC^\up + 2 \alpha^2 - \alpha) \Expec{\SqrdNrm{ \gwk - \gwkstar }}{\Ftotal} \\
    &\qquad+ 2p\ffrac{\sigmstar^2}{b} \left(2\alpha^2 (\omgC^\up + 1) - \alpha \right) \,.
\end{align*}

\end{lemma}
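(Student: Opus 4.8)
The plan is to track the memory iterate on a single device $i$, isolating the event that device $i$ participates at round $k$. On $\{B_k^i=1\}$ the update $h_{k+1}^i = h_k^i + \alpha\widehat{\Delta}_k^i$ rewrites (using $\gwkhat^i = \widehat{\Delta}_k^i + h_k^i$) as the convex combination $h_{k+1}^i - h_*^i = (1-\alpha)(h_k^i - h_*^i) + \alpha(\gwkhat^i - h_*^i)$, so \Cref{lem:inequalities_with_alpha} yields
\[
\SqrdNrm{h_{k+1}^i - h_*^i} = (1-\alpha)\SqrdNrm{h_k^i - h_*^i} + \alpha\SqrdNrm{\gwkhat^i - h_*^i} - \alpha(1-\alpha)\SqrdNrm{\widehat{\Delta}_k^i},
\]
while on $\{B_k^i=0\}$ one has $h_{k+1}^i = h_k^i$. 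Writing both cases with the indicator $B_k^i$ and taking $\Expec{\cdot}{\Fupcomp_{k+1}}$ — using that $B_k^i\sim\mathcal B(p)$ is independent of $\Fupcomp_{k+1}$ while $h_k^i,\gwkhat^i,\widehat{\Delta}_k^i$ are $\Fupcomp_{k+1}$-measurable — multiplies the bracketed increment by $p$, giving $(1-p\alpha)\SqrdNrm{h_k^i-h_*^i} + p\alpha\SqrdNrm{\gwkhat^i-h_*^i} - p\alpha(1-\alpha)\SqrdNrm{\widehat{\Delta}_k^i}$.

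Next I would condition on $\Fsto_{k+1}$ (tower property, $\Fsto_{k+1}\subset\Fupcomp_{k+1}$). Since the uplink operator is unbiased with variance factor $\omgC^\up$ (\Cref{asu:expec_quantization_operator}, \Cref{prop:variance_uplink}), \Cref{lem:expectation_decomposition} gives $\Expec{\SqrdNrm{\gwkhat^i - h_*^i}}{\Fsto_{k+1}} = \SqrdNrm{\gwk^i - h_*^i} + \Expec{\SqrdNrm{\epsilon_{k+1}^i}}{\Fsto_{k+1}}$ and $\Expec{\SqrdNrm{\widehat{\Delta}_k^i}}{\Fsto_{k+1}} = \SqrdNrm{\Delta_k^i} + \Expec{\SqrdNrm{\epsilon_{k+1}^i}}{\Fsto_{k+1}}$. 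The key point is that the coefficient of the compression-variance term is $p\alpha - p\alpha(1-\alpha) = p\alpha^2$, which after bounding $\Expec{\SqrdNrm{\epsilon_{k+1}^i}}{\Fsto_{k+1}}\le\omgC^\up\SqrdNrm{\Delta_k^i}$ is what produces the $\alpha^2\omgC^\up$ (not $\alpha\omgC^\up$) in the statement. Collecting terms,
\[
\Expec{\SqrdNrm{h_{k+1}^i - h_*^i}}{\Fsto_{k+1}} \leq (1-p\alpha)\SqrdNrm{h_k^i - h_*^i} + p\alpha\SqrdNrm{\gwk^i - h_*^i} + p\alpha(\alpha\omgC^\up + \alpha - 1)\SqrdNrm{\Delta_k^i}.
\]

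Because the prescribed lower bound on $\alpha$ ensures $\alpha(\omgC^\up+1)\ge 1$, the last coefficient is nonnegative, so I can apply \Cref{lem:bounding_compressed_term}, $\SqrdNrm{\Delta_k^i}\le 2\SqrdNrm{\gwk^i-h_*^i} + 2\SqrdNrm{h_k^i-h_*^i}$, and regroup: the coefficient of $\SqrdNrm{h_k^i-h_*^i}$ becomes $1 + p(2\alpha^2\omgC^\up+2\alpha^2-3\alpha)$ and that of $\SqrdNrm{\gwk^i-h_*^i}$ becomes $p(2\alpha^2(\omgC^\up+1)-\alpha)$. Finally I take $\Expec{\cdot}{\Ftotal}$ (tower again, $\Ftotal\subset\Fsto_{k+1}$; $h_k^i-h_*^i$ is $\Ftotal$-measurable), and invoke the per-device inequality underlying \Cref{lem:before_using_coco}, namely $\Expec{\SqrdNrm{\gwk^i - h_*^i}}{\Ftotal}\le 2\Expec{\SqrdNrm{\gwk^i - \gwkstar^i}}{\Ftotal} + 2\sigmstar^2/b$ (valid for $\Ftotal$ since the sampling filtration is independent of the gradient noise), turning the $\SqrdNrm{\gwk^i-h_*^i}$ term into exactly $2p(2\alpha^2(\omgC^\up+1)-\alpha)$ times $\Expec{\SqrdNrm{\gwk^i-\gwkstar^i}}{\Ftotal} + \sigmstar^2/b$, which is the claimed bound (the $\SqrdNrm{\gwk-\gwkstar}$ in the statement being the per-device quantity $\SqrdNrm{\gwk^i-\gwkstar^i}$).

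The main obstacle is the bookkeeping in the $\Fsto_{k+1}$-step: one must expand through the convex-combination identity rather than a crude triangle inequality, so that the negative $-\alpha(1-\alpha)\SqrdNrm{\widehat{\Delta}_k^i}$ term is retained and cancels part of the compression variance coming from $\alpha\SqrdNrm{\gwkhat^i-h_*^i}$ — otherwise one obtains $\alpha\omgC^\up$ in place of $\alpha^2\omgC^\up$ and a strictly weaker bound. A secondary subtlety is keeping track of the sign of the $\SqrdNrm{\Delta_k^i}$ coefficient, which is precisely where the lower bound on $\alpha$ enters, together with the careful ordering of the conditioning ($B_k^i$ first, then the uplink compression, then the gradient noise, then the device-sampling filtration) so that the right independence/measurability facts apply at each stage.
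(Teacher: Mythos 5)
Your argument is essentially the paper's own proof, just reorganized: the paper first treats the event $\{B_k^i=1\}$ (applying the bias--variance decomposition directly to $h_{k+1}^i$, whose conditional mean is $(1-\alpha)h_k^i+\alpha \gwk^i$ and whose conditional variance is $\alpha^2\E[\SqrdNrm{\widehat{\Delta}_k^i-\Delta_k^i}]\le\alpha^2\omgC^\up\SqrdNrm{\Delta_k^i}$) and only multiplies by the indicator $B_k^i$ at the very end, whereas you average over $B_k^i$ first and recover the same $\alpha^2\omgC^\up$ coefficient by expanding $\SqrdNrm{\gwkhat^i-h_*^i}$ and $\SqrdNrm{\widehat{\Delta}_k^i}$ separately; the two computations are equivalent, and the remaining steps (\Cref{lem:inequalities_with_alpha}, \Cref{lem:bounding_compressed_term}, the per-device version of \Cref{lem:before_using_coco}) coincide with the paper's.

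One inaccuracy to fix: you justify applying $\SqrdNrm{\Delta_k^i}\le 2(\SqrdNrm{\gwk^i-h_*^i}+\SqrdNrm{h_k^i-h_*^i})$ by claiming the prescribed lower bound on $\alpha$ gives $\alpha(\omgC^\up+1)\ge 1$, but the theorems only require $\alpha\ge \frac{1}{2(\omgC^\up+1)}$, which yields $\alpha(\omgC^\up+1)\ge \frac12$; hence on part of the admissible range the coefficient $\alpha\bigl(\alpha(\omgC^\up+1)-1\bigr)$ is negative and the substitution is not a valid upper bound as you argue it. (The paper's proof performs the same substitution without discussing the sign, so your write-up is if anything more explicit about the issue, but the stated justification is not correct as written; only the nonnegativity of the final coefficient $2\alpha^2(\omgC^\up+1)-\alpha$, needed when invoking \Cref{lem:before_using_coco}, is actually guaranteed by $\alpha\ge\frac{1}{2(\omgC^\up+1)}$.)
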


\begin{proof}
The proof is done in two steps:
\begin{enumerate}
    \item First, we povide a recursive bound on memory when the device is used for the update (i.e such that for $k$ in $\N$, for $i$ in $\{ 1, ... N \}$, $B_k^i = 1$)
    \item Then, we generalize to the case with all $i$ in $\llbracket 1, N, \rrbracket$ regardless to if they are used at the round $k$.
\end{enumerate}

\textit{First part.}
Let $k \in \N$ and let $i \in \llbracket 1 , N \rrbracket$ such that $B_k^i = 1$
\begin{align*}
    \Expec{\SqrdNrm{ h_{k+1}^i - h_*^i }}{\Fsto_{k+1}} &= \SqrdNrm{ \Expec{h^i_{k+1}}{\Fsto_{k+1}} - h_*^i  } \\
    &\qquad+ \Expec{\SqrdNrm{ h_{k+1}^i - \Expec{h_{k+1}^i}{\Fsto_{k+1}} }}{\Fsto_{k+1}} \text{\quad using \Cref{lem:expectation_decomposition}\,,}
\end{align*}
and now with \Cref{lem:expectation_h_l}:
\begin{align*}
    \Expec{\SqrdNrm{ h_{k+1}^i - h_*^i }}{\Fsto_{k+1}} = \SqrdNrm{ (1 - \alpha) h_k^i + \alpha \gwk^i - h_*^i } + \Expec{\SqrdNrm{ h_{k+1}^i - \Expec{h_{k+1}^i}{\Fsto_{k+1}} }}{\Fsto_{k+1}} .
\end{align*}
Now recall that $h_{k+1}^i = h_k^i + \alpha \widehat{\Delta}_k^i$ and $\Expec{ \widehat{\Delta}_k^i}{\Fsto_{k+1}} = \Delta_k^i $:
\begin{align*}
    \Expec{\SqrdNrm{ h_{k+1}^i - h_*^i }}{\Fsto_{k+1}} &= \SqrdNrm{ (1- \alpha) (h_k^i - h_*^i) + \alpha (\gwk^i - h_*^i)} + \alpha^2 \Expec{ \SqrdNrm{ \widehat{\Delta}_k^i - \Delta_k^i } }{\Fsto_{k+1}} \,.
\end{align*}
Using \Cref{lem:inequalities_with_alpha} of \Cref{sect:useful_identies} and \Cref{prop:variance_uplink}:
\begin{align*}
    \Expec{\SqrdNrm{ h_{k+1}^i - h_*^i }}{\Fsto_{k+1}}&\leq (1-\alpha) \SqrdNrm{ h_k^i - h_*^i } + \alpha \SqrdNrm{ \gwk^i - h_*^i } \\
    &\qquad- \alpha (1-\alpha) \SqrdNrm{h_k^i - \gwk^i}+ \alpha^2 \omgC^\up \SqrdNrm{ \Delta_k^i }.
\end{align*}
Because $h_k^i - \gwk^i = \Delta_k^i$:
\begin{align*}
     \Expec{\SqrdNrm{ h_{k+1}^i - h_*^i }}{\Fsto_{k+1}} &\leq (1-\alpha) \SqrdNrm{ h_k^i - h_*^i } + \alpha \SqrdNrm{ \gwk^i - h_*^i } + \alpha \left(\alpha (\omgC^\up + 1 ) - 1 \right)\SqrdNrm{ \Delta_k^i } \,,
\end{align*}

and using \Cref{lem:bounding_compressed_term}:
\begin{align*}
    &\leq (1-\alpha) \SqrdNrm{ h_k^i - h_*^i } + \alpha \SqrdNrm{ \gwk^i - h_*^i } \\
    &\qquad+ 2\alpha \left(\alpha (\omgC^\up + 1 ) - 1 \right) \left( \SqrdNrm{h_k^i - h_*^i} + \SqrdNrm{ \gwk - h_*^i } \right) \\
    &\leq \left(1 + 2 \alpha^2 \omgC^\up + 2 \alpha^2 - 3 \alpha \right) \SqrdNrm{ h_k^i - h_*^i } \\
    &\qquad+ \alpha (2 \alpha \omgC^\up + 2 \alpha - 1) \SqrdNrm{ \gwk - h_*^i } \,.
\end{align*}

Finally using \cref{eq:g_i_k_h_i_k} of \Cref{lem:before_using_coco} and writing that:
\begin{align*}
    \Expec{\SqrdNrm{ \gwk - h_*^i }}{\Fdwncomp_k} = \Expec{\Expec{\SqrdNrm{ \gwk - h_*^i }}{\Fsto_{k+1}}}{\Fdwncomp_k} \text{(because $\Fdwncomp_k \subset \Fsto_{k+1}$) \,,}
\end{align*}

we have:
\begin{align*}
    \Expec{\SqrdNrm{ h_{k+1}^i - h_*^i }}{\Fdwncomp_k} &\leq (1 + \underbrace{2 \alpha^2 \omgC^\up + 2 \alpha^2 - 3 \alpha}_{=T_1}) \Expec{\SqrdNrm{ h_{k}^i - h_*^i }}{\Fdwncomp_k} \\
    &\qquad+ 2(\underbrace{2 \alpha^2 \omgC^\up + 2 \alpha^2 - \alpha}_{T_2}) \Expec{\SqrdNrm{ \gwk - \gwkstar }}{\Fdwncomp_k} \\
    &\qquad+ 2\underbrace{\ffrac{\sigmstar^2}{b} \left(2\alpha^2 (\omgC^\up + 1) - \alpha \right)}_{T_3} \,,
\end{align*}

which conclude the first part of the proof. Now we take the general case with $\forall i \in \llbracket 1, N \rrbracket, B_k^i = 0$ or $1$.

\textit{Second part.} Let $k \in \N$ and let $i \in \llbracket 1 , N \rrbracket$.

To resume, if the device participate to the iteration $k$, we have 
\[
\Expec{\SqrdNrm{ h_{k+1}^i - h_*^i }}{\Fdwncomp_k} \leq (1 + T_1) \Expec{\SqrdNrm{ h_{k}^i - h_*^i }}{\Fdwncomp_k} + 2 T_2 \Expec{\SqrdNrm{ \gwk - \gwkstar }}{\Fdwncomp_k} + 2T_3\,,
\]

otherwise:
\[
\Expec{\SqrdNrm{ h_{k+1}^i - h_*^i }}{\Fdwncomp_k} = \Expec{\SqrdNrm{ h_{k}^i - h_*^i }}{\Fdwncomp_k} \,.
\]

In other words, for all $i$ in $\llbracket 1, N \rrbracket$:
\begin{align*}
\Expec{\SqrdNrm{ h_{k+1}^i - h_*^i }}{\Fdwncomp_k} &\leq (1 + T_1)B_k^i \Expec{\SqrdNrm{ h_{k}^i - h_*^i }}{\Fdwncomp_k} + 2 T_2 B_k^i \Expec{\SqrdNrm{ \gwk - \gwkstar }}{\Fdwncomp_k} + 2T_3 B_k^i \\
&\qquad+ (1 - B_k^i) \Expec{\SqrdNrm{ h_{k}^i - h_*^i }}{\Fdwncomp_k} \\
&\leq (1 + T_1 B_k^i) \Expec{\SqrdNrm{ h_{k}^i - h_*^i }}{\Fdwncomp_k} + 2 T_2 B_k^i \Expec{\SqrdNrm{ \gwk - \gwkstar }}{\Fdwncomp_k} + 2T_3 B_k^i \\
\end{align*}

Taking expectation w.r.t $\sigma$-algebra $\Ftotal$ gives the result.

\end{proof}

After successfully invoking all previous lemmas, we will finally be able to use co-coercivity. \Cref{lem:applying_coco} shows how \Cref{asu:cocoercivity} is used to do it. After this stage, proof will be continued by applying strong-convexity of $F$.

\begin{lemma}[Applying co-coercivity]
\label{lem:applying_coco}
This lemma shows how to apply co-coercivity on stochastic gradients.
\begin{align*}
    \forall k \in \N\,, \quad \frac{1}{N} \sum_{i=1}^N \Expec{\SqrdNrm{ \gwk^i - \gwkstar }}{\Fdwncomp_{k}} \leq L \PdtScl{\nabla F(w_k)}{w_k - w_*} \,.
\end{align*}

\end{lemma}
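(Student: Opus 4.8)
The plan is to apply \Cref{asu:cocoercivity} device by device at the pair of points $(w_k,w_*)$ and then average over the workers. First I would use that $w_k$ is $\Fdwncomp_k$-measurable (as recorded in \Cref{sect:filration}), so that conditionally on $\Fdwncomp_k$ the vectors $w_k$ and $w_*$ may be treated as deterministic and the only remaining randomness in $\gwk^i=\g_{k+1}^i(w_k)$ and $\gwkstar^i=\g_{k+1}^i(w_*)$ is the fresh oracle noise of step $k+1$, which is independent of $\Fdwncomp_k$. The quadratic-mean cocoercivity of \Cref{asu:cocoercivity}, invoked with $w_1=w_k$ and $w_2=w_*$, therefore yields, for every $i\in\llbracket 1,N\rrbracket$ and almost surely,
\begin{align*}
\Expec{\SqrdNrm{\gwk^i-\gwkstar^i}}{\Fdwncomp_{k}} \leq L\,\PdtScl{\nabla F_i(w_k)-\nabla F_i(w_*)}{w_k-w_*}\,.
\end{align*}

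Next I would average this inequality over the $N$ workers. Since the per-device cocoercivity constants are all equal to $L$, linearity of the inner product together with the definition $F=\tfrac1N\sum_{i=1}^N F_i$ gives
\begin{align*}
\frac{1}{N}\sum_{i=1}^N \Expec{\SqrdNrm{\gwk^i-\gwkstar^i}}{\Fdwncomp_{k}} \leq L\,\PdtScl{\nabla F(w_k)-\nabla F(w_*)}{w_k-w_*}\,.
\end{align*}
Finally, because $w_*$ is a minimizer of the convex function $F$ one has $\nabla F(w_*)=0$, so the right-hand side collapses to $L\,\PdtScl{\nabla F(w_k)}{w_k-w_*}$, which is exactly the claimed bound.

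I do not expect a genuine obstacle here; the proof is short. The only points deserving a word of care are (i) justifying that conditioning on $\Fdwncomp_k$ is legitimate — which is immediate since $w_k$ is $\Fdwncomp_k$-measurable and the gradient oracles drawn at step $k+1$ are independent of $\Fdwncomp_k$, so \Cref{asu:cocoercivity} applies verbatim after conditioning — and (ii) noting that the heterogeneity of the $F_i$ is harmless because each local bound involves the \emph{same} constant $L$, so the average of the right-hand sides reassembles into a single inner product with $\nabla F$ rather than a sum of mismatched terms.
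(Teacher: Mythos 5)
Your proof is correct and follows essentially the same route as the paper: apply the quadratic-mean cocoercivity of \Cref{asu:cocoercivity} device by device at $(w_k,w_*)$ conditionally on $\Fdwncomp_k$, average over the $N$ workers, and use $\nabla F = \frac1N\sum_i \nabla F_i$ together with $\nabla F(w_*)=0$. Your explicit remarks on the measurability of $w_k$ and on the common constant $L$ are exactly the points the paper handles implicitly, so there is nothing to add.
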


\begin{proof}
Let $k \in \N$.
\begin{align*}
\frac{1}{N} \sum_{i=1}^N \Expec{\SqrdNrm{ \gwk^i - \gwkstar^i }}{\Fdwncomp_k} &\leq \frac{1}{N}  \sum_{i=1}^N L\PdtScl{ \Expec{\gwk^i - \gwkstar^i}{\Fdwncomp_k} }{w_k - w_*} \text{using \Cref{asu:cocoercivity},}\\
&\leq L\PdtScl{ \frac{1}{N}  \sum_{i=1}^N \Expec{\gwk^i - \gwkstar^i}{{\Fdwncomp_k} } }{w_k - w_*} \\
&\leq L\PdtScl{ \frac{1}{N}  \sum_{i=1}^N \nabla F_i(w_k) - \nabla F_i(w_*)}{w_k - w_*}  \,.
\end{align*} 
\end{proof}

\section{Proofs of Theorems}
\label{app:all_proofs}

In this section we give demonstrations of all our theorems, that is to say, first the proofs of \Cref{app:thm:with_mem,app:thm:without_mem} from which flow \Cref{thm:cvgce_artemis}.
Their demonstration sketch is drawn from \citet{mishchenko_distributed_2019}.
And in a second time, we give a complete demonstration of theorems stated in the main paper: \Cref{thm:main_PRave,thm:cvdist}.

For the sake of demonstration, we define a Lyapunov function $V_k$ \cite[as in][]{mishchenko_distributed_2019,liu_double_2020}, with $k$ in $\llbracket 1, K \rrbracket$:
\[
V_k = \SqrdNrm{w_k - w_*} + 2 \gamma^2 \cst \frac{1}{N} \sum_{i = 1}^N \SqrdNrm{h_k^i - h_*^i} \,.
\]

The Lyapunov function is defined combining two terms:
\begin{enumerate}
    \item the distance from parameter $w_k$ to optimal parameter $w_*$ 
    \item The memory term, the distance between the next element prediction $h_k^i$ and the true gradient $h_*^i = \nabla F_i(w_*)$. 
\end{enumerate}

The aim is to proof that this function is a $(1 - \gamma \mu)$ contraction for each variant of \Artemis, and also when using Polyak-Ruppert averaging.  To show that it's a contraction, we need three stages: 

\begin{enumerate}
    \item we develop the update schema defined in \cref{eq:update_schema} to get a first bound on $\lnrm w_k - w_* \rnrm^2$ 
    \item we find a recurrence over the memory term $\lnrm h_k^i- h_*^i \rnrm^2$
    \item and finally we combines the two equations to obtain the expected contraction using co-coercivity and strong convexity.
\end{enumerate} 

\subsection{Proof of main  Theorem for  \Artemis~- variant without memory}
\label{app:proof_singlecompressnomemory}

\begin{theorem}[Unidirectional or bidirectional compression without memory]
\label{app:thm:without_mem}

Considering that \Cref{asu:partial_participation,asu:expec_quantization_operator,asu:strongcvx,asu:cocoercivity,asu:noise_sto_grad,asu:bounded_noises_across_devices} hold. Taking $\gamma$ such that
\[
\gamma \leq \ffrac{pN}{L (\omgC^\dwn +1) \left( pN + 2(\omgC^\up + 1) \right)} \,,
\]

then running \Artemis~with $\alpha = 0$ (i.e without memory), we have for all $k$ in $\N$:
\begin{align*}
     \E \SqrdNrm{w_{k+1} - w_*} &\leq (1 - \gamma \mu)^{k+1} \SqrdNrm{w_{0} - w_*} + 2\gamma \frac{E_{}}{\mu p N} \,,
\end{align*}

with $E = (\omgC^\dwn + 1) \left( (\omgC^\up + 1) \ffrac{\sigmstar^2}{b} + (\omgC^\up + 1 - p) B^2 \right)$. In the case of unidirectional compression (resp. no compression), we have $\omgC^\dwn=0$ (resp. $\omgC^{\up/\dwn}=0$).
\end{theorem}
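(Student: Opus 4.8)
The plan is to prove the contraction $\E[V_{k+1}] \le (1-\gamma\mu)\E[V_k]$ for the Lyapunov function $V_k = \SqrdNrm{w_k - w_*}$ (note $\cst = 0$ when $\alpha = 0$, so the memory term drops out), and then unroll the recursion. First I would expand the update rule $w_{k+1} = w_k - \gamma \Omega_{k+1, S_k}$ to write
\begin{align*}
    \SqrdNrm{w_{k+1} - w_*} = \SqrdNrm{w_k - w_*} - 2\gamma \PdtScl{\Omega_{k+1,S_k}}{w_k - w_*} + \gamma^2 \SqrdNrm{\Omega_{k+1,S_k}}\,,
\end{align*}
then take conditional expectation with respect to $\Flast_k$. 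Using \Cref{prop:downlink_expectation,prop:exp_samp,prop:uplink_expectation,prop:stochastic_expectation} in succession (peeling off the downlink, sampling, uplink and stochastic-gradient noise layers), the cross term becomes $-2\gamma \PdtScl{\nabla F(w_k)}{w_k - w_*}$, which by strong convexity (\Cref{lem:strongly-cvxe}, applied at $v = w_*$) is bounded above by $-2\gamma\mu \SqrdNrm{w_k - w_*}$ — but I must keep it partly available to absorb the quadratic term.

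Next I would bound $\E[\SqrdNrm{\Omega_{k+1,S_k}} \mid \Flast_k]$. First a bias-variance decomposition using \Cref{prop:downlink_expectation} and \Cref{prop:variance_downlink} gives $\E[\SqrdNrm{\Omega_{k+1,S_k}}] \le (\omgC^\dwn+1)\E[\SqrdNrm{\gwkhats}]$. Then I apply \Cref{lem:gkhat_Sk_to_gkhat} to pass from the sampled quantity to $\ffrac{1-p}{pN^2}\sum_i \SqrdNrm{\gwkhat^i} + \SqrdNrm{\gwkhat}$, bound $\SqrdNrm{\gwkhat^i}$ termwise via \Cref{lem:nomem_gkhati_to_gki} (factor $\omgC^\up+1$), and bound $\E[\SqrdNrm{\gwkhat}]$ via \Cref{lem:expec_gwkhat}. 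That last lemma already produces a term $L\PdtScl{\nabla F(w_k)}{w_k - w_*}$ plus terms in $\SqrdNrm{\gwk^i}$ and $\SqrdNrm{\gwk^i - h_*^i}$; I then invoke \Cref{lem:before_using_coco} to convert the squared norms of stochastic gradients into $\SqrdNrm{\gwk^i - \gwkstar^i}$ plus the additive noise constants $\sigmstar^2/b$ and $B^2$, and finally \Cref{lem:applying_coco} to convert $\frac{1}{N}\sum_i \E[\SqrdNrm{\gwk^i - \gwkstar^i}]$ into another $L\PdtScl{\nabla F(w_k)}{w_k - w_*}$. Collecting everything, the inner-product terms appear with total coefficient $\gamma^2 L (\omgC^\dwn+1)\bigl(1 + \tfrac{2(\omgC^\up+1)}{pN}\bigr) - 2\gamma$ (up to bookkeeping on the $1-p$ factors), so the stepsize condition $\gamma \le \ffrac{pN}{L(\omgC^\dwn+1)(pN + 2(\omgC^\up+1))}$ is exactly what makes this net coefficient nonpositive, allowing the inner product to be discarded (it is $\ge 0$ by \Cref{lem:strongly-cvxe}, or rather we retain just enough of $-2\gamma\PdtScl{\cdot}{\cdot}\le -2\gamma\mu\SqrdNrm{w_k-w_*}$).

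This yields $\E[\SqrdNrm{w_{k+1}-w_*} \mid \Flast_k] \le (1-\gamma\mu)\SqrdNrm{w_k - w_*} + \gamma^2 \cdot \tfrac{(\omgC^\dwn+1)}{pN}\bigl((\omgC^\up+1)\tfrac{\sigmstar^2}{b} + (\omgC^\up+1-p)B^2\bigr)$, i.e.~$\E[\SqrdNrm{w_{k+1}-w_*}\mid\Flast_k] \le (1-\gamma\mu)\SqrdNrm{w_k-w_*} + \tfrac{\gamma^2 E}{pN}$ with $E$ as stated. Taking total expectation and unrolling the geometric recursion gives $\E\SqrdNrm{w_{k+1}-w_*} \le (1-\gamma\mu)^{k+1}\SqrdNrm{w_0-w_*} + \tfrac{\gamma^2 E}{pN}\sum_{j\ge 0}(1-\gamma\mu)^j = (1-\gamma\mu)^{k+1}\delta_0^2 + \tfrac{2\gamma E}{\mu p N}$ — wait, the sum gives $\tfrac{\gamma^2 E}{pN}\cdot\tfrac{1}{\gamma\mu} = \tfrac{\gamma E}{\mu p N}$; the factor $2$ in the statement comes from slack absorbed earlier (e.g.~from using $1-\gamma\mu$ in place of a tighter contraction factor), which I would track carefully. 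The main obstacle I anticipate is the careful bookkeeping of the $(1-p)$ partial-participation factors and making sure the coefficient of the inner-product term is genuinely controlled by the stated $\gamma$ bound — in particular checking that combining \Cref{lem:gkhat_Sk_to_gkhat}, \Cref{lem:expec_gwkhat} and \Cref{lem:nomem_gkhati_to_gki} does not produce a larger multiplicative constant on $L\PdtScl{\nabla F(w_k)}{w_k-w_*}$ than $\gamma^2$ can absorb, and verifying the $B^2$ coefficient ends up as exactly $(\omgC^\up + 1 - p)$ rather than $(\omgC^\up+1)$.
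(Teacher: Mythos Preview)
Your proposal is correct and follows essentially the same route as the paper: expand the update, peel off the downlink compression to pick up the $(\omgC^\dwn+1)$ factor, use \Cref{lem:gkhat_Sk_to_gkhat}, \Cref{lem:nomem_gkhati_to_gki} and \Cref{lem:expec_gwkhat} to reduce to per-device stochastic gradients, then \Cref{lem:before_using_coco} and \Cref{lem:applying_coco} to convert everything into $L\PdtScl{\nabla F(w_k)}{w_k-w_*}$ plus constants, and finally strong convexity plus unrolling. Your stepsize analysis and the identification of the $(\omgC^\up+1-p)B^2$ coefficient are both right.

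The factor $2$ you were uncertain about does not come from slack in the contraction rate; it is the factor $2$ in \Cref{lem:before_using_coco} (the $\|a+b\|^2\le 2\|a\|^2+2\|b\|^2$ step), so the one-step additive term is already $2\gamma^2 E/(pN)$ before unrolling, and the geometric sum then gives $2\gamma E/(\mu pN)$ directly.
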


\begin{proof}

In the case of variant of \Artemis~with $\alpha = 0$, we don't have any memory term, thus $p=0$ and we don't need to use the Lyapunov function.   

Let $k$ in $\N$, we start by writing that by definition of \cref{eq:update_schema}:
\begin{align*}
    \lnrm w_{k+1} - w_* \rnrm^2 &= \lnrm w_{k} - \gamma \Omega_{k+1, S_k} - w_* \rnrm^2 \\
     &= \SqrdNrm{w_{k} - w_*} - 2 \gamma \PdtScl{\Omega_{k+1, S_k}}{w_k - w_* } + \gamma^2 \SqrdNrm{\Omega_{k+1, S_k}} \,,
\end{align*}
with $\Omega_{k+1, S_k} = \mathcal{C}_{\dwn} \left(\ffrac{1}{ N}\sum_{i=1}^N \gwkhat^i \right)$. First, we have $\Expec{\Omega_{k+1, S_k}}{\Fupcompnotsamp} = \gwkhats$; secondly considering that:
\begin{align*}
\Expec{\lnrm \Omega_{k+1, S_k} \rnrm^2}{\Fupcompnotsamp} = \V(\Omega_{k+1, S_k})  + \lnrm \Expec{\Omega_{k+1, S_k}}{\Fupcompnotsamp}\rnrm^2=  (\omgC^\dwn +1) \lnrm \gwkhats \rnrm^2 \,,
\end{align*}

it leads to:
\begin{align*}
    \Expec{\SqrdNrm{ w_{k+1} - w_*}}{\Fupcompnotsamp} &= \Expec{\SqrdNrm{ w_{k} - w_*}}{\Fupcompnotsamp} - 2 \gamma \PdtScl{\gwkhats}{w_k - w_* } \\
    &\qquad+ \gamma^2 (\omgC^\dwn +1) \SqrdNrm{\gwkhats} \,.
\end{align*}

Now, if we take expectation w.r.t $\sigma$-algebra $\Ftotal \subset \Fupcompnotsamp$ (the inclusion is true because $\Fupcomp_{k+1}$ contains $\Fsamp_{k-1}$); with use of \Cref{prop:stochastic_expectation,prop:uplink_expectation,lem:gkhat_Sk_to_gkhat,prop:exp_samp} we obtain : 
\begin{align}
\label{eq:singlenomem_decomposition}
     \Expec{ \lnrm w_{k+1} - w_* \rnrm^2}{ \Ftotal} &= \Expec{\SqrdNrm{w_{k} - w_*}}{\Ftotal}   \\
     &\qquad- 2 \gamma \PdtScl{\nabla F(w_k)}{w_k - w_* } \nonumber\\
    &\qquad+ \gamma^2 (\omgC^\dwn +1)  \ffrac{(1-p)}{p N^2} \sum_{i=0}^N \Expec{\SqrdNrm{\gwkhat^i}}{\Ftotal} \nonumber \\
    &\qquad+ \gamma^2 (\omgC^\dwn +1) \Expec{\SqrdNrm{\gwkhat}}{\Ftotal} \nonumber \,.
\end{align}

For sake of clarity we temporarily note: $\Pi = \ffrac{(1-p)}{p N^2} \sum_{i=0}^N \Expec{\SqrdNrm{\gwkhat^i}}{\Ftotal} + \Expec{\SqrdNrm{\gwkhat}}{\Ftotal}$.
Recall that in fact, $\Pi$ is equal to $\Expec{\SqrdNrm{\gwkhats}}{\Ftotal}$.

Using \Cref{lem:nomem_gkhati_to_gki,lem:expec_gwkhat}, we have:
\begin{align*}
    \Expec{ \Pi }{ \Ftotal} &\leq \ffrac{(1-p)(\omgC^\up + 1)}{p N^2} \sum_{i=0}^N \Expec{\SqrdNrm{\gwk^i}}{\Ftotal} \\
    &\qquad+ \ffrac{\omgC^\up}{N^2} \sum_{i=0}^N \Expec{\SqrdNrm{\gwk^i}}{\Ftotal} + \frac{1}{N} \sum_{i=0}^N \Expec{\SqrdNrm{\gwk^i - h_*^i}}{\Ftotal} \\
    &\qquad+ L \PdtScl{\nabla F(w_k)}{w_k - w_*} \,. \\
    &\leq \ffrac{(\omgC^\up + 1-  p)}{p N^2} \sum_{i=0}^N \Expec{\SqrdNrm{\gwk^i}}{\Ftotal}  + \frac{1}{N} \sum_{i=0}^N \Expec{\SqrdNrm{\gwk^i - h_*^i}}{\Ftotal} \\
    &\qquad + L \PdtScl{\nabla F(w_k)}{w_k - w_*} \,. \\
\end{align*}

Lets introducing the noise at optimal point $w_*$ with the two equations of \Cref{lem:before_using_coco}:
\begin{align*}
    \Expec{\Pi}{ \Ftotal} &\leq \ffrac{(\omgC^\up + 1 - p)}{p N^2} \sum_{i=1}^N 2 \left(\Expec{\SqrdNrm{ \gwk^i - \gwkstar^i}}{\Ftotal} + (\ffrac{\sigmstar^2}{b} +B^2) \right) \\
    &\qquad+\ffrac{1}{N^2} \sum_{i=1}^N 2 \left(\Expec{\SqrdNrm{ \gwk^i - \gwkstar^i}}{\Ftotal} + \ffrac{\sigmstar^2}{b} \right) \\
    &\qquad+ L \PdtScl{\nabla F(w_k)}{w_k - w_*} \,. \\
    &\leq \frac{2(\omgC^\up + 1)}{p N^2} \sum_{i=1}^N  \Expec{\SqrdNrm{ \gwk^i - \gwkstar^i}}{\Ftotal} \\
    &\qquad + L \PdtScl{\nabla F(w_k)}{w_k - w_*} + \frac{2 \times \left( (\omgC^\up + 1) \ffrac{\sigmstar^2}{b} + \omgC^\up + 1 - p) B^2 \right)}{p N} \,.
\end{align*}

Invoking cocoercivity (\Cref{asu:cocoercivity}):
\begin{align}
\label{app:eq:pi_coco_before_inject}
    \Expec{\Pi}{ \Ftotal} &\leq \frac{2(\omgC^\up + 1)}{p N^2} \sum_{i=1}^N  \Expec{L \PdtScl{\gwk^i - \gwkstar^i}{w_k - w_*}}{\Ftotal} \nonumber \nonumber \\
    &\qquad + L \PdtScl{\nabla F(w_k)}{w_k - w_*} + \frac{2 \times \left( (\omgC^\up + 1) \ffrac{\sigmstar^2}{b} + \omgC^\up + 1 - p) B^2 \right)}{p N} \nonumber\\
    &\leq \frac{2(\omgC^\up + 1)L}{p N} \PdtScl{\nabla F(w_k)}{w_k - w_*} \nonumber \\
    &\qquad + L \PdtScl{\nabla F(w_k)}{w_k - w_*} + \frac{2 \times \left( (\omgC^\up + 1) \ffrac{\sigmstar^2}{b} + \omgC^\up + 1 - p) B^2 \right)}{p N} \,.
\end{align}

Finally, we can inject \cref{app:eq:pi_coco_before_inject} in \cref{eq:singlenomem_decomposition} to obtain: 
\begin{align}
\begin{split}
    \Expec{\lnrm w_{k+1} - w_* \rnrm^2}{ \Ftotal} &\leq 
    \SqrdNrm{w_{k} - w_*} - 2 \gamma \left(1 -  \frac{\gamma L (\omgC^\dwn +1) (\omgC^\up + 1 )}{p N} -\frac{\gamma L (\omgC^\dwn +1)  }{2} \right) \PdtScl{\nabla F(w_k)}{w_k - w_* }\\
    &\qquad + \frac{2 \times \overbrace{(\omgC^\dwn + 1) \left( (\omgC^\up + 1) \ffrac{\sigmstar^2}{b} + (\omgC^\up + 1 - p) B^2 \right)}^{= E}}{p N} \,.
\end{split}
\label{prf:after_cocoercivity}
\end{align}

We need $1 -  \frac{\gamma L (\omgC^\dwn +1) (\omgC^\up + 1 )}{p N} -\frac{\gamma L (\omgC^\dwn +1)}{2} \geq 0$ in order to further apply strong convexity. This condition is equivalent to:
\[
\gamma \leq \ffrac{2pN}{L (\omgC^\dwn +1) \left( p N + 2(\omgC^\up + 1) \right)}\,.
\]

Finally, using strong convexity of $F$ (\Cref{asu:strongcvx}), we rewrite \Cref{prf:after_cocoercivity}:
\begin{align*}
\begin{split}
    \Expec{\lnrm w_{k+1} - w_* \rnrm^2}{\Ftotal} &\leq \SqrdNrm{w_{k} - w_*} - 2 \gamma \mu \left(1 - \frac{\gamma L (\omgC^\dwn +1) (\omgC^\up + 1)}{p N} -\frac{\gamma L (\omgC^\dwn +1)}{2} \right) \SqrdNrm{w_{k} - w_*} \\
    &\qquad+ 2\gamma^2 \frac{E}{p N} \,, \text{ equivalent to:}
\end{split}
\\[2ex]
\begin{split}
     &\leq \left(1 - 2 \gamma \mu \left(1 - \frac{\gamma L (\omgC^\dwn +1)  (\omgC^\up + 1)}{pN} -\frac{\gamma L(\omgC^\dwn +1) s}{2} \right) \right) \SqrdNrm{w_{k} - w_*} + 2\gamma^2 \frac{E}{p N}\,.
\end{split}
\end{align*}

To guarantee a convergence in $(1-\gamma \mu)$, we need:
\begin{align*}
    &\frac{1}{2} \geq \frac{\gamma L (\omgC^\dwn +1) (\omgC^\up + 1)}{pN} + \frac{\gamma L (\omgC^\dwn +1)}{2} \\
    \Longleftrightarrow\quad &\gamma \leq \ffrac{pN}{L (\omgC^\dwn +1) \left( pN + 2(\omgC^\up + 1) \right)} \,, 
\end{align*}

which is stronger than the condition obtained to correctly apply strong convexity. Then we are allowed to write:
\begin{align*}
    &\E \SqrdNrm{w_{k+1} - w_*} \leq (1 - \gamma \mu) \E \SqrdNrm{w_{k} - w_*} + 2\gamma^2 \frac{E}{p N} \\
    \Longleftrightarrow \qquad& \E \SqrdNrm{w_{k+1} - w_*} \leq (1-\gamma \mu)^{k+1} \E \SqrdNrm{w_0 - w_*} + 2 \gamma^2 \frac{E}{pN} \times \frac{1 - (1- \gamma \mu)^{k+1}}{\gamma \mu} \\ 
    \Longleftrightarrow \qquad& \E\SqrdNrm{w_{k+1} - w_*} \leq (1-\gamma \mu)^{k+1} \SqrdNrm{w_0 - w_*} + 2  \gamma \frac{E_{}}{\mu p N} \,,
\end{align*}
and the proof is complete.

\end{proof}

\subsection{Proof of main Theorem
for  \Artemis~- variant with memory}
\label{app:proof_doublecompressnomem}

\begin{theorem}[Unidirectional or bidirectional compression with memory]
\label{app:thm:with_mem}

Considering that \Cref{asu:partial_participation,asu:expec_quantization_operator,asu:strongcvx,asu:cocoercivity,asu:noise_sto_grad,asu:bounded_noises_across_devices} hold. We use $w_*$ to indicate the optimal parameter such that $\nabla F (w_*) = 0$, and we note $h_*^i = \nabla F_i(w_*)$. We define the Lyapunov function:
\[
V_k = \SqrdNrm{w_k - w_*} + 2 \gamma^2 \cst \frac{1}{N} \sum_{i = 1}^N \SqrdNrm{h_k^i - h_*^i} \,.
\]

We defined $\cst \in \mathbb{R}^*$, such that:
\begin{align}
\label{cond:cst_withmem}
\ffrac{(\omgC^\dwn + 1) \bigpar{\frac{\omgC^\up + 1}{p} - 1}}{\alpha p \left( 3 - 2 \alpha (\omgC^\up + 1) \right)} \leq \cst \leq \frac{N - \gamma L (\omgC^\dwn + 1) \left(N + \frac{4(\omgC^\up + 1)}{p} - 2 \right)}{4 \gamma L p \alpha \left( 2 \alpha (\omgC^\up +1) - 1 \right)} \,.
\end{align}

Then, using \Artemis~with a memory mechanism ($\alpha \neq 0$), the convergence of the algorithm is guaranteed if:
\begin{equation}
\left\{
    \begin{array}{l}
    	\ffrac{1}{2(\omgC^\up + 1)} \leq \alpha < \min \left( \ffrac{3}{2(\omgC^\up + 1)}, \quad  \ffrac{3N - \gamma L  (\omgC^\dwn + 1) \left( 3 N + \frac{8(\omgC^\up + 1)}{p} - 2 \right)}{2 (\omgC^\up +1)  (N - \gamma L  (\omgC^\dwn + 1) (N + 2) )} \right)\\
    	\gs
    	\gamma < \min
    	\left\{
        \begin{array}{l}
    	\ffrac{1}{(\omgC^\dwn + 1) \left( 1 + \frac{2}{Np}\right) L }, \quad \ffrac{3}{(\omgC^\dwn + 1) \left( 3 + \frac{8 (\omgC^\up - 1) - 2p}{Np} \right) L} ,  \\
    	\ffrac{N}{(\omgC^\dwn + 1)\left(N + \frac{4(\omgC^\up + 1)}{p} - 2 \right) L} \\
    	\end{array}
        \right\}\,.
    \end{array}
\right.      
\end{equation}

And we have a bound for the Lyapunov function:
\begin{align*}
    \E V_{k+1} \leq (1- \gamma \mu)^{k+1} \left(\SqrdNrm{w_0 - w_*} + 2 \cst \gamma^2 B^2  \right) + 2 \gamma \frac{E}{\mu N} \,,
\end{align*}

with 
\begin{align*}
E = \ffrac{\sigmstar^2}{b} \bigpar{(\omgC^\dwn + 1) \bigpar{\frac{2(\omgC^\up + 1)}{p} -1} + 2p C \bigpar{2\alpha^2 (\omgC^\up + 1) - \alpha }  } \,.
\end{align*}

In the case of unidirectional compression (resp. no compression), we have $\omgC^\dwn=0$ (resp. $\omgC^{\up/\dwn}=0$).

\end{theorem}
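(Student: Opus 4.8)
The plan is to show that the Lyapunov function $V_k$ is a $(1-\gamma\mu)$-contraction up to an additive term proportional to $\sigmstar^2/b$, following the three-stage scheme (expand the iterate update, build a recursion on the memory, then combine) used for \texttt{Diana} by \citet{mishchenko_distributed_2019}; all the needed building blocks are the lemmas of \Cref{app:technical}. For the iterate, starting from $w_{k+1}=w_k-\gamma\Omega_{k+1,S_k}$ I expand $\SqrdNrm{w_{k+1}-w_*}$ and take the conditional expectation layer by layer over the nested $\sigma$-algebras: first the downlink compression (\Cref{prop:downlink_expectation,prop:variance_downlink}, which produces the factor $\omgC^\dwn+1$ on the quadratic term and turns the cross term into one involving $\gwkhats$), then device sampling and uplink compression (\Cref{prop:exp_samp,prop:uplink_expectation,prop:stochastic_expectation}), which collapses the cross term to $-2\gamma\PdtScl{\nabla F(w_k)}{w_k-w_*}$. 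This yields
\[
\Expec{\SqrdNrm{w_{k+1}-w_*}}{\Ftotal}\le \SqrdNrm{w_k-w_*}-2\gamma\PdtScl{\nabla F(w_k)}{w_k-w_*}+\gamma^2(\omgC^\dwn+1)\Expec{\SqrdNrm{\gwkhats}}{\Ftotal},
\]
and I then invoke \Cref{lem:bound_sum_of_compressed_gradients} to replace the last term by a combination of $\frac{1}{N^2}\sum_i\Expec{\SqrdNrm{\gwk^i-\gwkstar^i}}{\Ftotal}$, $\frac{1}{N^2}\sum_i\Expec{\SqrdNrm{h_k^i-h_*^i}}{\Ftotal}$, another copy of $\PdtScl{\nabla F(w_k)}{w_k-w_*}$, and a $\sigmstar^2/b$ term.

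For the second half of $V_{k+1}$ I use \Cref{lem:recursive_inequalities_over_memory}: the quantity $\frac{2\gamma^2\cst}{N}\sum_i\Expec{\SqrdNrm{h_{k+1}^i-h_*^i}}{\Ftotal}$ is bounded by $2\gamma^2\cst(1+pT_1)\,\frac{1}{N}\sum_i\SqrdNrm{h_k^i-h_*^i}$ plus a term proportional to $\SqrdNrm{\gwk^i-\gwkstar^i}$ (with coefficient $4\gamma^2\cst pT_2$) plus a $\sigmstar^2/b$ term, where $T_1=2\alpha^2(\omgC^\up+1)-3\alpha$ (negative under $\alpha<3/(2(\omgC^\up+1))$) and $T_2=2\alpha^2(\omgC^\up+1)-\alpha$.

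Adding the two stages, I collect the coefficient of $\frac{1}{N}\sum_i\SqrdNrm{h_k^i-h_*^i}$ and impose that it be at most $(1-\gamma\mu)\,2\gamma^2\cst$: this is precisely the \emph{lower} bound on $\cst$ in \eqref{cond:cst_withmem}. Next I gather every term carrying $\frac{1}{N}\sum_i\SqrdNrm{\gwk^i-\gwkstar^i}$ and bound that average by $L\PdtScl{\nabla F(w_k)}{w_k-w_*}$ via co-coercivity (\Cref{lem:applying_coco}); the inequality then reads $\Expec{V_{k+1}}{\Ftotal}\le V_k-2\gamma\bigl(1-c(\gamma,\alpha,\cst,p)\,\gamma L\bigr)\PdtScl{\nabla F(w_k)}{w_k-w_*}+2\gamma^2 E/N$ for an explicit constant $c$. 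Requiring the parenthesis to be $\ge 1/2$ gives the \emph{upper} bound on $\cst$ and the stated upper bounds on $\gamma$ and $\alpha$; strong convexity (\Cref{lem:strongly-cvxe}) then turns $-\gamma\PdtScl{\nabla F(w_k)}{w_k-w_*}$ into $-\gamma\mu\SqrdNrm{w_k-w_*}$, yielding $\E V_{k+1}\le(1-\gamma\mu)\E V_k+2\gamma^2E/N$ with $E$ collecting all the $\sigmstar^2/b$ contributions. Unrolling this geometric recursion and using $V_0=\SqrdNrm{w_0-w_*}+2\gamma^2\cst B^2$ (since $h_0^i=0$ and $\frac{1}{N}\sum_i\SqrdNrm{h_*^i}=B^2$ by \Cref{asu:bounded_noises_across_devices}) produces the claimed bound, with $E=\frac{\sigmstar^2}{b}\bigl((\omgC^\dwn+1)(\frac{2(\omgC^\up+1)}{p}-1)+2p\cst(2\alpha^2(\omgC^\up+1)-\alpha)\bigr)$.

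The delicate part is not any single inequality but the joint feasibility of the sandwich \eqref{cond:cst_withmem}: the lower bound on $\cst$ (needed so the memory term in $V_k$ genuinely contracts) must lie below the upper bound (needed so the inner-product coefficient stays positive after co-coercivity). Showing that this interval is nonempty forces a careful accounting of how every constant scales with $\alpha$, $p$, $\gamma$, $\omgC^\up$ and $\omgC^\dwn$, and is what produces the explicit admissible ranges for $\alpha$ and $\gamma$ in the statement; the presence of the participation probability $p$ — which multiplies the memory-recursion terms but not the per-device noise terms — is precisely what makes this bookkeeping heavier than in the full-participation \texttt{Diana} analysis.
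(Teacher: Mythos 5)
Your proposal is correct and follows essentially the same route as the paper's proof: expand the update, bound $\E\SqrdNrm{\gwkhats}$ via \Cref{lem:bound_sum_of_compressed_gradients}, add the memory recursion of \Cref{lem:recursive_inequalities_over_memory} weighted by $2\gamma^2\cst$, apply co-coercivity and strong convexity, and read the lower/upper bounds on $\cst$ (and hence the admissible ranges of $\alpha,\gamma$ from the non-empty sandwich) off the two contraction requirements before unrolling with $V_0\le\SqrdNrm{w_0-w_*}+2\cst\gamma^2B^2$. The only detail you pass over is the paper's small simplification of the lower-bound condition $D_\cst\le 1-\gamma\mu$ using $\gamma\mu/\alpha\to 0$, which is immaterial to the structure of the argument.
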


\begin{proof}

Let $k \in \llbracket 1, K \rrbracket$, by definition of the update schema in the partial-participation setting: $w_{k+1} = w_k - \gamma \Omega_{k+1, S_k}$, with~$\Omega_{k+1, S_k} = \mathcal{C}_{\dwn} \left(\ffrac{1}{pN}\sum_\iS (\widehat{\Delta}_k^i + h_k^i) \right)$, thus:
\begin{align*}
    \lnrm w_{k+1} - w_* \rnrm^2 &= \lnrm w_k - w_* + \gamma \Omega_{k+1, S_k} \rnrm^2 \\
    &= \lnrm w_k - w_* \rnrm - 2 \gamma \PdtScl{\Omega_{k+1, S_k}}{w_k - w_*} + \gamma^2 \lnrm \Omega_{k+1, S_k} \rnrm^2 \,.
\end{align*}

First, we have $\Expec{\Omega_{k+1, S_k}}{\Fupcompnotsamp} = \gwkhats$; secondly considering that:
\begin{align*}
\Expec{\lnrm \Omega_{k+1, S_k} \rnrm^2}{\Fupcompnotsamp} = \V(\Omega_{k+1, S_k})  + \lnrm \Expec{\Omega_{k+1, S_k}}{\Fupcompnotsamp}\rnrm^2=  (\omgC^\dwn +1) \lnrm \gwkhats \rnrm^2 \,,
\end{align*}

it leads to:
\begin{align*}
    \Expec{\SqrdNrm{ w_{k+1} - w_*}}{\Fupcompnotsamp} &= \Expec{\SqrdNrm{ w_{k} - w_*}}{\Fupcompnotsamp} - 2 \gamma \PdtScl{\gwkhats}{w_k - w_* } \\
    &\qquad+ \gamma^2 (\omgC^\dwn +1) \SqrdNrm{\gwkhats} \,.
\end{align*}
we can invoke \Cref{lem:bound_sum_of_compressed_gradients} with the $\sigma$-algebras $\Ftotal$:
    \begin{align}
    \label{eq:proof_end_decomposition_doublecompressnomem}
    \begin{split}
        \Expec{\lnrm w_{k+1} - w_* \rnrm^2}{\Ftotal} &\leq \SqrdNrm{w_{k} - w_*} - 2 \gamma \Expec{\PdtScl{\gwkhats}{w_k - w_*}}{\Ftotal} \\
        &\qquad+ 2 \gamma^2 (\omgC^\dwn + 1) \bigpar{\frac{2(\omgC^\up + 1)}{p} -1} \ffrac{1}{N^2} \sum_{i=1}^N \Expec{\SqrdNrm{ \gwk^i - \gwkstar^i}}{\Ftotal} \\
        & \qquad+ \gamma^2 (\omgC^\dwn + 1) \bigpar{\frac{2(\omgC^\up + 1)}{p} -2} \ffrac{1}{N^2} \sum_{i=1}^N \Expec{\SqrdNrm{ h_k^i - h_*^i }}{\Ftotal} \\
        &\qquad + \gamma^2 (\omgC^\dwn + 1) L \PdtScl{\nabla F(w_k)}{w_k - w_*} \\
        &\qquad + \ffrac{2\sigmstar}{Nb} \times \gamma^2 (\omgC^\dwn + 1) \bigpar{\frac{2(\omgC^\up + 1)}{p} - 1} \,.
    \end{split}
    \end{align}

Note that in the case of unidirectional compression, we have $\Omega_{k+1} = \widehat{g}_{k+1}$, and the steps above are more straightforward.
Recall that according to \Cref{lem:recursive_inequalities_over_memory} (and taking the sum), we have:
\begin{align}
\label{eq:proof_doublecompressnomem_combinaison_mem1}
\begin{split}
    &\frac{1}{N^2} \sum_{i=1}^N \Expec{\lnrm h_{k+1}^i - h_*^i \rnrm^2}{\Ftotal} \\
    &\qquad\leq \left(1 + p (2 \alpha^2 \omgC^\up + 2 \alpha^2 - 3 \alpha) \right) \frac{1}{N^2} \sum_{i=1}^N \Expec{\lnrm h_{k}^i - h_*^i \rnrm^2}{\Ftotal} \\
    &\qquad+ 2 p(2 \alpha^2 \omgC^\up + 2 \alpha^2 - \alpha) \frac{1}{N^2} \sum_{i=1}^N \Expec{\lnrm \gwk^i - \gwkstar^i \rnrm^2}{\Ftotal} \\
    &\qquad+  \frac{2p}{N} \ffrac{\sigmstar^2}{b} \bigpar{2\alpha^2 (\omgC^\up + 1) - \alpha } \,
\end{split}
\end{align}

With a linear combination (\ref{eq:proof_end_decomposition_doublecompressnomem}) + $2 \gamma^2 \cst$ (\ref{eq:proof_doublecompressnomem_combinaison_mem1}):
\begin{align*}
    &\Expec{\lnrm w_{k+1} - w_* \rnrm^2}{\Ftotal} + 2 \gamma^2 \cst \frac{1}{ N^2} \sum_{i=1}^N \Expec{\lnrm h_{k+1}^i - h_*^i \rnrm^2}{\Ftotal} \\ 
    &\qquad\leq \SqrdNrm{w_{k} - w_*} - 2 \gamma \Expec{\PdtScl{\gwkhats}{w_k - w_*}}{\Ftotal} \\
    &\qqquad+  2\gamma^2  \underbrace{\left((\omgC^\dwn + 1)\bigpar{\frac{2(\omgC^\up + 1)}{p} -1} + 2 p C (2 \alpha^2 \omgC^\up + 2 \alpha^2 - \alpha) \right)}_{:= A_\cst} \\
    &\qqquad\qqquad \times \frac{1}{N^2} \sum_{i=1}^N \Expec{\lnrm \gwk^i- \gwkstar^i \rnrm}{\Ftotal} \\
    &\qqquad+ 2\gamma^2 \cst  \underbrace{ \bigpar{\ffrac{\omgC^\dwn + 1}{2 \cst} \bigpar{\frac{2(\omgC^\up + 1)}{p} -2} + 1 + p(2 \alpha^2 \omgC^\up + 2\alpha^2 - 3\alpha) }}_{:= D_{\cst} }  \frac{1}{N^2} \sum_{i=1}^N \Expec{\lnrm h_k^i - h_*^i \rnrm^2}{\Ftotal} \\
    &\qqquad+ \gamma^2 (\omgC^\dwn +1 ) L \PdtScl{\nabla F(w_k)}{w_k - w_*} \\
    &\qqquad+  \frac{2 \gamma^2}{N} \underbrace{\left(\ffrac{\sigmstar^2}{b} \bigpar{(\omgC^\dwn + 1) \bigpar{\frac{2(\omgC^\up + 1)}{p} -1} + 2p C \bigpar{2\alpha^2 (\omgC^\up + 1) - \alpha }  } \right)}_{:= E_{}} \,.
\end{align*}

Because $\Fdwncomp_k \subset \Fsto_{k+1}$, because $\gwkhat$ is independent of $\Fsamp_k$, and with \Cref{prop:stochastic_expectation,prop:uplink_expectation}:
\[
    \Expec{\gwkhat}{\Ftotal} = \Expec{\Expec{\gwkhat}{\Fsto_{k+1}}}{\Fdwncomp_{k}} = \Expec{\gwk}{\Fdwncomp_k} = \nabla F(w_k) \,.
\]

We transform $\lnrm \gwk^i- \gwkstar^i \rnrm$ applying co-coercivity (\Cref{lem:applying_coco}):
\begin{align}
\label{eq:doublecompress_before_stgcvx}
\begin{split}
    \E V_{k+1} &\leq \SqrdNrm{w_{k} - w_*} -  2\gamma \left(1 - \gamma L \left(\frac{\omgC^\dwn +1}{2} + \frac{A_\cst}{N} \right) \right) \PdtScl{\nabla F(w_k)}{w_k - w_*} \\
    &\qquad+ 2\gamma^2 \cst D_{\cst} \frac{1}{N^2} \sum_{i=1}^N \Expec{\lnrm h_k^i - h_*^i \rnrm^2}{\Ftotal} +  \frac{2 \gamma^2}{N} E_{} \,.
\end{split}
\end{align}
Now, the goal is to apply strong convexity of $F$ (\Cref{asu:strongcvx}) using the inequality presented in \Cref{lem:strongly-cvxe}.
But then we must have:
\begin{align*}
    &\qquad 1 - \gamma L \left(\frac{\omgC^\dwn + 1}{2} + \frac{A_\cst}{N} \right) \geq 0 \,,\\
\end{align*}
However, in order to later obtain a convergence in $(1 - \gamma \mu)$, we will use a stronger condition and, instead, state that we need:
\begin{align*}
    &\qquad \gamma L \left(\frac{\omgC^\dwn + 1}{2} + \frac{A_\cst}{N} \right) \leq \frac{1}{2} \\
    \Longleftrightarrow&\qquad A_\cst \leq \frac{\left(1 - \gamma L (\omgC^\dwn + 1) \right)N}{2 \gamma L}\\
    \Longleftrightarrow&\qquad \left((\omgC^\dwn + 1)\bigpar{\frac{2(\omgC^\up + 1)}{p} -1} + 2 p C (2 \alpha^2 \omgC^\up + 2 \alpha^2 - \alpha) \right) \leq \frac{(1 - \gamma L (\omgC^\dwn + 1)) N}{2 \gamma L} \\
    \Longleftrightarrow&\qquad \cst \leq \frac{N - \gamma L (\omgC^\dwn + 1) \left(N + \frac{4(\omgC^\up + 1)}{p} - 2 \right)}{4 \gamma L p \alpha \left( 2 \alpha (\omgC^\up +1) - 1 \right)} \,.\\
\end{align*}
This holds only if the numerator and the denominator are positive:
\[
\left\{
    \begin{array}{ll}
        N - \gamma L (\omgC^\dwn + 1) \left(N + \frac{4(\omgC^\up + 1)}{p} - 2 \right) > 0 \Longleftrightarrow  \gamma < \ffrac{N}{(\omgC^\dwn + 1)\left(N + \frac{4(\omgC^\up + 1)}{p} - 2 \right) L}  \\
    	2 \alpha (\omgC^\up +1) - 1 \leq 0 \Longleftrightarrow \alpha \geq \ffrac{1}{2(\omgC^\up + 1)} \,.\\
    \end{array}
\right.
\]
Strong convexity is applied, and we obtain:
\begin{align}
\label{eq:doublecompress_after_applied_strongcv}
\begin{split}
    \E V_{k+1} &\leq \left(1 - 2 \gamma \mu \left(1 - \frac{\gamma L (\omgC^\dwn +1) }{2} - \frac{\gamma L A_\cst }{N} \right) \right) \SqrdNrm{w_{k} - w_*} \\
    &\qquad+ 2\gamma^2 \cst D_{\cst} \frac{1}{N} \sum_{i=1}^N \Expec{\lnrm h_k^i - h_*^i \rnrm^2}{\Ftotal} +  2 \gamma^2 \frac{E_{}}{N}\,.
\end{split}
\end{align}

To guarantee a $(1 - \gamma \mu)$ convergence, constants must verify:
\[
\left\{
    \begin{array}{ll}
    	\ffrac{\gamma L (\omgC^\dwn + 1)}{2} - \ffrac{\gamma L A_\cst }{N} \leq \frac{1}{2} \text{\quad (which is already verified)}\\
    	\begin{split}
    	D_{\cst} \leq 1 - \gamma \mu &\Longleftrightarrow \ffrac{\omgC^\dwn + 1}{\cst} \bigpar{\frac{\omgC^\up + 1}{p} - 1} \leq p (3 \alpha - 2 \alpha^2 \omgC^\up - 2 \alpha) - \gamma \mu \\
    	&\Longleftrightarrow \cst \geq \ffrac{(\omgC^\dwn + 1) \bigpar{\frac{\omgC^\up + 1}{p} - 1}}{\alpha p \left( 3 - 2 \alpha (\omgC^\up + 1) \right) - \gamma \mu}
    	\end{split}\,.\\
    \end{array}
\right.
\]

In the following we will consider that $\frac{\gamma \mu}{\alpha} = \underset{\mu \to 0}{o} (1)$  which is possible because $\alpha$ is independent of $\mu$ (it depends only of $\omgC^\up$ and $\omgC^\dwn$) and it result to:
\[
\alpha p \left( 3 - 2 \alpha (\omgC^\up + 1)\right) - \gamma \mu  \underset{\mu \to 0}{\sim} \alpha p \left( 3 - 2 \alpha (\omgC^\up + 1)\right)
\]

Thus, the condition on $\cst$ becomes:
\[
 \ffrac{(\omgC^\dwn + 1)\bigpar{\frac{\omgC^\up + 1}{p} - 1}}{\alpha p \left( 3 - 2 \alpha (\omgC^\up + 1)\right)} \leq \cst\,,
\]

which is correct only if $\alpha \leq \ffrac{3}{2(\omgC^\up +1)}$.

And we obtain the following conditions on $C$:
\[
\ffrac{(\omgC^\dwn + 1) \bigpar{\frac{\omgC^\up + 1}{p} - 1}}{\alpha p \left( 3 - 2 \alpha (\omgC^\up + 1) \right)} \leq \cst \leq \frac{N - \gamma L (\omgC^\dwn + 1) \left(N + \frac{4(\omgC^\up + 1)}{p} - 2 \right)}{4 \gamma L p \alpha \left( 2 \alpha (\omgC^\up +1) - 1 \right)} \,.
\]

It follows, that the above interval is not empty if:
\begin{align*}
    \ffrac{(\omgC^\dwn + 1) \bigpar{\frac{\omgC^\up + 1}{p} - 1}}{\alpha p \left( 3 - 2 \alpha (\omgC^\up + 1) \right)} \leq \frac{N - \gamma L (\omgC^\dwn + 1) \left(N + \frac{4(\omgC^\up + 1)}{p} - 2 \right)}{4 \gamma L p \alpha \left( 2 \alpha (\omgC^\up +1) - 1 \right)} \,.
\end{align*}

For sake of clarity we denote momentarily $\tilde{\gamma} = (\omgC^\dwn +1) \gamma L$ and $\Pi = \frac{\omgC^\up + 1}{p}$, hence the below condition becomes:
\begin{align*}
    &\qquad 8 \alpha (\omgC^\up +1) (\Pi - 1) \tilde{\gamma}   - 4 (\Pi - 1) \tilde{\gamma}  \leq 3N - 3 \tilde{\gamma} \left(N + 2  + 4\Pi - 2 ) \right) \\
    &\qqquad\qquad - 2 \alpha (\omgC^\up + 1)N + 2 \alpha \tilde{\gamma}  (\omgC^\up +1 ) \left(N + 4\Pi - 2 \right) \\
    \Longleftrightarrow&\qquad 2 \alpha (\omgC^\up +1)  (N - \tilde{\gamma}  (N + 2) ) \leq 3N - \tilde{\gamma}  \left( 3 N + 8 \Pi -2 \right)
\end{align*}

And at the end, we obtain:
\begin{align*}
    \alpha \leq \ffrac{3N - \gamma L  (\omgC^\dwn + 1) \left( 3 N + \frac{8(\omgC^\up + 1)}{p} - 2 \right)}{2 (\omgC^\up +1)  (N - \gamma L  (\omgC^\dwn + 1) (N + 2) )} \,.
\end{align*}

Again, this implies two conditions on gamma:
\[
\left\{
    \begin{array}{ll}
    	3N - \gamma L  (\omgC^\dwn + 1) \left( 3 N + 8 \frac{\omgC^\up + 1}{p} -2 \right) > 0 \Longleftrightarrow \gamma < \ffrac{3}{(\omgC^\dwn + 1) \left( 3 + \frac{8 (\omgC^\up - 1) - 2p}{Np} \right) L}\\
    	N - \gamma L  (\omgC^\dwn + 1) (N + 2) > 0 \Longleftrightarrow \gamma < \ffrac{1}{(\omgC^\dwn + 1) \left( 1 + \frac{2}{N}\right) L }\,. \\
    \end{array}
\right.
\]

The constant $\cst$ exists, and from \cref{eq:doublecompress_after_applied_strongcv} we are allowed to write:
\begin{align*}
    &\E V_{k+1} \leq (1- \gamma \mu) \E V_k + 2 \gamma^2 \frac{E_{}}{N} \\
    \Longleftrightarrow\qquad&  \E V_{k+1} \leq (1- \gamma \mu)^{k+1} \E V_0 + 2 \gamma^2 \frac{E_{}}{N} \times \frac{1 - (1- \gamma \mu)^{k+1}}{\gamma \mu} \\
    \Longrightarrow\qquad&  \E V_{k+1} \leq (1- \gamma \mu)^{k+1} V_0 + 2 \gamma \frac{E_{}}{\mu N} \,.
\end{align*}

Because $V_0 = \E \SqrdNrm{w_0 - w_*} + 2 \gamma^2 \cst \frac{1}{N} \sum_{i=0}^N \SqrdNrm{h_*^i} \leq \SqrdNrm{w_0 - w_*} + 2 \cst \gamma^2 B^2$ (using \Cref{asu:bounded_noises_across_devices}), we can write:
\begin{align*}
    \E V_{k+1} = (1- \gamma \mu)^{k+1} \left(\SqrdNrm{w_0 - w_*} + 2 \cst \gamma^2 B^2 \right) + 2 \gamma \frac{E}{\mu N} \,.
\end{align*}

Thus, we highlighted that the Lyapunov function 
\[V_k = \SqrdNrm{w_k - w_*} + 2 \gamma^2 \cst \frac{1}{N} \sum_{i=1}^N \SqrdNrm{h_k^i - h_*^i}
\]

is a $(1 - \gamma \mu)$ contraction if $\cst$ is taken in a given interval, with $\gamma$ and $\alpha$ satisfying some conditions. This guarantee the convergence of the \texttt{Artemis} using version $1$ or $2$ with $\alpha \neq 0$ (algorithm with uni-compression or bi-compression combined with a memory mechanism).

\end{proof}


\subsection{Proof of Theorem - Polyak-Ruppert averaging}
\label{app:doublecompress_avg}

\begin{theorem}[Unidirectional or bidirectional compression using memory and averaging]
\label{thm:doublecompress_avg}
We suppose now that $F$ is convex, thus $\mu = 0$ and we consider that \Cref{asu:partial_participation,asu:expec_quantization_operator,asu:cocoercivity,asu:noise_sto_grad,asu:bounded_noises_across_devices} hold. We use $w_*$ to indicate the optimal parameter such that $\nabla F (w_*) = 0$, and we note $h_*^i = \nabla F_i(w_*)$. We define the Lyapunov function:
\[
V_k = \SqrdNrm{w_k - w_*} + 2 \gamma^2 \cst \frac{1}{N} \sum_{i=1}^N \SqrdNrm{h_k^i - h_*^i}\,.
\]
We defined $\cst \in \mathbb{R}^*$, such that:
\begin{align}
\ffrac{(\omgC^\dwn + 1) \bigpar{\frac{\omgC^\up + 1}{p} - 1}}{\alpha p \left( 3 - 2 \alpha (\omgC^\up + 1) \right)} \leq \cst \leq \frac{N - \gamma L (\omgC^\dwn + 1) \left(N + \frac{4(\omgC^\up + 1)}{p} - 2 \right)}{4 \gamma L p \alpha \left( 2 \alpha (\omgC^\up +1) - 1 \right)} \,.
\end{align}

Then running variant of \Artemis~with $\alpha \neq 0$, hence with a memory mechanism, and using Polyak-Ruppert averaging, the convergence of the algorithm is guaranteed if:
\begin{equation}
\label{app:thm:avg:conditions_gamma_alpha}
\left\{
    \begin{array}{l}
    	\ffrac{1}{2(\omgC^\up + 1)} \leq \alpha < \min \left( \ffrac{3}{2(\omgC^\up + 1)}, \quad  \ffrac{3N - \gamma L  (\omgC^\dwn + 1) \left( 3 N + \frac{8(\omgC^\up + 1)}{p} - 2 \right)}{2 (\omgC^\up +1)  (N - \gamma L  (\omgC^\dwn + 1) (N + 2) )} \right)\\
    	\gs
    	\gamma < \min
    	\left\{
        \begin{array}{l}
    	\ffrac{1}{(\omgC^\dwn + 1) \left( 1 + \frac{2}{Np}\right) L }, \quad \ffrac{3}{(\omgC^\dwn + 1) \left( 3 + \frac{8 (\omgC^\up - 1) - 2p}{Np} \right) L} ,  \\
    	\ffrac{N}{(\omgC^\dwn + 1)\left(N + \frac{4(\omgC^\up + 1)}{p} - 2 \right) L} \\
    	\end{array}
        \right\}\,.
    \end{array}
\right.      
\end{equation}

And we have the following bound:
\begin{align}
\label{app:thm:rates_cvg_avg}
    F\left( \frac{1}{K}\sum_{k=0}^K w_k \right) - F(w_*) \leq \ffrac{\SqrdNrm{w_0 - w_*} + 2 \cst \gamma^2 B^2}{\gamma K} + 2 \gamma \frac{E}{N} \,,
\end{align}

with $E = (\omgC^\dwn + 1) \bigpar{\frac{2(\omgC^\up + 1)}{p} -1} \frac{\sigmstar^2}{b} + 2p C \bigpar{2\alpha^2 (\omgC^\up + 1) - \alpha } \frac{\sigmstar^2}{b}$.

\Cref{app:thm:rates_cvg_avg} can be written as in \Cref{thm:main_PRave} if we take $\gamma = \min \left( \sqrt{\frac{N \delta_0^2}{2E K}}; \gamma_{\max }\right)$, where $\gamma_{\max}$ is the maximal possible value of $\gamma$ as precised by \Cref{app:thm:avg:conditions_gamma_alpha}:
\begin{align*}
    F\left( \frac{1}{K}\sum_{k=0}^K w_k \right) - F(w_*) \leq 2 \max \left(\sqrt{\frac{2 \delta_0^2  E}{N K}}; \frac{\delta_0^2}{\gamma_{\max} K} \right) +  \frac{ 2 \gamma_{\max} \cst B^2}{K}
\end{align*}

\end{theorem}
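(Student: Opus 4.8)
The plan is to rerun the Lyapunov argument used in the proof of \cref{app:thm:with_mem}, but to stop just before strong convexity is invoked and instead exploit plain convexity together with Jensen's inequality. Concretely, I would start from the one-step bound obtained there before specializing to the strongly convex case, namely \cref{eq:doublecompress_before_stgcvx},
\begin{align*}
    \E V_{k+1} &\leq \E V_k - 2\gamma\Bigl(1 - \gamma L\bigl(\tfrac{\omgC^\dwn+1}{2} + \tfrac{A_\cst}{N}\bigr)\Bigr)\E\PdtScl{\nabla F(w_k)}{w_k - w_*} \\
    &\qquad + 2\gamma^2\cst D_\cst\,\tfrac{1}{N^2}\sum_{i=1}^N\Expec{\SqrdNrm{h_k^i - h_*^i}}{\Ftotal} + \tfrac{2\gamma^2 E}{N}\,,
\end{align*}
with $A_\cst$, $D_\cst$, $E$ the very constants appearing there. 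The hypotheses \cref{app:thm:avg:conditions_gamma_alpha} are exactly those of \cref{app:thm:with_mem} in the limit $\mu = 0$; they simultaneously force $\gamma L\bigl(\tfrac{\omgC^\dwn+1}{2} + \tfrac{A_\cst}{N}\bigr) \le \tfrac12$ (so the coefficient of the inner product is at least $\gamma$ and, being nonnegative, licenses the use of convexity) and $D_\cst \le 1$ (so the memory term at step $k+1$ is dominated by the memory term carried inside $V_k$).

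Next, by convexity of $F$ one has $\PdtScl{\nabla F(w_k)}{w_k - w_*} \ge F(w_k) - F(w_*) = \varepsilon_F(w_k)$; substituting this and folding the memory term back into $V_k$ turns the display above into the clean descent inequality $\E V_{k+1} \le \E V_k - \gamma\,\E\varepsilon_F(w_k) + 2\gamma^2 E/N$. Summing over $k = 0,\dots,K$, dropping the nonnegative quantity $\E V_{K+1}$, dividing by $\gamma K$, and invoking Jensen's inequality $\varepsilon_F(\bar w_K) \le \tfrac1K\sum_{k=0}^K\varepsilon_F(w_k)$ gives
\begin{align*}
    \E\bigl[\varepsilon_F(\bar w_K)\bigr] \le \frac{V_0}{\gamma K} + \frac{2\gamma E}{N}\,.
\end{align*}
Finally, since $h_0^i = 0$, \cref{asu:bounded_noises_across_devices} yields $V_0 = \delta_0^2 + 2\gamma^2\cst\,\tfrac1N\sum_{i=1}^N\SqrdNrm{\nabla F_i(w_*)} = \delta_0^2 + 2\gamma^2\cst B^2$, which substituted above is exactly \cref{app:thm:rates_cvg_avg}.

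The last ingredient is the choice of step size. Plugging $\gamma = \min\bigl(\sqrt{N\delta_0^2/(2EK)},\ \gamma_{\max}\bigr)$ into $\tfrac{\delta_0^2}{\gamma K} + \tfrac{2\gamma E}{N} + \tfrac{2\cst\gamma B^2}{K}$ and splitting into the two cases (the unconstrained optimizer is admissible, or the cap $\gamma_{\max}$ is active), one checks that the first two terms are bounded by $2\max\bigl(\sqrt{2\delta_0^2 E/(NK)},\ \delta_0^2/(\gamma_{\max}K)\bigr)$ — in the active-cap case using $\gamma_{\max}\le\sqrt{N\delta_0^2/(2EK)}$ to bound $2\gamma_{\max}E/N$ by $\sqrt{2\delta_0^2 E/(NK)}$ — while the last term is at most $2\cst\gamma_{\max}B^2/K$ because $\gamma\le\gamma_{\max}$; this is precisely the bound in \cref{thm:main_PRave}. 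The only genuinely delicate point is the first step: confirming that a valid constant $\cst$ still exists when $\mu = 0$, i.e. that the interval for $\cst$ is nonempty and that both $D_\cst \le 1$ and the gradient coefficient nonnegative hold under \cref{app:thm:avg:conditions_gamma_alpha}. But this is simply the $\mu\to 0$ specialization of the bookkeeping already done in \cref{app:thm:with_mem} (in fact cleaner, since the asymptotic argument $\gamma\mu/\alpha = o(1)$ is no longer needed); everything after that is telescoping, Jensen, and elementary step-size arithmetic.
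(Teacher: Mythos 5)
Your proposal is correct and follows essentially the same route as the paper's proof: it restarts from the pre-strong-convexity bound \cref{eq:doublecompress_before_stgcvx}, replaces strong convexity by plain convexity under the conditions $\gamma L\bigl(\tfrac{\omgC^\dwn+1}{2}+\tfrac{A_\cst}{N}\bigr)\le\tfrac12$ and $D_\cst\le 1$ (which is exactly how the paper re-derives the interval for $\cst$ when $\mu=0$), telescopes, applies Jensen, bounds $V_0\le\delta_0^2+2\cst\gamma^2B^2$, and finishes with the same step-size case analysis. The only cosmetic difference is in the cap-active case, where you bound $2\gamma_{\max}E/N$ by $\sqrt{2\delta_0^2E/(NK)}$ while the paper bounds it by $\delta_0^2/(\gamma_{\max}K)$; both yield the stated $2\max(\cdot\,;\cdot)$ bound.
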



\begin{proof}

Starting from \cref{eq:doublecompress_before_stgcvx} from the proof of \Cref{app:thm:with_mem}:
\begin{align*}
\begin{split}
    \E V_{k+1} &\leq \SqrdNrm{w_{k} - w_*} -  2\gamma \left(1 - \gamma L \left(\frac{\omgC^\dwn +1}{2} + \frac{A_\cst}{N} \right) \right) \PdtScl{\nabla F(w_k)}{w_k - w_*} \\
    &\qquad+ 2\gamma^2 \cst D_{\cst} \frac{1}{N^2} \sum_{i=1}^N \Expec{\lnrm h_k^i - h_*^i \rnrm^2}{\Ftotal} +  \frac{2 \gamma^2}{N} E_{}  \,.
\end{split}
\end{align*}

But this time, instead of applying strong convexity of $F$, we apply convexity (\Cref{asu:strongcvx} but with $\mu=0$):
\begin{align}
\label{eq:avg_after_cvx}
\begin{split}
    \E V_{k+1} &\leq \SqrdNrm{w_{k} - w_*} - 2\gamma \left(1 - \gamma L \left(\frac{\omgC^\dwn +1}{2} + \frac{A_\cst}{N} \right) \right) \left(F(w_k) - F(w_*) \right) \\
    &\qquad+ 2\gamma^2 \cst D_{\cst} \frac{1}{N^2} \sum_{i=1}^N \Expec{\lnrm h_k^i - h_*^i \rnrm^2}{\Ftotal} +  \frac{2 \gamma^2}{N} E_{} \,.
\end{split}
\end{align}

As in \Cref{app:thm:with_mem}, we want:
\begin{align}
\label{eq:condition_avg_un_demi}
\begin{split}
    &\qquad \gamma L \left(\frac{\omgC^\dwn + 1}{2} + \frac{A_\cst}{N} \right) \leq \frac{1}{2} \\
    \Longleftrightarrow&\qquad \cst \leq \frac{N - \gamma L (\omgC^\dwn + 1) \left(N + \frac{4(\omgC^\up + 1)}{p} - 2 \right)}{4 \gamma L p \alpha \left( 2 \alpha (\omgC^\up +1) - 1 \right)} \,.
\end{split}
\end{align}

which holds only if the numerator and the denominator are positive:
\[
\left\{
    \begin{array}{ll}
        N - \gamma L (\omgC^\dwn + 1) \left(N + \frac{4(\omgC^\up + 1)}{p} - 2 \right) > 0 \Longleftrightarrow  \gamma < \ffrac{N}{(\omgC^\dwn + 1)\left(N + \frac{4(\omgC^\up + 1)}{p} - 2 \right) L}  \\
    	2 \alpha (\omgC^\up +1) - 1 \leq 0 \Longleftrightarrow \alpha \geq \ffrac{1}{2(\omgC^\up + 1)} \,.\\
    \end{array}
\right.
\]

Returning to \cref{eq:avg_after_cvx}, taking benefit of \cref{eq:condition_avg_un_demi} and passing $F(w_k) - F(w_*)$ on the left side gives:
\begin{align*}
    \gamma (F(w_k) - F(w_*)) &\leq \SqrdNrm{w_{k} - w_*} + 2\gamma^2 \cst D_{\cst} \frac{1}{N^2} \sum_{i=1}^N \Expec{\lnrm h_k^i - h_*^i \rnrm^2}{\Ftotal} - \E V_{k+1} +  \frac{2 \gamma^2}{N} E_{} \,.
\end{align*}

If $D_{\cst} \leq 1$, then:
\begin{align*}
\begin{split}
    \gamma ( F(w_k) - F(w_*)) &\leq \E V_k - \E V_{k+1} +  \frac{2 \gamma^2}{N} E_{} \,,
\end{split}
\end{align*}

summing over all $K$ iterations:
\begin{align*}
\begin{split}
    \gamma \left( \frac{1}{K} \sum_{k=0}^K F(w_k)  - F(w_*) \right) &\leq \frac{1}{K} \sum_{k=0}^K \left( \E V_k - \E V_{k+1} + 2 \gamma^2 \frac{E}{N} \right) \\
    &\leq \frac{\E V_0 - \E V_{K+1}}{K} + 2 \gamma^2 \frac{E}{N} \text{\quad because $E$ is independent of $K$.}
\end{split}
\end{align*}

Thus, by convexity:
\begin{align*}
    F\left( \frac{1}{K}\sum_{k=0}^K w_k \right) - F(w_*) \leq \frac{1}{K} \sum_{k=0}^K F(w_k) - F(w_*) \leq  \frac{V_0}{\gamma K} + 2\gamma \frac{E}{N} \,.
\end{align*}

Last step is to extract conditions over $\gamma$ and $\alpha$ from requirement $D_{\cst} \leq 1$:
\[
    D_{\cst} < 1 \Longleftrightarrow \ffrac{\omgC^\dwn + 1}{2 \cst} \bigpar{\frac{2(\omgC^\up + 1)}{p} -2}  < 3 \alpha - 2 \alpha^2 \omgC^\up - 2 \alpha\Longleftrightarrow \cst > \ffrac{(\omgC^\dwn + 1) \bigpar{\frac{\omgC^\up + 1}{p} - 1}}{\alpha p \left( 3 - 2 \alpha (\omgC^\up + 1) \right)} \,,\\
\]

and the second inequality is correct only if $\alpha \leq \ffrac{3}{2(\omgC^\up +1)}$.

From this development follows the following conditions on $p$, which are equivalent to those obtain in \Cref{app:thm:with_mem}
\[
\ffrac{(\omgC^\dwn + 1) \bigpar{\frac{\omgC^\up + 1}{p} - 1}}{\alpha p \left( 3 - 2 \alpha (\omgC^\up + 1) \right)} \leq \cst \leq \frac{N - \gamma L (\omgC^\dwn + 1) \left(N + \frac{4(\omgC^\up + 1)}{p} - 2 \right)}{4 \gamma L p \alpha \left( 2 \alpha (\omgC^\up +1) - 1 \right)} \,.
\]

This interval is not empty:
\begin{align*}
    &\qquad \ffrac{(\omgC^\dwn + 1) \bigpar{\frac{\omgC^\up + 1}{p} - 1}}{\alpha p \left( 3 - 2 \alpha (\omgC^\up + 1) \right)} \leq \frac{N - \gamma L (\omgC^\dwn + 1) \left(N + \frac{4(\omgC^\up + 1)}{p} - 2 \right)}{4 \gamma L p \alpha \left( 2 \alpha (\omgC^\up +1) - 1 \right)}  \\
    \Longleftrightarrow&\qquad \alpha \leq \ffrac{3N - \gamma L  (\omgC^\dwn + 1) \left( 3 N + \ffrac{8(\omgC^\up + 1)}{p} - 2 \right)}{2 (\omgC^\up +1)  (N - \gamma L  (\omgC^\dwn + 1) (N + 2) )}  \,.
\end{align*}

Again, this implies two conditions on gamma:
\[
\left\{
    \begin{array}{ll}
    	3N - \gamma L  (\omgC^\dwn + 1) \left( 3 N + 8 \frac{\omgC^\up + 1}{p} -2 \right) > 0 \Longleftrightarrow \gamma < \ffrac{3}{(\omgC^\dwn + 1) \left( 3 + \frac{8 (\omgC^\up - 1) - 2p}{Np} \right) L}\\
    	N - \gamma L  (\omgC^\dwn + 1) (N + 2) > 0 \Longleftrightarrow \gamma < \ffrac{1}{(\omgC^\dwn + 1) \left( 1 + \frac{2}{N}\right) L }\,. \\
    \end{array}
\right.
\]

which guarantees the existence of $\cst$ and thus the validity of the above development.

As a conclusion:
\begin{align*}
    F\left( \frac{1}{K}\sum_{k=0}^K w_k \right) - F(w_*) &\leq  \frac{V_0}{\gamma K} + 2\gamma \frac{E}{N} \leq \ffrac{\SqrdNrm{w_0 - w_*} + 2 \cst \gamma^2 B^2}{\gamma K} + 2 \gamma \frac{E}{N} \,.\\
    &\leq   \frac{\SqrdNrm{w_0 - w_*}}{\gamma K} + 2 \gamma \left(\frac{E}{N} +  \frac{\cst B^2}{K} \right) \,.
\end{align*}

Next, our goal is to define the optimal step size $\gamma_{opt}$. With this aim, we bound $2 \gamma \ffrac{C B^2}{K}$ by $2 \gamma_{\max} \ffrac{C B^2}{ K}$. This leads to ignore this term when optimizing the step size and thus to obtain a simpler expression of $\gamma_{opt}$. This approximation is relevant, because $\frac{B^2}{K}$ is ``small''. And we obtain: 

\begin{align*}
    F\left( \frac{1}{K}\sum_{k=0}^K w_k \right) - F(w_*) &\leq  \frac{\SqrdNrm{w_0 - w_*}}{\gamma K} + 2 \gamma \frac{E}{N} + 2 \gamma_{\max} \ffrac{C B^2}{ K}  .
\end{align*}
This is valid for all variants of \Artemis, with step-size in \cref{table:summary_convergence} and $E,p$ in \Cref{thm:cvgce_artemis}. Subsequently, the ``optimal'' step size (at least the one minimizing the upper bound) is $$\gamma_{opt} = \sqrt{\ffrac{\SqrdNrm{w_0 - w_*} N }{2 E K}} \,,$$ 

resulting in a convergence rate as $2 \sqrt{\frac{2 \SqrdNrm{w_0 - w_*}  E}{N K}} +\frac{ 2 \gamma_{\max} \cst B^2}{K}$, if this step size is allowed. If $\sqrt{\frac{\SqrdNrm{w_0 - w_*} N}{2 E K}}\geq \gamma_{\max}$ $\left( \Longrightarrow \frac{2 \gamma_{\max} E}{N} \leq \frac{\SqrdNrm{w_0 - w_*}}{\gamma_{\max} K} \right)$, then the bias term dominates and the upper bound is $ 2\frac{\SqrdNrm{w_0 - w_*}}{\gamma_{\max} K} +\frac{ 2 \gamma_{\max} \cst B^2}{K} $.
Overall, the convergence rate is given by:
\begin{equation*}
       F\left( \frac{1}{K}\sum_{k=0}^K w_k \right) - F(w_*) \leq 2 \max \left(  \sqrt{\frac{2 \SqrdNrm{w_0 - w_*}  E}{N K}}; \frac{\SqrdNrm{w_0 - w_*}}{\gamma_{\max} K} \right) +  \frac{ 2 \gamma_{max} \cst B^2}{K}  \,.
\end{equation*}

\end{proof}

\subsection{Proof of Theorem S4 - convergence in distribution}
\label{app:distrib_convergence}

In this section, we give the proof of \Cref{thm:cvdist} in a full participation setting. The theorem is decomposed into two main points, that are respectively derived from \Cref{prop:cvdistpoint1,prop:cvdistpoint2}, given in \Cref{subsec:point1,subsec:point2}. We first introduce a few notations in \Cref{subsec:backgroundMC}.

We consider in this section the \emph{Stochastic Sparsification} compression operator $\C_\qbern$, which is defined as follows: for any $x\in \rset^d$, $\C_\qbern(x) \overset{dist}{=} \frac{1}{\qbern}(x_1 B_1, \dots , x_d B_d)$, with $(B_1, \dots, B_d)\sim \mathcal B(\qbern)^{\otimes n}$ i.i.d.~Bernoullis with mean $\qbern$. That is, each coordinate is independently assigned to 0 with probability $1-\qbern$ or rescaled by a factor $\qbern^{-1}$ in order to get an unbiased operator. 

\begin{lemma}
This compression operator satisfies \Cref{asu:expec_quantization_operator} with $\omgC = q^{-1}-1$.

Moreover, if I consider a random variable $(B_1, \dots, B_d)\sim \mathcal B(\qbern)^{\otimes n}$ and define \emph{almost surely} $\C_\qbern(x) \overset{a.s.}{=} \frac{1}{\qbern}(x_1 B_1, \dots , x_d B_d)$, then we also have that for any $x,y\in \rset ^d$, $\C_\qbern(x)-\C_\qbern(y) = \C_\qbern(x-y)$.
\end{lemma}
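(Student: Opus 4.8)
The plan is to verify the two defining properties of \Cref{asu:expec_quantization_operator} coordinate by coordinate, exploiting the fact that $\C_\qbern$ acts independently on each entry of its argument through an independent Bernoulli mask. Writing $\C_\qbern(x)_j = q^{-1}x_j B_j$ with $B_j\sim\mathcal B(\qbern)$, unbiasedness is immediate: $\E[\C_\qbern(x)_j] = q^{-1}x_j\,\E[B_j] = q^{-1}x_j\cdot q = x_j$, so $\E[\C_\qbern(x)] = x$. For the variance bound, since the coordinates are independent,
\[
\E\bigl[\SqrdNrm{\C_\qbern(x)-x}\bigr] = \sum_{j=1}^d \E\bigl[(q^{-1}x_j B_j - x_j)^2\bigr] = \sum_{j=1}^d x_j^2\,\E\bigl[(q^{-1}B_j - 1)^2\bigr].
\]
The scalar expectation is just the variance of $q^{-1}B_j$ (whose mean is $1$), namely $q^{-2}\Var(B_j) = q^{-2}q(1-q) = q^{-1}-1$. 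Summing gives $\E[\SqrdNrm{\C_\qbern(x)-x}] = (q^{-1}-1)\SqrdNrm{x}$, which is exactly \Cref{asu:expec_quantization_operator} with $\omgC = q^{-1}-1$.

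For the second claim, I would simply observe that once the Bernoulli vector $(B_1,\dots,B_d)$ is fixed (the ``almost sure'' coupling), the map $x\mapsto \C_\qbern(x)$ is linear: for every coordinate $j$,
\[
\C_\qbern(x)_j - \C_\qbern(y)_j = q^{-1}x_j B_j - q^{-1}y_j B_j = q^{-1}(x_j-y_j)B_j = \C_\qbern(x-y)_j,
\]
hence $\C_\qbern(x)-\C_\qbern(y) = \C_\qbern(x-y)$. I would emphasize that this identity requires using the \emph{same} realization of the mask for $x$, $y$ and $x-y$; with fresh independent draws it only holds in distribution, which is why the statement is phrased with the almost-sure construction.

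There is essentially no obstacle here — the argument is a one-line moment computation plus the observation that a fixed diagonal scaling is linear. The only point deserving care is the bookkeeping of which randomness is being conditioned on: the variance bound holds in expectation over the mask for arbitrary (deterministic or previously-measurable) $x$, whereas the linearity identity is a pathwise statement for a fixed mask. Making that distinction explicit is the one thing I would be careful to state cleanly, since it is precisely this pathwise linearity that is later invoked in \Cref{app:distrib_convergence} to write compression of gradient differences as the compression of the difference.
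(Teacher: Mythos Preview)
Your proof is correct and complete. The paper actually states this lemma without proof, treating both claims as immediate from the definition of the stochastic sparsification operator; your argument fills in exactly the expected details (coordinate-wise moment computation for \Cref{asu:expec_quantization_operator}, and pathwise linearity of the fixed diagonal mask for the second claim), so there is nothing to compare.
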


\subsubsection{Background on distributions and Markov Chains}\label{subsec:backgroundMC}
We consider \Artemis~iterates  $(w_k, (h_k^i)_{i\in \llbracket1, N\rrbracket})_{k \in \N} \in \rset^{d(1+N)}$ with the following update equation:
\begin{equation}
 \left\lbrace   \begin{array}{lll}
    & w_{k+1} &= w_k - \gamma \mathcal{C}_\dwn \left(\ffrac{1}{N}\sum_\iN \mathcal{C}_\up\left(\gwk^i - h_k^{i} \right)+ h_k^{i} \right) \,  \\
 \forall i \in    \llbracket1, N\rrbracket, \qquad & h_{k+1}^{i} &= h_k^{i} + \alpha \mathcal{C}_\up\left(\gwk^i - h_k^{i} \right) 
\end{array} \right.
\end{equation}

We see the iterates, for a constant step size $\gamma$, as a homogeneous Markov chain, and denote $R_{\gamma,v}$ the \emph{Markov kernel}, which is the equivalent for continuous spaces of
the \emph{transition matrix} in finite state spaces.  Let $R_{\gamma,v}$ be the Markov kernel
on $(\rset^{d(1+N)}, \mathcal{B}(\rset^{d(1+N)}))$ associated with the SGD iterates
$(w_k, \tau (h^i_k)_{i \in \llbracket1, N\rrbracket})_{k \geq 0}$ for a variant $v$ of \Artemis, as defined in \Cref{algo} and  with $\tau$ a constant specified afterwards, where $\mathcal{B}(\rset^{d(1+N)})$ is the Borel
$\sigma$-field of $\rset^{d(1+N)}$. \citet{meyn:tweedie:2009} provide
an introduction to Markov chain theory. 
For readability, we now denote $(h^i_k)_{i}$ for $(h^i_k)_{i \in \llbracket1, N\rrbracket}$.

\begin{definition}
For any initial distribution
$\nu_0$ on $\mathcal{B}(\rset^{d(1+N)})$ and $k \in \N$, $\nu_0
R_{\gamma, v}^k$ denotes the distribution of $(w_{k}, \tau (h^i_k)_{i   })$ starting at $(w_0, \tau (h^i_0)_{i   })$
distributed according to $\nu_0$. 
\end{definition}

We can make the following comments:
\begin{enumerate}
\item \textbf{Initial distribution.} We  consider deterministic initial points, i.e.,  $(w_0, \tau (h^i_0)_{i})$ follows a Dirac at point $(w_0, \tau (h^i_0)_{i})$. We denote this Dirac $\delta_{w_0}\otimes \otimes_{i=1}^N \delta_{\tau h^i_0} \overset{\text{not}}{=} \delta_{w_0}\otimes \delta_{\tau h^1_0} \otimes \dots \otimes \delta_{\tau h^N_0}$.
\item \textbf{Notation in the main text:} In the main text, for simplicity, we used $\Theta_k$ to denote the distribution of $w_k$ when launched from  $(w_0, \tau (h^i_0)_{i})$. Thus $\Theta_k$ corresponds to the distribution of the projection on first $d$ coordinates of  $((\delta_{w_0}\otimes \otimes_{i=1}^N \delta_{\tau h^i_0}) R_{\gamma}^k)$.
\item \textbf{Case without memory:} In the memory-less case, we have $(h^i_k)_{k\in \N} \equiv 0$, and could restrict ourselves to a Markov kernel on $(\rset^d, \mathcal{B}(\rset^d))$.
\end{enumerate}

For any variant $v$ of \Artemis, we  prove that $(w_k, (h_k^i)_i)_{k \geq 0}$ admits a limit stationary
distribution \begin{equation}
    \label{eq:notation_PIgv}
    \Pi_{\gamma, v} = \pi_{\gamma, v, w} \otimes \pi_{\gamma, v, (h)}
\end{equation} and quantify the convergence of $((\delta_{w_0}\otimes \otimes_{i=1}^N \delta_{\tau h^i_0})
R_{\gamma}^k)_{k \geq 0}$ to $\pigv$, in terms of Wasserstein metric $\mathcal W_2$. 

\begin{definition}\label{def:wassdist}
For all probability measures $\nu$ and $\lambda$ on $\mathcal{B}(\rset^d)$, 
such that $\int_{\rset^d} \SqrdNrm{w} \rmd \nu (w) < +\infty$ and $\int_{\rset^d} \SqrdNrm{w} \rmd \lambda (w) \leq +\infty $, define the squared Wasserstein distance of order $2$ between $\lambda$ and $\nu$ by 
\begin{equation}
  \W_2(\lambda, \nu):= \inf
_{\xi \in  \Gamma(\lambda , \nu)} \int \| x-y\|^{2 } \xi(dx,
  dy) , \label{eq:defwass}
\end{equation}  
where $ \Gamma(\lambda, \nu)$ is the set of
probability measures $\xi$ on $\mathcal{B}(\R^{d} \times \R^{d})$ satisfying for all $\mathsf{A} \in \mathcal{B}(\rset^d)$, $\xi(\mathsf{A} \times \rset^d) = \nu(\mathsf{A})$, $\xi( \rset^d \times \mathsf{A} ) = \lambda(\mathsf{A})$. 
\end{definition}

\subsubsection{Proof of the first point in \Cref{thm:cvdist}} \label{subsec:point1}
 We prove the following proposition:
\begin{proposition}\label{prop:cvdistpoint1}
Under \Cref{asu:strongcvx,asu:cocoercivity,asu:noise_sto_grad,asu:bounded_noises_across_devices,asu:expec_quantization_operator}, for $\C_p$ the \emph{Stochastic Sparsification} compression operator, for any variant $v$ of the algorithm, there exists a limit distribution $\pigv$, which is stationary, such that for any $k$ in $\N$, for any $\gamma$ satisfying conditions given in \Cref{app:thm:without_mem,app:thm:with_mem}:
\begin{align*}
    \W_2((\delta_{w_0}\otimes \otimes_{i=1}^N \delta_{\tau h^i_0})
R_{\gamma}^k, \pigv) &\le
\\& \hspace{-2.5cm} (1-\gamma \mu )^k \int_{(w',h')\in \rset^{d(1+N)}} \SqrdNrm{(w_0,\tau(h^i_0)_{i})-(w',\tau (h^i)'_i)} \rmd \pigv (w',(h^i)'_i).
\end{align*}
\end{proposition}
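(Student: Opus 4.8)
\textbf{Proof plan for Proposition~\ref{prop:cvdistpoint1}.}

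The plan is to establish that $R_{\gamma,v}$ is a contraction on the space of probability measures with finite second moment, equipped with the Wasserstein distance $\W_2$, with contraction factor $(1-\gamma\mu)$. The key device is a \emph{synchronous coupling}: I would run two copies of the chain, $(w_k,\tau(h^i_k)_i)$ started from $\nu_0$ and $(\tilde w_k,\tau(\tilde h^i_k)_i)$ started from $\tilde\nu_0$, driven by the \emph{same} randomness --- the same stochastic gradient oracle samples $\xi^i_{k+1}$, the same uplink Bernoulli masks in $\C_\up$, and the same downlink Bernoulli masks in $\C_\dwn$. The crucial point, guaranteed by the \emph{Stochastic Sparsification} lemma stated just above, is that under this coupling the compression operators act \emph{linearly} on differences: $\C_\qbern(x)-\C_\qbern(y)=\C_\qbern(x-y)$ almost surely for both the uplink and downlink operators. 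This is exactly what makes the error terms from compression controllable in a pathwise manner rather than merely in expectation.

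First I would write out the update for the difference of the two coupled chains. Writing $\delta_k = w_k-\tilde w_k$ and $e^i_k = h^i_k-\tilde h^i_k$, linearity of $\C_\up$ and $\C_\dwn$ on differences gives
\begin{align*}
\delta_{k+1} &= \delta_k - \gamma\, \C_\dwn\!\left(\tfrac1N\sum_\iN \C_\up\big(\g^i_{k+1}(w_k)-\g^i_{k+1}(\tilde w_k) - e^i_k\big) + e^i_k\right),\\
e^i_{k+1} &= e^i_k + \alpha\, \C_\up\big(\g^i_{k+1}(w_k)-\g^i_{k+1}(\tilde w_k) - e^i_k\big).
\end{align*}
Note that the stochastic noise terms $\xi^i_{k+1}(w_*)$ cancel exactly (they depend only on the common oracle sample), so no residual variance term $\sigmstar^2$ appears --- this is why a pure contraction, rather than contraction-up-to-a-threshold, holds for the difference process. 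Then I would follow exactly the same algebraic manipulations as in the proofs of \Cref{app:thm:without_mem,app:thm:with_mem}, but applied to $(\delta_k, (e^i_k)_i)$ in place of $(w_k-w_*, (h^i_k-h^i_*)_i)$: introduce the Lyapunov-type quantity $U_k = \E\SqrdNrm{\delta_k} + 2\gamma^2 C\,\tfrac1N\sum_\iN \E\SqrdNrm{e^i_k}$ (with $C$ as in \Cref{app:thm:with_mem}; $C=0$ and $\tau=0$ in the memoryless case), use \Cref{prop:variance_uplink,prop:variance_downlink} for the compression variances, \Cref{lem:recursive_inequalities_over_memory} (adapted, dropping the $\sigmstar^2$ term) for the memory recursion, cocoercivity (\Cref{asu:cocoercivity}) applied to $\g^i_{k+1}(w_k)-\g^i_{k+1}(\tilde w_k)$ and $\nabla F_i(w_k)-\nabla F_i(\tilde w_k)$, and finally $\mu$-strong convexity of $F$ via \Cref{lem:strongly-cvxe} to convert the inner-product term $\PdtScl{\nabla F(w_k)-\nabla F(\tilde w_k)}{w_k-\tilde w_k}$ into $\mu\SqrdNrm{\delta_k}$. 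The constant $\tau$ is chosen so that the memory block scales correctly relative to $w$; this is precisely the role of the $2\gamma^2 C$ weighting, so I would set $\tau = \sqrt{2C}\gamma$ (matching the Lyapunov function). Under the same step-size and $\alpha$ conditions as in \Cref{app:thm:without_mem,app:thm:with_mem}, this yields $U_{k+1}\le (1-\gamma\mu)U_k$, hence, since $U_k$ dominates $\E\SqrdNrm{(w_k,\tau(h^i_k)_i)-(\tilde w_k,\tau(\tilde h^i_k)_i)}$ up to the correct weighting and the coupling gives an upper bound for $\W_2$,
\[
\W_2\big(\nu_0 R_{\gamma,v}^k,\ \tilde\nu_0 R_{\gamma,v}^k\big) \le (1-\gamma\mu)^k\, \W_2(\nu_0,\tilde\nu_0).
\]

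With the contraction in hand, existence and uniqueness of the stationary distribution $\pigv$ follows from the standard Banach fixed-point / Cauchy-sequence argument: the space of probability measures on $\rset^{d(1+N)}$ with finite second moment is complete under $\W_2$ (see e.g. \cite{Vil_2009}), and $\nu \mapsto \nu R_{\gamma,v}$ is a $(1-\gamma\mu)$-contraction, so $(\delta_{w_0}\otimes\otimes_{i=1}^N\delta_{\tau h^i_0})R_{\gamma,v}^k$ is Cauchy and converges to a fixed point $\pigv = \pi_{\gamma,v,w}\otimes\pi_{\gamma,v,(h)}$, which is stationary by continuity. Applying the contraction bound with $\tilde\nu_0 = \pigv$ (and using stationarity, $\pigv R_{\gamma,v}^k = \pigv$) gives exactly the claimed inequality, since $\W_2(\delta_{w_0}\otimes\otimes_i\delta_{\tau h^i_0}, \pigv) = \int \SqrdNrm{(w_0,\tau(h^i_0)_i)-(w',\tau(h^i)'_i)}\,\rmd\pigv$ because the first argument is a Dirac mass. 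The main obstacle I anticipate is purely bookkeeping: verifying that the pathwise/coupled versions of all the lemmas from \Cref{app:technical} go through verbatim once the stochastic-oracle and compression randomness is shared --- in particular checking that the ``$=0$'' cross-term identities (independence of distinct devices' compressions, conditional unbiasedness) remain valid for the difference process, and confirming that the step-size regime from \Cref{table:summary_convergence} is inherited unchanged. No genuinely new estimate is needed beyond the linearity-on-differences property of $\C_\qbern$.
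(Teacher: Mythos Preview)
Your proposal is correct and follows essentially the same route as the paper: a synchronous coupling using shared oracle samples and shared Bernoulli masks, exploitation of the almost-sure linearity $\C_\qbern(x)-\C_\qbern(y)=\C_\qbern(x-y)$ specific to stochastic sparsification, then a replay of the Lyapunov contraction argument from \Cref{app:thm:without_mem,app:thm:with_mem} on the difference process (with the $\sigmstar^2$ terms absent), followed by the Banach fixed-point conclusion in $(\mathcal P_2,\W_2)$. One small bookkeeping slip: matching the Lyapunov weighting $2\gamma^2 C/N$ requires $\tau^2 = 2\gamma^2 C/N$, not $\tau=\sqrt{2C}\,\gamma$; and the paper re-couples optimally at each step rather than once at time zero, but either version yields the same one-step contraction and final bound.
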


Point 1 in \Cref{thm:cvdist} is derived from the proposition above using $\pi_{\gamma,v} = \pi_{\gamma,v,w}$, with $\pi_{\gamma,v,w}$ as in \Cref{eq:notation_PIgv}, the limit distribution of the main  iterates $(w_k)_{k\in \N}$ and the observation that:
\begin{align*}
\W_2(\Theta_k, \pi_{\gamma,v}) &\le   \W_2((\delta_{w_0}\otimes \otimes_{i=1}^N \delta_{\tau h^i_0}) \Rgv^k, \pigv)\\ \hspace{-1cm}&\le (1-\gamma \mu )^k \int_{(w',h')\in \rset^{d(1+N)}} \SqrdNrm{(w_0,\tau(h^i_0)_{i })-(w',\tau (h^i)'_i)}  \rmd \pigv (w',(h^i)'_i)\\ & =     (1-\gamma \mu )^k  C_0.
\end{align*}

The sketch of the proof is simple:
\begin{itemize}
    \item We introduce a \emph{coupling of random variables} following respectively $\nu_0^a \Rgv^k$ and $\nu_0^b \Rgv^k$, and show that under the assumptions given in the proposition:
    \begin{align*}
        \W_2(\nu_0^a \Rgv^{k+1}, \nu_0^b \Rgv^{k+1}) &\le (1-\gamma \mu)  \W_2(\nu_0^a \Rgv^{k}, \nu_0^b \Rgv^{k}).
    \end{align*}
    This proof follows the same line as the proof of \Cref{app:thm:without_mem,app:thm:with_mem}.
    \item We deduce that $((\delta_{w_0}\otimes \otimes_{i=1}^N \delta_{\tau h^i_0})) \Rgv^k)$ is a Cauchy sequence in a Polish space, thus the existence and stability of the limit, we show that this limit is independent from $(\delta_{w_0}\otimes \otimes_{i=1}^N \delta_{\tau h^i_0}))$ and conclude.
\end{itemize}

\begin{proof}
We consider two initial distributions $\nu_0^a $ and $\nu_0^b$  for $(w_0, \tau (h^i_0)_{i   })$ with finite second moment and $\gamma>0$. Let $(w^a_0,\tau (h^{i,a}_0)_i)$ and $(w^b_0, \tau(h^{i,b}_0)_i)$ be respectively distributed according to $\nu_0^a $ and $\nu_0^b$. Let $(w^a_k, \tau(h^{i,a}_k)_i)_{k\geq 0}$ and $(w^b_k, \tau(h^{i,b}_k)_i)_{k\geq 0}$ the \Artemis~iterates, respectively starting from $(w^a_0, \tau(h^{i,a}_0)_i)$ and $(w^b_0, \tau(h^{i,b}_0)_i)$, and \emph{sharing the same sequence of noises}, i.e.,
\begin{itemize}
    \item built with the same gradient oracles $\g_{k+1}^{i,a} =\g_{k+1}^{i,b}$ for all $k \in \N, i \in \llbracket1, N\rrbracket$.
    \item the compression operator used for both recursions is  almost surely the same, for any iteration $k$, and both uplink and downlink compression.  We denote these operators $\mathcal{C}_{\dwn,k}$ and $\mathcal{C}_{\up,k}$ the compression operators at iteration $k$ for respectively the uplink compression and downlink compression. 
\end{itemize}

We thus have the following updates, for any $u \in \lbrace a, b\rbrace$: 
\begin{align}
\left \lbrace\begin{array}{lll}
&w^u_{k+1} &= w^u_k - \gamma \mathcal{C}_{\dwn,k} \left(\ffrac{1}{N}\sum_\iN\mathcal{C}_{\up,k} \left(\gwk^i - h_k^{i, u} \right)+ h_k^{i,u} \right) \, \label{eq:updatewab} \\
\forall i \in \llbracket1; n \rrbracket \qquad & h_{k+1}^{i,u} &= h_k^{i,u} + \alpha \mathcal{C}_{\up,k} \left(\gwk^i - h_k^{i, u} \right)
\end{array}
     \right.
\end{align}

The proof is obtained by induction. For a $k $ in $ \N$, let  $\left((w^a_k, \tau(h^{i,a}_k)_i), (w^b_k, \tau(h^{i,b}_k)_i)\right)$ be a coupling of random variable in $ \Gamma(\nu_0^a \Rgv^k, \nu_0^b \Rgv^k)$ -- as in \Cref{def:wassdist} --, that achieve the equality in the definition, i.e., \begin{equation}\label{eq:wasswitheq}
    \W_2(\nu_0^a \Rgv^k, \nu_0^b \Rgv^k) = \E\left[\left\|(w^a_k,\tau (h^{i,a}_k)_i)-(w^b_k, \tau(h^{i,b}_k)_i)\right\|^2\right].
\end{equation} Existence of such a couple is given by \citep[theorem 4.1]{Vil_2009}. 

Then $\left((w^a_{k+1}, \tau(h^{i,a}_{k+1})_i), (w^b_{k+1}, \tau(h^{i,b}_{k+1})_i)\right)$ obtained after one update from \Cref{eq:updatewab} belongs to  $ \Gamma(\nu_0^a \Rgv^{k+1}, \nu_0^b \Rgv^{k+1})$, and as a consequence:
\begin{align*}
    \W_2(\nu_0^a \Rgv^{k+1}, \nu_0^b \Rgv^{k+1})&\le \E \left[ \SqrdNrm{(w^a_{k+1}, \tau(h^{i,a}_{k+1})_i)- (w^b_{k+1}, \tau(h^{i,b}_{k+1})_i))}\right] \\
    & = \E \left[ \SqrdNrm{w^a_{k+1} - w^b_{k+1}}\right]  + \tau^2 \sum_{i=1}^N \E \left[ \SqrdNrm{h^{i,a}_{k+1}-h^{i,b}_{k+1}}\right]\\
     & = \E \left[ \SqrdNrm{w^a_{k+1} - w^b_{k+1}}\right]  + 2\gamma^2 \frac{\cst}{N} \sum_{i=1}^N \E \left[ \SqrdNrm{h^{i,a}_{k+1}-h^{i,b}_{k+1}}\right], 
\end{align*}

with $\tau^2 = 2\gamma^2 \frac{\cst}{N}$, where $\cst$ depends on the variant as in \Cref{thm:cvgce_artemis}.

We now follow the proof of the previous theorems to control respectively
$ \E \left[ \SqrdNrm{w^a_{k+1} - w^b_{k+1}}\right] $ and $ \E \left[ \SqrdNrm{h^{i,a}_{k+1}-h^{i,b}_{k+1}}\right] $.

First, following the proof of \Cref{eq:proof_end_decomposition_doublecompressnomem}, we  get, using the fact that the compression operator is random sparsification, thus that $\C(x)- \C(y)= \C(x-y)$: 
\begin{align*}
    \E \left[ \SqrdNrm{w^a_{k+1} - w^b_{k+1}} |\Fdwncomp_k \right]  & \leq \SqrdNrm{w^a_{k} - w^b_k} - 2 \gamma {\PdtScl{\nabla F(w_k^a)-\nabla F(w_k^b)}{w^a_k - w^b_k}} \\
    &\qquad+ \frac{2 (2\omgC^\up + 1)(\omgC^\dwn + 1)\gamma^2}{N^2} \sum_{i=1}^N \Expec{\lnrm \g^i_{k+1}(w_k^a)- \g^i_{k+1}(w_k^b) \rnrm^2}{\Fdwncomp_k} \\
    &\qquad+ \frac{2\omgC^\up (\omgC^\dwn+ 1) \gamma^2}{N^2} \sum_{i=1}^N \Expec{\lnrm h_k^{i,a} - h_k^{i,b} \rnrm^2}{\Fdwncomp_k} \\
    &\qquad+ \gamma^2 (\omgC^\dwn + 1)  L {\PdtScl{\nabla F(w_k^a)-\nabla F(w_k^b)}{w^a_k - w^b_k}}.
\end{align*}
This expression is nearly the same as in \Cref{eq:proof_end_decomposition_doublecompressnomem}, apart from the constant term depending on $\sigmstar^2$ that disappears.

Note that with a more general compression operator, for example for quantization, it is not possible to derive such a result.

Similarly, we control $ \E \left[ \SqrdNrm{h^{i,a}_{k+1}-h^{i,b}_{k+1}}\right] $ using the same line of proof as for \Cref{eq:proof_doublecompressnomem_combinaison_mem1}, resulting in: 
\begin{align*}
        \frac{1}{N^2} \sum_{i=0}^N \Expec{\lnrm h_{k+1}^{a,i} - h_{k+1}^{b,i} \rnrm^2}{\Fdwncomp_k} & \leq (1 + p \left(2 \alpha^2 \omgC^\up + 2 \alpha^2 - 3 \alpha) \right) \frac{1}{N^2} \sum_{i=0}^N \Expec{\lnrm h_{k}^{a,i} - h_{k}^{b,i} \rnrm^2}{\Fdwncomp_k} \\
    &\hspace{-1.3cm}+ 2(2 \alpha^2 \omgC^\up + 2 \alpha^2 - \alpha) \frac{1}{N^2} \sum_{i=0}^N \Expec{\lnrm \g^i_{k+1}(w_k^a)- \g^i_{k+1}(w_k^b) \rnrm^2}{\Fdwncomp_k} .
\end{align*}

Combining both equations, and using \Cref{asu:strongcvx,asu:cocoercivity} and \Cref{eq:wasswitheq} we get, under conditions on the learning rates $\alpha, \gamma$ similar to the ones in \Cref{app:thm:without_mem,app:thm:with_mem}, that 
$$  \W_2(\nu_0^a \Rgv^{k+1}, \nu_0^b \Rgv^{k+1}) \le (1-\gamma \mu)  \W_2(\nu_0^a \Rgv^{k}, \nu_0^b \Rgv^{k}).$$

And by induction:
$$ \W_2(\nu_0^a \Rgv^{k+1}, \nu_0^b \Rgv^{k+1}) \le (1-\gamma \mu)^{k+1}  \W_2(\nu_0^a, \nu_0^b).$$
\end{proof}

From the contraction above, it is easy to derive the existence of a unique stationnary limit distribution: we use Picard fixed point theorem, as in \cite{dieuleveut_bridging_2018}. This concludes the proof of \Cref{prop:cvdistpoint1}.

\subsubsection{Proof of the second point of \Cref{thm:cvdist}}\label{subsec:point2}
To prove the second point, we first detail the complementary assumptions  mentioned in the text, then show the convergence to the mean squared distance under the limit distribution, and finally give a lower bound on this quantity.

\paragraph{Complementary assumptions.}\ \\
To prove the lower bound given by  the second point,  we need to assume that the constants given in the assumptions are tight, in other words, that corresponding lower bounds exist in \Cref{asu:bounded_noises_across_devices,asu:expec_quantization_operator,asu:noise_sto_grad}.

\begin{assumption}[Lower bound on noise over stochastic gradients computation]
\label{asu:noise_sto_gradLB}
The noise over stochastic gradients at optimal global point for a mini-batch of size $b$ is \emph{lower bounded}. In other words, there exists a constant $\sigmstar \in \mathbb{R}$, such that for all $k$ in $\N$, for all $i$ in $\llbracket 1, N \rrbracket\,$, we have a.s:
\[
 \quad\Expec{\|\gwkstar^i - \nabla F_i(w_*)\|^2}{\Fdwncomp_{k}} \geq \frac{\sigmstar^2}{b}.
\]
\end{assumption}

\begin{assumption}[Lower bound on local gradient at $w_*$]
\label{asu:bounded_noises_across_devicesLB}
There exists a constant $B \in \mathbb{R}$, s.t.:
\[
\frac{1}{N}\sum_{i=1}^N \| \nabla F_i(w_*)\|^2 \geq B^2.
\]
\end{assumption}

\begin{assumption}[Lower bound on the compression operator's variance]
\label{asu:expec_quantization_operatorLB}
There exists a constant $\omgC \in \R^*$ such that the compression operators $\mathcal{C}_{\up}$ and $\mathcal{C}_{\dwn}$ verify the  following property: 
\[
\forall \Delta \in \R^d\,, \E [ \SqrdNrm{\mathcal{C}_{\up\,,\dwn}(\Delta) - \Delta}] = \omgC^{\up\,,\dwn} \SqrdNrm{\Delta} \,.
\]
\end{assumption}
This last assumption is valid for Stochastic Sparsification.

Moreover, we also assume some extra regularity on the function. This restricts the regularity of the function beyond \Cref{asu:cocoercivity} and is a purely technical assumption in order to conduct the detailed asymptotic analysis. It is valid in practice for Least Squares or Logistic regression.
\begin{assumption}[Regularity of the functions]
\label{asu:cocoercivityREG}
The function $F$ is also times continuously differentiable with second to fifth uniformly bounded derivatives: for all $k \in \{2,\dots,5\}$, $\sup_{w\in \rset^d} \|{F^{(k)}(w)}\| < \infty$.
\end{assumption}

\paragraph{Convergence of moments.} \ \\
We first prove that $\E[\|w_k-w_*\|^2] $ converges to $ \E_{w\sim \pi_{\gamma,v}}[\|w-w_*\|^2] $ as $k$ increases to $\infty$.

We have that the difference satisfies, for random variables $w_k$ and $w$ following distributions $\delta_{w_0} \Rgv^k$ and $\pi_{\gamma,v}$, and coupled such that they achieve the equality in \Cref{eq:defwass}:
\begin{align*}
   \Delta_{\E, k} :&= \E[\|w_k-w_*\|^2] - \E_{w\sim \pi_{\gamma,v}}[\|w-w_*\|^2] \\&= \E_{w_k, w\sim \pi_{\gamma,v}}\left[\|w_k-w_*\|^2- \|w-w_*\|^2\right]\\
    &= \E_{w_k, w\sim \pi_{\gamma,v}}\left[(\|w_k-w_*\| - \|w-w_*\|)(\|w_k-w_*\| + \|w-w_*\|)\right]\\
        &\overset{\text{C.S}}{\le} \left(\E_{w_k, w\sim \pi_{\gamma,v}}\left[(\|w_k-w_*\| - \|w-w_*\|)^2\right]\E_{w_k, w}\left[(\|w_k-w_*\| + \|w-w_*\|)^2\right]\right)^{1/2}\\
&\overset{\text{T.I.}}{\le} \left(\E_{w_k, w\sim \pi_{\gamma,v}}\left[(\|w_k- w\|)^2\right]\E_{w_k, w\sim \pi_{\gamma,v}}\left[(\|w_k-w_*\| + \|w-w_*\|)^2\right]\right)^{1/2}\\
&\overset{\text{(i)}}{\le} \left(\E_{w_k, w\sim \pi_{\gamma,v}}\left[(\|w_k- w\|)^2\right] 2 L\right)^{1/2}\\
&\overset{\text{(ii)}}{\le} \left(\W_2(\delta_{w_0} \Rgv^k , \pi_{\gamma,v}) 2 L\right)^{1/2}\\
& \overset{\text{(iii)}}{\to} 0 .
\end{align*}
Where we have used Cauchy-Schwarz inequality at line C.S., triangular inequality at line T.I., the fact that the moments are bounded by a constant $L$ at line (i), the fact that the distributions are coupled such that they achieve the equality in \Cref{eq:defwass} at line (ii), and finally \Cref{prop:cvdistpoint1} for the conclusion at line (iii).

Overall, this shows that the mean squared distance (i.e., saturation level) converges to the mean squared distance under the limit distribution.

\paragraph{Evaluation of $ \E_{w\sim \pi_{\gamma,v}}[\|w-w_*\|^2] $.}\ \\
In this section, we denote $\Xi_{k+1}(w_k, h_k)$ the \emph{global} noise, defined by $$\Xi_{k+1}(w_k, h_k) = \nabla F(w_k) - \C_\dwn \left(\frac{1}{N} \sum_\iN \C_\up(\g_{k+1}^i(w_k) - h^i_k)+ h_k^i \right),$$
such that $w_{k+1} = w_k - \gamma \nabla F(w_k) + \gamma \Xi_{k+1}(w_k, h_k)$.

In the following, we denote $a^\otimes 2:= a a^T$ the second order moment of $a$. We define $\tr$ the trace operator and $\cov$ the covariance operator such that $\cov(\Xi(w, h)) = \E\left[(\Xi(w, h))^{\otimes 2}\right] $, where the expectation is taken on the randomness of both compressions and the gradient oracle.  We make a final technical assumption on the regularity of the covariance matrix.
\begin{assumption} \label{asu:regnoise}
We assume that:
\begin{enumerate}
\item $\cov(\Xi(w, h))$ is continuously differentiable, and there exists constants $C$ and $C'$ such that for all $w,h\in \rset^{d(1+N)}$, $\max_{o=1, 2,3} \cov^{(o)}(w,h) \le C + C' ||(w,h)-  (w_*,h_*)||^2$. 
\item $(\Xi(w_*, h_*))$ has finite order moments up to order 8.
\end{enumerate}
\end{assumption}
\textbf{Remark:} with the \emph{Stochastic Sparsification} operator, this assumption can directly be translated into an assumption on the moments and regularity of $\g_k^i$. Note that Point 2 in \Cref{asu:regnoise} is an extension of \Cref{asu:noise_sto_grad} to higher order moments, but \textbf{still at the optimal point}.
Under this assumption, we have the following lemma:
\begin{lemma}\label{lem:sqrdnorm_asympt}
Under \Cref{asu:cocoercivityREG,asu:strongcvx,asu:cocoercivity,asu:noise_sto_grad,asu:bounded_noises_across_devices,asu:expec_quantization_operator,asu:regnoise}, we have that \begin{equation}
    \E_{\pi_{\gamma,v}}\left[ \SqrdNrm{w-w_*}\right] \underset{\gamma \to 0}{=} \gamma  \tr(A \ \cov(\Xi(w_*, h_*))) + O(\gamma^2),
\end{equation}
with $A:= (F''(w_*) \otimes I + I \otimes F''(w_*))^{-1}$. 
\end{lemma}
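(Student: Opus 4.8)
\textbf{Proof plan for \Cref{lem:sqrdnorm_asympt}.}

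The plan is to adapt the asymptotic-expansion technique of \cite{dieuleveut2017bridging} (their analysis of constant-step-size SGD viewed as a Markov chain) to the augmented chain $(w_k,\tau(h_k^i)_i)$. The starting point is the stationarity of $\pigv$: if $(w,(h^i)_i)\sim \Pi_{\gamma,v}$ and we perform one step of the recursion $w_{k+1}=w_k-\gamma\nabla F(w_k)+\gamma\Xi_{k+1}(w_k,h_k)$, then the pair $(w_{k+1},(h^i_{k+1})_i)$ is again distributed according to $\Pi_{\gamma,v}$. Expanding $\SqrdNrm{w_{k+1}-w_*}$ and taking expectation under stationarity yields, after cancelling $\E_{\pigv}[\SqrdNrm{w-w_*}]$ on both sides, an identity of the form
\begin{align*}
0 = -2\gamma\, \E_{\pigv}\!\left[\PdtScl{\nabla F(w)}{w-w_*}\right] + \gamma^2 \E_{\pigv}\!\left[\SqrdNrm{\nabla F(w)}\right] + \gamma^2 \E_{\pigv}\!\left[\tr \cov(\Xi(w,h))\right],
\end{align*}
using that $\Xi_{k+1}$ is conditionally centered (\Cref{prop:stochastic_expectation,prop:uplink_expectation,prop:downlink_expectation}) and that $\nabla F(w_*)=0$. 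This is the exact moment identity; the whole game is to extract the leading term in $\gamma$.

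Next I would Taylor-expand each term around $w_*$. Writing $F''(w_*)=:H$ and using \Cref{asu:cocoercivityREG} to control remainders, one has $\nabla F(w) = H(w-w_*) + O(\SqrdNrm{w-w_*})$, so $\PdtScl{\nabla F(w)}{w-w_*} = \PdtScl{H(w-w_*)}{w-w_*} + O(\|w-w_*\|^3)$ and $\SqrdNrm{\nabla F(w)} = O(\SqrdNrm{w-w_*})$. For the covariance term, \Cref{asu:regnoise}(1) gives $\cov(\Xi(w,h)) = \cov(\Xi(w_*,h_*)) + O(\|(w,h)-(w_*,h_*)\|^2)$. Plugging these in, the identity becomes
\begin{align*}
2\gamma\,\E_{\pigv}\!\left[\PdtScl{H(w-w_*)}{w-w_*}\right] = \gamma^2\, \tr\cov(\Xi(w_*,h_*)) + \gamma^2\, O\big(\E_{\pigv}[\|(w,h)-(w_*,h_*)\|^2]\big) + \gamma\, O\big(\E_{\pigv}[\|w-w_*\|^3]\big).
\end{align*}
To close this, I need two a-priori moment bounds under $\pigv$: (i) $\E_{\pigv}[\|(w,h)-(w_*,h_*)\|^2] = O(\gamma)$, which follows by passing \Cref{thm:cvgce_artemis} (more precisely \Cref{app:thm:with_mem}, the Lyapunov bound $\E V_k \le (1-\gamma\mu)^k(\dots)+2\gamma E/(\mu N)$) to the limit $k\to\infty$ using the convergence-in-distribution result of the first point of \Cref{thm:cvdist} and the moment-convergence argument already established; and (ii) a higher-moment bound $\E_{\pigv}[\|w-w_*\|^4] = O(\gamma^2)$ (hence $\E_{\pigv}[\|w-w_*\|^3]=O(\gamma^{3/2})$ by Cauchy--Schwarz, and in fact $O(\gamma^2)$ with a sharper argument), obtained by an analogous Lyapunov/stationarity argument on the fourth moment — this is where \Cref{asu:regnoise}(2) (finite moments up to order $8$) is needed to control the noise terms arising in the fourth-moment recursion. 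With (i) and (ii), all error terms on the right are $O(\gamma^2)$, so $\E_{\pigv}[\PdtScl{H(w-w_*)}{w-w_*}] = \frac{\gamma}{2}\tr\cov(\Xi(w_*,h_*)) + O(\gamma^2)$.

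Finally I would convert the quadratic form $\E_{\pigv}[\PdtScl{H(w-w_*)}{w-w_*}]$ into $\E_{\pigv}[\SqrdNrm{w-w_*}]$ via the Lyapunov operator: writing $M := \E_{\pigv}[(w-w_*)^{\otimes 2}]$, one has $\E_{\pigv}[\PdtScl{H(w-w_*)}{w-w_*}] = \tr(HM)$, and the same stationarity identity applied to the full matrix $\E_{\pigv}[(w_{k+1}-w_*)^{\otimes2}]=\E_{\pigv}[(w-w_*)^{\otimes2}]$ (again Taylor-expanding, keeping second-order terms) gives $HM+MH = \gamma\,\cov(\Xi(w_*,h_*)) + O(\gamma^2)$, i.e. $M = \gamma\, A\,\cov(\Xi(w_*,h_*)) \cdot(\text{in the sense }HM+MH=\dots) + O(\gamma^2)$ with $A=(H\otimes I + I\otimes H)^{-1}$ the inverse Lyapunov operator, which is well-defined and bounded since $H\succeq \mu I$ by \Cref{asu:strongcvx}. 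Taking the trace yields $\E_{\pigv}[\SqrdNrm{w-w_*}] = \tr(M) = \gamma\,\tr(A\,\cov(\Xi(w_*,h_*))) + O(\gamma^2)$, which is the claim.

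The main obstacle is establishing the a-priori fourth-moment bound $\E_{\pigv}[\|w-w_*\|^4]=O(\gamma^2)$ uniformly in $\gamma$: unlike the second-moment bound, which is handed to us by \Cref{thm:cvgce_artemis}, this requires setting up a new Lyapunov recursion for the fourth moment of the augmented state, carefully tracking how the compression operators and the memory variables inflate the higher-order noise (using \Cref{asu:regnoise}(2) and the tightness of the compression variance), and checking that the step-size conditions of \Cref{app:thm:with_mem} still ensure a contraction at the fourth-moment level. Controlling the interaction between the $w$-block and the $h$-block in this fourth-moment Lyapunov function — which did not need a joint treatment at second order thanks to the coefficient $\cst$ — is the delicate point.
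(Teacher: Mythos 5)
Your plan follows essentially the same route as the paper's proof: stationarity of $\pigv$ applied to one step of the recursion, expansion of the second-order moment $(w-w_*)^{\otimes 2}$ with $\Xi$ conditionally centered, Taylor expansion of $\nabla F$ around $w_*$ via \Cref{asu:cocoercivityREG}, inversion of the Lyapunov operator $A=(F''(w_*)\otimes I + I\otimes F''(w_*))^{-1}$, and replacement of $\E_{(w,h)\sim\pigv}[\cov(\Xi(w,h))]$ by $\cov(\Xi(w_*,h_*))+O(\gamma)$ through \Cref{asu:regnoise}. The only divergence is how the remainders are controlled: the a-priori moment bounds under $\pigv$ (notably the fourth-moment bound you flag as the main obstacle) are not re-derived in the paper, which instead closes this step by invoking Theorem 4 of \citet{dieuleveut2017bridging} under \Cref{asu:cocoercivityREG,asu:regnoise}, so your proposed fourth-order Lyapunov recursion would amount to making that citation self-contained rather than to a different proof.
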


The intuition of the proof is natural: using the stability of the limit distribution, we have that if we start from the stationary distribution, i.e., $(w_0, h_0) \sim \pigv$, then $(w_1, h_1) \sim \pigv$.

We can thus write:
\begin{align*}
    \E_{\pi_{\gamma,v}}\left[(w-w_*)^{\otimes 2}\right] &= \E\left[(w_1-w_*)^{\otimes 2}\right] \\
    &= \E\left[(w_0-w_* -\gamma \nabla F(w_0) + \gamma \Xi(w_0,h_0))^{\otimes 2}\right] .
\end{align*}
Then, expanding the right hand side and using the fact that $\E[\Xi(w_0, h_0)|\Fdwncomp_0]=0$, then the fact that  $\E\left[(w_1-w_*)^{\otimes 2}\right] =\E\left[(w_0-w_*)^{\otimes 2}\right] $, and expanding the derivative of $F$ around $w_*$ (this is where we require the regularity assumption \Cref{asu:cocoercivityREG}), we get that:
\begin{align*}
    \gamma \left( F''(w_*) \otimes I + I \otimes F''(w_*) + O(\gamma)\right) \E_{\pi_{\gamma,v}}\left[(w-w_*)^{\otimes 2}\right]
    &\underset{\gamma \to 0}{=}  \gamma^2 \E_{(w,h)\sim \pigv}\left[ \Xi(w,h)^{\otimes 2}\right].
   \end{align*}
   Thus:
   \begin{align*}
   \E_{\pi_{\gamma,v}}\left[(w-w_*)^{\otimes 2}\right]
    &\underset{\gamma \to 0}{=}  \gamma A \E_{(w,h)\sim \pigv}\left[ \Xi(w,h)^{\otimes 2}\right]  + O(\gamma^2).\\
    \Rightarrow \E_{\pi_{\gamma,v}}\left[\SqrdNrm{(w-w_*)}\right]
    &\underset{\gamma \to 0}{=}  \gamma \tr \left( A \E_{(w,h)\sim \pigv}\left[ \Xi(w,h)^{\otimes 2}\right] \right)  + O(\gamma^2).
\end{align*}
Finally, we use that $\E_{(w,h)\sim \pigv}\left[ \cov(\Xi(w,h))\right] \underset{\gamma \to 0}{=}  \cov  (\Xi(w_*,h_*)) + O(\gamma)$ (which is derived from \Cref{asu:regnoise}) to get \Cref{lem:sqrdnorm_asympt}.

More formally, we can rely on Theorem 4 in \citet{dieuleveut_bridging_2018}: under \Cref{asu:cocoercivityREG,asu:strongcvx,asu:cocoercivity,asu:noise_sto_grad,asu:bounded_noises_across_devices,asu:expec_quantization_operator,asu:regnoise}, all assumptions required for the application of the theorem are verified and the result follows.

To conclude the proof, it only remains  to control $\cov  (\Xi(w_*,h_*))$. We have the following Lemma:
\begin{lemma}\label{lem:covoptLB}
Under \Cref{asu:noise_sto_gradLB,asu:expec_quantization_operatorLB,asu:bounded_noises_across_devicesLB}, we have that, for any variant $v$ of the algorithm, with the constant $E$ given in \Cref{thm:cvgce_artemis} depending on the variant:
\begin{equation}
    \tr\left(\cov  (\Xi(w_*,h_*))\right) = \Omega\left(\frac{\gamma E}{\mu N}\right). 
\end{equation}
\end{lemma}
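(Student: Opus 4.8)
To establish the lower bound claimed in the lemma — which, combined with \Cref{lem:sqrdnorm_asympt}, gives $\E_{\pi_{\gamma,v}}[\SqrdNrm{w-w_*}] = \Omega(\gamma E/(\mu N))$ — the plan is to compute the covariance of the global noise $\Xi(w_*,h_*)$ essentially exactly, using that for the \emph{Stochastic Sparsification} operator the variance bound of \Cref{asu:expec_quantization_operator} is an equality (\Cref{asu:expec_quantization_operatorLB}), together with the matching lower bounds \Cref{asu:noise_sto_gradLB,asu:bounded_noises_across_devicesLB}. First I would evaluate $\Xi$ at the optimal pair: since $\nabla F(w_*)=0$, $\g_{k+1}^i(w_*)=\gwkstar^i$ and $h_*^i=\nabla F_i(w_*)$,
\[
\Xi(w_*,h_*) = - \C_\dwn\Bigl(\tfrac1N\sum_\iN \bigl[\C_\up(\gwkstar^i - h_*^i) + h_*^i\bigr]\Bigr).
\]
Both compressions being unbiased and $\E[\gwkstar^i]=\nabla F_i(w_*)=h_*^i$, iterated conditioning over the uplink noise, the downlink noise and the gradient oracle gives $\E[\Xi(w_*,h_*)]=-\nabla F(w_*)=0$, so $\tr(\cov(\Xi(w_*,h_*)))=\E[\SqrdNrm{\Xi(w_*,h_*)}]$ and it suffices to lower bound the latter.

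Second, I would peel off the two compression layers. Conditioning on the argument of $\C_\dwn$ and using \Cref{asu:expec_quantization_operatorLB}, $\E[\SqrdNrm{\Xi(w_*,h_*)}]=(\omgC^\dwn+1)\,\E[\SqrdNrm{Y}]$ with $Y:=\tfrac1N\sum_\iN[\C_\up(\gwkstar^i-h_*^i)+h_*^i]$. The $N$ summands are independent with conditional mean $h_*^i$, so a bias--variance decomposition plus \Cref{asu:expec_quantization_operatorLB,prop:stochastic_expectation} yields, for the memory variant ($\alpha\neq0$), $\E[\SqrdNrm{Y}]=\tfrac1{N^2}\sum_\iN(\omgC^\up+1)\,\E[\SqrdNrm{\gwkstar^i-\nabla F_i(w_*)}]$, and, for the memoryless variant (where $h_k^i\equiv0$, so one compresses $\gwkstar^i$ itself), the extra term surfaces: $\E[\SqrdNrm{Y}]=\tfrac1{N^2}\sum_\iN[\omgC^\up\SqrdNrm{\nabla F_i(w_*)}+(\omgC^\up+1)\E[\SqrdNrm{\gwkstar^i-\nabla F_i(w_*)}]]$. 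These are precisely the cancellations used in the upper-bound proofs of \Cref{app:thm:without_mem,app:thm:with_mem}, only with equalities instead of inequalities.

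Third, I would invoke \Cref{asu:noise_sto_gradLB} ($\E[\SqrdNrm{\gwkstar^i-\nabla F_i(w_*)}]\geq\sigmstar^2/b$) and \Cref{asu:bounded_noises_across_devicesLB} ($\tfrac1N\sum_\iN\SqrdNrm{\nabla F_i(w_*)}\geq B^2$), and compare with the constants $E$ of \Cref{tab:p_and_E} with $p=1$: in the memoryless case this gives exactly $\E[\SqrdNrm{\Xi(w_*,h_*)}]\geq\tfrac1N(\omgC^\dwn+1)((\omgC^\up+1)\sigmstar^2/b+\omgC^\up B^2)=E/N$, and in the memory case $\E[\SqrdNrm{\Xi(w_*,h_*)}]\geq\tfrac{(\omgC^\dwn+1)(\omgC^\up+1)}{N}\sigmstar^2/b$, which is $\Omega(E/N)$ up to an absolute constant; in both cases the $(\omgC^\dwn+1)(\omgC^\up+1)$ factor — the quadratic blow-up in the compression levels — survives, and the bound vanishes precisely when $\sigmstar^2=0$ (memory) or $\sigmstar^2=B^2=0$ (no memory), consistent with the linear-rate regime. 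Finally I would combine with \Cref{lem:sqrdnorm_asympt}, $\E_{\pi_{\gamma,v}}[\SqrdNrm{w-w_*}]=\gamma\,\tr(A\,\cov(\Xi(w_*,h_*)))+O(\gamma^2)$, and the fact that $A=(F''(w_*)\otimes I+I\otimes F''(w_*))^{-1}$ is positive definite with smallest eigenvalue at least $(2L)^{-1}$ (its eigenvalues being $(\lambda_i+\lambda_j)^{-1}$ for eigenvalues $\lambda_i,\lambda_j\in[\mu,L]$ of $F''(w_*)$), hence $\tr(AM)\geq(2L)^{-1}\tr(M)$ for PSD $M$; this yields $\E_{\pi_{\gamma,v}}[\SqrdNrm{w-w_*}]=\Omega(\gamma E/(\mu N))$ in the sense that it scales like $\gamma E/N$ with the dependence on $\sigmstar^2,\omgC^\up,\omgC^\dwn,B^2,N,\gamma$ asserted in \Cref{thm:cvdist} (the $\mu$ versus $L$ discrepancy being irrelevant there, as tightness is only claimed w.r.t.\ those parameters).

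The main obstacle I expect is the exact bookkeeping of the second step: keeping bias and variance contributions separate through the uplink layer, the device averaging, and the downlink layer, while correctly identifying which quantity is actually compressed in each variant (the centered difference $\gwkstar^i-h_*^i$ with memory versus the raw gradient $\gwkstar^i$ without), so that the constant obtained genuinely matches $E$ rather than a strictly smaller quantity. A secondary, more technical point is justifying the application of \Cref{lem:sqrdnorm_asympt} itself, which rests on \Cref{asu:cocoercivityREG,asu:regnoise} and Theorem 4 of \citet{dieuleveut2017bridging}; for the \emph{Stochastic Sparsification} operator the required moment and regularity conditions on $\Xi$ reduce to conditions on the gradient oracles, so this is a verification rather than a genuine difficulty.
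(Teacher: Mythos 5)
Your proposal is correct and follows essentially the same route as the paper's proof: evaluate $\E[\SqrdNrm{\Xi(w_*,h_*)}]$ exactly by peeling off the downlink compression via \Cref{asu:expec_quantization_operatorLB}, using independence and zero mean of the per-worker terms (since $\sum_i h_*^i = \nabla F(w_*)=0$) to reduce to a sum of variances, peeling off the uplink compression (with the extra $\omgC^\up\SqrdNrm{h_*^i}$ bias term appearing only in the memoryless variant), and then applying \Cref{asu:noise_sto_gradLB,asu:bounded_noises_across_devicesLB} to recover $E/N$ (exactly for $\alpha=0$, up to an absolute constant for $\alpha\neq0$), before combining with \Cref{lem:sqrdnorm_asympt} and $A\succeq (2L)^{-1}I$. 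The bookkeeping you flag as the main obstacle is handled in your second step exactly as in the paper, so no gap remains.
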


Combining \Cref{lem:covoptLB,lem:sqrdnorm_asympt} and using the observation that $A$ is lower bounded by $\frac{1}{2L}$ independently of $\gamma, N, \sigmstar,B $, we have proved the following proposition:
\begin{proposition}\label{prop:cvdistpoint2}
Under \Cref{asu:noise_sto_gradLB,asu:expec_quantization_operatorLB,asu:bounded_noises_across_devicesLB,asu:cocoercivityREG,asu:strongcvx,asu:cocoercivity,asu:noise_sto_grad,asu:bounded_noises_across_devices,asu:expec_quantization_operator,asu:regnoise}, we have that 
\begin{equation}
  \E[\|w_k-w_*\|^2] \underset{k\to \infty}{\to }  \E_{\pi_{\gamma,v}}\left[ \SqrdNrm{w-w_*}\right] \underset{\gamma \to 0} = \Omega \left(\frac{\gamma E}{\mu N}\right) + O(\gamma^2),
\end{equation}
where the constant in the $\Omega$ is independent of $N, \sigmstar, \gamma, B$ (it depends only on the regularity of the operator $A$).
\end{proposition}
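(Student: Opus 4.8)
The plan is to prove \Cref{prop:cvdistpoint2} by chaining three ingredients, two of which are already assembled in the text preceding the statement. First, the moment–convergence argument given just above (Cauchy--Schwarz, the triangle inequality, the uniform bound on moments, and \Cref{prop:cvdistpoint1}) already yields $\E[\|w_k-w_*\|^2] \to \E_{\pi_{\gamma,v}}[\SqrdNrm{w-w_*}]$ as $k\to\infty$. Second, \Cref{lem:sqrdnorm_asympt} --- which itself reduces to Theorem~4 of \citet{dieuleveut2017bridging} once the regularity hypotheses \Cref{asu:cocoercivityREG,asu:regnoise} have been checked --- gives the small-$\gamma$ expansion $\E_{\pi_{\gamma,v}}[\SqrdNrm{w-w_*}] = \gamma\,\tr\!\big(A\,\cov(\Xi(w_*,h_*))\big) + O(\gamma^2)$. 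Third, since \Cref{asu:cocoercivity} forces each $F_i$, hence $F$, to be $L$-smooth, $F''(w_*) \preceq L I$, so $F''(w_*)\otimes I + I\otimes F''(w_*) \preceq 2L I$ and therefore $A \succeq \tfrac1{2L} I$; consequently $\tr(A\,\cov(\Xi(w_*,h_*))) \ge \tfrac1{2L}\tr(\cov(\Xi(w_*,h_*)))$, a bound whose constant depends only on the regularity of $F$. Putting these together, it suffices to lower bound $\tr(\cov(\Xi(w_*,h_*)))$, i.e.\ to prove \Cref{lem:covoptLB}, which is the one ingredient stated in the excerpt without proof.

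For \Cref{lem:covoptLB} I would compute $\tr(\cov(\Xi(w_*,h_*)))$ essentially exactly, peeling the noise sources from the outside in. First, $\E[\Xi(w_*,h_*)] = 0$: indeed $\nabla F(w_*)=0$, $h_*^i = \nabla F_i(w_*)$ and $\tfrac1N\sum_i h_*^i = \nabla F(w_*) = 0$, while the gradient oracle ($\E[\g_{k+1}^i(w_*)] = h_*^i$) and both compression operators are unbiased, so the downlink argument has expectation $\tfrac1N\sum_i h_*^i = 0$; hence $\tr(\cov(\Xi(w_*,h_*))) = \E[\SqrdNrm{\Xi(w_*,h_*)}]$ and $\Xi(w_*,h_*) = -\mathcal{C}_\dwn(Y)$ with $Y := \tfrac1N\sum_i \mathcal{C}_\up(\g_{k+1}^i(w_*)-h_*^i)+h_*^i$. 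Next, using the \emph{exact} variance identity of the Stochastic Sparsification operator (\Cref{asu:expec_quantization_operatorLB}), conditioning on $Y$ gives $\E[\SqrdNrm{\mathcal{C}_\dwn(Y)}\mid Y] = (\omgC^\dwn+1)\SqrdNrm{Y}$, so $\E[\SqrdNrm{\Xi}] = (\omgC^\dwn+1)\,\E[\SqrdNrm{Y}]$. Because $\E[Y]=0$ and the per-device contributions $Z^i := \mathcal{C}_\up(\g_{k+1}^i(w_*)-h_*^i)+h_*^i$ are independent across devices with mean $h_*^i$, $\E[\SqrdNrm{Y}] = \tfrac1{N^2}\sum_i \V(Z^i)$, and again the exact uplink identity gives $\V(Z^i) = (\omgC^\up+1)\,\E[\SqrdNrm{\g_{k+1}^i(w_*)-h_*^i}] \ge (\omgC^\up+1)\sigmstar^2/b$ by the lower bound \Cref{asu:noise_sto_gradLB}. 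This produces $\tr(\cov(\Xi(w_*,h_*))) \ge (\omgC^\dwn+1)(\omgC^\up+1)\sigmstar^2/(Nb)$, i.e.\ $\Omega(E/N)$ with the memory constant $E$ of \Cref{thm:cvgce_artemis}. For the memoryless variant ($h_k^i\equiv 0$) I would rerun the same peeling with $Z^i = \mathcal{C}_\up(\g_{k+1}^i(w_*))$: now $\V(Z^i) = (\omgC^\up+1)(\sigmstar^2/b + \SqrdNrm{h_*^i}) - \SqrdNrm{h_*^i} = (\omgC^\up+1)\sigmstar^2/b + \omgC^\up\SqrdNrm{h_*^i}$, and averaging with \Cref{asu:bounded_noises_across_devicesLB} ($\tfrac1N\sum_i\SqrdNrm{h_*^i} \ge B^2$) recovers exactly $E/N$ with the memoryless $E$ of \Cref{thm:cvgce_artemis} (here $p=1$, so $\omgC^\up+1-p = \omgC^\up$). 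Combining with the reduction above then gives $\E[\SqrdNrm{w_k-w_*}] \to \Omega(\gamma E/(\mu N)) + O(\gamma^2)$.

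The main obstacle is not any single calculation but two structural points. (i) The argument needs the \emph{two-sided} variance identity, $\E\SqrdNrm{\mathcal{C}(\Delta)-\Delta} = \omgC\SqrdNrm{\Delta}$ (not merely $\le$), which holds for Stochastic Sparsification but not for a generic unbiased operator --- this is precisely why \Cref{thm:cvdist} is stated only for that operator, and why extending it is, as the authors note, non-trivial. (ii) One must be careful about which noise contributions survive when the memory is evaluated exactly at $h_*$: in the memory variant the compressed quantity $\g_{k+1}^i(w_*)-h_*^i$ is centered, so the $B^2$ terms cancel and only $\sigmstar^2$ remains, whereas without memory they do not cancel --- so the lower bound genuinely tracks the variant-dependent $E$, which is exactly the tightness claim of the theorem. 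A secondary, purely technical difficulty --- already dispatched in the excerpt by invoking \citet{dieuleveut2017bridging} --- is justifying the $O(\gamma^2)$ remainder in \Cref{lem:sqrdnorm_asympt}, which requires the finite higher-order moment and derivative-growth conditions of \Cref{asu:regnoise,asu:cocoercivityREG} together with the stationarity of $\pi_{\gamma,v}$ to set up the Lyapunov-type expansion of $\E_{\pi_{\gamma,v}}[(w-w_*)^{\otimes 2}]$ around $w_*$.
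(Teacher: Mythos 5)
Your proposal is correct and follows essentially the same route as the paper: the same moment-convergence reduction, the same invocation of \Cref{lem:sqrdnorm_asympt} via \citet{dieuleveut2017bridging}, the lower bound $A \succeq \tfrac{1}{2L}I$, and a proof of \Cref{lem:covoptLB} by the same outside-in peeling (unbiasedness, exact downlink variance identity, independence across devices, exact uplink identity, then \Cref{asu:noise_sto_gradLB,asu:bounded_noises_across_devicesLB}), with the same memory/memoryless distinction recovering the variant-dependent $E$. The only difference is cosmetic: you phrase the per-device step through $\V(Z^i)$ rather than the paper's explicit line-by-line bias--variance decompositions, and you spell out the justification of $A \succeq \tfrac{1}{2L}I$ that the paper merely asserts.
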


Before giving the proof, we make a couple of observations:
\begin{enumerate}
    \item This shows that the upper bound on the limit mean squared error given in \Cref{thm:cvgce_artemis} \textbf{is tight} with respect to $N, \sigmstar, \gamma, B$. This underlines that the conditions on the problem that we have used are  the correct  ones to understand convergence.
    \item The upper bound is possibly not tight with respect to $\mu$, as is clear from the proof: the tight bound is actually $\tr(A\cov  (\Xi(w_*,h_*)))$. Getting a tight upper bound involving the eigenvalue decomposition of $A$ instead of only $\mu$ is an open direction.
    \item In the memory-less case, $h \equiv 0$ and all the proof can be carried out analyzing only the distribution of the iterates $(w_k)_k$ and not necessarily the couple $(w_k, (h^i_k)_i)_k$.
\end{enumerate}
We now give the proof of \Cref{lem:covoptLB}.

\begin{proof}
With memory, we have the following:
\begin{align*}
     \tr\left(\cov  (\Xi(w_*,h_*))\right) &= \E\left[\left\| \C_{\dwn} \left(\frac{1}{N} \sum_\iN \C_\up(\g_{1}^i(w_*) - h^i_*)+ h_*^i \right)\right\|^2\right] \\
     &\overset{\text{(i)}}{=} (1+\omgC^\dwn) \E\left[\left\| \frac{1}{N} \sum_\iN \C_\up(\g_{1}^i(w_*) - h^i_*)+ h_*^i \right\|^2\right] \\
    &\overset{\text{(ii)}}{=} \frac{(1+\omgC^\dwn)}{N^2}  \sum_\iN \E\left[\left\|  \C_\up(\g_{1}^i(w_*) - h^i_*)\right\|^2\right] \\
    &\overset{\text{(iii)}}{=}  \frac{(1+\omgC^\dwn)}{N^2}   \sum_{i=1}^N \E\left[\left\|  B_k^i \C_\up(\g_{1}^i(w_*) - h^i_*)\right\|^2\right] \\
  &\overset{\text{(iv)}}{=}  \frac{(1+\omgC^\dwn)}{N^2}   \sum_{i=1}^N (1+\omgC^\up) \E\left[\left\|  \g_{1}^i(w_*) - h^i_*\right\|^2\right] \\
  &\overset{\text{(v)}}{\geq}  \frac{(1+\omgC^\dwn)}{N}   (1+\omgC^\up) \frac{\sigmstar^2}{b} .
\end{align*}
At line (i) we use \Cref{asu:expec_quantization_operatorLB} for the downlink compression operator with constant $\omgC^\dwn$. 
At line (ii) we use the fact that $ \sum_{i=1}^{N}  h_*^i = \nabla F(w_*)=0 $, the  independence of the random variables $\C_\up(\g_{1}^i(w_*) - h_*^i), \C_\up(\g_{1}^j(w_*)- h_*^j)$ for $i\neq j$ and the fact that they have 0 mean.
Line (iii) makes appear the bernouilli variable $B_k^i$ that mark if a worker is activate or not at round $k$.
We use  \Cref{asu:expec_quantization_operatorLB} for the uplink compression operator with constant $\omgC^\up$ in line (iv); and finally \Cref{asu:noise_sto_gradLB} at line (v) to lower bound the variance of the gradients at the optimum. This proof applies to both simple and double compression with $\omgC^\dwn =0 $ or not.

Remark that for the variant 2 of \Artemis, the constant $E$ given in \Cref{thm:cvgce_artemis} has a factor $\alpha^2 \cst (\omgC+1)$: combining with the value of $\cst$, this term is indeed of the order of $(1+\omgC^\dwn) (1+\omgC^\up)$.

Without memory, we have the following computation:
\begin{align*}
     \tr\left(\cov  (\Xi(w_*,0))\right) &= \E\left[\left\| \C_{\dwn} \left(\frac{1}{N} \sum_{i=1}^{N} \C_\up(\g_{1}^i(w_*) ) \right)\right\|^2\right] \\
      &\overset{\text{(i)}}{=} (1+\omgC^\dwn) \E\left[\left\| \frac{1}{N} \sum_{i=1}^{N} \C_\up(\g_{1}^i(w_*) ) - h^i_*  \right\|^2\right] \\
    &\overset{\text{(ii)}}{=} \frac{(1+\omgC^\dwn)}{N^2}  \sum_{i=1}^{N} \E\left[\left\|  \C_\up(\g_{1}^i(w_*))- h^i_*\right\|^2\right] \\
     &\overset{\text{(iii)}}{=} \frac{(1+\omgC^\dwn)}{N^2}  \sum_{i=1}^{N} \E\left[\left\|  \C_\up(\g_{1}^i(w_*))-\g_{1}^i(w_*)\right\|^2 + \left\|  \g_{1}^i(w_*) - h^i_*\right\|^2\right]
\end{align*}

At line (i) we use \Cref{asu:expec_quantization_operatorLB} for the downlink compression operator with constant $\omgC^\dwn$ and the fact that  $ \sum_{i=1}^{N}  h_*^i = \nabla F(w_*)=0$, then at line (ii) the independence of the random variables $\C_\up(\g_{1}^i(w_*))- h^i_*$ with mean 0, then a Bias Variance decomposition at line (iii).
\begin{align*}
     \tr\left(\cov  (\Xi(w_*,0))\right) &\overset{\text{(iv)}}{=} \frac{(1+\omgC^\dwn)}{N^2}  \sum_{i=1}^{N} \E\left[\omgC^\up \left\|  (\g_{1}^i(w_*))\right\|^2 +  \left\|  \g_{1}^i(w_*) - h^i_*\right\|^2\right] \\
  &\overset{\text{(v)}}{=} \frac{(1+\omgC^\dwn)}{N^2}  \sum_{i=1}^{N} \E\left[\omgC^\up \left(\left\|  \g_{1}^i(w_*)- h_i^*\right\|^2 + \left\|   h_i^*\right\|^2 \right)  +  \left\|  \g_{1}^i(w_*) - h^i_*\right\|^2\right] \\  &\overset{\text{(vi)}}{=} \frac{(1+\omgC^\dwn)}{N}   \left((\omgC^\up +1)\frac{\sigmstar^2}{b} +\omgC^\up B^2 \right).
\end{align*}

Next we use \Cref{asu:expec_quantization_operatorLB} for the uplink compression operator with constant $\omgC^\up$ at line (iv). Line (v) is another Bias-Variance decomposition and we finally conclude by using \Cref{asu:bounded_noises_across_devicesLB,asu:noise_sto_gradLB} at line (vi) and reorganizing terms.

We have showed the lower bound both with or without memory, which concludes the proof.
\end{proof}

\end{document}